\newtheorem{assumption}[theorem]{Assumption}
\newcommandx{\unsure}[2][1=]{\todo[linecolor=red,backgroundcolor=red!25,bordercolor=red,#1]{#2}}
\newcommandx{\change}[2][1=]{\todo[linecolor=blue,backgroundcolor=blue!25,bordercolor=blue,#1]{#2}}
\newcommandx{\info}[2][1=]{\todo[linecolor=olive,backgroundcolor=olive!25,bordercolor=black,#1]{#2}}
\newcommandx{\improvement}[2][1=]{\todo[linecolor=teal,backgroundcolor=teal!25,bordercolor=teal,#1]{#2}}
\newcommandx{\thiswillnotshow}[2][1=]{\todo[disable,#1]{#2}}
\newcommand*{\nm}[1]{{\left\|#1\right \|}}
\newcommand*{\nmu}[1]{{\|#1\|}}
\newcommand{\ben}[1]{\begin{equation} #1 \end{equation}}
\newcommand{\bes}[1]{\begin{equation*} #1 \end{equation*}}
\newcommand{\eas}[1]{\begin{align*} #1 \end{align*}}
\newcommand*{\bbC}{\mathbb{C}}
\newcommand*{\bbN}{\mathbb{N}}
\newcommand*{\bbR}{\mathbb{R}}
\def\D{\,\mathrm{d}}
\newcommand{\be}{\begin{equation}}
\newcommand{\ee}{\end{equation}}
\newcommand{\bea}{\begin{eqnarray}}
\newcommand{\eea}{\end{eqnarray}}
\newcommand{\beas}{\begin{eqnarray*}}
\newcommand{\eeas}{\end{eqnarray*}}
\newcommand{\bmnu}{{\ensuremath{\bm \nu}}}
\newcommand{\cU}{\ensuremath{\mathcal{U}}}
\newcommand{\cO}{\ensuremath{\mathcal{O}}}
\newcommand{\sbmatrix}{\begin{bmatrix}}
\newcommand{\fbmatrix}{\end{bmatrix}}
\def\cE{{\cal E}}
\def\cA{{\cal A}}
\def\cF{{\cal F}}
\def\cJ{{\cal J}}
\def\cB{{\cal B}}
\def\cL{{\cal L}}
\def\cU{{\cal U}}
\def\N{{\mathbb N}}
\def\R{{\mathbb R}}
\def\bc{{\bm c}}
\def\bx{{\bm x}}
\def\bz{{\bm z}}
\def\bA{{\bm A}}
\def\({\Bigl (}
\def\){\Bigr )}
\def\[{\Bigl [}
\def\]{\Bigr ]}
\def \<{\langle}
\def \>{\rangle}
\DeclareMathOperator*{\supp}{supp}
\newcommand{\epsrel}{\varepsilon_{\textnormal{rel}}}
\setlist[enumerate]{leftmargin=.5in}
\setlist[itemize]{leftmargin=.5in}
\crefname{hypothesis}{Hypothesis}{Hypotheses}
\title{The gap between theory and practice in function approximation with deep neural networks\thanks{Submitted to the editors January 16th, 2020.
\funding{
N.D.\ acknowledges the support of the PIMS Postdoctoral Fellowship program. This work was supported by the PIMS CRG ``High-dimensional Data Analysis'', SFU's Big Data Initiative ``Next Big Question" Fund and by NSERC through grant R611675.}}}
\author{Ben~Adcock and Nick Dexter\thanks{Simon Fraser University, 8888 University Drive, Burnaby, BC V5A 1S6, Canada (\tt{ben\_adcock@sfu.ca}, \tt{nicholas\_dexter@sfu.ca})}}
\begin{document}

\maketitle

% REQUIRED
\begin{abstract}
Deep learning (DL) is transforming whole industries as complicated decision-making processes are being automated by {\em deep neural networks} (DNNs) trained on real-world data.  Driven in part by a rapidly-expanding literature on DNN approximation theory showing that DNNs can approximate a rich variety of functions, these tools are increasingly being considered for problems in scientific computing. Yet, unlike more traditional algorithms in this field, relatively little is known about DNNs from the principles of numerical analysis, namely, stability, accuracy, computational efficiency and sample complexity. In this paper we first introduce a computational framework for examining DNNs in practice, and then use it to study their empirical performance with regard to these issues. We examine the performance of DNNs of different widths and depths on a variety of test functions in various dimensions, including smooth and piecewise smooth functions. We also compare DL against best-in-class methods for smooth function approximation based on compressed sensing.  Our main conclusion from these experiments is that there is a crucial gap between the approximation theory of DNNs and their practical performance, with trained DNNs performing relatively poorly on functions for which there are strong approximation results (e.g.\ smooth functions), yet performing well in comparison to best-in-class methods for other functions. To analyze this gap further, we then provide some theoretical insights. We establish a \textit{practical existence theorem}, which asserts the existence of a DNN architecture and training procedure that offers the same performance as compressed sensing. This result establishes a key theoretical benchmark. It demonstrates that the gap can be closed, albeit via a DNN approximation strategy which is guaranteed to perform as well as, but no better than, current best-in-class schemes. Nevertheless, it demonstrates the promise of practical DNN approximation, by highlighting the potential for developing better schemes through the careful design of DNN architectures and training strategies.
\end{abstract}

% REQUIRED
\begin{keywords}
neural networks, deep learning, function approximation, compressed sensing, numerical analysis
\end{keywords}

% REQUIRED
\begin{AMS}
  41A25, 41A46, 42C05, 65D05, 65D15, 65Y20, 94A20
\end{AMS}

% !TEX root = ./MLFA.tex

\reversemarginpar

%----------------------------------------------------
\section{Introduction}
\label{sec:introduction}
%----------------------------------------------------

The past decade has seen an explosion of interest in the field of machine learning, largely due to the impressive results achieved with DNNs. Breakthroughs have been obtained on large classes of historically-challenging problems, including: speech recognition \cite{Dahl2012,Hinton2012} and natural language processing \cite{Wu2016}, image classification \cite{Krizhevsky2012,Simonyan2015}, game intelligence \cite{SilverD2017Mtgo}, and autonomous vehicles \cite{FarabetClement2012SPwM}.
As DNNs have shown such promise in these real-world applications, a trend has developed in the scientific computing community towards applying them to problems in mathematical modelling and computational science. 
Recent studies have focused on applications ranging from image reconstruction tasks in medical imaging \cite{ArridgeEtAlACTA},
 discovering underlying {\em partial differential equation} (PDE) dynamics \cite{Rudye1602614} and approximating solutions of PDEs \cite{Weinan2018,EWeinan2017DLNM} to complex mathematical modeling, prediction, and classification tasks in physics \cite{CarrasquillaJuan2016Mlpo}, biology \cite{TarcaAdiL2007MLaI, SommerChristoph2013Mlic, ZielinskiBartosz2017Dlat}, and engineering \cite{LOYER2016109,TAFFESE20171}. 

Simultaneously, the broader applied mathematics community has taken interest in the approximation capabilities of NNs \cite{Yarotsky2017,Bach2017,Petersen2018,Beck2019}. The earliest results in this direction \cite{Cybenko1989,HornikKurt1989Mfna} established that even a single hidden layer fully-connected NN has universal approximation capability: so long as the number of nodes in the hidden layer are allowed to grow unbounded, such architectures are able to approximate any Borel measurable function on a compact domain to arbitrary uniform accuracy.
 More recent works have studied the connection between expressiveness of DNNs and their depth \cite{Liang2017,Lu2020,Yarotsky2018}, while others have established connections between DNNs and other methods of approximation, e.g., sparse grids \cite{Montanelli2017}, splines \cite{Unser2019}, polynomials \cite{Schwab2017,Daws2019b}, and ``$h,p$''-finite elements \cite{Opschoor2019}. A plethora of results now exist concerning the approximation power of DNNs for different function spaces -- e.g.\ Sobolev spaces \cite{Guhring2019}, bandlimited functions \cite{Montanelli2017}, analytic functions \cite{e2018exponential, Opschoor2019Legendre}, Barron functions \cite{e2019barron}, 
 cartoon-like functions \cite{Grohs2016}, H\"older spaces \cite{Shen2019} -- and tasks in scientific computing, such as approximation of high-dimensional functions \cite{Schwab2017,Li2019}  and PDEs \cite{Grohs2018,Berner2018,kutyniok2019theoretical}, dimensionality reduction \cite{Zhang2019}, and methods for DEs \cite{Lu2017,Weinan2018}. Theoretically, these works establish best-in-class approximation properties of DNNs for many problems.
Concurrently, some works have sought to address the construction of DNN approximations directly, although typically without theoretical guarantees on trainability. For example \cite{Dereventsov2019b,Fokina2019} employ ideas from greedy methods in order to do this, \cite{Dereventsov2019a} derives a formula for integral representations of shallow ReLU networks, and \cite{Daws2019a} construct networks directly approximating the Legendre basis, using this as an initialization point for training. On the other hand, works such as \cite{pmlr-v107-cyr20a} reinterpret DNN approximation as approximation in adaptive bases, which can be learned via training data. 

\subsection{Challenges} Yet despite the impressive empirical and theoretical results achieved in the broader DL community, there is concern that methods based on DNNs do not currently meet the usual rigorous standards for algorithms in computational science \cite{sciMLDOE}. While the aforementioned theoretical results assert the \textit{expressibility} of the class of DNNs -- that is, the \textit{existence} of a DNN of a given architecture that achieves a desired rate of convergence for a given problem -- they say little about their practical performance when trained by modern approaches in DL. If such techniques are to achieve widespread adoption in scientific computing, it is vital they be understood through the lens of numerical analysis, namely, (i) stability, (ii) accuracy, (iii) sample complexity, (iv) curse of dimensionality and (v) computational cost.

{\bf (i) Stability.}\ Recently, researchers have begun to question the stability properties of DNNs \cite{MDMohsen2016Uap, SzegedyChristian2013Ipon, FawziAlhussein2017TRoD}.
A series of works have demonstrated that DNNs trained on tasks such as image classification are vulnerable to misclassification when provided images with small ``adversarial'' perturbations \cite{MDMohsen2015Dasa} and can even completely fail on image reconstruction tasks in the presence of small structural changes in the data \cite{Antun2019,GottschlingTroublesome}. 
As deep learning is increasingly being applied towards critical problems in healthcare, e.g., DeepMind's recent work on machine-assisted diagnostic imaging in retinal disease \cite{DeFauw2018}, many have questioned the ethics of applying tools whose stability properties are not fully understood to such problems.

{\bf (ii) Accuracy.}\
Over the past 5 years, many works have been published on the classes of functions, e.g., analytic or piece-wise continuous, that can be approximated by DNNs of a given size with a certain rate of convergence. These results are constructive, often showing the existence of a DNN emulating another approximation scheme, e.g. polynomials, for which convergence rates have already been established. 
While such results provide a useful benchmark for DNN expressivity, they do not suggest methods for training DNNs that reliably achieve the tolerances required in computational science applications. 

{\bf (iii) Sample complexity.}\  
Areas in which DL has seen the greatest success include problems in supervised learning such as image classification. In such settings, DNNs are trained on large sets of labeled images, yielding a model capable of predicting labels for unseen images. Popular datasets for DL competitions include the ImageNet database which contains 14 million hand-annotated images of more than 20,000 categories of subjects \cite{imagenet_cvpr09}. In contrast, problems in computational science are often relatively \textit{data-starved}, e.g.\ applications in \textit{uncertainty quantification} (UQ) which involve computing a quantity of interest from sampled solutions of a parameterized PDE \cite{Gunzburger2014}. As each sample involves the discretization and solution of a PDE, which may require thousands of degrees of freedom to accurately resolve, there is great attention paid in such problems to minimizing the required number of samples \cite{Adcock2016,dexter2018mixed}.

{\bf (iv) Curse of dimensionality.}\  Many modern problems in scientific computing involve high dimensionality. High-dimensional PDEs occur in numerous applications, and parametrized PDEs in UQ applications often involve tens to hundreds of variables. Recent works have shown that certain DNNs have the expressive capabilities to mitigate the curse of dimensionality to the same extent as current best-in-class schemes \cite{Montanelli2017,kutyniok2019theoretical,e2018exponential,Opschoor2019Legendre,Schwab2017,Grohs2018,Berner2018}. Yet, as noted, this does not assert these rates can be achieved via training. Moreover, the curse of dimensionality is an important consideration in the sample complexity, as the cost of obtaining samples can often dominate the overall cost.
A recent numerical study has shown that approximation quality degrades with increasing dimension for approximating solutions of high-dimesional parameterized PDEs \cite{geist2020numerical}. However the observed scaling is not exponential in the dimension $d$, but dependent on the complexity of the underlying problem. Understanding this scaling with respect to the sample complexity is crucial to applying these methods in computational science applications.

{\bf (v) Computational cost.}\
By far, the largest barrier to entry for DL research is the cost of training. DNNs are typically trained on {\em graphics processing units} (GPUs), and a single GPU can cost thousands of US dollars. In many industry applications, models are trained on hundreds of these specialized cards. In addition, the training process itself is very energy-intensive, and can produce a large amount of excess CO$_2$ emissions\footnote{A recent study estimated the cost of training a natural language processing model for 274,000 GPU hours at between \$942,000-\$3,300,000 USD, meanwhile producing an excess of 626,000 lbs of CO$_2$, or the equivalent of 5 cars output over their expected lifespan \cite{strubell2019energy}.}.  
Even a small reduction in computational cost can yield large cost savings and greater access to resources for researchers.

In the near term, it seems likely that any DL implementation will pay a price in computational cost. Hence there needs to be a clear understanding of the benefits vis-a-vis properties (i)--(iv) above.  The study of these concerns is the broad purpose of this paper.

\subsection{Contributions}
Our main objective is to examine practical DNN approximation on problems motivated by scientific computing. In many applications in computational science, the core task involves approximating a function $f:\cU \to \R$, with domain $\cU\subset \R^d$ where $d\geq1$ (often $d \gg 1$). Hence our main aim is to examine the performance of DL on practical function approximation through the five considerations (i)--(v).  Our main contributions are:

{\bf 1.}\ We develop a computational framework for examining the practical capabilities of DNNs in scientific computing, based on the rigorous testing principles of numerical analysis. We provide clear practical guidance on training DNNs for function approximation problems.

{\bf 2.}\ We conduct perhaps the first comprehensive empirical study of the performance of training fully-connected feedforward ReLU DNNs on standard function classes considered in numerical analysis, namely, (piecewise) smooth functions on bounded domains. We compare performance over a range of dimensions, examining the capability of DL for mitigating the curse of dimensionality. We examine the effect of network architecture (depth and width) on both ease of training and approximation performance of the trained DNNs.
We also make a clear empirical comparison between DL and current best-in-class approximation schemes for smooth function approximation. The latter is based on polynomial approximation via \textit{compressed sensing} (CS) \cite{Adcock2016}, which (as we also show) achieves exponential rates of convergence for analytic functions in arbitrarily-many dimensions, see Section \ref{ss:expconvpoly} for more details.

{\bf 3.}\ We present theoretical analysis that compares the performance of DL to that of CS for smooth function approximation.  In particular, we establish a novel \textit{practical existence theorem}, which asserts the existence of a DNN architecture and training procedure (based on minimizing a certain cost function) that attains the same exponential rate of convergence as CS for analytic function approximation with the same sample complexity. 

\subsection{Conclusions}\label{sec:introconc}
The primary conclusion of this work is the following. While it is increasingly well understood that DNNs have substantial expressive power for problems relevant to scientific computing, there remains a large gap between expressivity and practical performance achieved with standard methods of training.  Surprisingly, trained DNNs can perform very badly on functions for which there are strong expressivity results, such as smooth functions in high dimensions and piecewise smooth functions. Yet, on other examples, they are competitive with current best-in-class schemes based on CS. We also draw the following conclusions based on our experimental results training fully-connected feedforward ReLU DNNs: 

{\bf 1.}\ The accuracy of trained DNNs is limited by the precision used, but is typically nowhere near machine epsilon despite training to such tolerances in the loss. In this work, we perform experiments in both single and double precision. Yet, in both cases, it is typically impossible to get beyond four digits of accuracy even when approximating extremely smooth functions.
In contrast, a combination of Legendre polynomial approximation in the {\em hyperbolic cross} subspace with weighted $\ell_1$-minimization and a specific choice of weights motivated by smooth function approximation can reliably achieve six or seven digits of accuracy on such problems.
Since our theoretical contribution shows that DNNs should be capable of obtaining these results (albeit with a different initialization and training procedure), this fact suggests a limitation of standard training methods in obtaining more accurate results.

{\bf 2.}\ The training process itself is also highly sensitive to the parameterization of the solvers and initialization of the weights and biases of the DNNs. After extensive testing, we chose the {\tt Adam} optimizer with exponentially decaying learning rate over a variety of optimizers including standard {\tt SGD}, finding empirically that this strategy can help to mitigate some of the challenges of solving the non-convex optimization problem of training, e.g., non-monotonic decrease in the loss, and slow convergence to minimizers.
We also initialize our networks with symmetric uniform or normal distributions with small variance, finding larger variances can lead to failure in training or slow convergence.
These choices combined lead to a training process that is largely stable and minimizes the {\em probability of failure} in training over a wide range of architectures. However, failures can still occur and are often a consequence of choosing a network that is either too shallow and narrow (where failure occurs immediately and the error stagnates), or too wide and deep (where the resulting network achieves order machine epsilon tolerance in the loss, but massively overfits exhibiting numerical artifacts).

{\bf 3.}\  Generally speaking, deeper architectures (which are sufficiently wide) are both easier to train and perform better than shallower architectures. However, this is clearly highly-dependent on the regularity and dimensionality of the target function for smooth-function approximation problems. 
The width of the network also plays an important role; we find networks with width 5-10 times larger than the depth perform better.
While this trend appears to be general, some of our results on less-smooth and piecewise continuous functions indicate that it is not universal.

{\bf 4.}\ While much of the success of DL has been in the field of classification, surprisingly performance of trained DNNs on simple piecewise constant functions is relatively poor in comparison to smooth functions, and adversely affected by the curse of dimensionality. 
Yet, DNNs do approximate such functions to some accuracy, unlike polynomial-based CS techniques. This highlights the flexibility of the DL approach.

We also draw the following theoretical conclusions:

{\bf 5.} For analytic function approximation, a certain DNN strategy based on emulating polynomials via a suitable DNN can achieve the same guaranteed performance (up to constants) as best-in-class schemes based on CS.

{\bf 6.} However, such scheme will not typically offer any superior performance. In particular, it is not \textit{flexible}. It only perform well on the function classes on which CS performs well, and perform correspondingly poorly on other, e.g.\ piecewise smooth functions. This is in contrast to the experimental results in this paper, which show that standard feedforward architectures and training via the $\ell^2$-loss can approximate both smooth and nonsmooth functions, and expressibility results, which show that DNN architectures can approximate functions from a range of different function classes.

\subsection{Outlook} This paper raises and seeks to answer the following question: is DL a useful tool for problems in scientific computing? Some of the above conclusions may appear rather negative in this regard, certainly in comparison to the positive impression given by the plethora of expressivity results on DNNs. Let us raise several caveats. First, this study considers one particular setup: namely, fully-connected, ReLU networks of constant hidden layer widths, in combination with the $\ell^2$-loss function. There are almost countless variations on this setup, some of which will undoubtedly perform better. These variations include different activation functions (e.g.\ sigmoid, hyperbolic tangent), different architectures (e.g.\ ResNets, sparsely-connected layers, convolutional layers) and different loss functions (e.g.\ those incorporating regularization). We elected to use this setup based on standards in the literature; for instance, most expressibility results consider ReLU activations. Given difficulties and intense computational resources required for training, it is beyond the scope of this first work to methodically compare all possible setups. Second, the practical existence theorem we prove shows that a careful choice of architecture and cost function can allow DNNs to offer similar performance to state-of-the-art techniques. While it comes at the price of flexibility, this result nonetheless provides a theoretical benchmark for DNN approximations. It also demonstrates the potential of DNN strategies for eventually outperforming such methods, and indicates that theory-inspired architecture and training strategy design as a way to achieve such improvements. The extent to which this can (provably) be done while maintaining flexibility -- i.e.\ the ability to approximate different function classes with the same DNN procedure -- is an enticing challenge for future work.

\subsection{Outline} The outline of the remainder of this paper is as follows. In \S \ref{sec:framework} we introduce the approximation problem, DNNs, DL and CS. In \S \ref{sec:testingsetup} we describe the experimental setup, including details of the training procedure used. Our numerical results are found in \S \ref{sec:experiments}. Finally, in \S \ref{sec:theory1} we present our theoretical results. Additional information for this paper is contained in the Supplementary Materials. Code accompanying the computational framework is available at \url{https://github.com/ndexter/MLFA}.

% !TEX root = ./MLFA.tex

%----------------------------------------------------
\section{Framework} 
\label{sec:framework}
%----------------------------------------------------

In this section, we first describe the function approximation problem, and then introduce DNNs, DL, polynomial approximation and CS.

\subsection{Problem formulation}

Throughout this paper, we consider the unit cube in $d$ dimensions, $\cU = (-1,1)^d$, equipped with the uniform probability measure $\D \varrho = 2^{-d} \D \bm{x}$, where $\D \bm{x}$ is the Lebesgue measure and $\bm{x} = (x_1,\ldots,x_d)$ is the $d$-dimensional variable.  
Let $L^2(\cU)$ denote the space of real-valued square-integrable functions on $\cU$ with respect to $\varrho$. Our objective is to approximate an unknown function $f \in L^2(\cU)$ from samples.
These samples are generated by simple Monte Carlo sampling: we draw $\bm{x}_1,\ldots,\bm{x}_m$ randomly and independently from the measure $\varrho$.  Hence the approximation problem we aim to solve is
\ben{
\label{eq:approx_problem}
\mbox{Given the measurements $\{ (\bm{x}_i,f(\bm{x}_i)) \}^{m}_{i=1}$, approximate $f$.} \tag{AP}
} 
We note in passing that much of what follows in this paper can be extended to more general domains, sampling strategies and to functions taking values in other vector spaces (for instance, complex-valued functions, vector-valued functions, or even Hilbert-valued functions, as arise commonly in UQ applications \cite{dexter2018mixed}). We assume the above setup for ease of presentation.

We require several further pieces of notation.  We write $\nm{\cdot}_{L^2}$ for the $L^2$-norm with respect to $\varrho$. The space of essentially bounded functions on $\cU$ is denoted by $L^\infty(\cU)$ and its norm by $\nm{\cdot}_{L^{\infty}}$. We use $\bm{\nu} = (\nu_1,\ldots,\nu_d)$ to denote a  (multi)index of length $d$. If $0 < p < \infty$ and $\cF \subseteq \bbN^d_0$ is a finite or countable (multi)index set, we write $\ell^p(\cF)$ for the space of $\ell^p$-summable sequences $\bc = (c_\bmnu)_{\bmnu\in\cF}\subset \R$, i.e.\ those satisfying $\|\bc\|_{p} := \left( \sum_{\bmnu\in\cF} |c_\bmnu|^p \right)^{1/p} < \infty$. When $p = \infty$, we define $\ell^{\infty}(\cF)$ and $\nm{\cdot}_{\infty}$ in the usual way.

%----------------------------------------------------
\subsection{Deep Learning}
\label{subsec:DL}
%----------------------------------------------------

We now introduce DL. First, we recall the definition of a DNN:

\begin{definition}[Neural network]
\label{d:NN}
Let $L\in \N_0$ and $N_0, \ldots, N_{L+2} \in \N$. A map $\Phi: \R^{N_0} \to \R^{N_{L+2}}$ given by
\begin{align}
\label{eq:DNN_Phi}
\Phi(x) = \begin{cases}
\cA_1 (\rho(\cA_0(\bm{x}))), & L = 0 \\
\cA_{L+1} ( \rho ( \cA_{L} ( \rho ( \cdots \rho ( \cA_0 (\bm{x}) ) \cdots ) ) ) ), & L \ge 1
\end{cases}
\end{align}
with affine linear maps $\cA_l : \R^{N_{l}} \to \R^{N_{l+1}}$, $l = 0,\ldots,L+1$, and the activation function $\rho$ acting component-wise (i.e., $\rho(\bm{x}) := (\rho(x_1), \ldots, \rho(x_d))$ for $\bm{x} = (x_1,\ldots,x_d)$) is called a neural network (NN). The map $\cA_l$ corresponding to layer $l$ is given by $\cA_l (\bm{x}) = \bm{W}_l \bm{x} + \bm{b}_l$, where $\bm{W}_l \in \R^{N_{l+1} \times N_{l}}$ is the $l$th weight matrix and $\bm{b}_l \in \R^{N_{l+1}}$ the $l$th bias vector. We refer to $L$ as the depth of the network and $\max_{1\le l \le L+1} N_l$ as its width. 
\end{definition}

Informally, we consider a DNN as any NN with $L \geq 1$ hidden layers.
Definition \ref{d:NN} pertains to \textit{feedforward} DNNs. We do not consider more exotic constructions such as recurrent networks or ResNets in this paper. We also consider so-called \textit{fully connected} networks, meaning that the weights and biases can take arbitrary real values. The layers $l = 1,\ldots,L$ are referred to as \textit{hidden} layers. In our experiments later, we set their widths to be equal, $N_1 = \ldots = N_{L+1}$. Note that $N_0$ and $N_{L+2}$ are specified by the problem. In our case, $N_0 = d$ and $N_{L+2} = 1$.

There are numerous choices for the activation function $\rho$, and moreover, one may also choose different activation functions in different layers. Since it is popular both in theory and in practice, we use the \textit{rectified linear unit} (ReLU), defined by $\rho(x) = \max \{0,x\}$. We will also refer to a DNN architecture having ReLU activation function and $L$ hidden layers with $N$ nodes per layer as a ReLU $L\times N$ DNN.

The \textit{architecture} of a network is the specific choice of activation $\rho$ and parameters $L$ and $N_1,\ldots,N_{L+1}$. We denote the set of neural networks of a given architecture by $\mathcal{N}$.  Note that this family is parametrized by the weight matrices and biases. Selecting the right architecture for a given problem is a significant challenge. We discuss this topic further in \S \ref{sec:testingsetup} and \S \ref{sec:experiments}.

Given an unknown function $f \in L^2(\cU)$, \textit{training} is the process of computing a neural network $\Phi$ that approximates $f$ from the data $\{ (\bm{x}_i,f(\bm{x}_i)) \}^{m}_{i=1}$. This is normally achieved by minimizing a \textit{loss function} $\cL : \mathcal{N} \rightarrow \bbR$, i.e.\ we solve
\bes{
\textnormal{minimize}_{\Phi \in \mathcal{N}} \cL(\Phi),
} 
where $\mathcal{N}$ is the family of neural networks of the chosen architecture. Note that this is equivalent to a minimization problem for the weights $\bm{W}_l$ and biases $\bm{b}_l$.
A typical choice is the $\ell^2$-loss (also known as \textit{empirical risk}, \textit{mean squared loss}):
\begin{align}
\label{eq:ER_loss}
\mathcal{L}(\Phi) := \frac{1}{m} \sum_{i=1}^m \left( \Phi(\bm{x}_i) - f(\bm{x}_i) \right)^2.
\end{align}
We primarily use this loss function in this paper.
However, many other choices are possible.  For instance, it is common to add a regularization term to the loss function, e.g.\
\bes{
\mathcal{L}(\Phi) := \frac{1}{m} \sum_{i=1}^m \left( \Phi(\bm{x}_i) - f(\bm{x}_i) \right)^2 + \cJ(\Phi).
}
Here $\cJ : \mathcal{N} \rightarrow \bbR$ is chosen to promotes some desirable features of the network.  For instance, $\cJ$ may be a norm of the weight matrices, thus promoting small and/or sparse weights.

%----------------------------------------------------
\subsection{Polynomial approximation of smooth functions}\label{sec:polyapprox} 
%----------------------------------------------------
We now introduce the polynomial approximation schemes against which we compare DL for function approximation.
Polynomial approximation is a vast and classical topic. Yet, it has received renewed attention in the last several decades, motivated by applications in UQ where one seeks to approximate a smooth quantity of interest of a parametric PDE \cite{CohenDeVoreApproxPDEs}. The particular scheme we consider is based on orthogonal expansions in orthonormal polynomials in $L^2(\cU)$, i.e.\ multivariate Legendre polynomials.
The univariate, orthonormal Legendre polynomials on the $[-1,1]$ are defined by
\bes{
\psi_{\nu}(x) = \sqrt{2 \nu + 1} P_{\nu}(x),\qquad \nu \in \N_0,
}
where $P_{\nu}$ is the classical Legendre polynomial with normalization $P_{\nu}(1) = 1$.  The functions $\psi_{\nu}$ form an orthonormal basis of $L^2(-1,1)$.  When $d > 1$, we define the tensor orthonormal Legendre polynomials as
\bes{
\Psi_{\bm{\nu}}(\bm{x}) = \prod^{d}_{i=1} \psi_{\nu_i}(x_i),\qquad \bm{\nu} = (\nu_1,\ldots,\nu_d) \in \N^d_0,\ \bm{x} = (x_1,\ldots,x_d) \in \cU.
}
The set $\{ \Psi_{\bm{\nu}} \}_{\bm{\nu} \in \bbN^d_0}$ forms an orthonormal basis of $L^2(\cU)$. Hence any function $f \in L^2(\cU)$ has a convergent expansion
\ben{
\label{eq:f_inf_exp}
f = \sum_{\bm{\nu} \in \N^d_0} c_{\bm{\nu}} \Psi_{\bm{\nu}},
}
where $c_{\bm{\nu}} = \int_{\cU} f(\bm{x}) \Psi_{\bm{\nu}}(\bm{x}) \D \varrho(\bm{x})$ is the coefficient of $f$ with respect to $\Psi_{\bm{\nu}}$.  Note that the sequence $\bc = (c_{\bmnu})_{\bmnu\in\N^d_0}$ is an element of $\ell^2(\N^d_0)$, the space of square-summable sequences with indices in $\N^d_0$.  By Parseval's identity, $\nm{f}_{L^2(\cU)} = \nm{\bm{c}}_2$.

When $d = 1$, approximating a smooth function in the Legendre basis is typically achieved by truncating the expansion \eqref{eq:f_inf_exp} after its first $s$ terms, then using, for instance, least-squares to approximately recover the coefficients $c_0,\ldots,c_{s-1}$ from the measurements $\{ (x_i,f(x_i)) \}^{m}_{i=1}$.  In $d \geq 2$ dimensions, the situation becomes more complicated, since there are many different choices of index set $S$ of cardinality $s$ one might employ to truncate the expansion \eqref{eq:f_inf_exp}:
$
f \approx f_{S} = \sum_{\bm{\nu} \in S} c_{\bm{\nu}} \Psi_{\bm{\nu}}.
$
By Parseval's identity, the error
$
\nm{f - f_{S}} = \sqrt{\sum_{\bm{\nu} \in \N^d_0 \backslash S} | c_{\bm{\nu}} |^2 }
$
depends on the coefficients outside $S$. Hence, an \textit{a priori} choice of $S$ may have limited effectiveness, since it may fail to capture any anisotropic behaviour of $f$.  This naturally motivates the concept of \textit{best $s$-term approximation} \cite{CohenDeVoreApproxPDEs}.  In best $s$-term approximation, the index set $S$ is chosen so that it contains the multi-indices corresponding to the large $s$ coefficients $c_{\bm{\nu}}$ in absolute value.  This is a type of nonlinear approximation scheme \cite{DeVoreNLACTA}.  If $\tilde{f}_s$ denotes the best $s$-term approximation, the error satisfies
\bes{
\nmu{f - \tilde{f}_s }_{L^2} = \inf \left \{ \sqrt{\sum_{\bm{\nu} \notin S} | c_{\bm{\nu}} |^2 } : S \subset \N^d_0,\ |S| \leq s \right \}.
}
Under appropriate conditions (e.g.\ $f$ is analytic), this approximation converges exponentially fast in $s$ (see Theorem \ref{thm:LegExpOSZ}). In high dimensions this significantly improves over any linear approximation scheme based on a fixed, isotropic choice of $S$. We discuss this further in \S \ref{ss:expconvpoly}.

\subsection{Polynomial approximation with compressed sensing}

Computing the best $s$-term approximation is on the face of it a daunting task.  In theory, it involves computing all infinitely-many of the coefficients $\bm{c}$, then selecting the largest $s$. This is of course intractable, and generally still computationally infeasible even if one limits oneself to computing a large, but finite number of coefficients. 

A solution is to use compressed sensing. Here one first selects a large, but finite multi-index set $\Lambda \subset \N^d_0$.  This set is generally assumed to contain the coefficients of some quasi-best $s$-term approximation, if not the coefficients of the true best $s$-term approximation itself.  For reasons that will be made clear in \S \ref{subsec:CS}, a reasonable choice is $\Lambda = \Lambda^{\mathrm{HC}}_{s}$, where
\ben{
\label{HCindex}
\Lambda^{\textnormal{HC}}_{s} = \left\{ \bmnu = (\nu_1,\ldots,\nu_d) \in \N^d_0: \prod_{j=1}^d (\nu_j + 1) \le s+1 \right\},
}
is the \textit{hyperbolic cross} index set of degree $s$.

Having chosen $\Lambda$, the finite vector $\bm{c}_{\Lambda} = (c_{\bm{\nu}})_{\bm{\nu} \in \Lambda}$ can now be assumed to be approximately sparse.  Next, one formulates the normalized measurement matrix and vector of measurements
\begin{equation}
\label{eq:A_f_def}
\bA = \left( \frac{\Psi_\nu(\bm{x}_i)}{\sqrt{m}} \right)_{\substack{1\leq i\leq m\\ \bmnu\in\cJ}}, \qquad {\bm f} = \left( \frac{f(\bm{x}_i)}{\sqrt{m}} \right)_{1\le i \le m}.
\end{equation}
Then one searches for an approximately sparse solution of the linear system $\bm{A} \bm{z} = \bm{f}$.  A standard means to do this is to solve the \textit{quadratically-constrained basis pursuit} problem
\ben{
\label{eq:CS_BPDN}
\textnormal{minimize}_{\bz\in\R^N} \| \bz\|_{1} \;\;\; \textnormal{s.t.}\;\;\; \| \bA \bz - \bm{f} \|_2 \le \eta,
}
for suitably chosen $\eta \geq 0$, or the \textit{unconstrained LASSO} problem
\begin{align}
\label{eq:CS_uncon_min}
\textnormal{minimize}_{\bz\in\R^n} \|\bz\|_{1} + \mu \|\bA \bz - \bm{f} \|_2^2,
\end{align}
for appropriately chosen $\mu>0$.  
A solution $\hat{\bm{c}} = (\hat{c}_{\bm{\nu}})_{\bm{\nu} \in \Lambda}$ of either problem yields an approximation $\hat{f} = \sum_{\bm{\nu} \in \Lambda} \hat{c}_{\bm{\nu}} \Psi_{\bm{\nu}}$ of $f$.

Unfortunately, simply promoting the sparsity of the polynomial coefficients via the $\ell^1$-norm is not sufficient to achieve favourable sample complexity bounds. Bounds on $m$ for $\ell^1$-norm based approaches can be exponential in the dimension $d$ \cite{Adcock2016}.  Fortunately, as considered in \cite{RW15,AdcockBen2018ICSa,ChkifaDexterTranWebster18}, this issue can be overcome by replacing the $\ell^1$-norm with a certain weighted $\ell^1$-norm.  For instance, instead of \eqref{eq:CS_BPDN} one now solves
\begin{align}
\label{eq:CS_BPDN_weighted}
\textnormal{minimize}_{\bz\in\R^N} \| \bz\|_{1,\bm{u}} \;\;\; \textnormal{s.t.}\;\;\; \| \bA \bz - \bm{f} \|_2 \le \eta.
\end{align}
Here $\bm{u} = (u_{\bm{\nu}})_{\bm{\nu} \in \Lambda}$ is a vector of weights and $\nm{\bz}_{1,\bm{u}} = \sum_{\bm{\nu} \in \Lambda} u_{\bm{\nu}} |z_{\bm{\nu}} |$.  As shown in \cite{AdcockBen2018ICSa}, an appropriate choice of weights is
\ben{
\label{eq:uweightsLeg}
u_{\bm{\nu}} = \nm{\Psi_{\bm{\nu}}}_{L^{\infty}} = \prod^{d}_{j=1} \sqrt{2 \nu_j + 1},\qquad \bm{\nu} = (\nu_1,\ldots,\nu_d).
}
For the remainder of this paper, we consider the CS polynomial approximation scheme $\hat{f} = \sum_{\bm{\nu} \in \Lambda} \hat{c}_{\bm{\nu}} \Psi_{\bm{\nu}}$, where $\hat{\bm{c}}$ is either a solution of \eqref{eq:CS_BPDN_weighted}, or for the sake of comparison, \eqref{eq:CS_BPDN}.

% !TEX root = ./MLFA.tex

%----------------------------------------------------
\section{Testing setup}
\label{sec:testingsetup}
%----------------------------------------------------

We now describe the testing setup for our experiments. Training DNNs requires careful choices of the optimization solver, initialization and optimization parameters. This section describes the choices we made to deliver consistent performance across a range of DNN architectures. In summary, we find the \texttt{Adam} ({\bf ada}ptive {\bf m}oments) optimizer with exponentially-decaying learning rate and a specific random initialization to deliver this performance, whereas other solvers such as \texttt{SGD} and other learning rate schedules perform less consistently.

%----------------------------------------------------
\subsection{Setup}
\label{sec:testing}
%----------------------------------------------------

We first summarize the main methodology:

{\bf (i) Implementation.}
Our framework has been implemented in a package called {\tt MLFA} ({\bf M}achine {\bf L}earning {\bf F}unction {\bf A}pproximation) in version 1.13 of Google's {\tt TensorFlow} software library \url{https://www.tensorflow.org/}, and is available on GitHub at \url{https://github.com/ndexter/MLFA}. 
Details about the set of features supported by {\tt MLFA} and data recorded by the code can also be found on the GitHub page.

{\bf (ii) Hardware.}
In the course of testing our DNN models, we observed improved accuracy on some of our test problems by initializing and training the networks in double precision.
Modern GPUs often support half, single, and double precision arithmetic, though many commonly-available GPUs are optimized to perform single precision computations much more quickly (see \S \ref{sec:singledoubleSM}). 
The majority of our computations were performed in single precision using the Tesla P100 GPUs on Compute Canada's Cedar compute cluster at Simon Fraser University\footnote{See \url{https://www.computecanada.ca/} and \url{https://docs.computecanada.ca/wiki/Cedar}}, though for some of our test problems we provide double precision results for a subset of the architectures considered for comparison.

{\bf (iii) Choice of architectures and initialization.} We consider fully-connected ReLU networks. This choice is inspired by the many theoretical existence results on such networks (see \S \ref{sec:introduction}).
There is a vast literature suggesting various strategies for designing architectures and initializing neural networks. For an introduction to these topics see, e.g., \cite[\S 5.2 \& 8.4]{DeepLearningBook}.
We recall the work \cite{Hanin2018} focusing on which DNN architectures result in exploding and vanishing gradients, a common problem in backpropagation which can result in failure during training.
Our empirical results confirm that choosing DNN architectures with a fixed number of nodes per layer $N$ and depth $L$ such that the ratio $\beta := L/N$ is small, e.g., $\beta \in (0.025,0.5)$, is an effective choice for training.
In \S \ref{subsec:initialization}, we study several popular strategies for initializing DNNs.
 Our results show that initializing the weights and biases to be normal random variables with mean 0 and variance 0.01 is an effective choice for many of the architectures and problems considered herein.
 We note that for the range of architectures studied, this choice results in weights and biases with variance smaller than many other popular choices, e.g., the strategies from \cite{Glorot2010,He2015}, and is sufficiently small to avoid the failure modes analyzed in \cite{HaninRolnick2018}.
 We also note that setting the variances smaller than 0.01 can result in networks which are more difficult to train, which can be attributed to the exponentially decaying length scales described in \cite[Theorem 1]{HaninRolnick2018}.
The seed used for the random number generators can also have an important effect on the initialization.
In this study, we initialize all of our networks from the same seed 0 for both {\tt TensorFlow} and {\tt NumPy}, in order to reduce the complexity of our experiments. 
A more comprehensive study of the average performance of DL would require averaging over a large set of seeds used in initialization. We leave such a study to a future work.

{\bf (iv) Optimizers for training and parameterization.}
\S \ref{subsec:convergence} compares the performance of a variety of solvers and learning rate schedules on the function approximation problem.
In practice, we find the {\tt Adam} optimizer \cite{kingma2014adam} with an exponentially decaying learning rate yields the most accurate results of the solvers tested in the least amount of training time. See \S \ref{sec:AdamSM} for implementation details. In single precision, the DNNs are trained for 50,000 epochs or to a tolerance of $\varepsilon_{\mathrm{tol}} = 5\times 10^{-7}$, while in double precision DNNs are trained for 200,000 epochs or to a tolerance of $\varepsilon_{\mathrm{tol}} = 5\times 10^{-16}$.
Due to the non-monotonic convergence of minimizing the non-convex loss \eqref{eq:ER_loss} with respect to the weights and biases, we checkpoint our partially trained networks once the training loss has been reduced to 1/8th of the previous checkpoint's training loss, saving the best result at the end of training. We then average our testing results over the final trained networks.

{\bf (v) Training data and design of experiments.}
To understand the average performance of DL on a variety of reconstruction tasks, we run 20 trials of solving \eqref{eq:approx_problem} with each of our DNN architectures and CS over a range of data sets of increasing size. We then average the testing error and run statistics over all trials for each data set in plotting.
We define a trial as one complete run of training a DNN, initialized as above, or solving a CS problem on a set of training data consisting of the values $\{(\bx_i,f(\bx_i))\}_{i=1}^{m_k}$.
To generate each data set of size $m_k$, with $0 < m_1 < m_2 < \cdots < m_{k_\mathrm{final}}$, we sample 20 i.i.d. sets of points $\{\bx_i\}_{i=1}^{m_k}$ from the uniform distribution on $(-1,1)^d$ and evaluate our target function $f$ at these points to form our training data.
Since we are interested in the sample complexity of the DL problem, for most of our examples we choose $m_{k_\mathrm{final}}$ to be relatively small, on the order of 1000 points.

{\bf (vi) Testing data and error metric.}
To study the generalization capabilities of DL, we use a common error metric for numerical analysis, the relative $L^2$ error
\begin{align}
\label{eq:rel_err}
\epsrel = \|f - \tilde{f}\|_{L^2}/\|f\|_{L^2},
\end{align}
where $\tilde{f}$ is an approximation obtained using either DL or CS. We purposefully choose this over other norms (e.g.\ the $L^{\infty}$- or $H^1$-norms) so that we can use the same error metric across all function classes, including both smooth and nonsmooth functions.
As we run multiple trials of our experiments, we compute the average of \eqref{eq:rel_err} over all of our trials in testing.
In contrast to the training data, we use deterministically generated points and function data for testing the relative $L^2$ errors. 
 More specifically, we compute an approximation to the $L^2$ integrals using a high order isotropic Clenshaw Curtis sparse grid quadrature \cite{Gerstner1998} rule, see, e.g., \cite{NTW08} for more details.
As opposed to the training data, we use a large set of testing points, e.g., ($d=1$) 65,537, ($d=2$) 311,297, ($d=4$) 643,073, and ($d=8$) 1,863,937 points, to ensure a good covering of our parameter space $\cU$ in computing these statistics.
For such moderate dimensional problems an isotropic rule is sufficient to study the generalization performance of both CS and DL.
For higher-dimensional instances of \eqref{eq:approx_problem}, the points and weights can be pre-computed and re-used in testing  multiple functions. We rely on the {\tt TASMANIAN} sparse grid toolkit \cite{stoyanov2015tasmanian,stoyanov2018adaptive,stoyanov2016dynamically} for the generation of these rules.

{\bf (vii) Compressed sensing.}
We solve the problems \eqref{eq:CS_BPDN} and \eqref{eq:CS_BPDN_weighted} (with weights \eqref{eq:uweightsLeg}) using the {\tt MATLAB} solver {\tt SPGL1} \cite{spgl1:2007,BergFriedlander:2008} in double precision.
The parameter $\eta$ is chosen as
\ben{
\label{eta_opt_choice}
\eta = \nm{\bm{A} \bm{c}_{\Lambda} - \bm{f}}_{2},\qquad \bm{c}_{\Lambda} = (c_{\bm{\nu}})_{\bm{\nu} \in \Lambda},
}
To compute $\bm{c}_{\Lambda}$ we use the same sparse grid rule described above in evaluating each coefficient $c_{\bm\nu} = \int_{\cU} f(\bm{x}) \Psi_{\bm{\nu}}(\bm{x}) \D \varrho(\bm{x})$.
See \S \ref{sec:CSSM} for further discussion.
We set $\Lambda$ as in \eqref{HCindex} to be the hyperbolic cross index set of degree $s$ chosen so that $\#(\Lambda^{\textrm{HC}}_s) \approx $ 3,000. 

%----------------------------------------------------
\subsection{Solvers}
\label{subsec:solvers}
%----------------------------------------------------

The choice of solver and parameterization of the solver are important factors in training DNN models to a desired tolerance in a reasonable amount of time. A common choice in many computational science applications is the {\tt Adam} optimizer, a variant of {\tt SGD} ({\bf s}tochastic {\bf g}radient {\bf d}escent) incorporating moment estimates of the gradient.
In the course of testing our implementation of the {\tt MLFA} package, we studied the effect of the solvers on the generalization error and time to train given a fixed budget of 50,000 epochs in single precision. Fig.\ \ref{fig:optimizer_comp} displays results for a variety of solvers and learning rate schedules.
There we observe comparable performance for the {\tt Adam} and {\tt RMSProp} algorithms in terms of accuracy, with {\tt Adagrad}, {\tt SGD}, and {\tt PGD} ({\bf p}roximal {\bf g}radient {\bf d}escent) performing the worst in both accuracy and computational cost. When comparing run times and accuracy for all methods tested, the {\tt Adam} optimizer achieves the best accuracy with the least computational cost.
We also include results obtained with the {\tt AdamW} optimizer \cite{Loshchilov2019}, which implements a decoupled weight decay (a form of $\ell^2$-regularization)
on the weights and biases. In Fig.\ \ref{fig:optimizer_comp} we observe that smaller values of the weight decay parameter $\lambda$ allow the {\tt AdamW} optimizer to achieve identical performance to {\tt Adam} with minimal overhead, but do not outperform standard {\tt Adam}, while larger values of $\lambda$ both decrease accuracy and increase run time.

\begin{figure}[ht]
\begin{center}
\includegraphics[width=0.23\paperwidth,clip=true,trim=0mm 0mm 0mm 0mm]{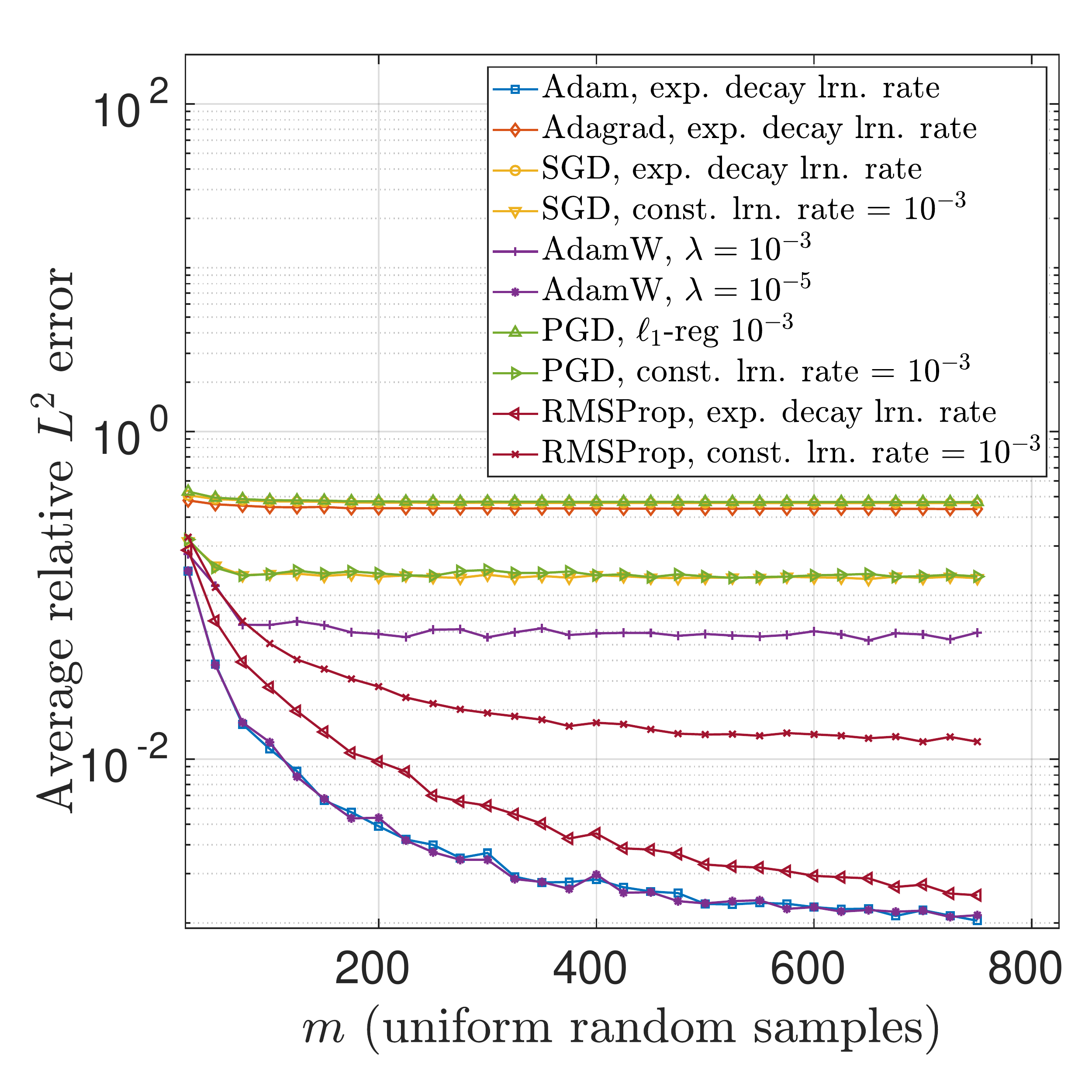}
\includegraphics[width=0.23\paperwidth,clip=true,trim=0mm 0mm 0mm 0mm]{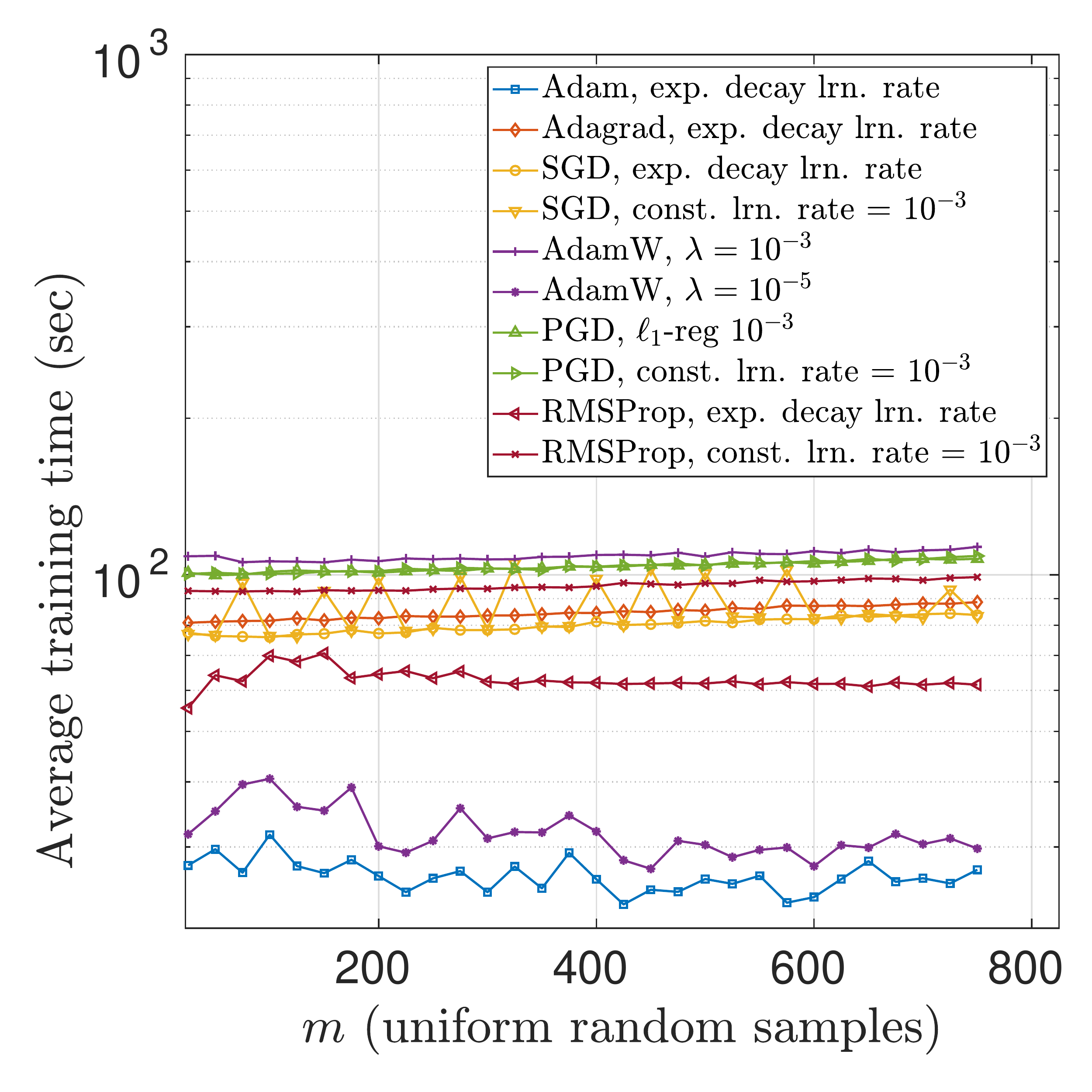}
\end{center}

\vspace{-2mm}
\caption{Comparison of {\bf(left)} average relative $L^2$ error and {\bf(right)} average time in training a set of 20 ReLU $10\times 100$ DNNs to approximate $f(x) = \log(\sin(10x) + 2) + \sin(x)$.}
\label{fig:optimizer_comp}
\end{figure}

For certain solvers, it is often the case that some of the trials of a given test may fail to achieve the desired loss tolerance before arriving at the final epoch. 
An example of this can be seen in the left plot of Fig.\ \ref{fig:typical_convergence} where none of the trials of {\tt SGD} with a constant learning rate of $10^{-3}$ were able to achieve the desired tolerance in 50,000 epochs of training.
The middle plot of Fig.\ \ref{fig:typical_convergence} displays the effect of using an exponentially decaying learning rate with {\tt SGD}, though we also observe there that none of trials trained with this learning rate schedule were able to achieve the tolerance in 50,000 epochs of training. 
On the other hand the right plot shows that, on the same problem, all 20 trials trained with the {\tt Adam} optimizer with the exponentially decaying learning rate were able to converge to the $5\times 10^{-7}$ loss tolerance in under 14,000 epochs of training.
We also compared learning rate schedules for the {\tt Adam} optimizer, but found the exponentially decaying learning rate to give consistently good results. See \S \ref{sec:AdamLearnRateSM}.

\begin{figure}[ht]
\begin{center}
\includegraphics[width=0.23\paperwidth,clip=true,trim=0mm 0mm 40mm 10mm]{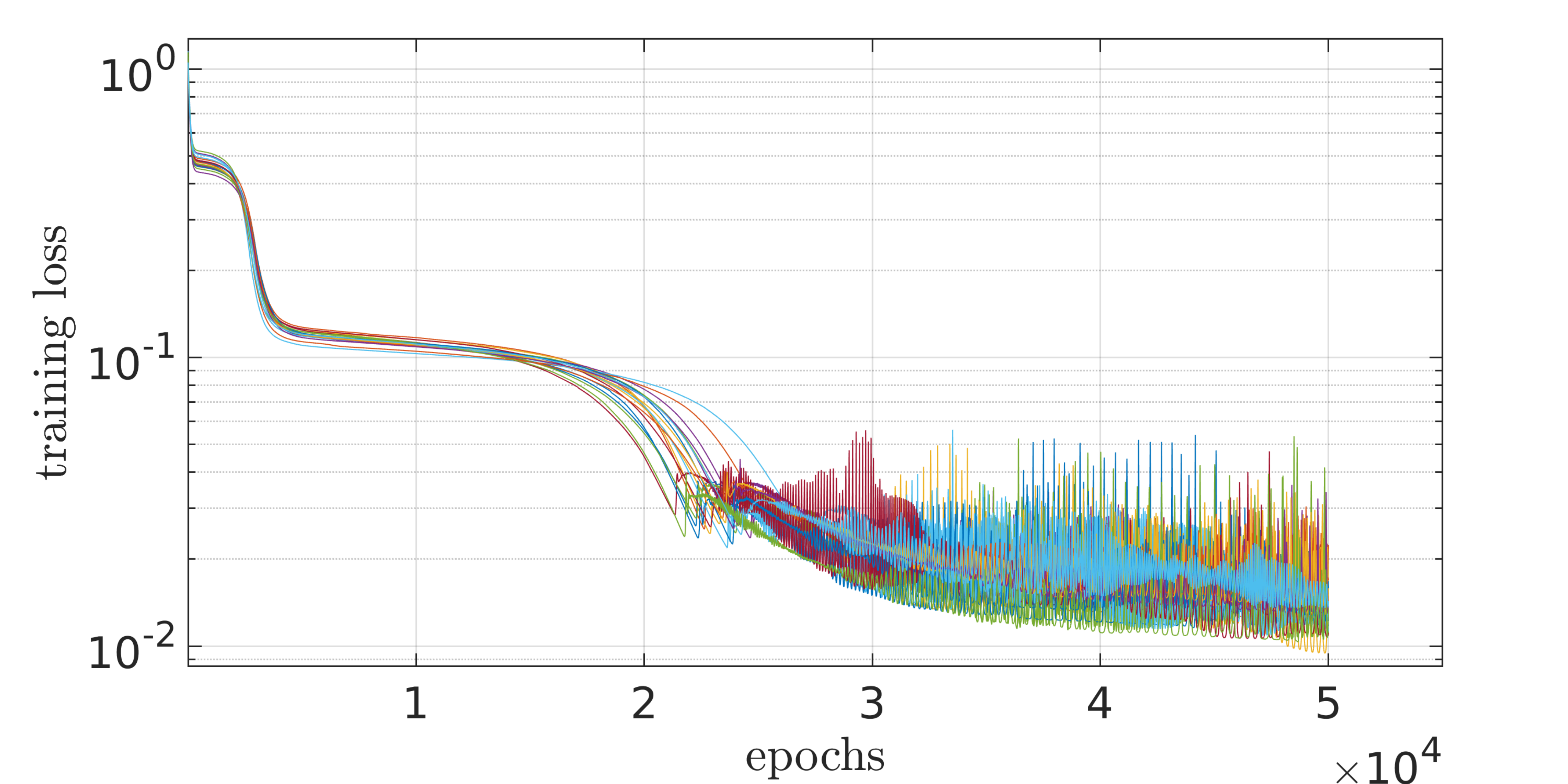}
\includegraphics[width=0.23\paperwidth,clip=true,trim=0mm 0mm 40mm 10mm]{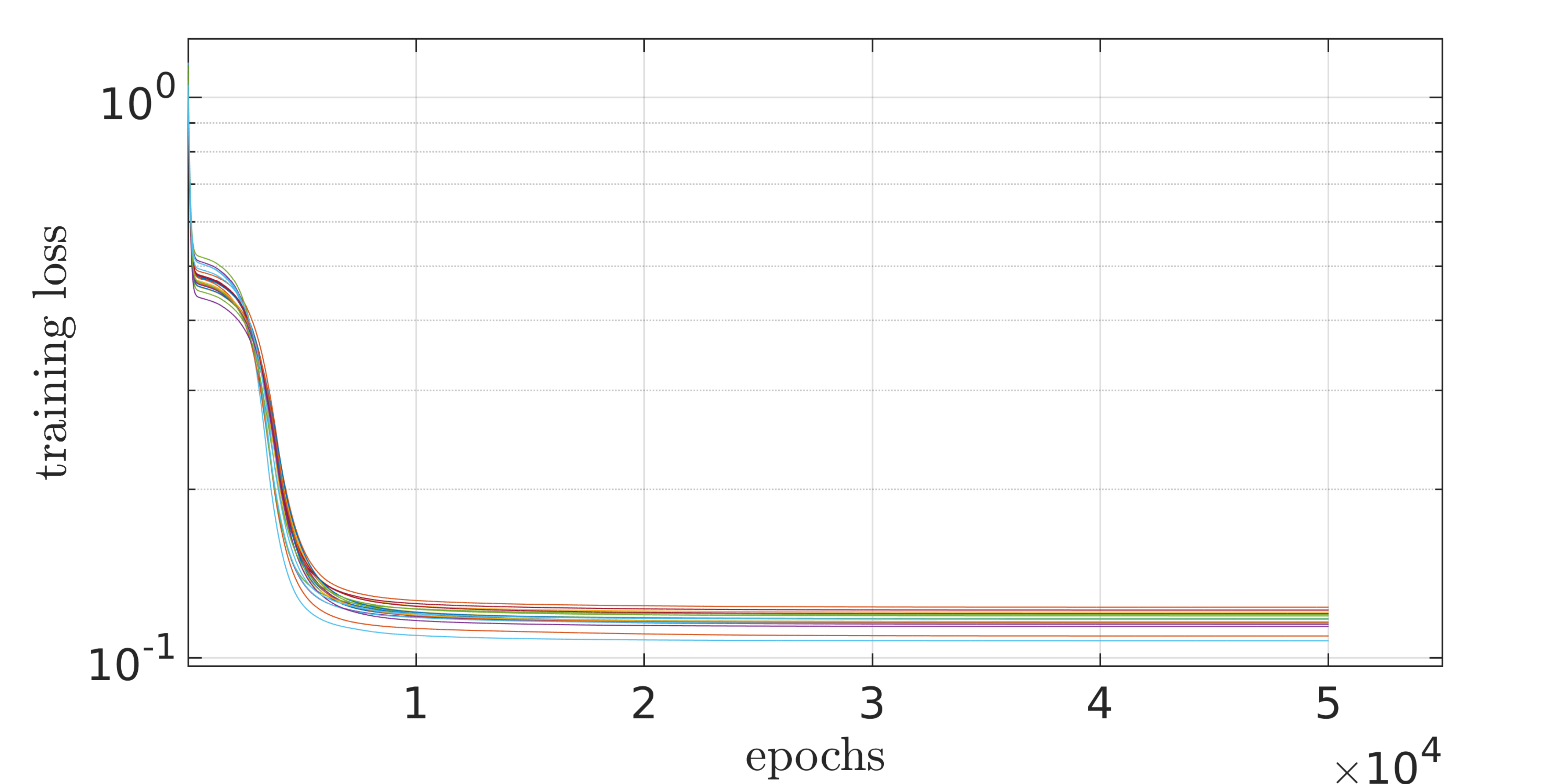}
\includegraphics[width=0.23\paperwidth,clip=true,trim=0mm 0mm 40mm 10mm]{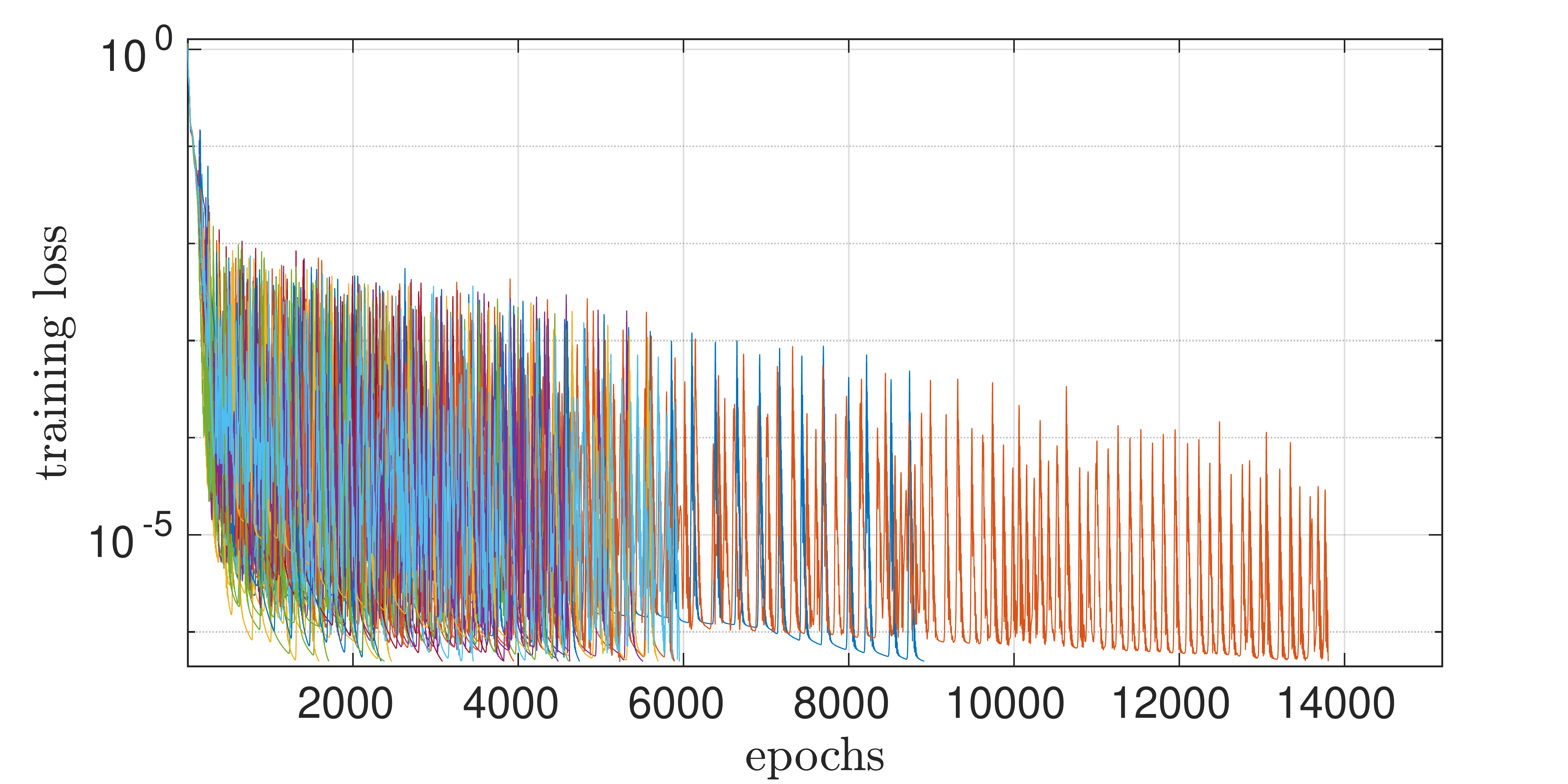}
\end{center}

\vspace{-2mm}
\caption{Examples of typical convergence in training a set of 20 ReLU $10\times 100$ DNNs on $m=750$ data points of $f(x) = \log(\sin(10x)+2)+\sin(x)$ with {\bf(left)} {\tt SGD} with a constant learning rate $\tau = 10^{-3}$ {\bf(middle)} {\tt SGD} with an exponentially decaying learning rate and {\bf(right)} {\tt Adam} with an exponentially decaying learning rate.}
\label{fig:typical_convergence}
\end{figure}

Previous studies on training DNNs suggest that the batch size can affect the convergence of the algorithms. Due to the small data set sizes considered herein over those in, e.g.\ computer vision, we have found that setting the batch size too small can result in longer training times (due to the increased transfer time between the CPU and GPU) and only marginal performance benefit. See \S \ref{sec:batchesSM}. We leave a more detailed study of the affects of batch size on the performance of trained DNNs in higher-dimensional problems to a future work.

%----------------------------------------------------
\subsection{Initialization and Precision}
\label{subsec:initialization}
%----------------------------------------------------

The strategy used to initialize the 
weights and biases of a DNN can have a large effect on the training process, with some initializations resulting in failure to train at all.
To explore the impact of this choice, we tested three different initialization strategies, all of which based on symmetric uniform or normal distributions with small variance. Empirically we find small variance to be an important factor in the success or failure of training.
The strategies tested are (i) normal with mean 0 and variance 0.01, (ii) normal with mean 0 and variance $2/N$, and (iii) uniform on $(-2/N,2/N)$.

Since we use constant layer widths in this work, i.e., $N_\ell = N$ for all hidden layers $\ell=1,\ldots,L+1$, we can compare these strategies to the popular Xavier and He initializations, see [45], as follows. The Xavier initialization sets the weights as mean zero normal random variables with variance $1/N_\ell$ where $N_\ell$ is the number of nodes on layer $\ell$, while the He initialization modifies the variance to $2/N_\ell$. 
Therefore strategy (ii) is similar to the He initialization, with the exception of the input and output layers for which strategy (ii) prescribes much smaller variance for problems with input and output dimension in the ranges considered in this work.
Also, for values of $N_\ell$ less than 100, the Xavier initialization results in variances larger than the variance of 0.01 used in strategy (i), and similarly for values of $N_\ell$ less than 200 for the He initialization.

Fig.\ 3 presents the results of these different initialization strategies for training three different architectures with {\tt Adam} on the piecewise continuous function approximation problem \eqref{eq:halfspace_func}.
There we see that strategy (i) performs near the best for the smaller ReLU $2\times 20$ and $3\times 30$ architectures, and about as well for the $5\times 50$ architecture as strategy (ii).

For the majority of our results, we focused on architectures with a small ratio $\beta = L/N$ (with $L$ the number of layers and $N$ the number of nodes per layer).
However, we also note that setting $\beta$ too small can result in DNNs which exhibit numerical instabilities after training, see Fig.\ \ref{fig:unstable_network}, despite achieving the training loss tolerance. 
We also remark that this can occur for networks with relatively small weights, see the right plot of Fig.\ \ref{fig:relu_exp_cos_func_weight_comp} which shows the weights and biases for the $20\times 800$ network remain on average bounded by $2$ as we increase the samples.

We also observed that for some problems, initializing and training our DNN models in double precision can lead to improved accuracy in testing the generalization performance, and that the improvement is more pronounced for deeper networks, see Fig.\ \ref{fig:precision_comparison}.
Due to the massive increase in computation time associated with training in double precision, a result of needing to train for more epochs to achieve the tolerance $5\times 10^{-16}$ and the increased time of double precision arithmetic, see item (ii) on Hardware in \S \ref{sec:testing}, we limit our comparisons of single vs. double precision results to a handful of problems and architectures.

\begin{figure}[ht]
\begin{center}
\includegraphics[width=0.23\paperwidth,clip=true,trim=2mm 0mm 0mm 0mm]{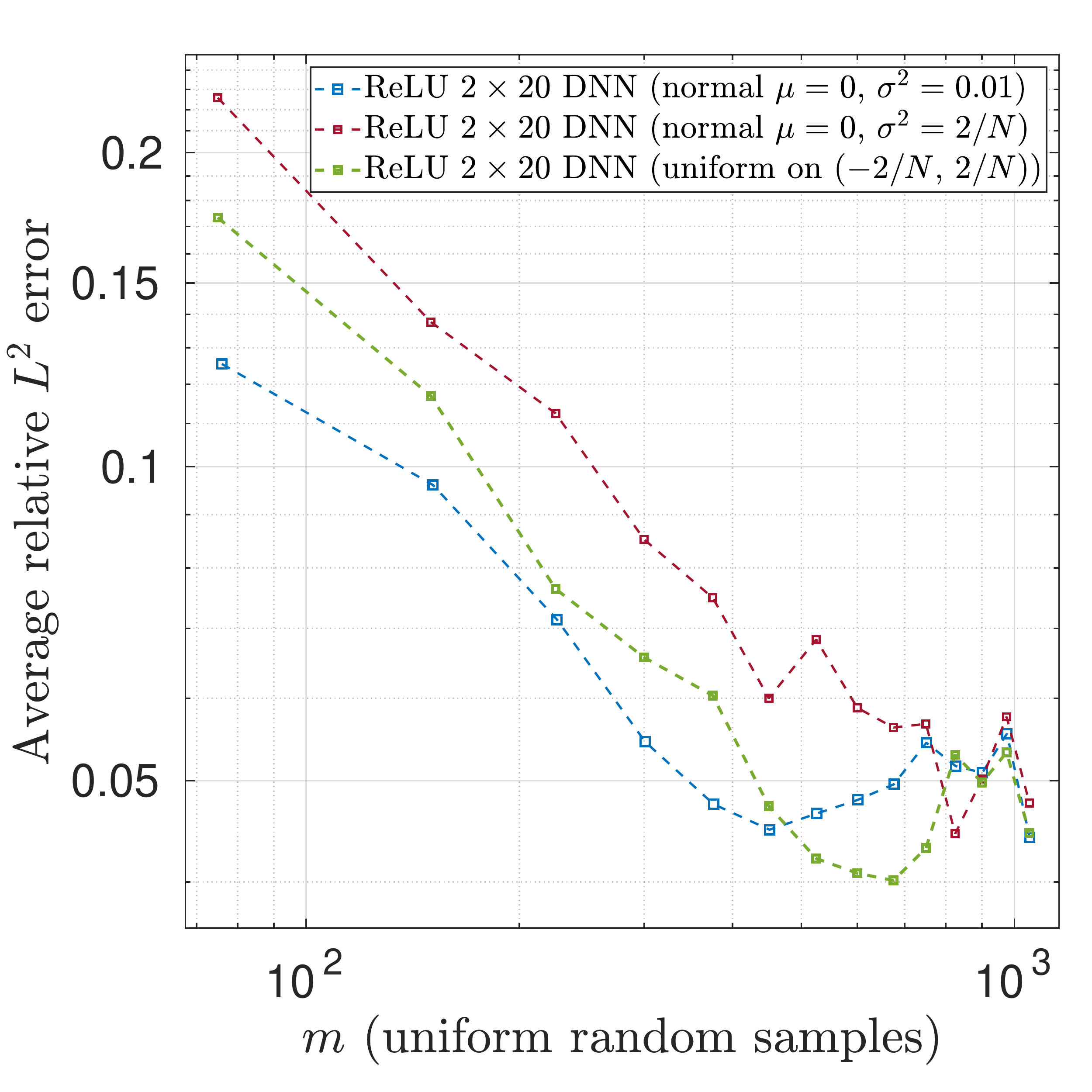}
\includegraphics[width=0.23\paperwidth,clip=true,trim=2mm 0mm 0mm 0mm]{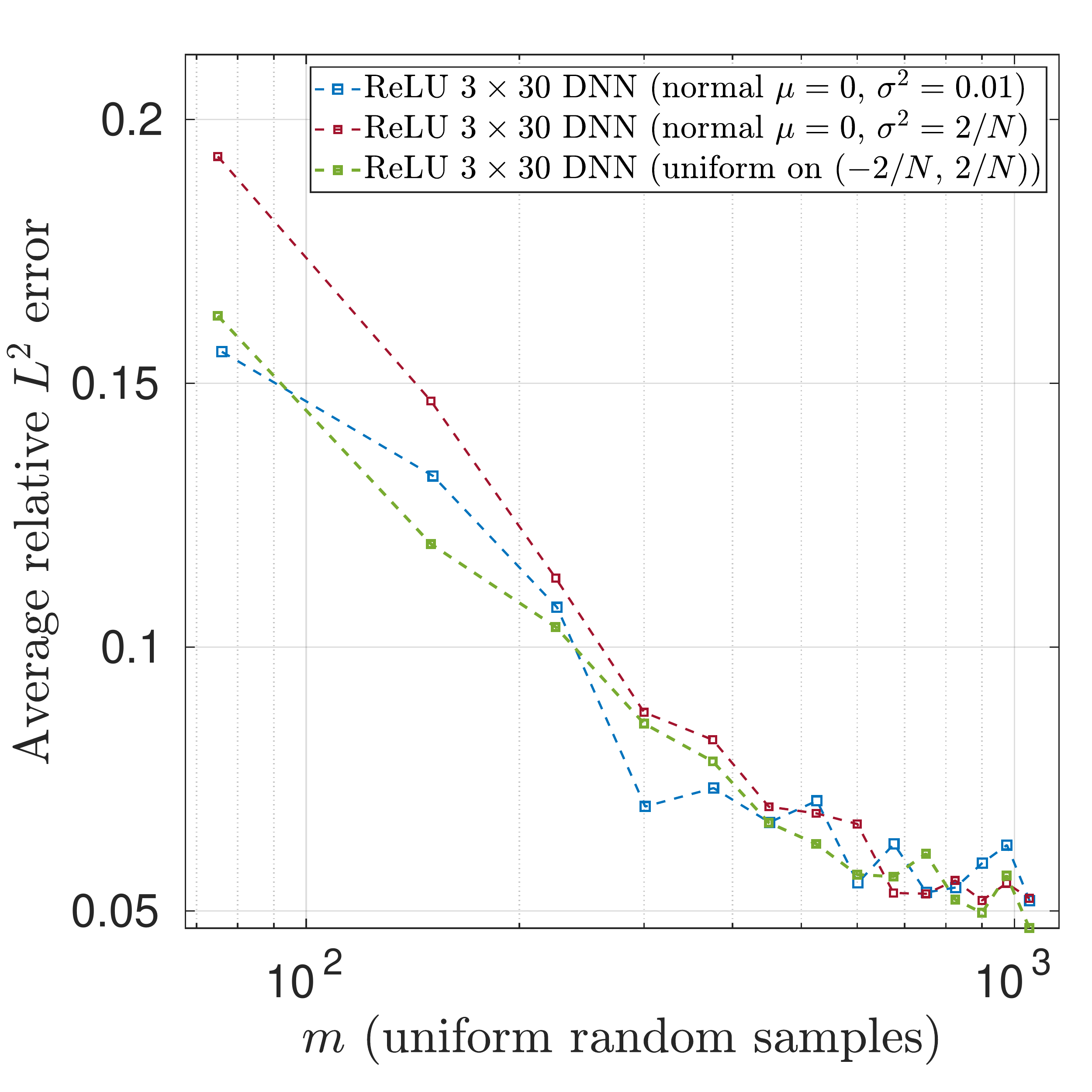}
\includegraphics[width=0.23\paperwidth,clip=true,trim=2mm 0mm 0mm 0mm]{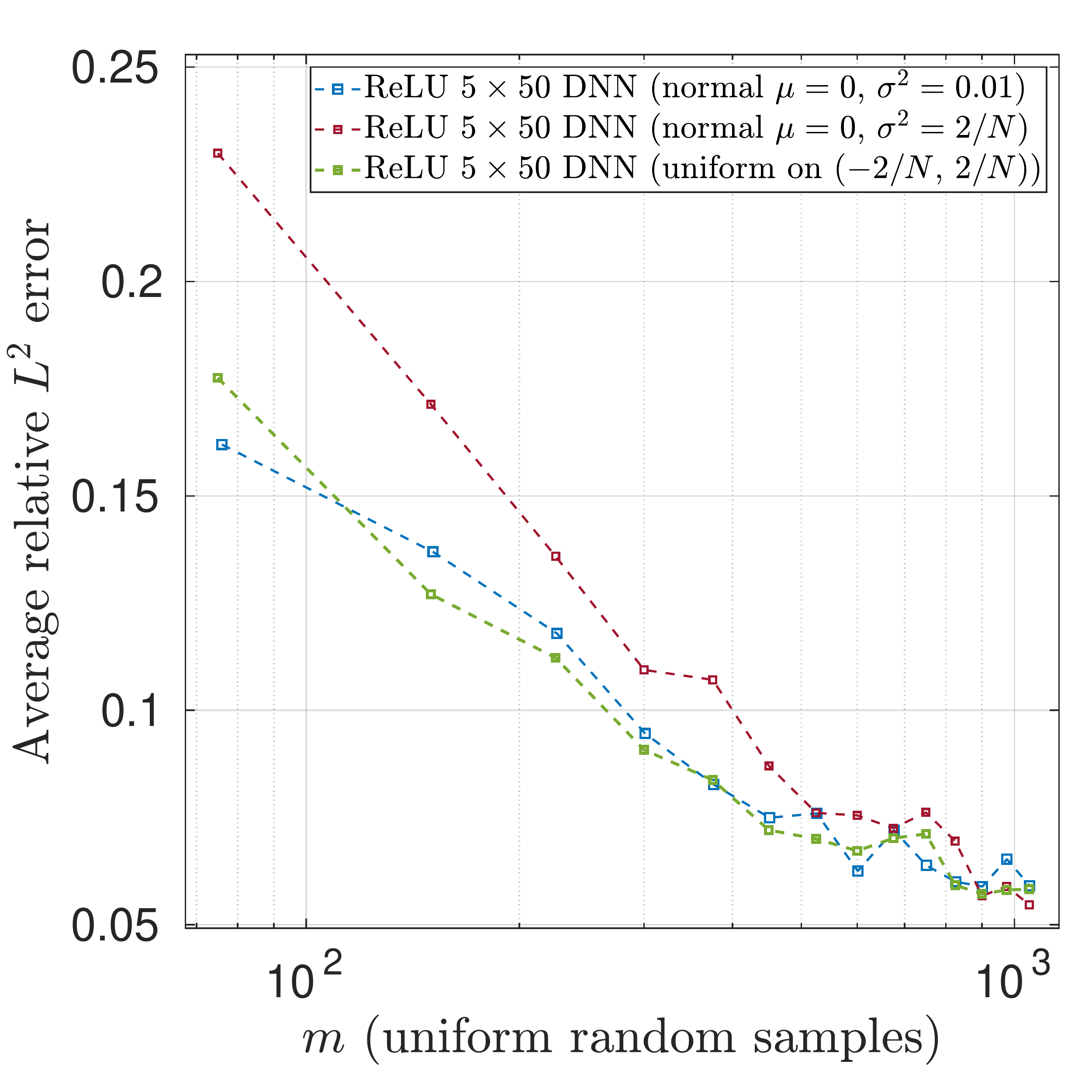}
\end{center}

\vspace{-2mm}
\caption{Comparison of strategies for initializing ReLU DNNs with $L$ hidden layers and $N$ nodes per hidden layer with {\bf(left)} $L=2$ and $N=20$, {\bf(middle)} $L=3$ and $N=30$, and {\bf(right)} $L=5$ and $N=50$. Results were obtained by training with the {\tt Adam} optimizer and an exponentially decaying learning rate schedule on function data generated from $f(x)$ from \eqref{eq:halfspace_func} with $d = 2$.}
\label{fig:init_comparison}
\end{figure}

\begin{figure}[ht]
\begin{center}
\includegraphics[width=0.17\paperwidth,clip=true,trim=0mm 0mm 0mm 0mm]{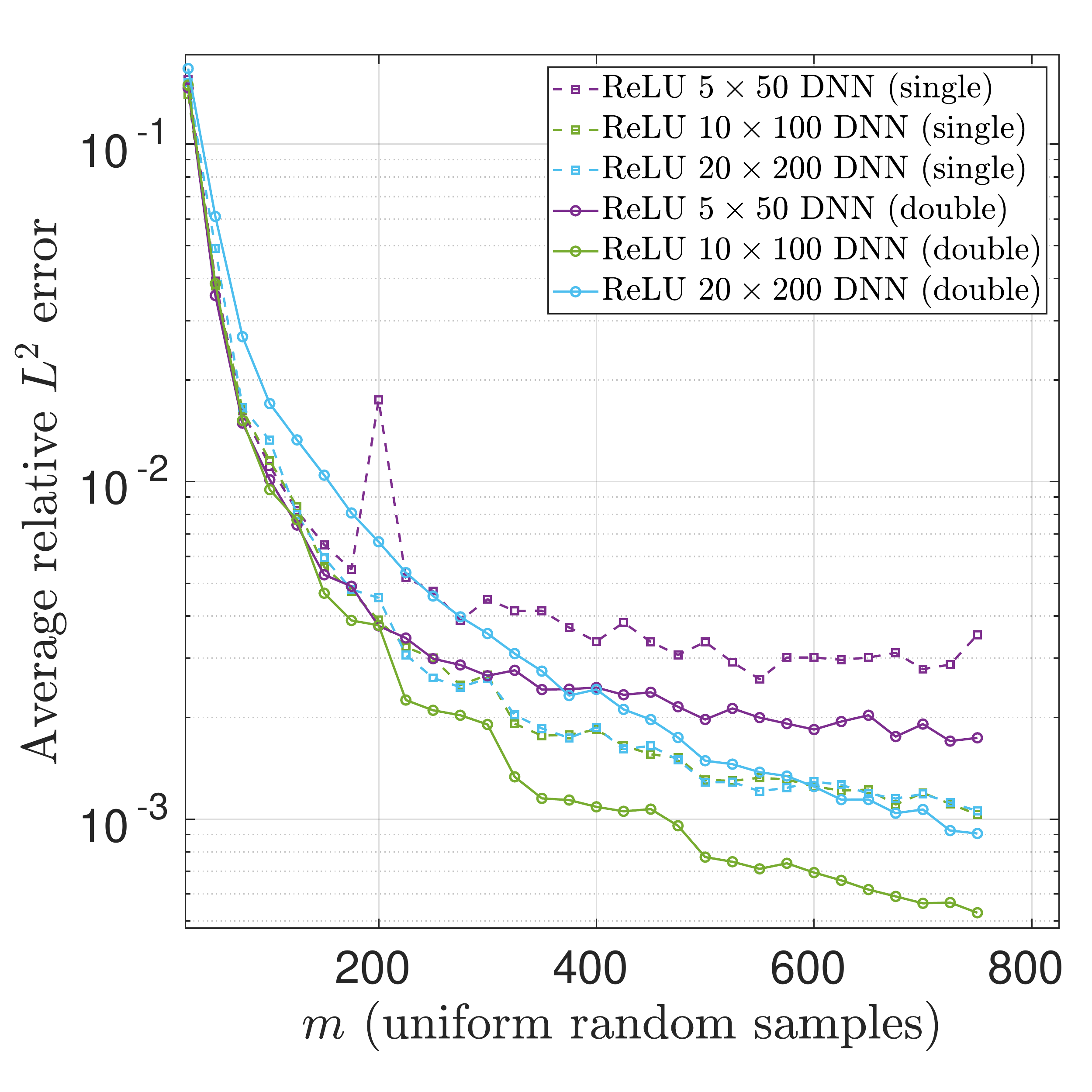} 
\includegraphics[width=0.17\paperwidth,clip=true,trim=0mm 0mm 0mm 0mm]{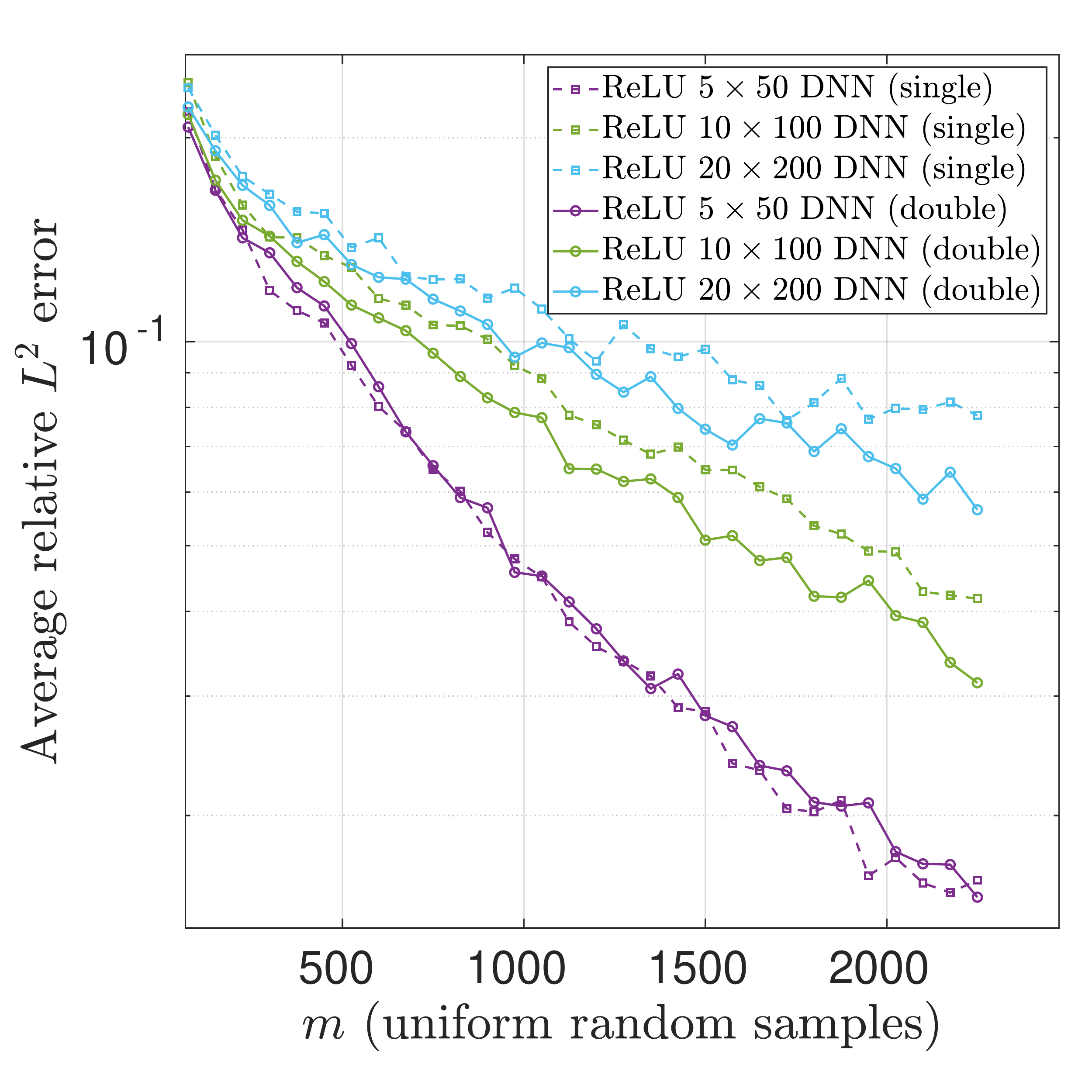}
\includegraphics[width=0.17\paperwidth,clip=true,trim=0mm 0mm 0mm 0mm]{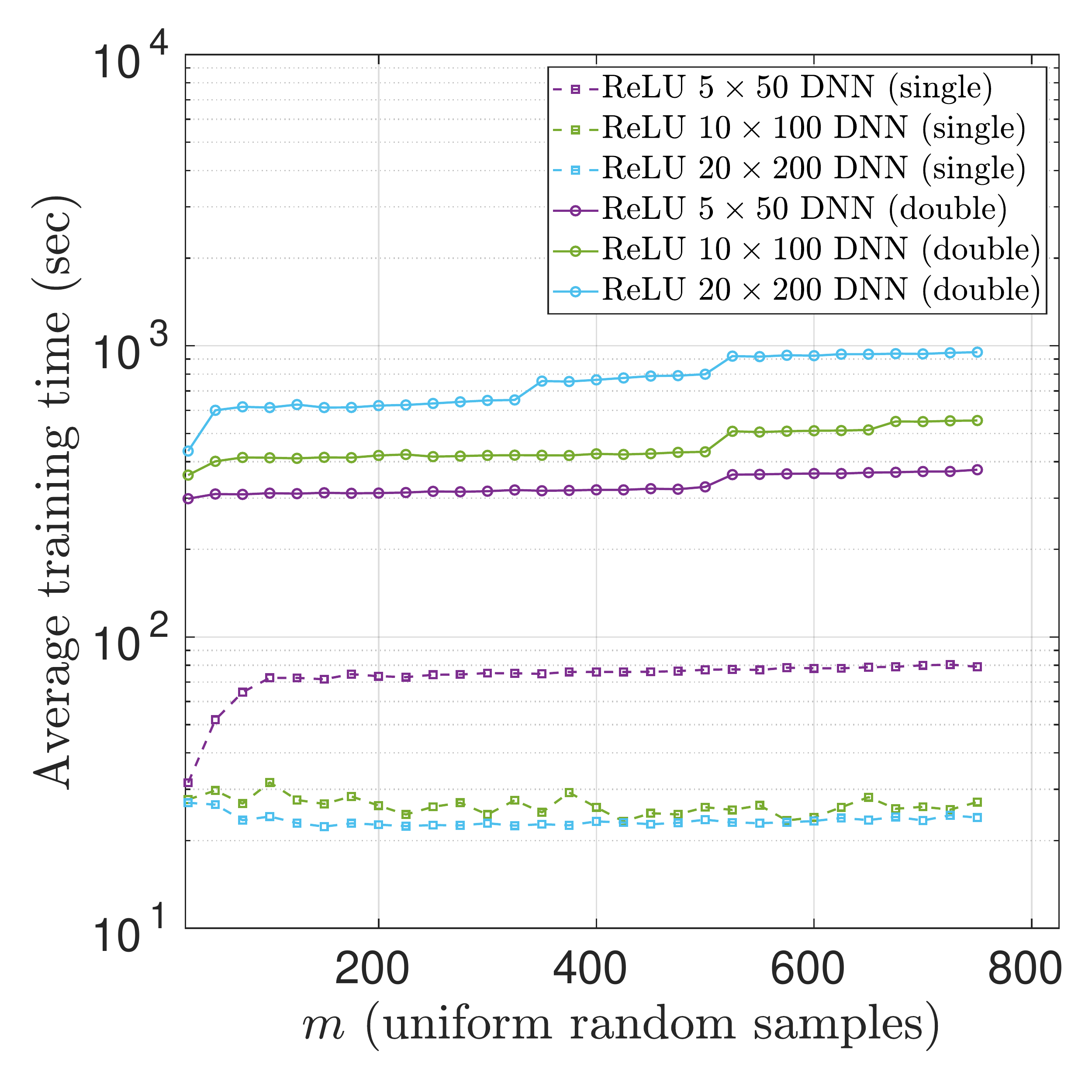}
\includegraphics[width=0.17\paperwidth,clip=true,trim=0mm 0mm 0mm 0mm]{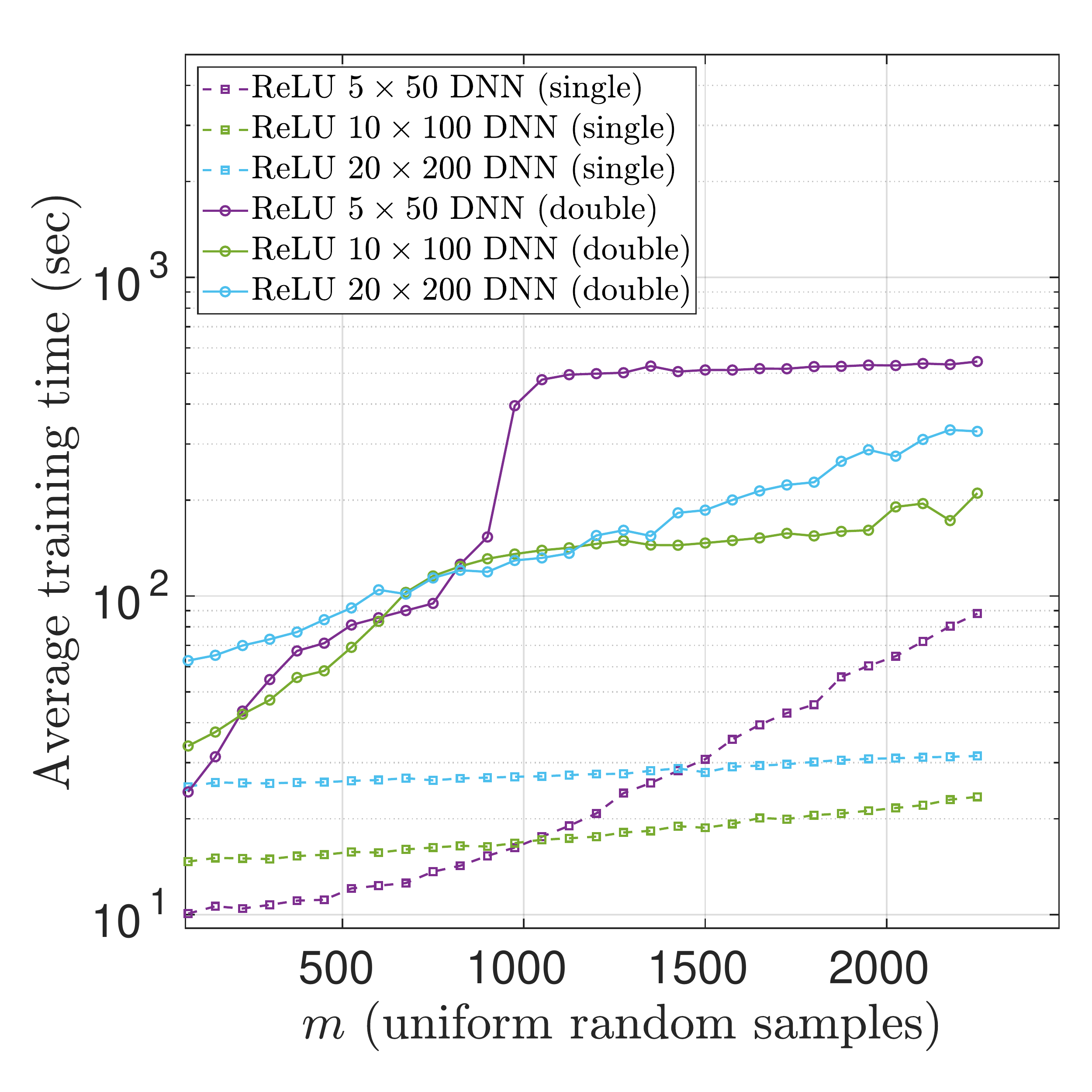}\\[-0.2cm]
\begin{tabular}{c c c c} 
\;\;\;\;{\footnotesize\bf(a)} & \,\,\;\quad\qquad\qquad\qquad {\footnotesize\bf(b)} & \,\,\;\quad\qquad\qquad\qquad {\footnotesize\bf(c)} & \,\;\quad\qquad\qquad\qquad {\footnotesize\bf(d)}
\end{tabular}
\end{center}

\vspace{-2mm}
\caption{Comparison of {\bf(a)} \& {\bf(b)} average relative $L^2$ error and {\bf(c)} \& {\bf(d)} average training time for DNNs initialized and trained in single vs. double precision on a {\bf(a)} \& {\bf(c)} one dimensional smooth function $f(x) = \log(\sin(10x)+2) + \sin(x)$ and {\bf(b)} \& {\bf(d)} eight-dimensional smooth function $f(x)$ from \eqref{eq:slower_decay_rational_func}.}
\label{fig:precision_comparison}
\end{figure}

%----------------------------------------------------
\subsection{Convergence of {\tt Adam} on function approximation problems}
\label{subsec:convergence}
%----------------------------------------------------

We now discuss the convergence of the {\tt Adam} optimizer. 
Our stopping criteria for testing the convergence depends only on the $\ell_2$ loss with respect to the training data and the current number of epochs of training. 
During the course of training, the process outlined above may result in networks which exhibit numerical instabilities or provide poor  performance, see Fig.\ \ref{fig:unstable_network}. 
In \S \ref{subsec:solvers}, we noted that some trials may not achieve the desired accuracy tolerance within our budget of 50,000 epochs in single precision and 200,000 epochs in double precision. This can also occur for some of the trials with the {\tt Adam} optimizer, see Fig.\ \ref{fig:nonmonotone_Adam_convergence}. 
Despite these issues, when viewing the average performance of the {\tt Adam} optimizer on such problems we often find that the majority of networks are numerically stable and approximate the function well, see Fig.\ \ref{fig:boxplot} which displays boxplots of the average relative $L^2$ error of trained ReLU DNNs with 10 hidden layers and 100 nodes per layer on a discontinuous two-dimensional function.
Note that the spread shown in the boxplots is somewhat large due to the discontinuity in the underlying function. On smoother functions (not shown), this spread is smaller.
We also remark that the observed performance improves with depth, see Fig.\ \ref{fig:logsin_plus_sin_K10_relu_plots} which displays the outputs of all 20 trials of a variety of ReLU DNN architectures trained on a smooth, one-dimensional function.

\begin{figure}[ht]
\begin{center}
\includegraphics[width=0.34\paperwidth,clip=true,trim=10mm 0mm 30mm 10mm]{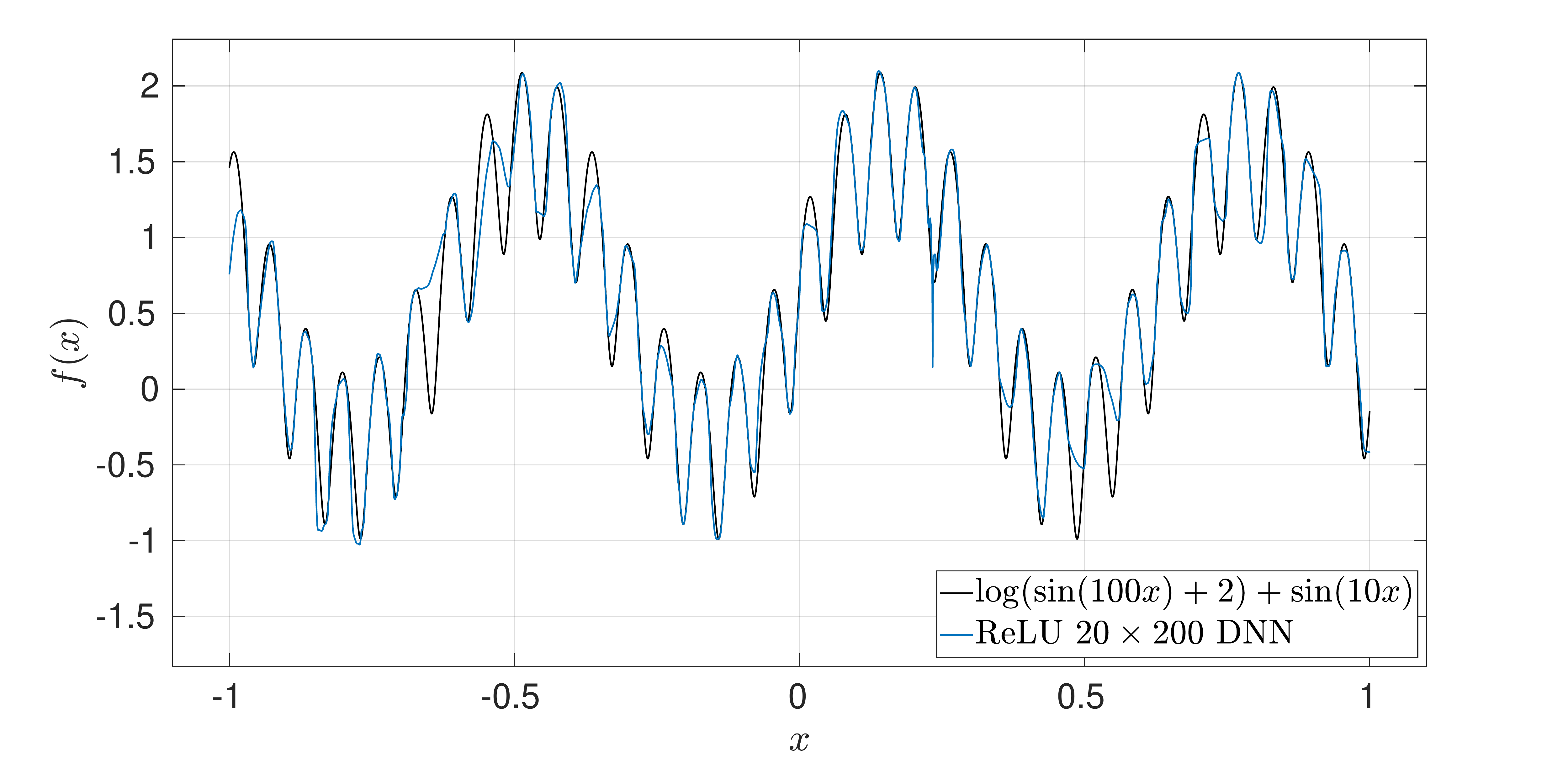}
\includegraphics[width=0.34\paperwidth,clip=true,trim=10mm 0mm 30mm 10mm]{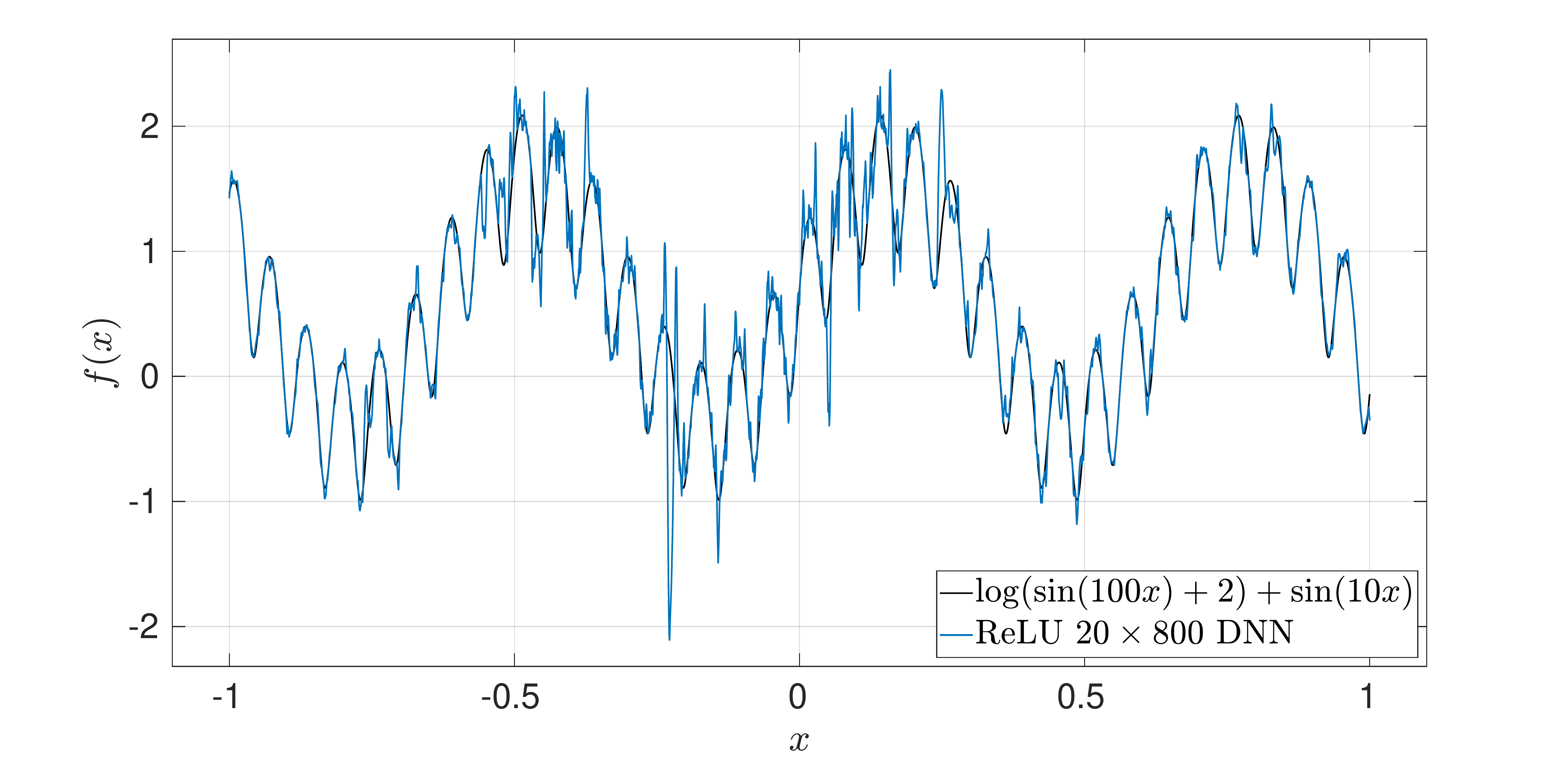}
\end{center}

\vspace{-2mm}
\caption{Unstable ReLU networks with $20$ hidden layers, trained in single precision to a loss tolerance of $5\times 10^{-7}$ on noiseless data. The {\bf(left)} network has $200$ nodes per layer and exhibits a numerical instability (a spike) near $x=0.2334$, while the {\bf(right)} network has $800$ nodes per layer and exhibits multiple instabilities throughout the domain, providing poor pointwise estimation of the target function.}
\label{fig:unstable_network}
\end{figure}

\begin{figure}[ht]
\begin{center}
\includegraphics[width=0.34\paperwidth,clip=true,trim=5mm 0mm 40mm 10mm]{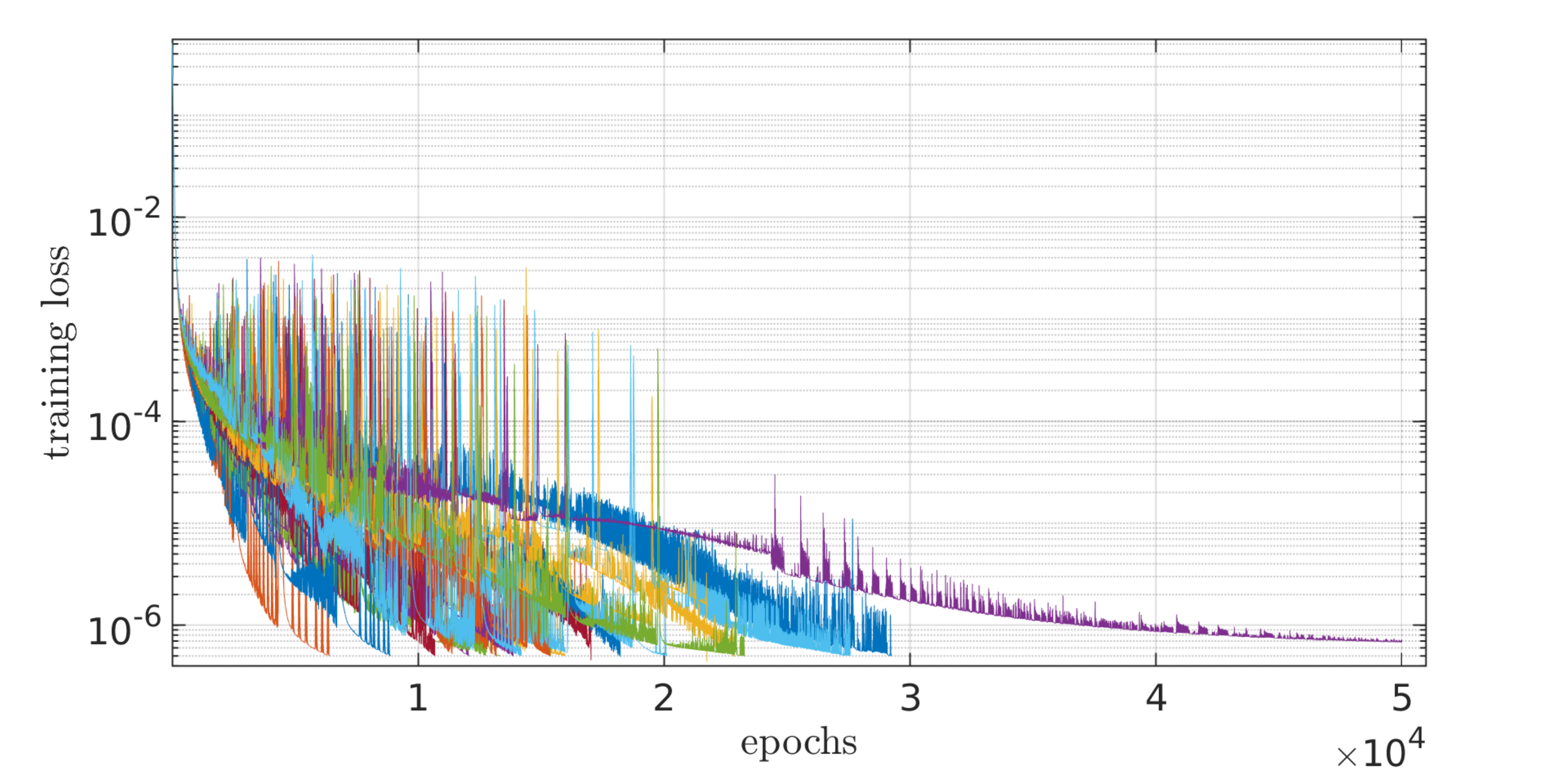}
\includegraphics[width=0.34\paperwidth,clip=true,trim=5mm 0mm 40mm 10mm]{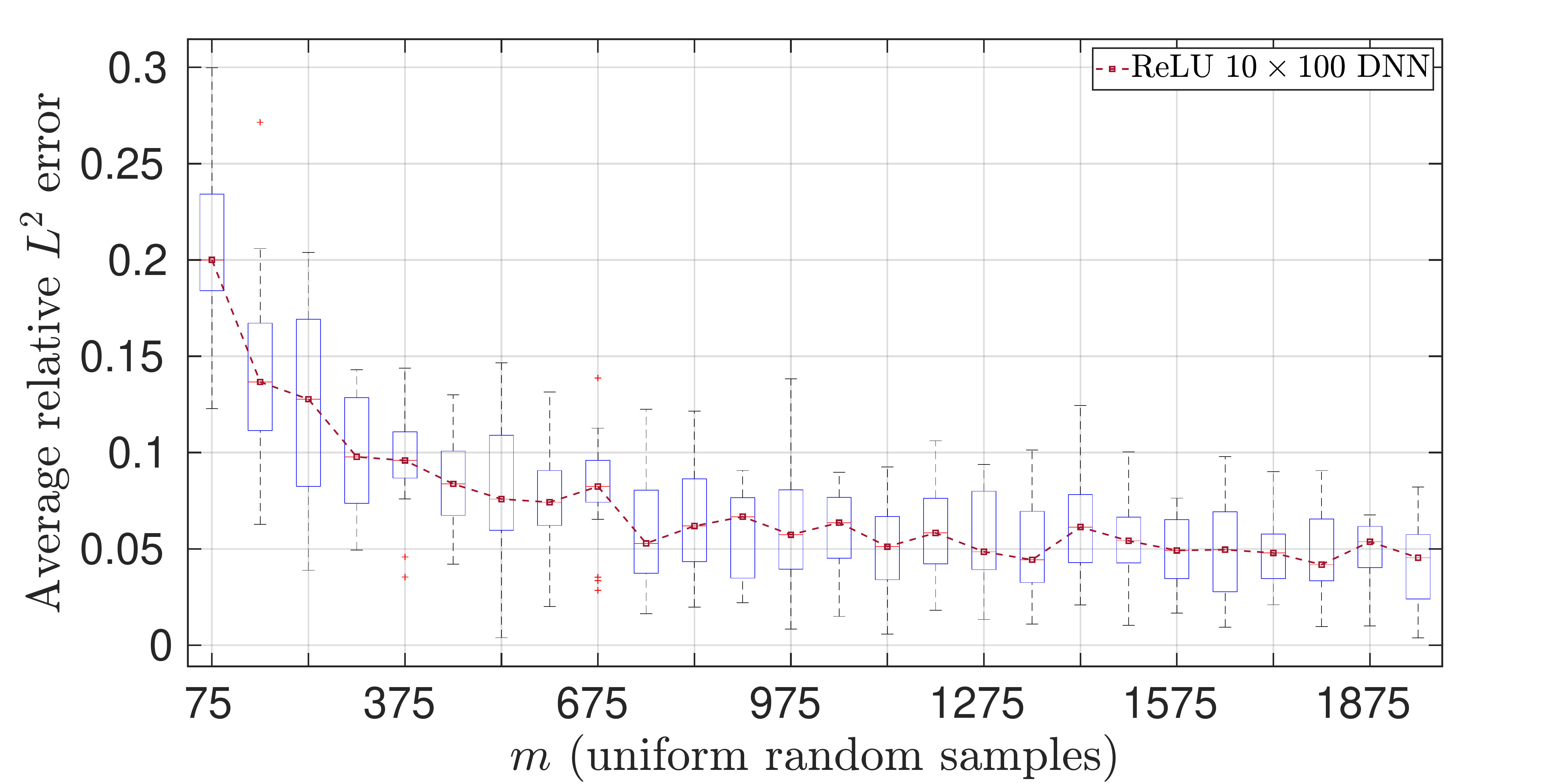}
\end{center}

\vspace{-2mm}
\caption{Training a $10$ layer DNN with $100$ nodes per layer with {\tt Adam} on the function \eqref{eq:halfspace_func} with $d = 2$. {\bf(left)} 
An example where one out of 20 trials (each plotted with a different color) is unable to complete within 50,000 epochs. {\bf(middle)} Boxplot of the average relative $L^2$ errors of 20 trials. For the boxplot, the tops and bottoms of the boxes represent the 25th and 75th percentiles, with the whiskers covering the most extreme datapoints and outliers (red plusses) plotted individually, see {\tt boxplot} from {\tt MATLAB} for more details.}
\label{fig:nonmonotone_Adam_convergence}
\label{fig:boxplot}
\end{figure}

\begin{figure}[ht]
\begin{center}
\includegraphics[width=0.23\paperwidth,clip=true,trim=20mm 0mm 40mm 0mm]{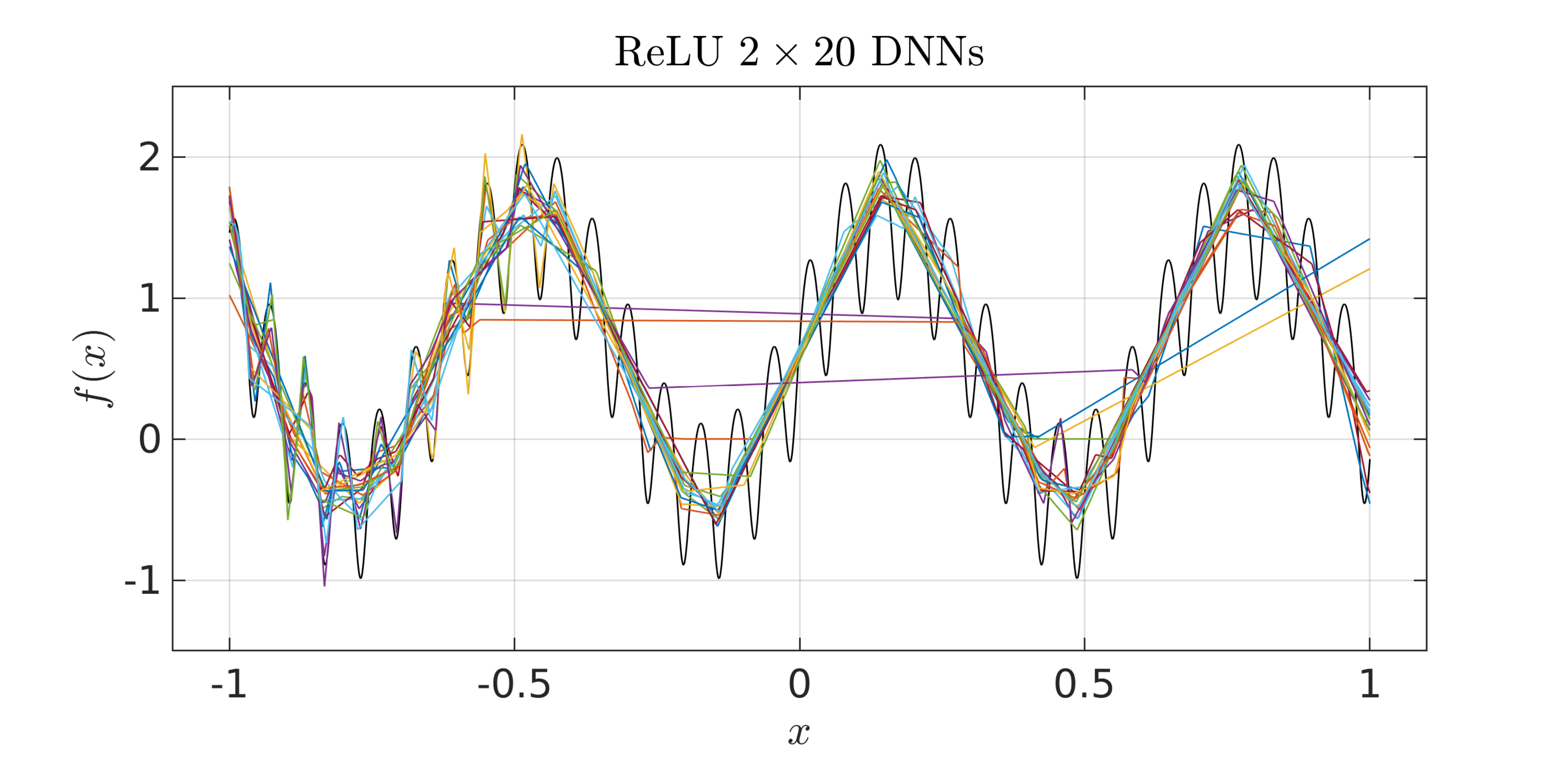}
\includegraphics[width=0.23\paperwidth,clip=true,trim=20mm 0mm 40mm 0mm]{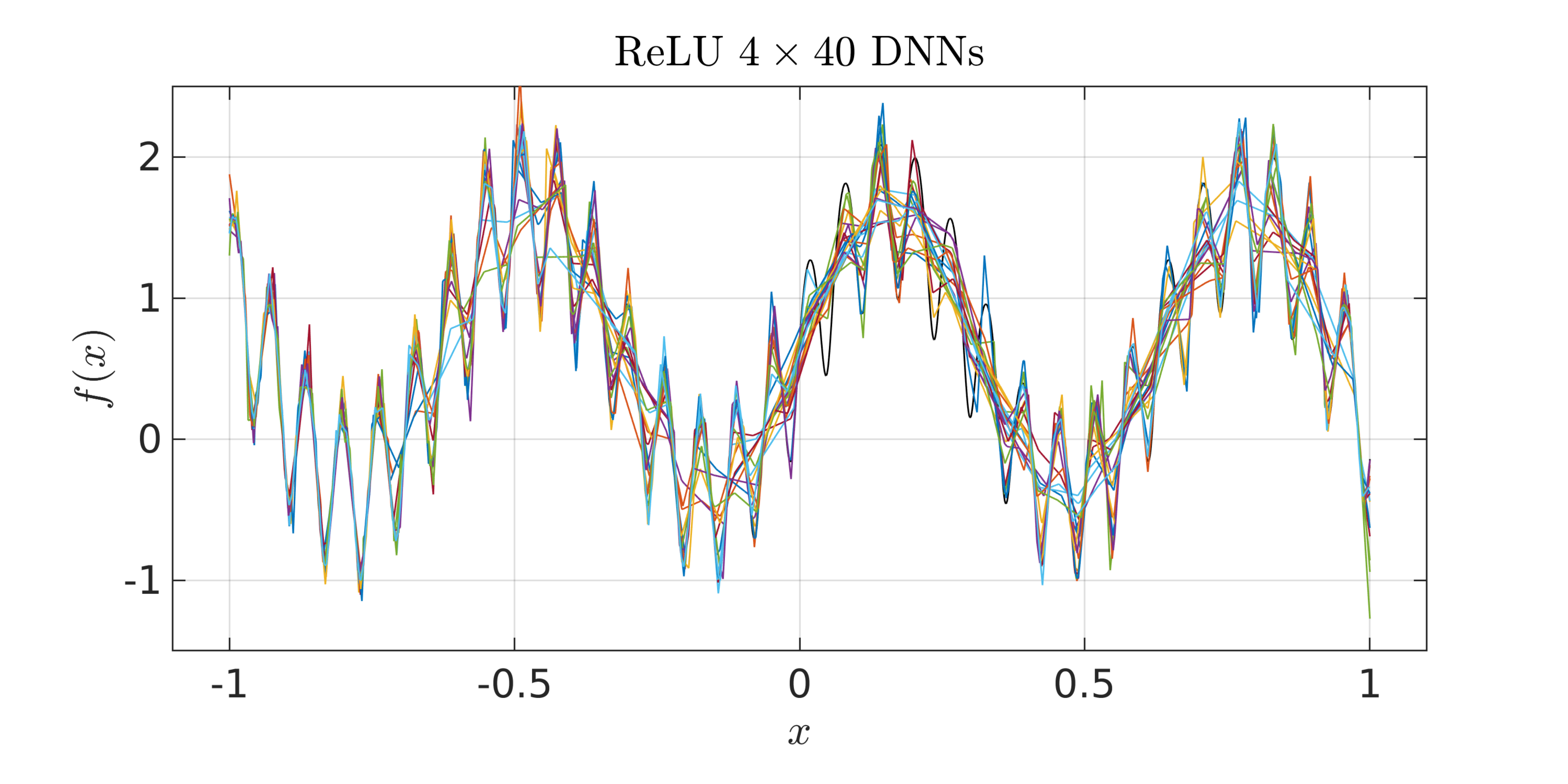} 
\includegraphics[width=0.23\paperwidth,clip=true,trim=20mm 0mm 40mm 0mm]{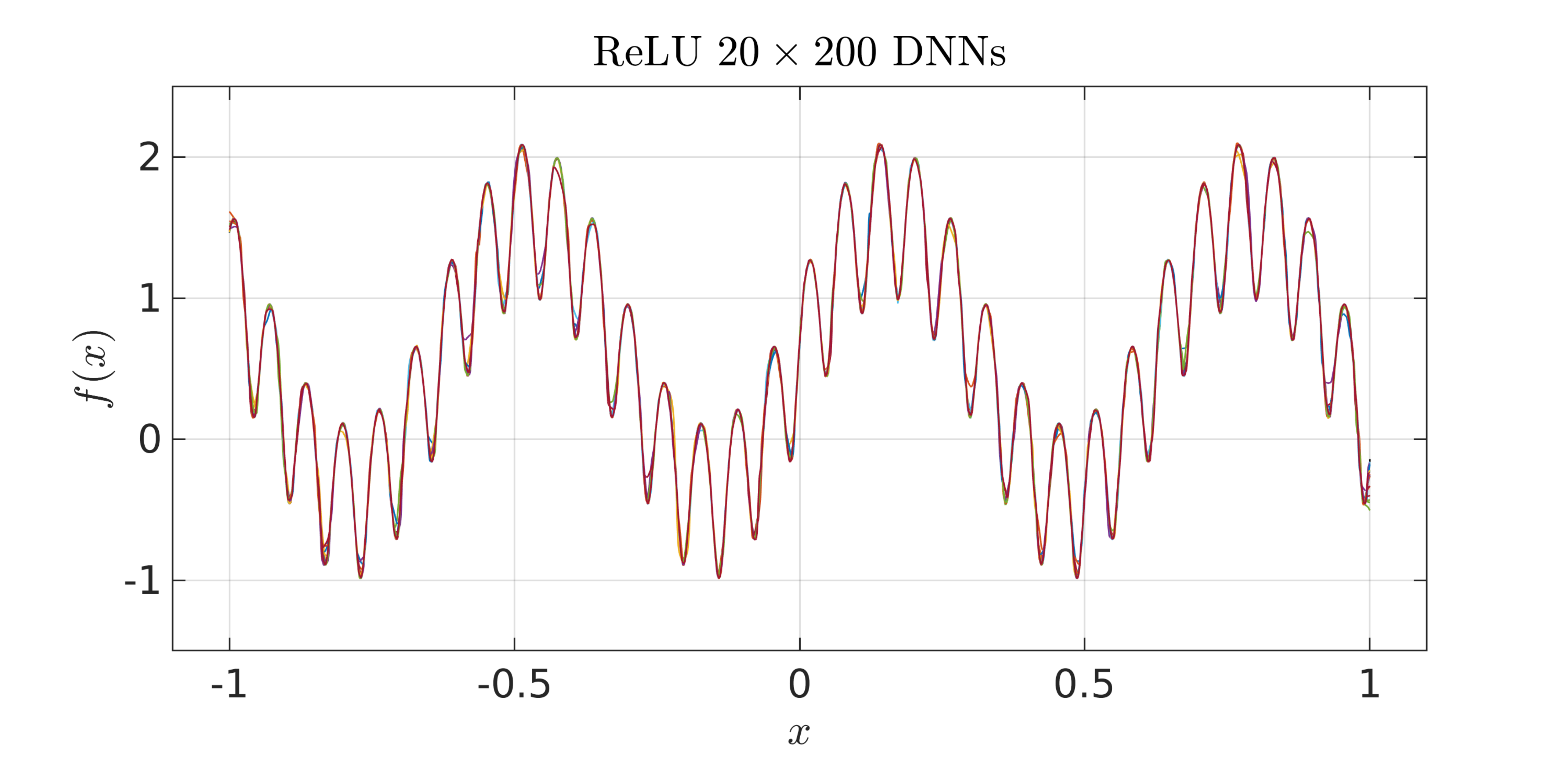}
\end{center}

\vspace{-2mm}
\caption{Comparison of outputs of a variety of ReLU DNN architectures trained with {\tt Adam} on 750 samples of the function $f(x) = \log(\sin(100x)+2)+\sin(10x)$ (plotted in black).}
\label{fig:logsin_plus_sin_K10_relu_plots}
\end{figure}

% !TEX root = ./MLFA.tex

\section{Numerical experiments}
\label{sec:experiments}

We now present our numerical experiments. We test on smooth functions of one or more variables, as well as piecewise continuous functions. The results presented in this section elaborate on the main conclusions in \S \ref{sec:introconc}.

%----------------------------------------------------
\subsection{Smooth one-dimensional functions}
\label{subsec:smooth_1d_numex}
%----------------------------------------------------

We first consider problems where the target function has analytic dependence on only one variable, with regularity controllable by a single parameter. Specifically we consider
\begin{align}
\label{1dsmoothfnK}
f(x) = \log(\sin(10 K x) + 2) + \sin(K x),
\end{align}
for values of $K=1$ and $K=10$. When $K=1$, $f$ is smooth and has rapidly decaying Legendre coefficients, making it ideal for reconstruction with CS techniques. When $K=10$, $f$ oscillates rapidly and has slowly decaying coefficients, leading to less efficient approximation by polynomial-based methods.
Fig.\ \ref{fig:1d_log_sin_func_1} displays results in the case of $K=1$, and the right plot compares the average relative $L^2$ errors of the CS and DL approaches with respect to the number of samples $m$ used in training.
We observe shallower networks fail to achieve a good approximation, while increasing depth and width leads to networks which are competitive with the unweighted and weighted CS approaches.
However, comparing the results obtained with the 10 and 20 hidden layer deep networks, we note diminishing returns in performance with increasing size.

Any number of factors can contribute to this phenomenon, including overfitting of the data, increased difficulty of training larger networks from our choice of random initialization, or development of numerical instabilities due to the accumulation of errors from standard sources, e.g., roundoff, overflow, and underflow.
We remark that improved results have been obtained on this function by training some of our networks in double precision, see Fig.\ \ref{fig:precision_comparison}. 
However we also observe that training the 20 hidden layer DNN in double precision did not improve, but actually decreased its accuracy, suggesting room for improvement in training and initializing larger DNNs in double precision.

\begin{figure}[ht]
\begin{center}
\includegraphics[width=0.23\paperwidth,clip=true,trim=0mm 0mm 0mm 0mm]{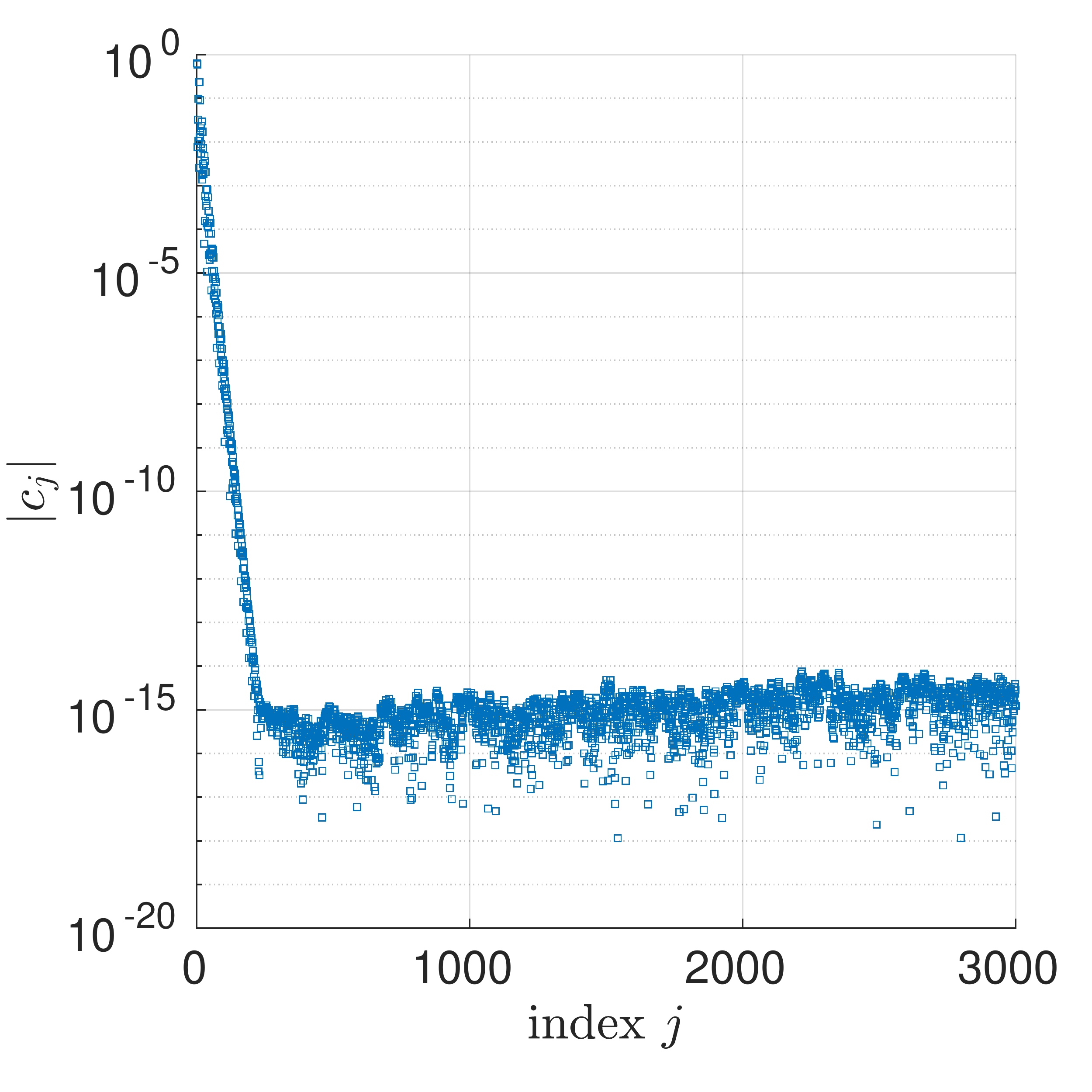}
\includegraphics[width=0.23\paperwidth,clip=true,trim=0mm 0mm 0mm 0mm]{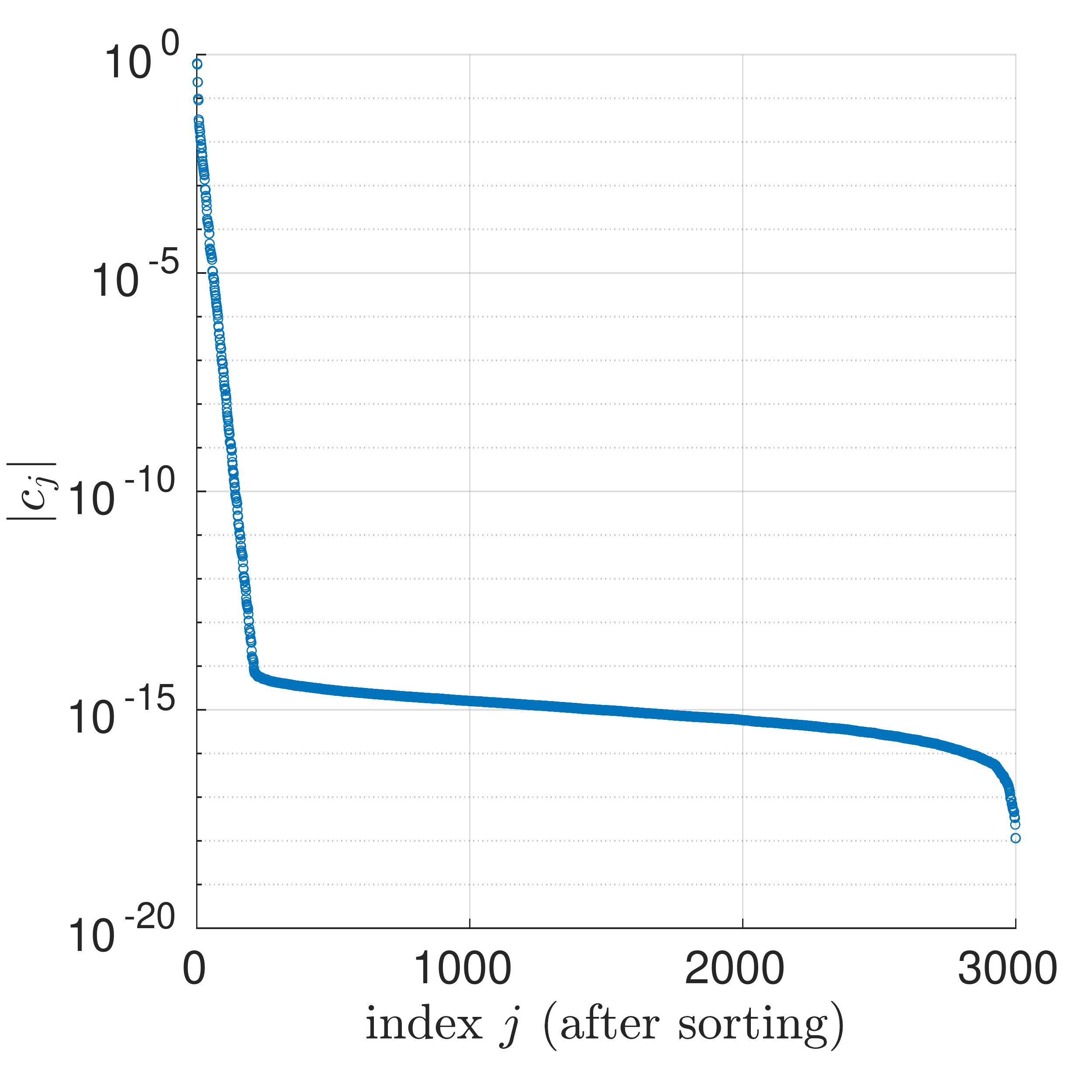}
\includegraphics[width=0.23\paperwidth,clip=true,trim=0mm 0mm 0mm 0mm]{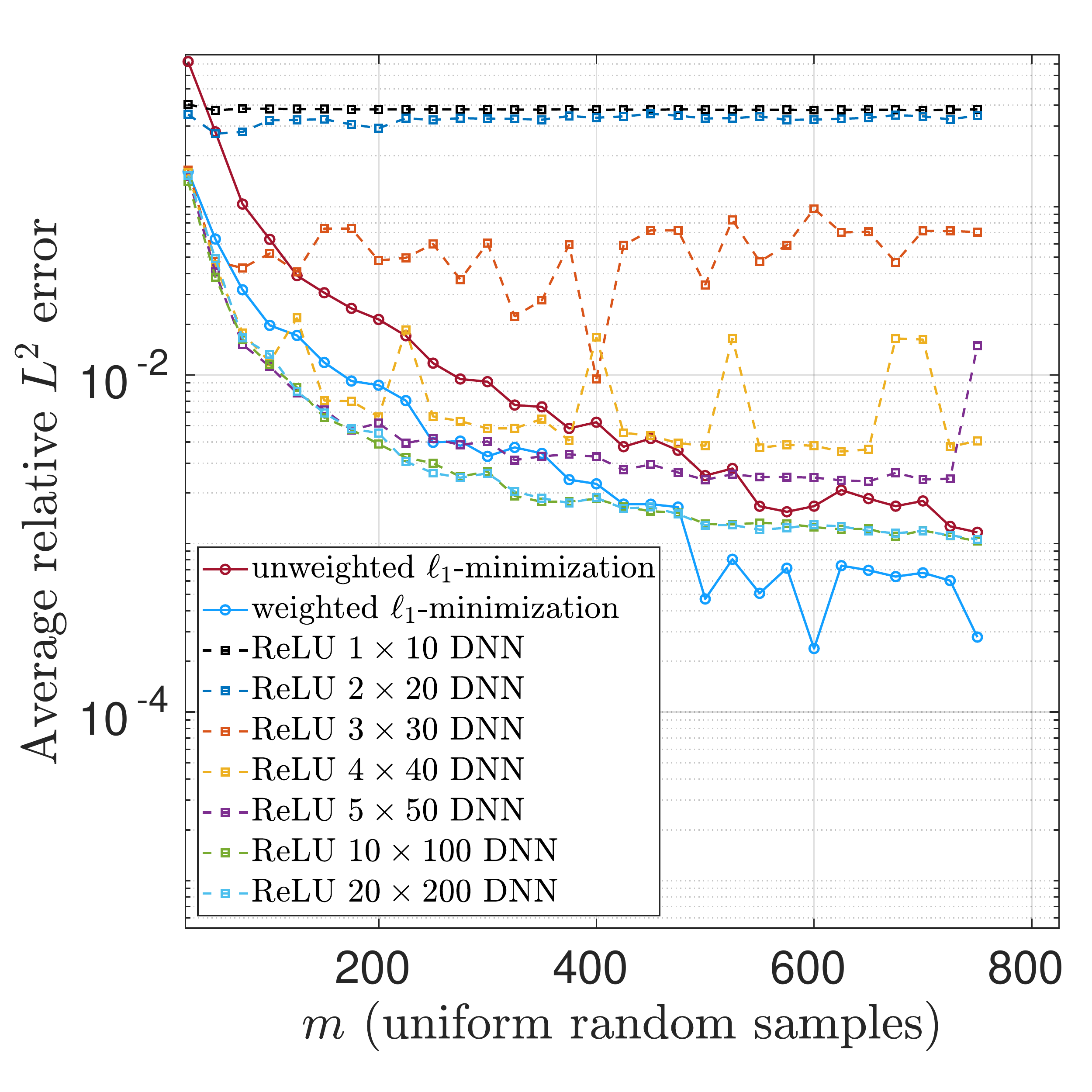}
\end{center}

\vspace{-2mm}
\caption{Legendre coefficients of $f(x) = \log( \sin(10 x) + 2) + \sin(x)$ sorted {\bf (left)} lexicographically and {\bf (center)} by decreasing magnitude. {\bf (right)} Average relative $L^2$ error v.s. number of samples used in training. CS approximations were computed with the Legendre basis of cardinality $n=3,000$.}
\label{fig:1d_log_sin_func_1}
\end{figure}

Fig.\ \ref{fig:1d_log_sin_func_2} displays the results of approximating $f$ when $K=10$, where the more rapid oscillation now leads to a degradation in performance for both methods.
We again observe improved accuracy by increasing depth and width as in the previous example. We also again observe the diminishing returns increasing the size of the network architectures from $10 \times 100$ to $20\times 200$, although the 20 hidden layer network provides the best accuracy of all tested DNNs after approximately 600 samples and achieves the same accuracy as the weighted CS approximations around 700 samples.

\begin{figure}[ht]
\begin{center}
\includegraphics[width=0.23\paperwidth,clip=true,trim=0mm 0mm 0mm 0mm]{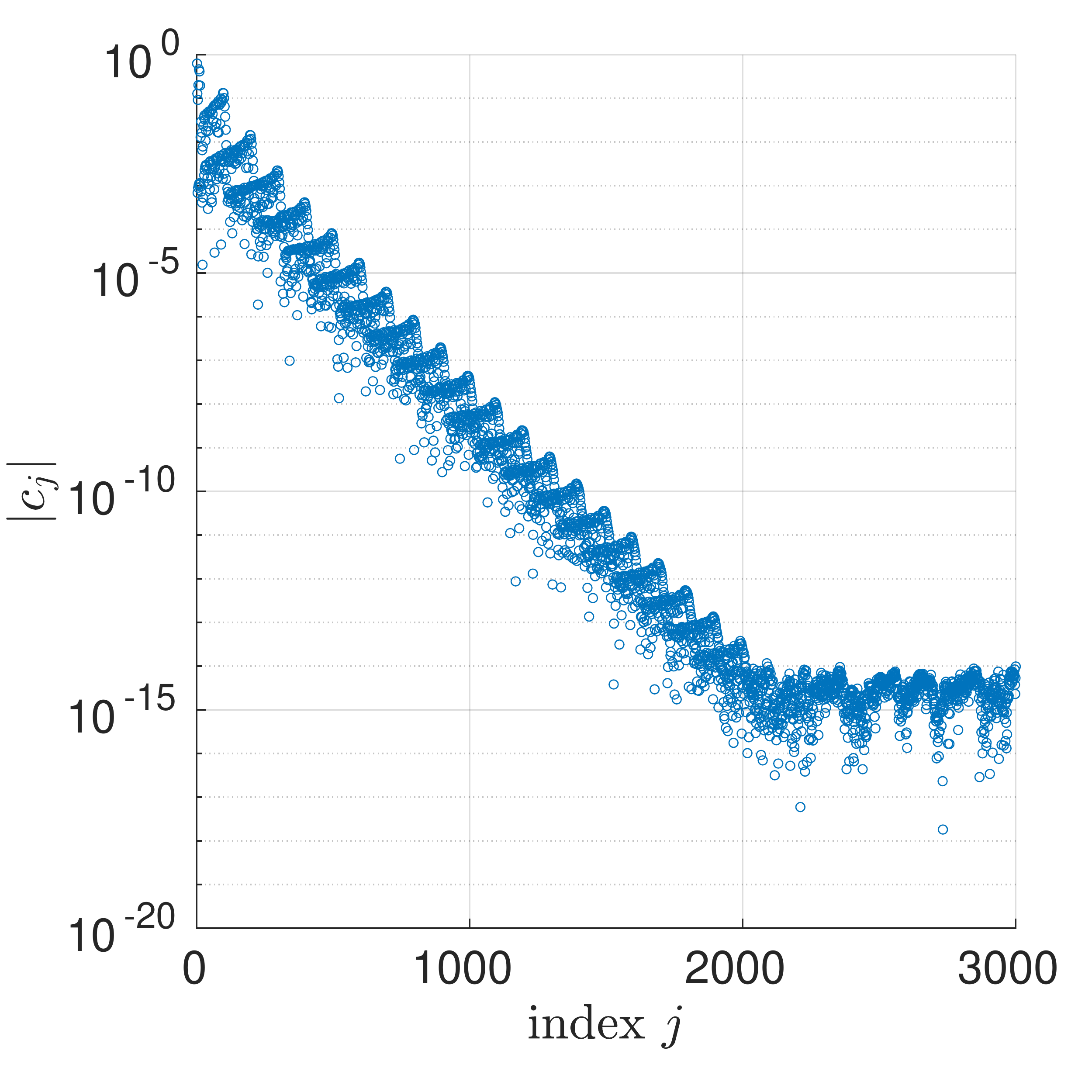}
\includegraphics[width=0.23\paperwidth,clip=true,trim=0mm 0mm 0mm 0mm]{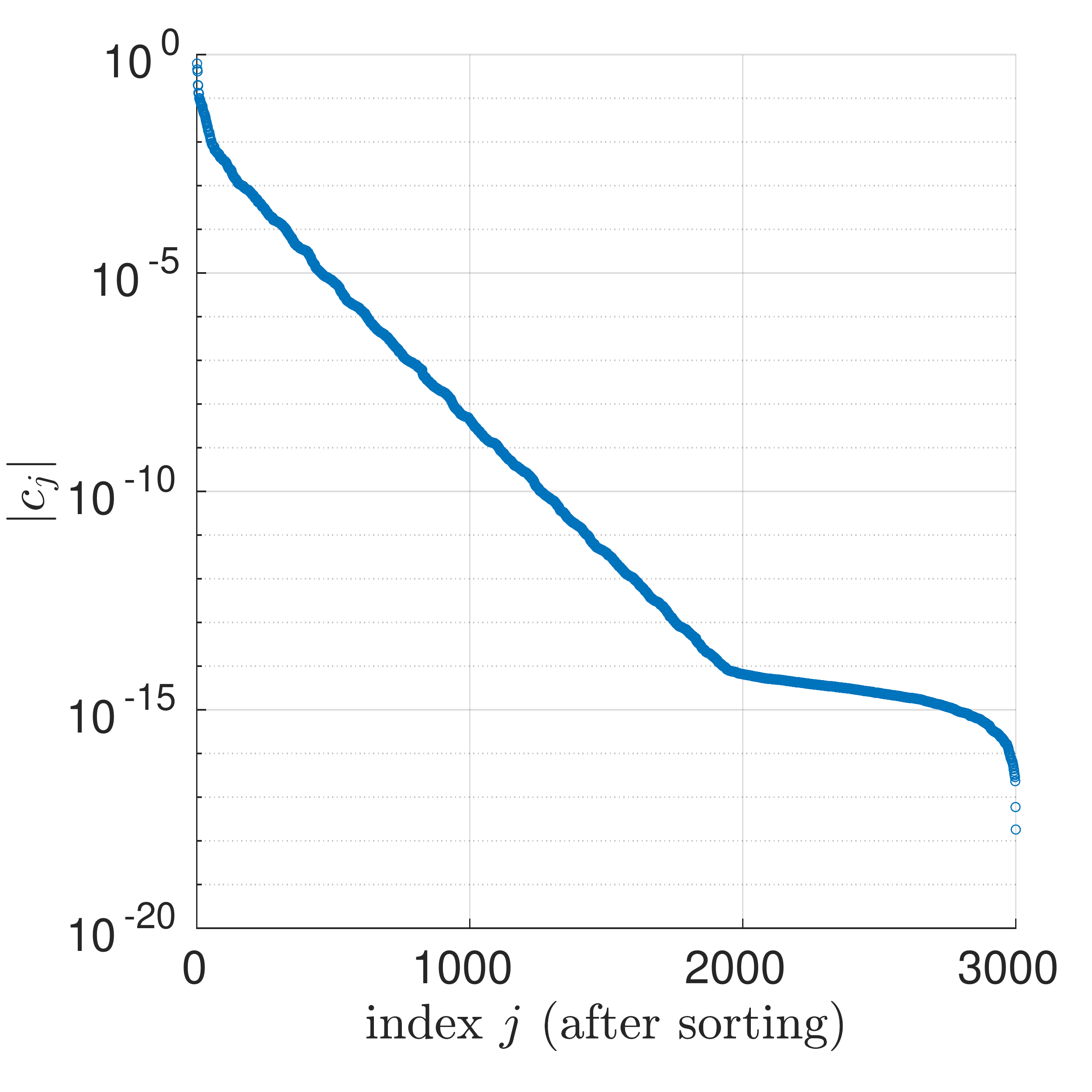}
\includegraphics[width=0.23\paperwidth,clip=true,trim=0mm 0mm 0mm 0mm]{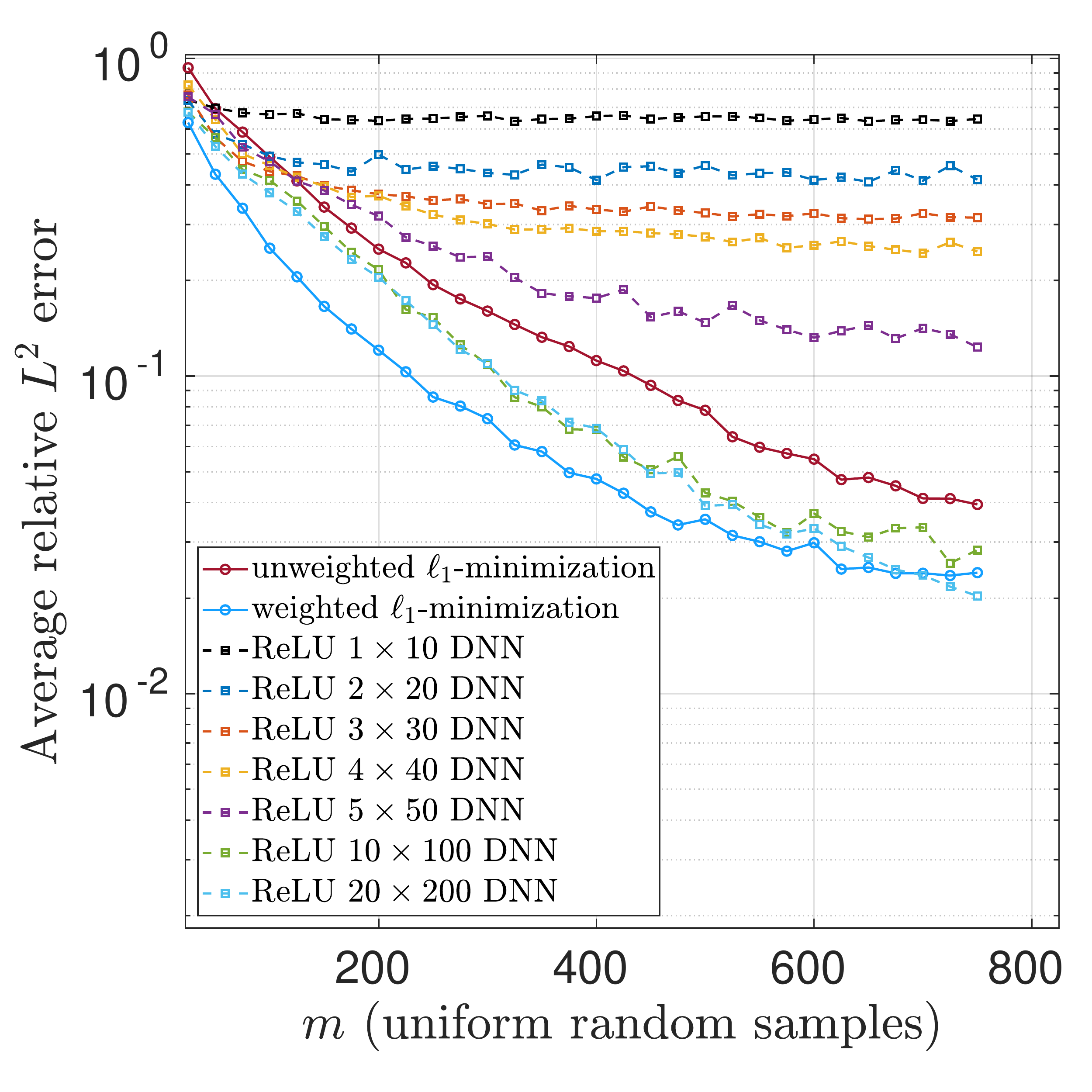}
\end{center}

\vspace{-2mm}
\caption{Legendre coefficients of $f(x) = \log( \sin(100 x) + 2) + \sin(10 x)$ sorted {\bf (left)} lexicographically and {\bf (center)} by decreasing magnitude. {\bf (right)} Average relative $L^2$ error v.s. number of samples used in training. CS approximations were computed with the Legendre basis of cardinality $n=3,000$.}
\label{fig:1d_log_sin_func_2}
\end{figure}

Fig.\ \ref{fig:1d_log_sin_func_2_beta_comp} displays the effect of choosing wider networks in approximating $f$ with $K=10$. There we see wider counterparts of the network architectures generally outperform narrower architectures with the same number of hidden layers. However, we also observe diminishing returns going from 10 to 20 hidden layers, and in the right plot we see the ReLU $20\times 800$ DNNs diverge, with the resulting trained networks exhibiting numerical artifacts as in the right plot of Fig.\ \ref{fig:unstable_network}.

\begin{figure}[ht]
\begin{center}
\includegraphics[width=0.23\paperwidth,clip=true,trim=0mm 0mm 0mm 0mm]{logsin_plus_sin_K=10_d=1_CSvDNNs.pdf}
\includegraphics[width=0.23\paperwidth,clip=true,trim=0mm 0mm 0mm 0mm]{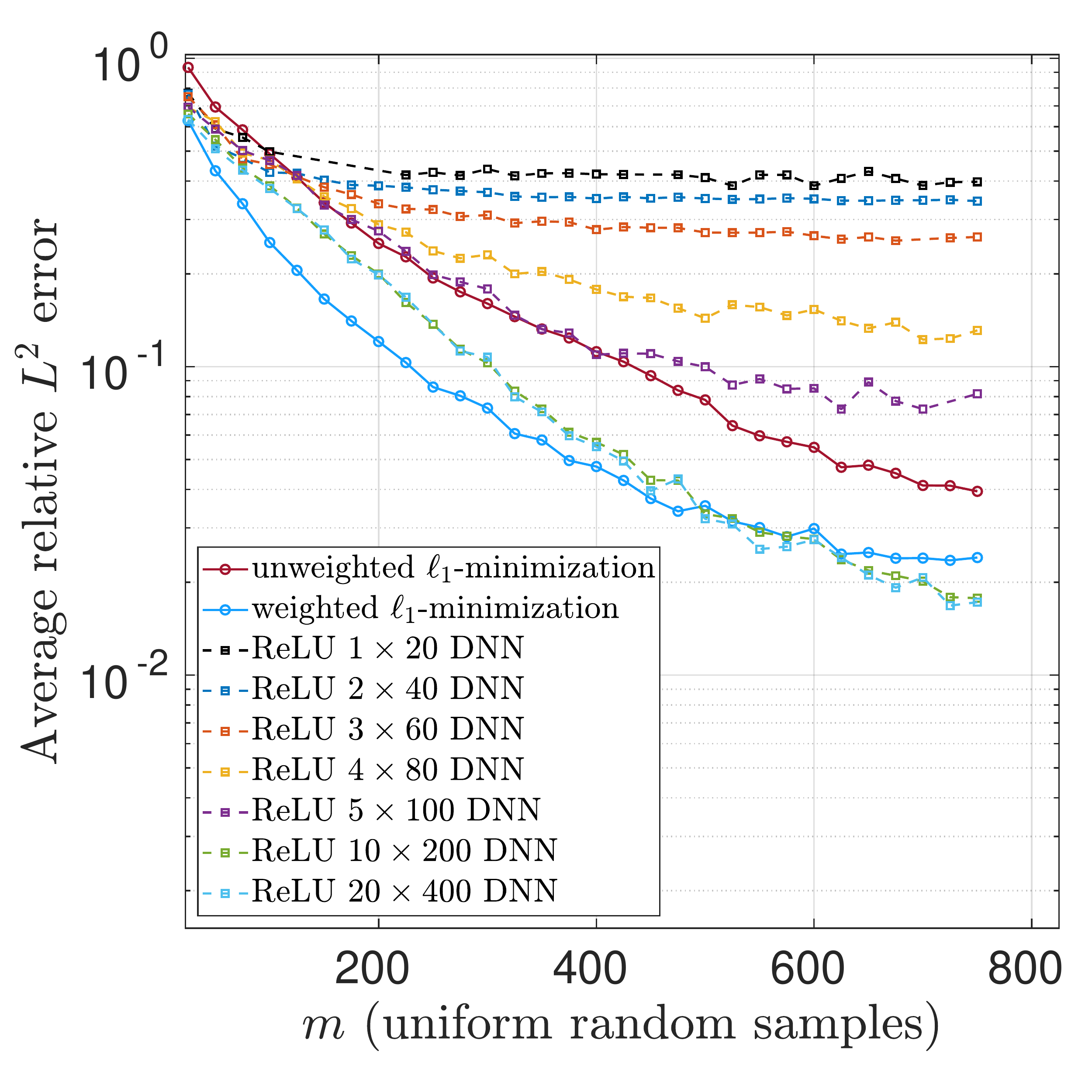}
\includegraphics[width=0.23\paperwidth,clip=true,trim=0mm 0mm 0mm 0mm]{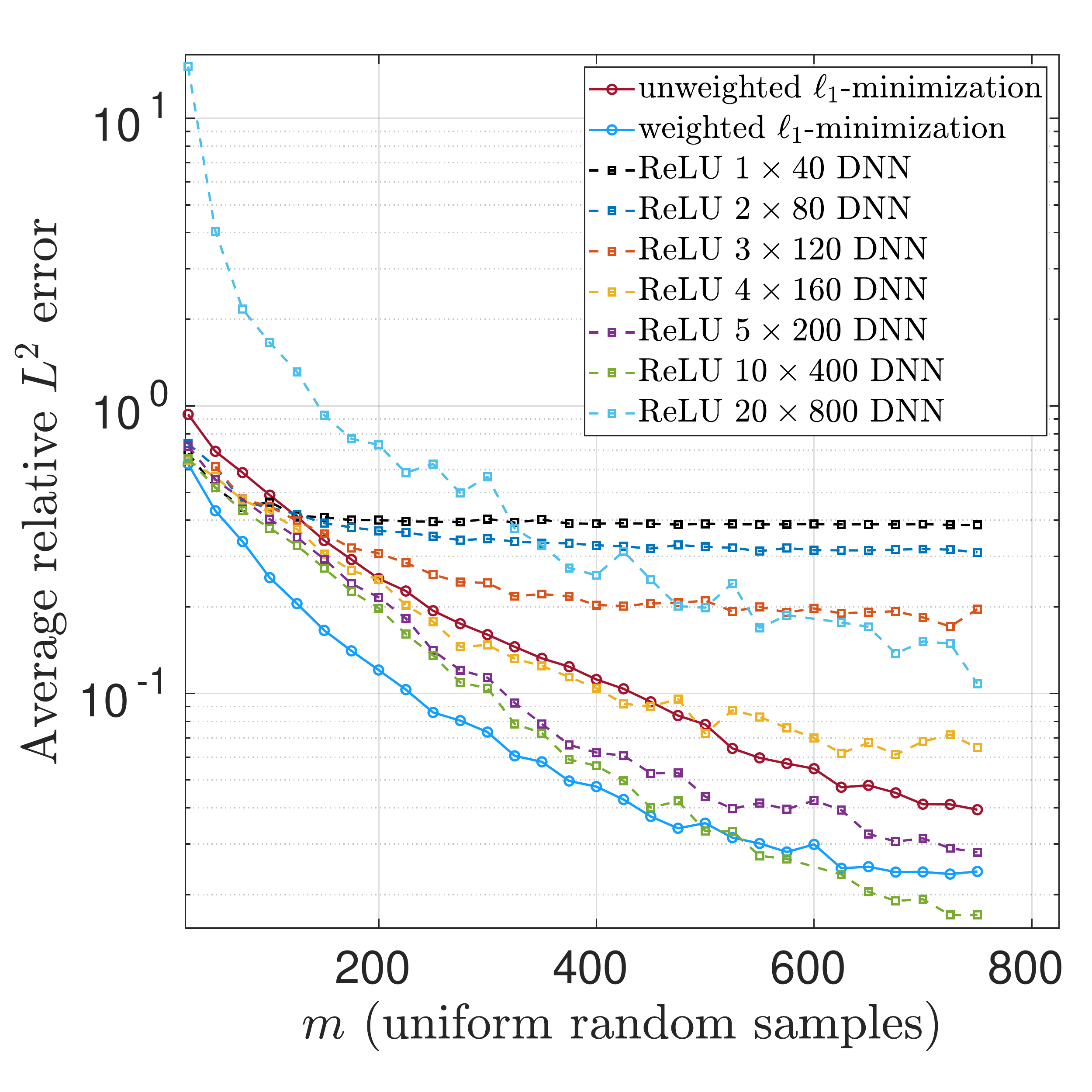}
\end{center}

\vspace{-2mm} \caption{Average relative $L^2$ error v.s. number of samples of $f(x) = \log(\sin(100x)+2)+\sin(10x)$ used in training DNNs with $L$ hidden layers and $N$ nodes per layer when $\beta = L/N$ is {\bf(left)} 0.1 {\bf(middle)} 0.05 and {\bf(right)} 0.025. CS approximations were computed with the Legendre basis of cardinality $n=3,000$.}
\label{fig:1d_log_sin_func_2_beta_comp}
\end{figure}

Fig.\ \ref{fig:1d_log_sin_func_K1_K10_weight_comp} compares the average absolute maximum of the weights and biases for ReLU DNNs trained on \eqref{1dsmoothfnK}. We observe in the case $K=10$, corresponding to the more oscillatory version of this function, on average the trained DNN architectures have larger weights and biases in magnitude when compared to those trained on the same function with $K=1$. Also, when comparing the trained weights of the architectures when the ratio $\beta = 0.1$ and $\beta= 0.025$ on the function with $K=10$, we note the similarity in the magnitudes of the weights despite the presence of severe artifacts for the wider networks that are not present for their narrower counterparts, see Fig.\ \ref{fig:unstable_network}.

\begin{figure}[ht]
\begin{center}
\includegraphics[width=0.23\paperwidth,clip=true,trim=0mm 0mm 0mm 0mm]{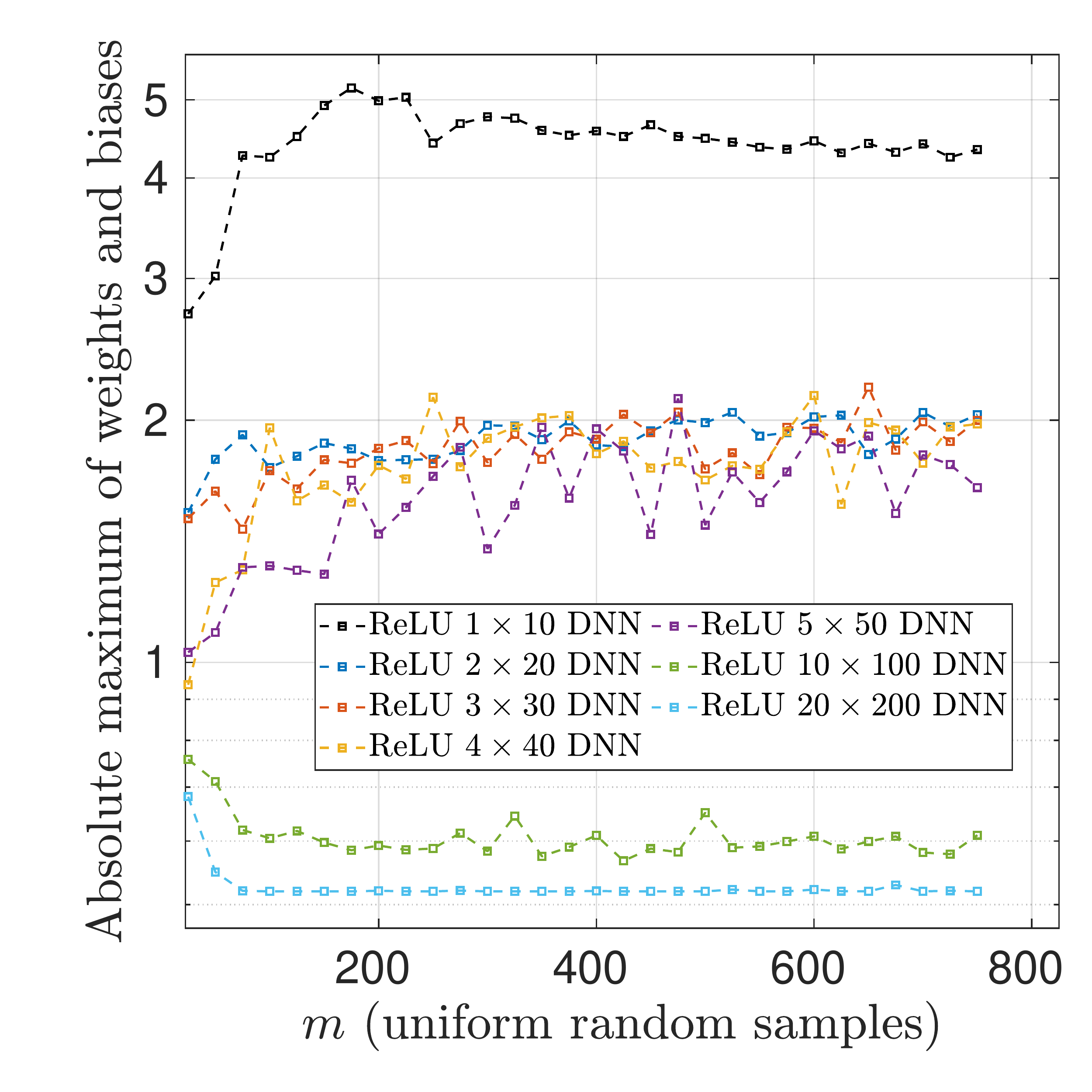}
\includegraphics[width=0.23\paperwidth,clip=true,trim=0mm 0mm 0mm 0mm]{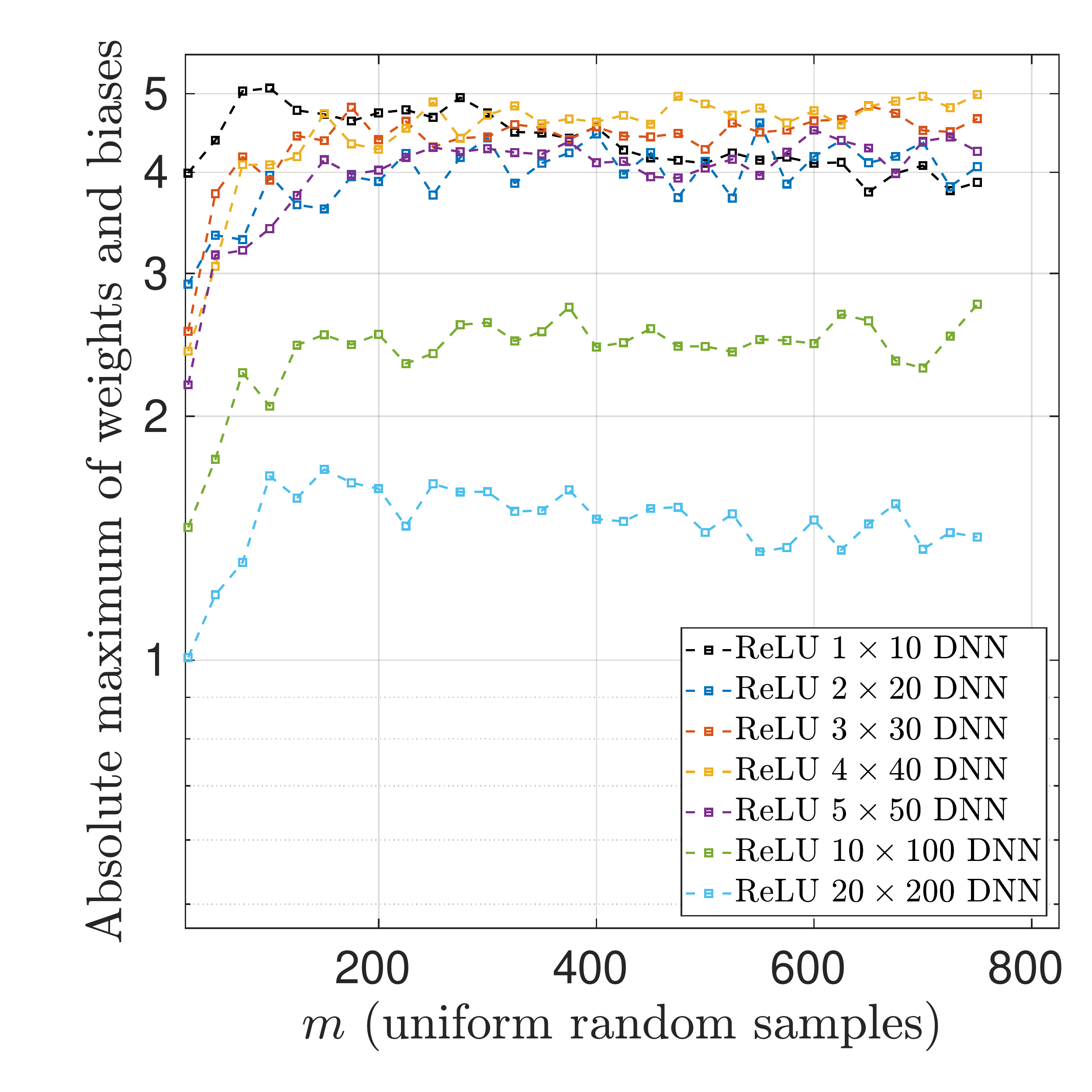}
\includegraphics[width=0.23\paperwidth,clip=true,trim=0mm 0mm 0mm 0mm]{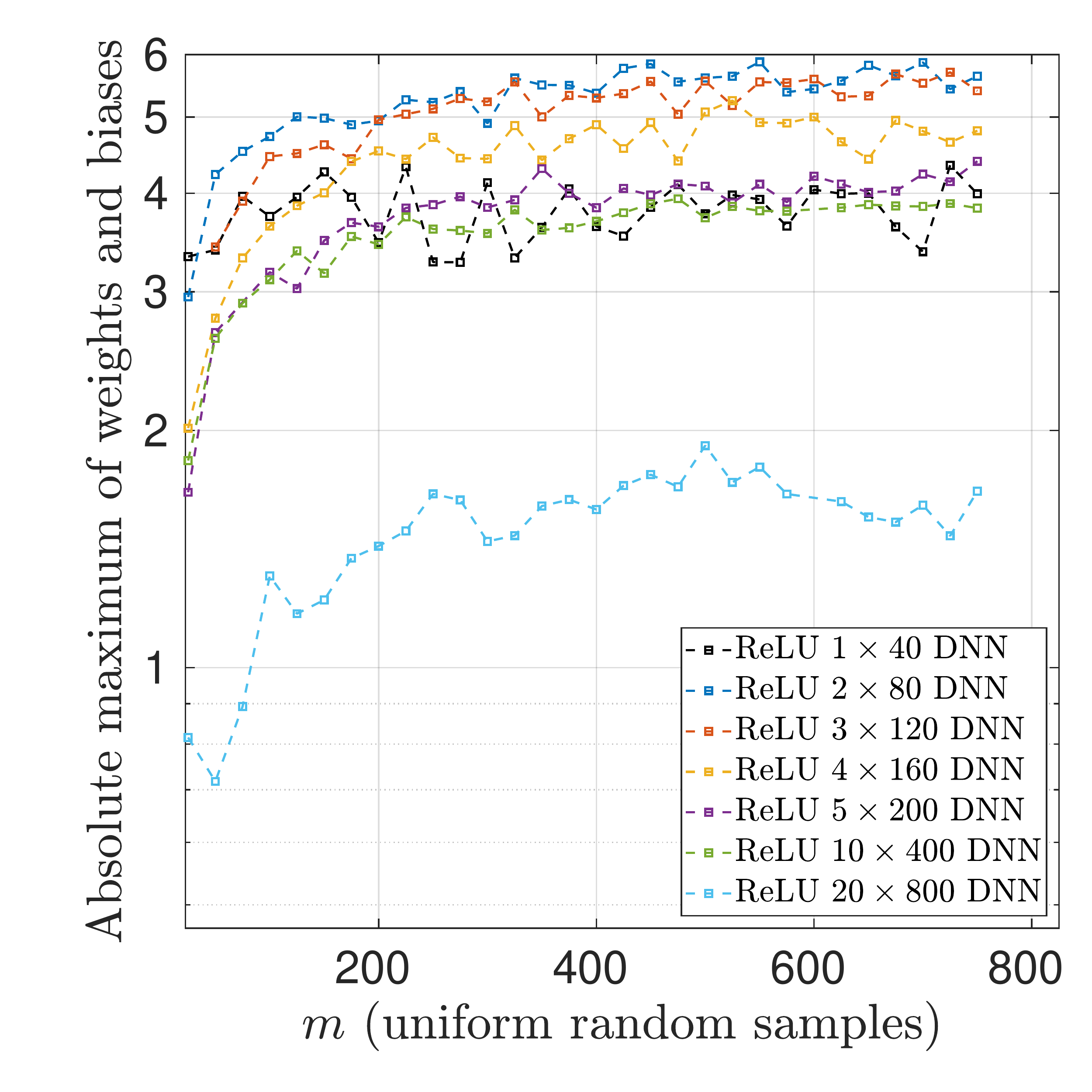}
\end{center}

\vspace{-2mm}
\caption{Average absolute maximum of weights and biases v.s. number of samples of $f(x) = \log(\sin(10Kx) + 2) + \sin(Kx)$ for {\bf(left)} $K = 1$ and $\beta = 0.1$, {\bf(middle)} $K = 10$ and $\beta = 0.1$, and {\bf(right)} $K=10$ and $\beta = 0.025$.}
\label{fig:1d_log_sin_func_K1_K10_weight_comp}
\end{figure}

%----------------------------------------------------
\subsection{Smooth higher-dimensional functions}
\label{subsec:smooth_high-d_numex}
%----------------------------------------------------

In Fig.\ \ref{fig:exp_cos_func}, we present results for the function 
\begin{align}
\label{eq:exp_cos_func}
f(x_1,\ldots,x_d) = \exp \left( - (\cos(x_1)+\ldots+\cos(x_d) ) / (8d) \right),
\end{align}
studied in \cite{ChkifaDexterTranWebster18}. 
This function has smooth, isotropic dependence on its parameters and rapidly decaying Legendre coefficients with increasing polynomial order, making it ideal for approximation with CS techniques.
Despite the analyticity of $f$, the DNNs are unable to obtain an approximation accurate beyond 3 digits while both CS approaches achieve nearly 8 digits of accuracy. 
This result highlights the perceived gap between theory and practice in this work: theoretical results suggest DNNs are capable of achieving the same performance as CS on this problem, but practically we never achieve these accuracies.

\begin{figure}[ht]
\begin{center}
\includegraphics[width=0.23\paperwidth,clip=true,trim=0mm 0mm 0mm 0mm]{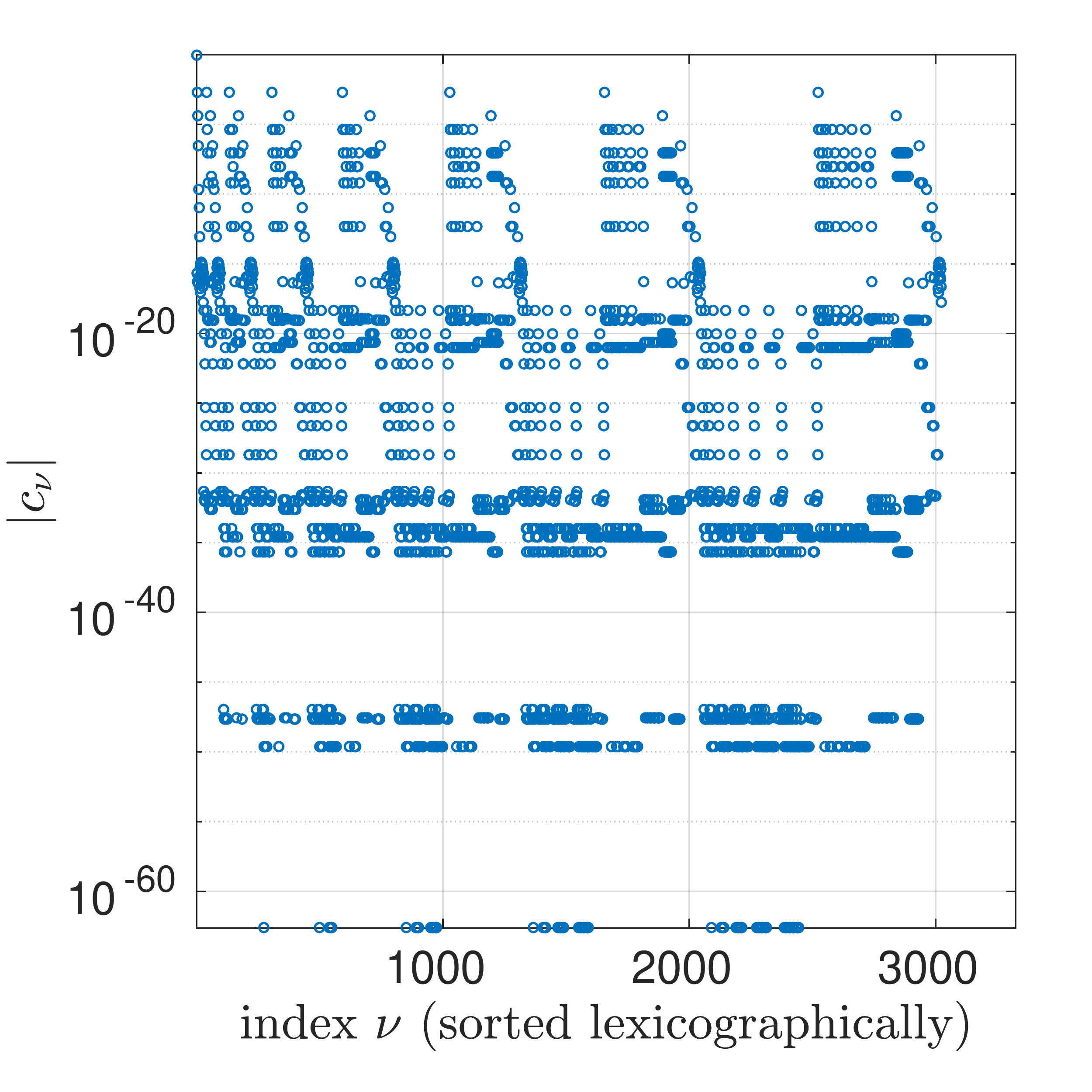}
\includegraphics[width=0.23\paperwidth,clip=true,trim=0mm 0mm 0mm 0mm]{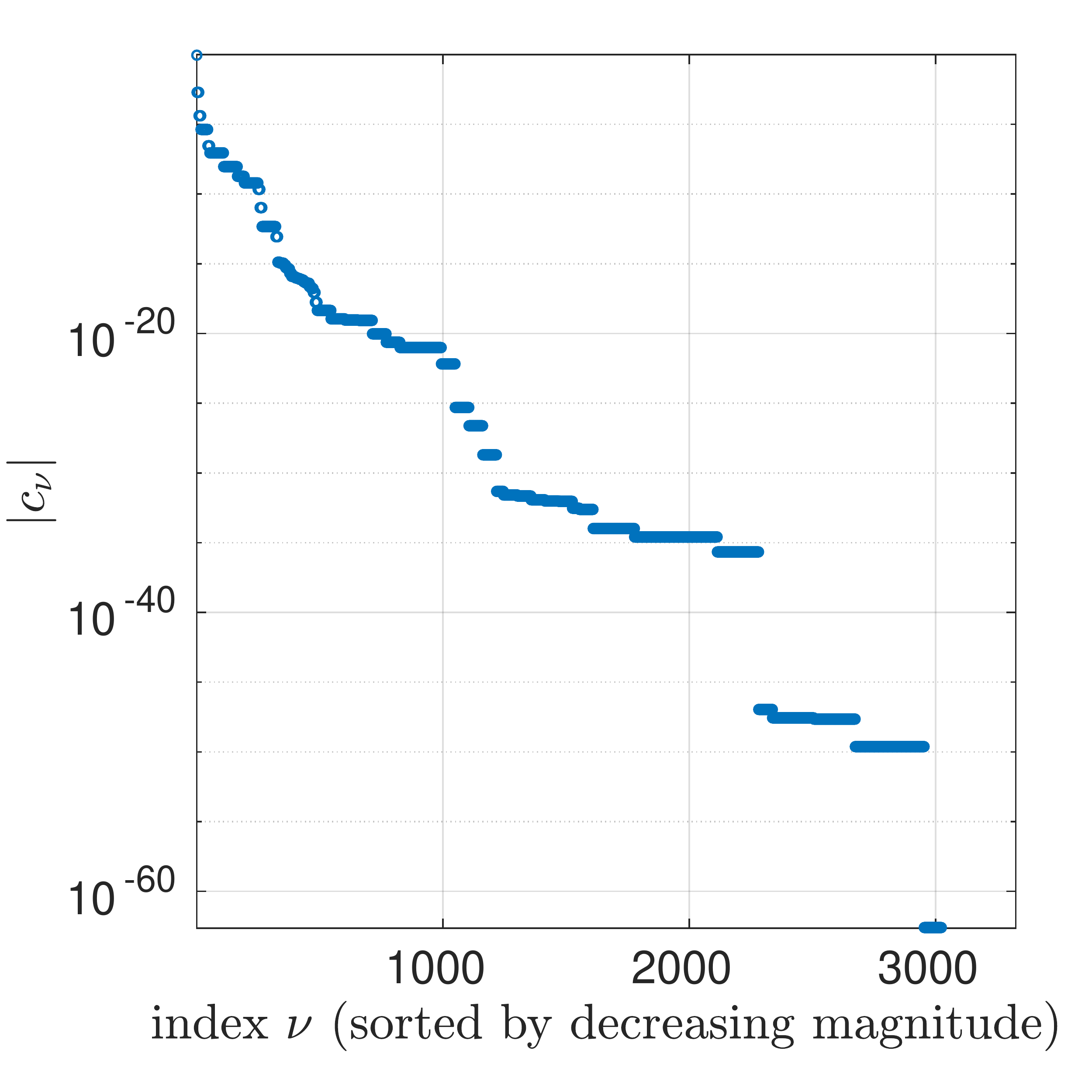}
\includegraphics[width=0.23\paperwidth,clip=true,trim=0mm 0mm 0mm 0mm]{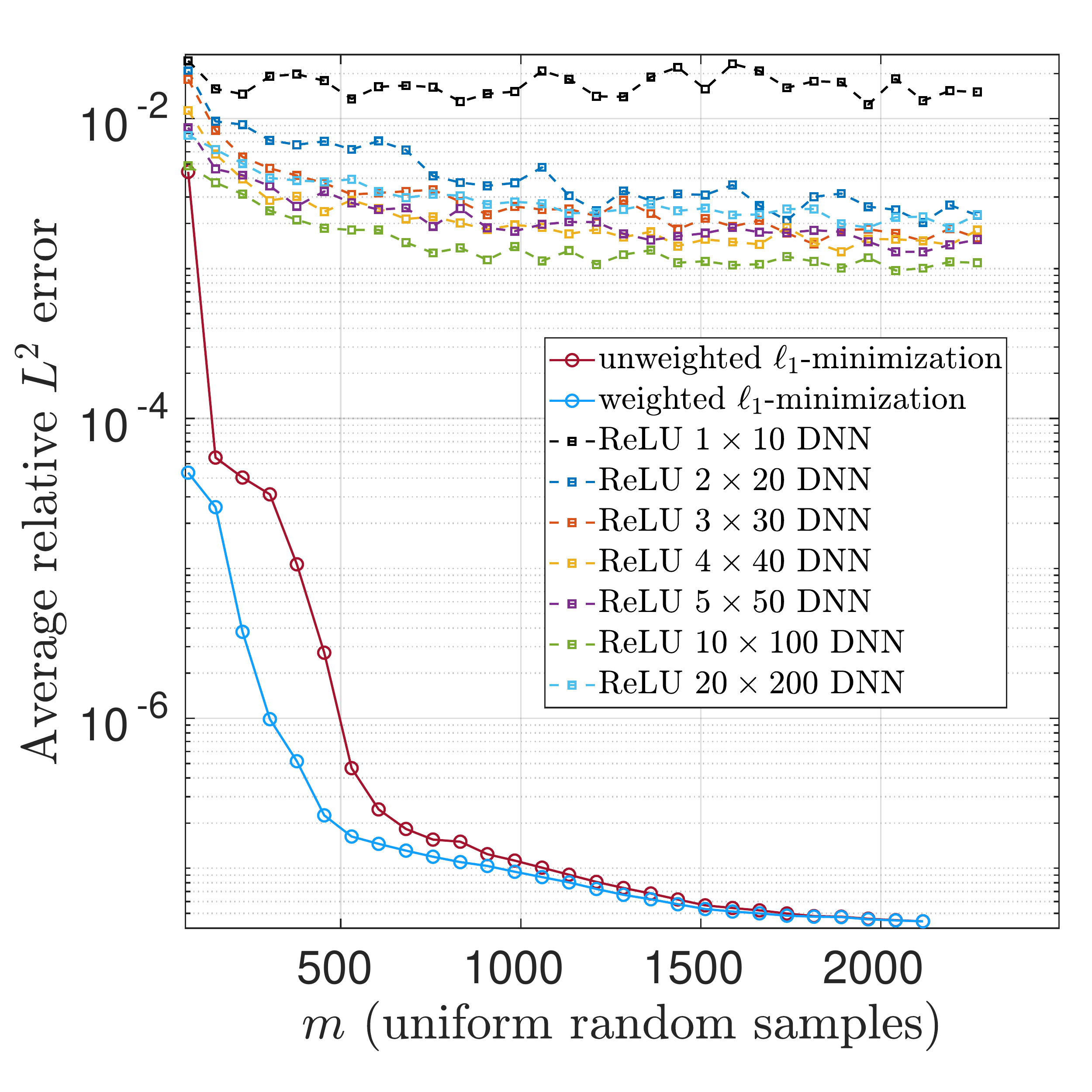}
\end{center}

\vspace{-2mm}
\caption{Legendre coefficients of $f$ from \eqref{eq:exp_cos_func} with $d=8$ sorted {\bf (left)} lexicographically and {\bf (center)} by decreasing magnitude. {\bf(right)} Average relative $L^2$ error vs. number of samples used in training. CS approximations were computed with the Legendre basis of cardinality $n=3,023$.}
\label{fig:exp_cos_func}
\end{figure}

Fig.\ \ref{fig:relu_exp_cos_func_beta_comp} displays the results of approximating $f$ from \eqref{eq:exp_cos_func} with both narrower and wider networks.
For narrower DNN architectures, corresponding to $\beta \in \{0.1,0.2,0.5\}$, we observe deeper networks perform better. For wider architectures corresponding to $\beta \in \{0.025,0.05\}$, we observe the shallower networks perform better. 
In all cases, the best performing networks had between $10^3$ and $10^5$ total trainable parameters.
Also, in contrast to the diminishing returns with increasing depth observed in the one-dimensional smooth function examples, on this smooth higher-dimensional problem we now observe divergence of wider and deeper networks.
These observations suggest the existence of fundamental stability barriers in current methods of training DNNs.

\begin{figure}[ht]
\begin{center}
\includegraphics[width=0.23\paperwidth,clip=true,trim=0mm 0mm 0mm 0mm]{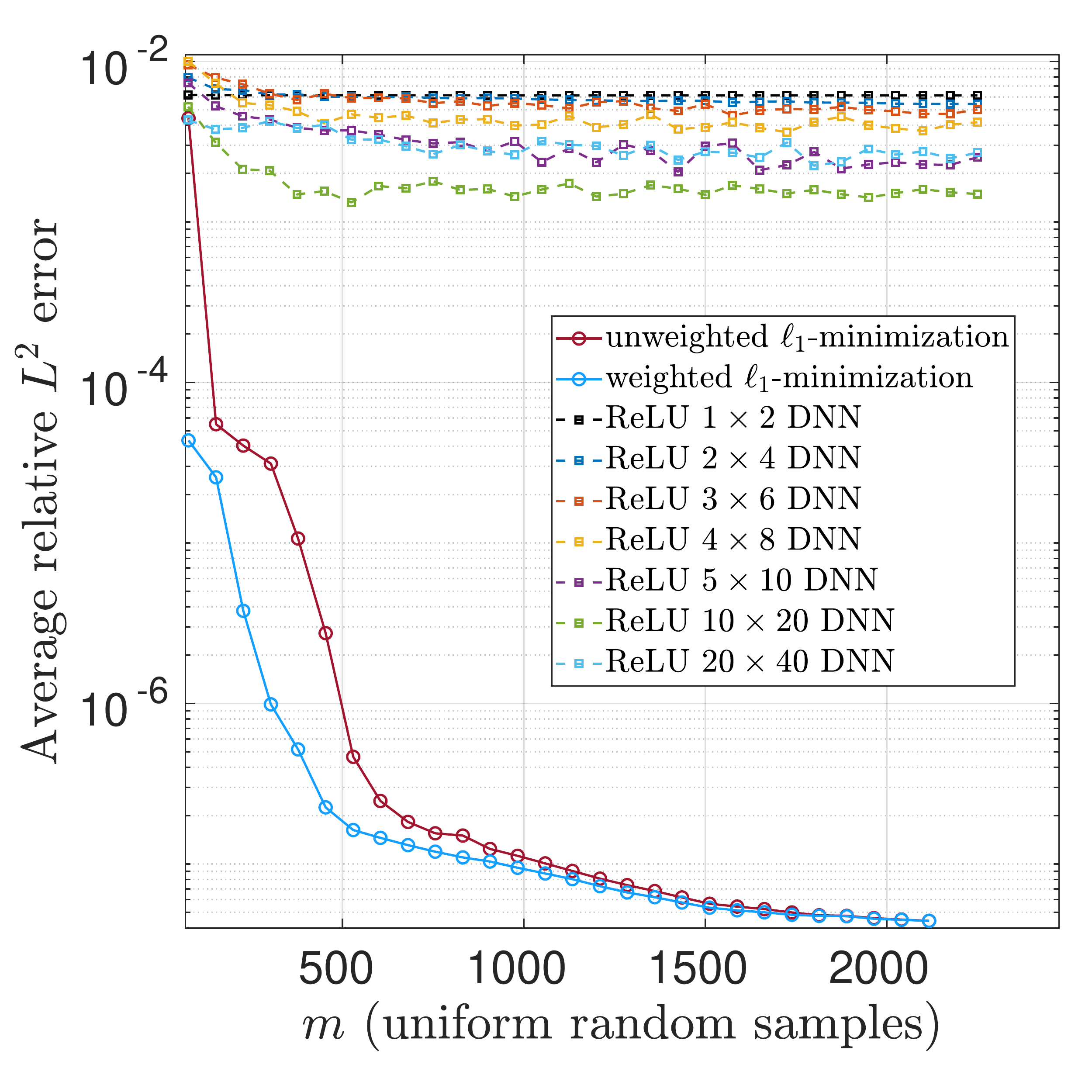}
\includegraphics[width=0.23\paperwidth,clip=true,trim=0mm 0mm 0mm 0mm]{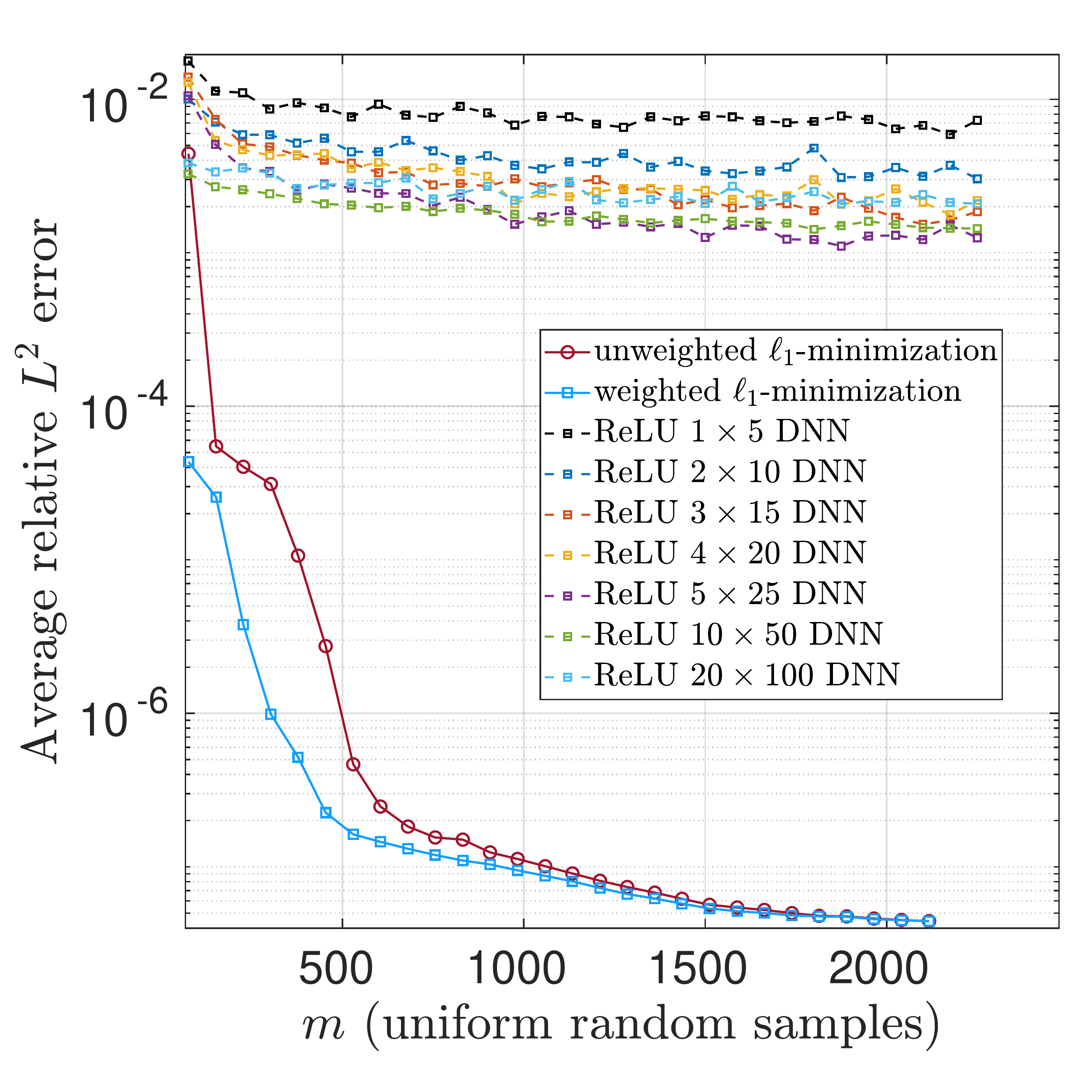}
\includegraphics[width=0.23\paperwidth,clip=true,trim=0mm 0mm 0mm 0mm]{exp_cos_d=8_10x_width_relu_DNNs_v_CS.pdf}
\includegraphics[width=0.23\paperwidth,clip=true,trim=0mm 0mm 0mm 0mm]{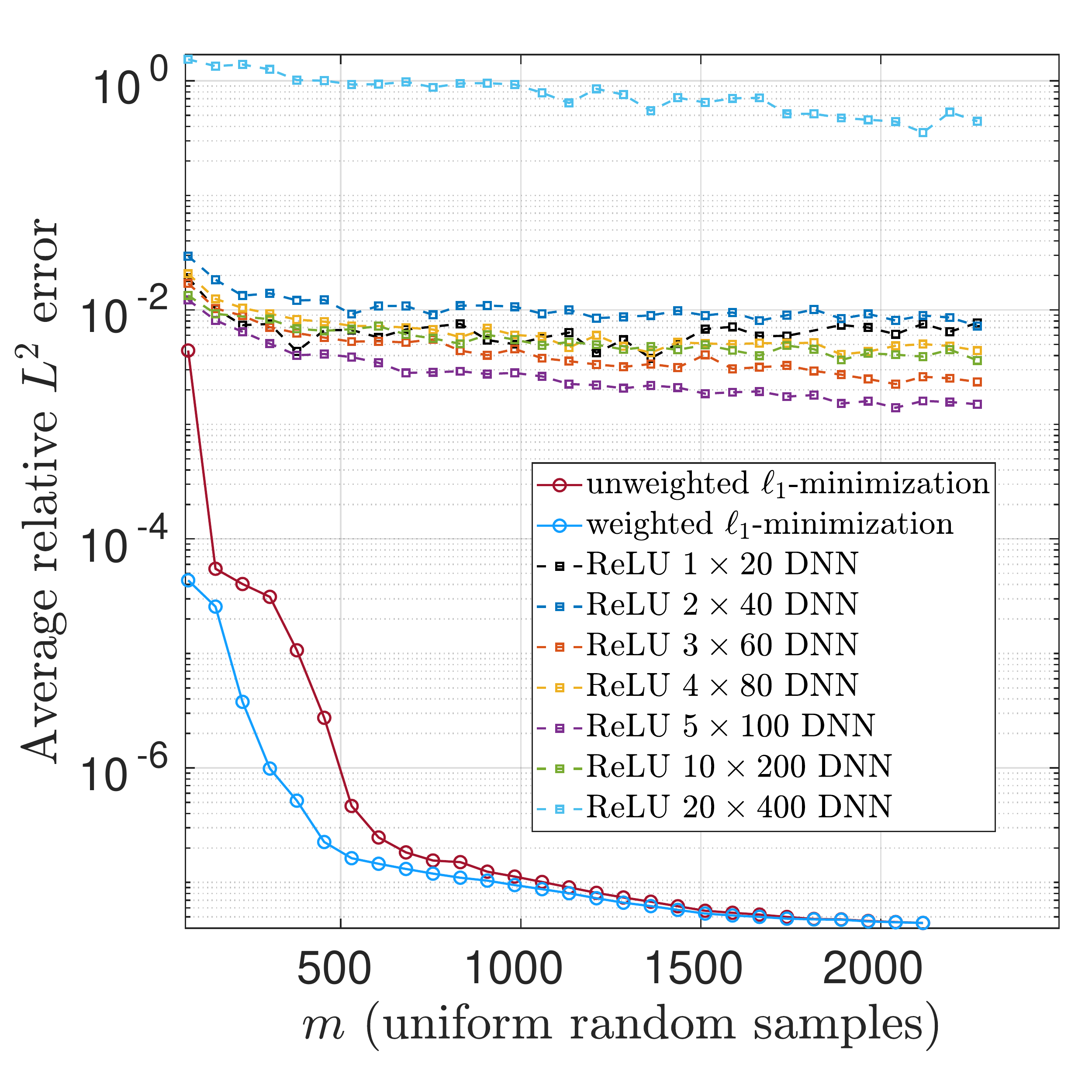}
\includegraphics[width=0.23\paperwidth,clip=true,trim=0mm 0mm 0mm 0mm]{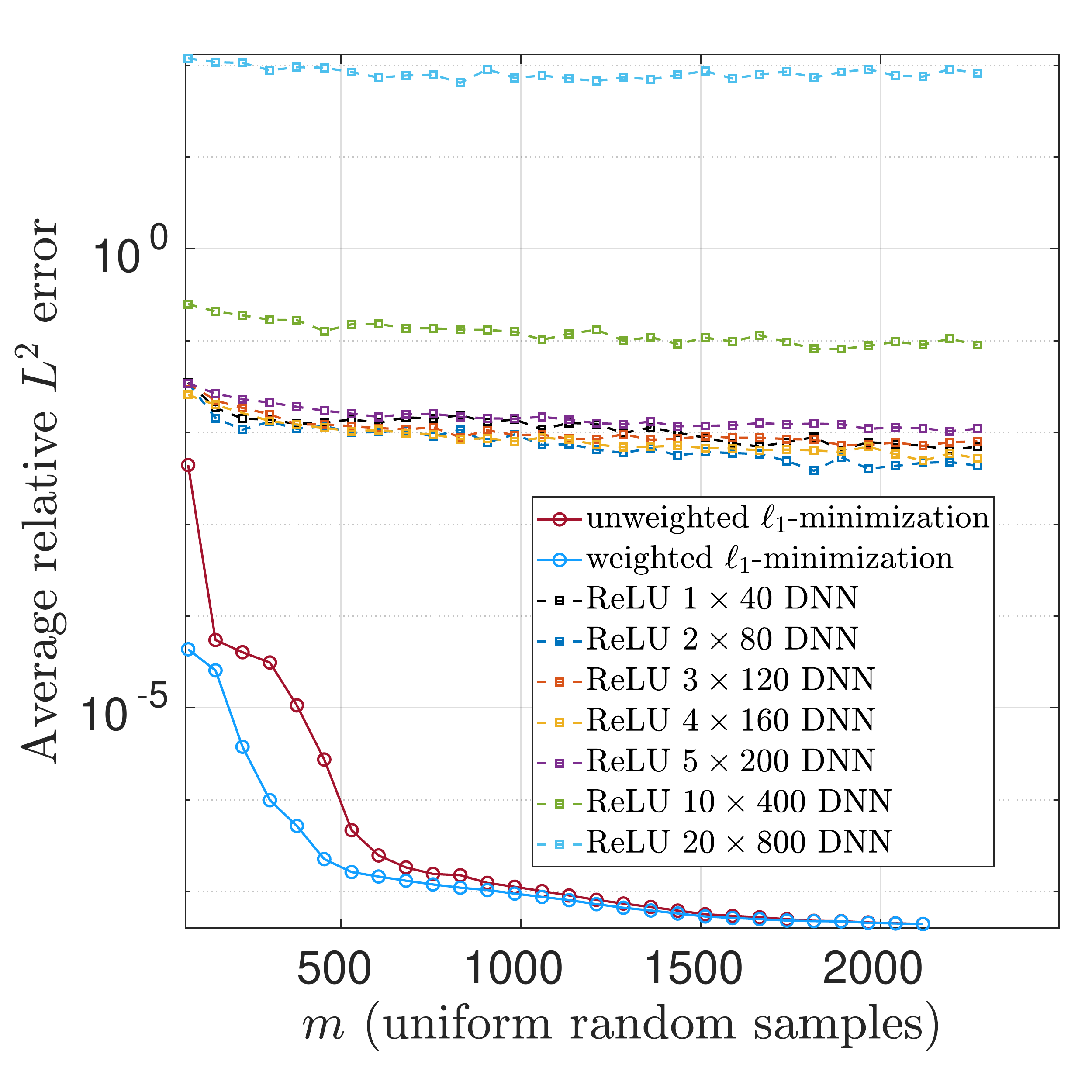}
\raisebox{2.0\height}{
\begin{adjustbox}{max width=0.218\paperwidth}
\begin{tabular}{|l|l|l|} \hline
$\beta$ & best arch. & \# params. \\ \hline
0.5 & $10\times 20$ & $\mathcal{O}(4\times 10^{3})$ \\
0.2 & $5\times 25$ & $\mathcal{O}(3\times 10^{3})$ \\
0.1 & $10\times 100$ & $\mathcal{O}(1\times 10^{5})$ \\
0.05 & $5\times 100$ & $\mathcal{O}(5\times 10^{4})$ \\
0.025 & $2\times 80$ & $\mathcal{O}(1\times 10^{4})$ \\ \hline
\end{tabular}
\end{adjustbox}
}
\end{center}

\vspace{-2mm}
\caption{Comparison of average relative $L^2$ errors w.r.t. number of samples of $f$ from \eqref{eq:exp_cos_func} with $d = 8$ used in training ReLU architectures parameterized with $\beta = L/N$ (hidden layers/nodes per hidden layer) for values {\bf(top-left)} $\beta=0.5$, {\bf(top-middle)} $\beta=0.2$, {\bf(top-right)} $\beta=0.1$, {\bf(bottom-left)} $\beta=0.05$, and {\bf(bottom-middle)} $\beta=0.025$. {\bf(bottom-right)} Table of best-performing architectures for each choice of $\beta$ and number of parameters. CS approximations were computed with the Legendre basis of cardinality $n=3,023$.
}
\label{fig:relu_exp_cos_func_beta_comp}
\end{figure}

Fig.\ \ref{fig:relu_exp_cos_func_weight_comp} displays the average absolute maximum of the weights and biases for each architecture. There we observe that the weights and biases of narrower network architectures, e.g.\ $\beta \in \{0.1,0.2,0.5\}$, remain relatively small as we increase the number of samples.
However, as we increase the width of the networks to values corresponding to $\beta=0.05$ and $\beta=0.025$, we begin to see the weights growing with depth as we train on larger sample sets. 

\begin{figure}[ht]
\begin{center}
\includegraphics[width=0.23\paperwidth,clip=true,trim=0mm 0mm 0mm 0mm]{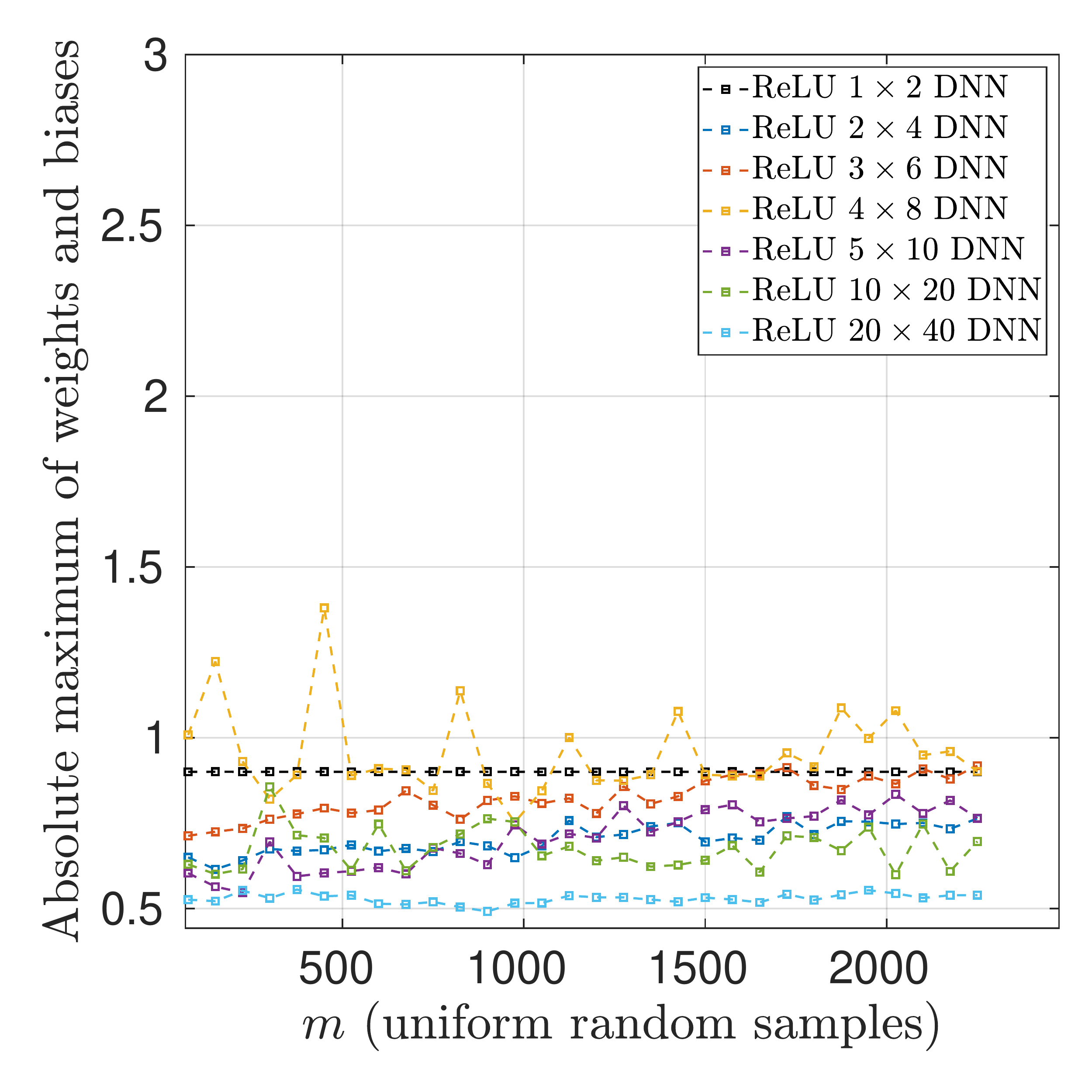}
\includegraphics[width=0.23\paperwidth,clip=true,trim=0mm 0mm 0mm 0mm]{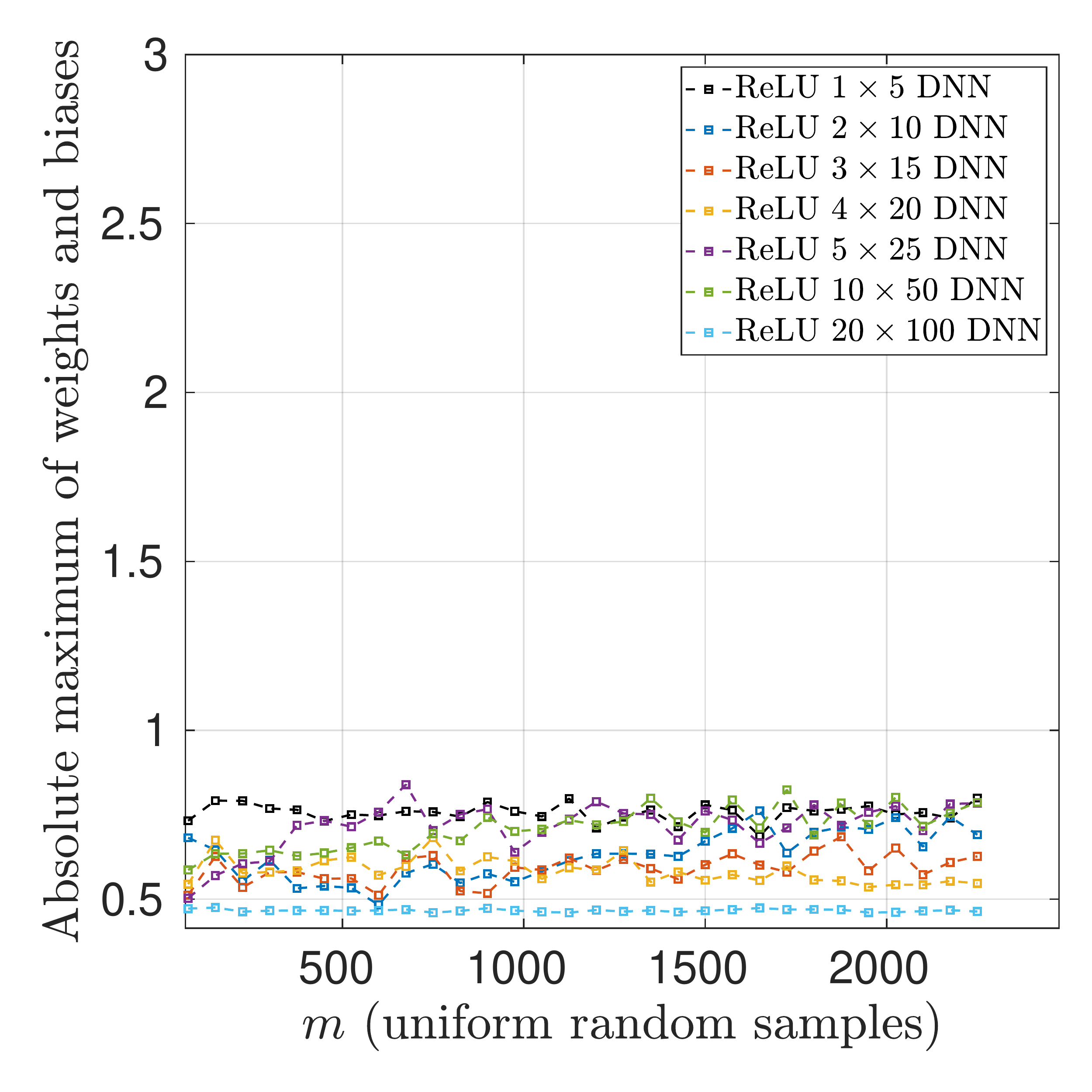}
\includegraphics[width=0.23\paperwidth,clip=true,trim=0mm 0mm 0mm 0mm]{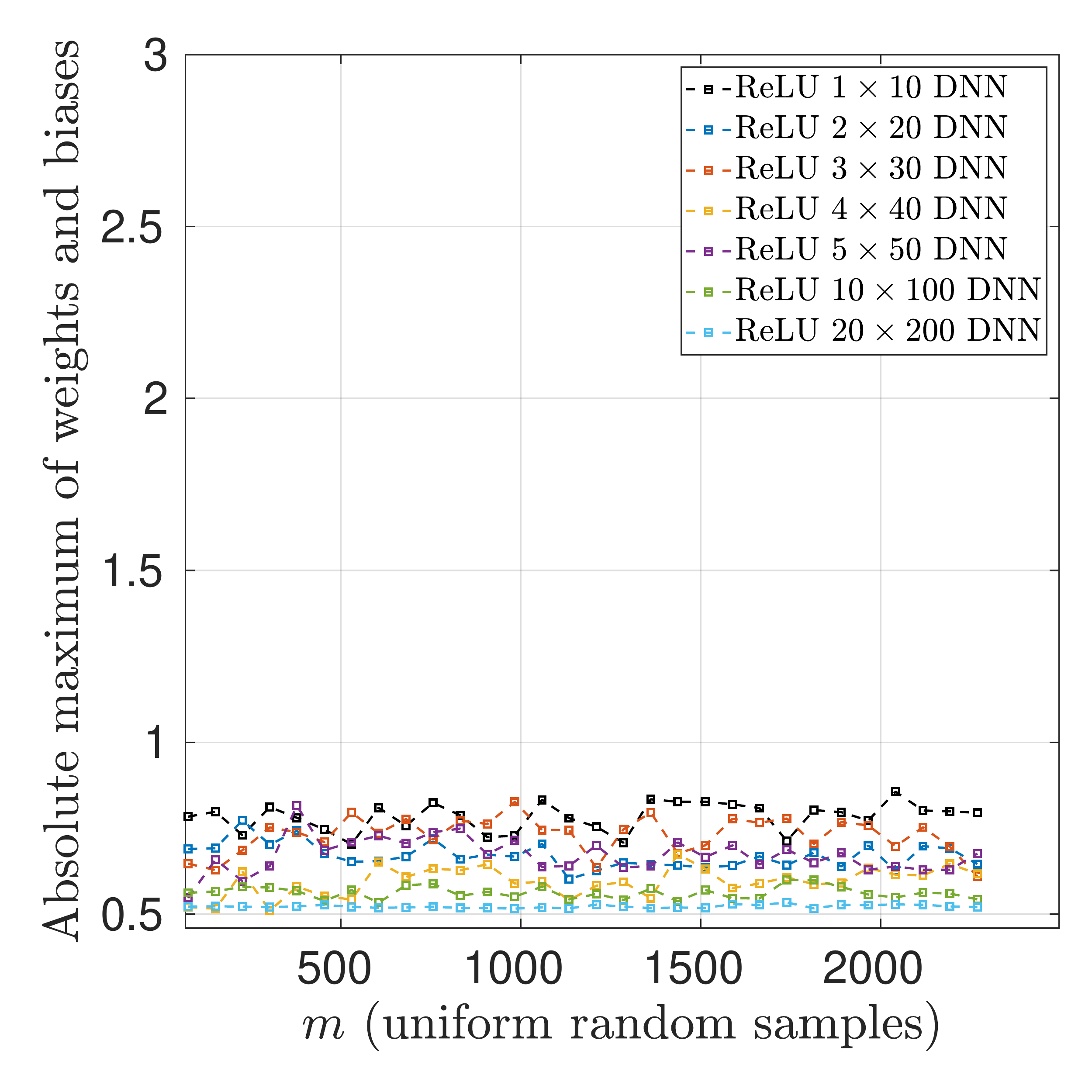}
\includegraphics[width=0.23\paperwidth,clip=true,trim=0mm 0mm 0mm 0mm]{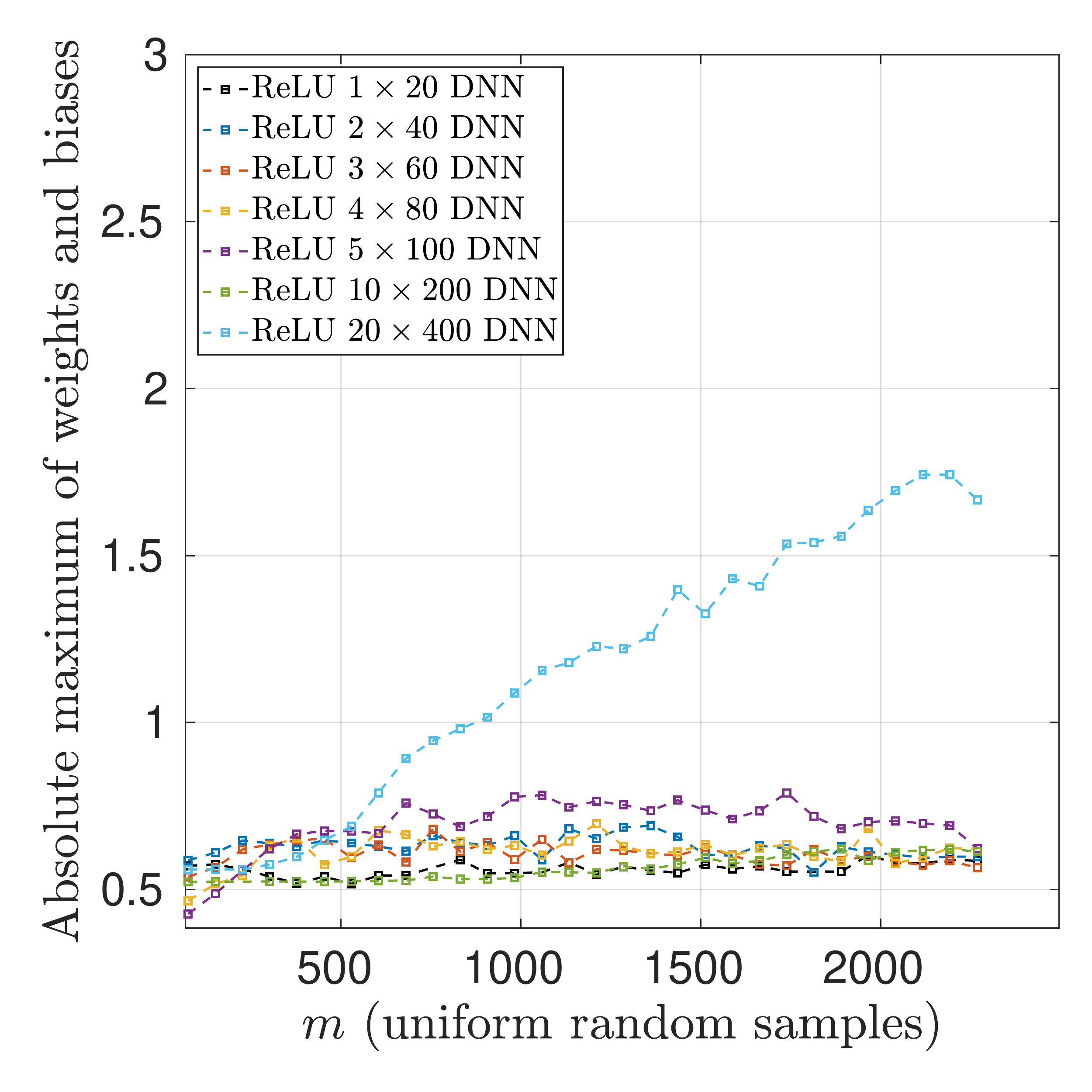}
\includegraphics[width=0.23\paperwidth,clip=true,trim=0mm 0mm 0mm 0mm]{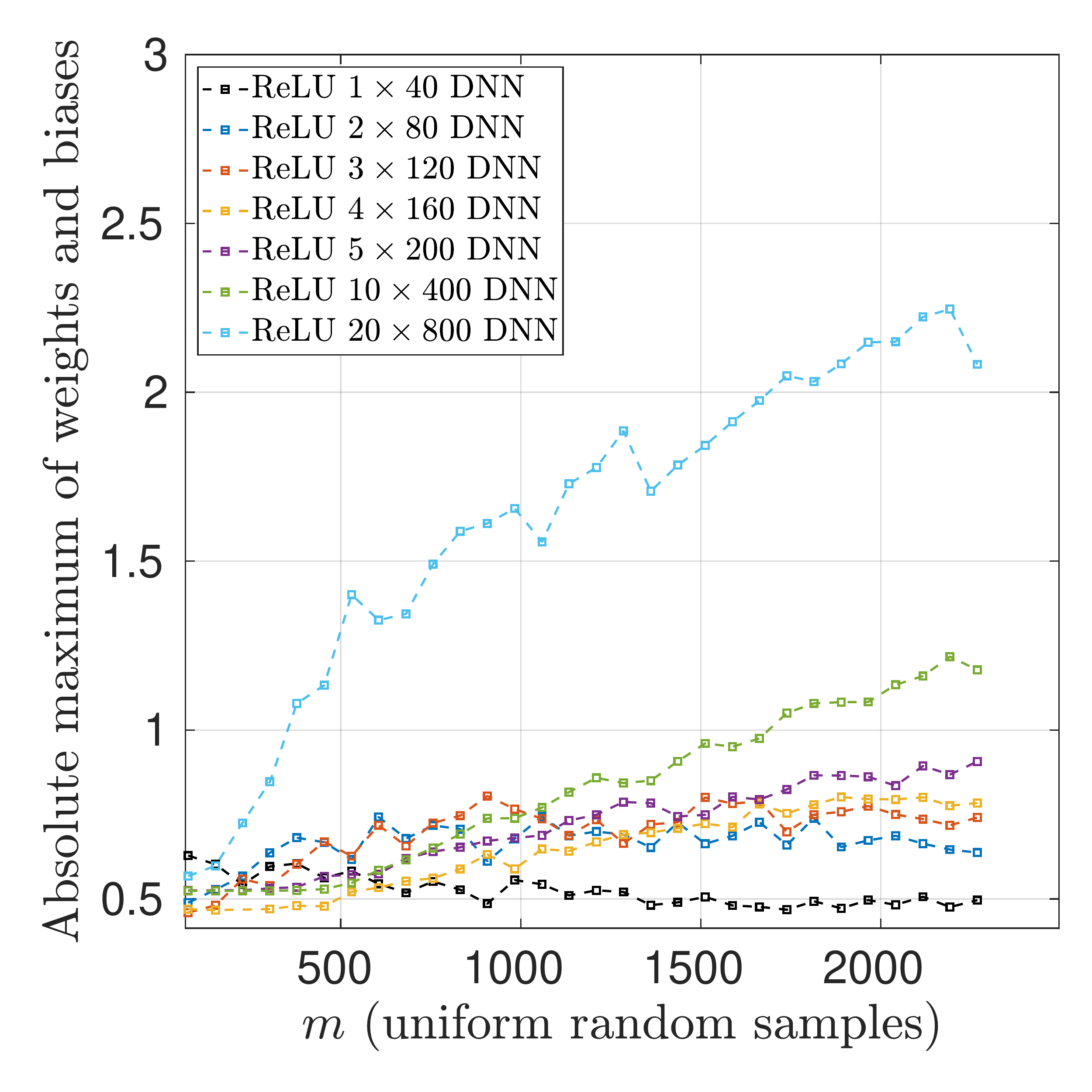}
\end{center}

\vspace{-2mm}
\caption{Comparison of average absolute maximum of weights and biases w.r.t. number of samples of $f$ from \eqref{eq:exp_cos_func} with $d=8$ used in training ReLU architectures parameterized with $\beta = L/N$ (hidden layers/nodes per hidden layer) for values {\bf(top-left)} $\beta=0.5$, {\bf(top-middle)} $\beta=0.2$, {\bf(top-right)} $\beta=0.1$, {\bf(bottom-left)} $\beta=0.05$, and {\bf(bottom-right)} $\beta=0.025$.}
\label{fig:relu_exp_cos_func_weight_comp}
\end{figure}

Fig.\ \ref{fig:relu_exp_cos_func_time_comp} displays the average run time of the {\tt Adam} optimizer in training our DNNs as we increase the number of training samples.
For the narrower and shallower DNN architectures, we see the transfer time bottleneck between the CPU and GPU is yielding an effective linear scaling with respect to samples in the average training time. This effect is also present in the deeper and wider networks, though at a diminished amount due to the fast convergence of the {\tt Adam} optimizer on larger networks.

Next we present results on the function 
\begin{align}
\label{eq:slower_decay_rational_func}
f(x) = \left( \frac{\prod_{k=1}^{\lceil d/2 \rceil}(1+4^k x_k^2)}{\prod_{k=\lceil d/2 \rceil + 1}^d (100 + 5 x_k)} \right)^{1/d},
\end{align}
also from \cite{ChkifaDexterTranWebster18}. This function has anisotropic dependence on its parameters and is less smooth than the previous example. Consequently its Legendre coefficients decay more slowly, as seen in the left and middle plots of Fig.\ \ref{fig:slower_decay_rational_func}, and the right plot shows the CS approximations only achieve an error of roughly 0.02.
The best performing ReLU DNNs achieve an error that approximately five times smaller.

\begin{figure}[ht]
\begin{center}
\includegraphics[width=0.23\paperwidth,clip=true,trim=0mm 0mm 0mm 0mm]{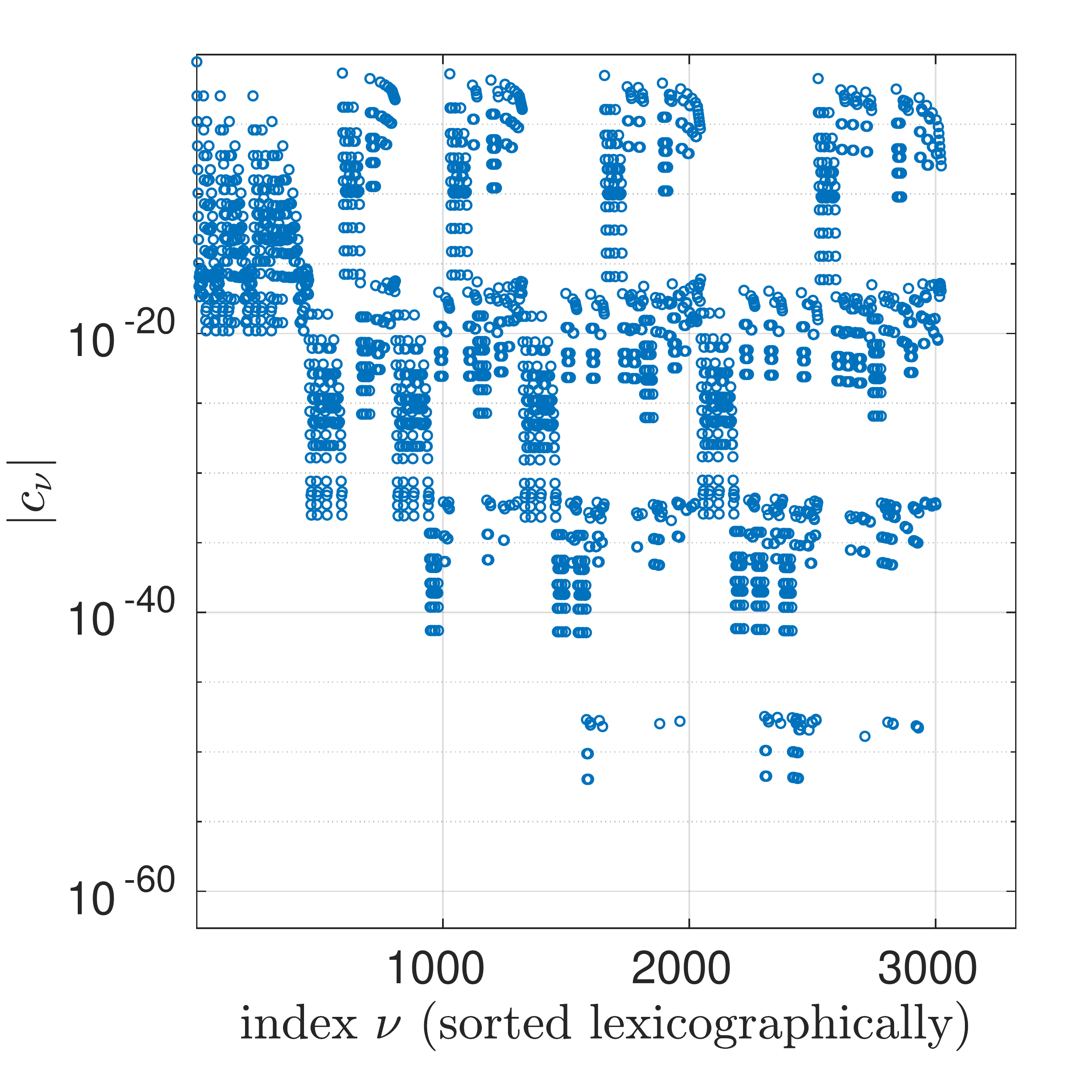}
\includegraphics[width=0.23\paperwidth,clip=true,trim=0mm 0mm 0mm 0mm]{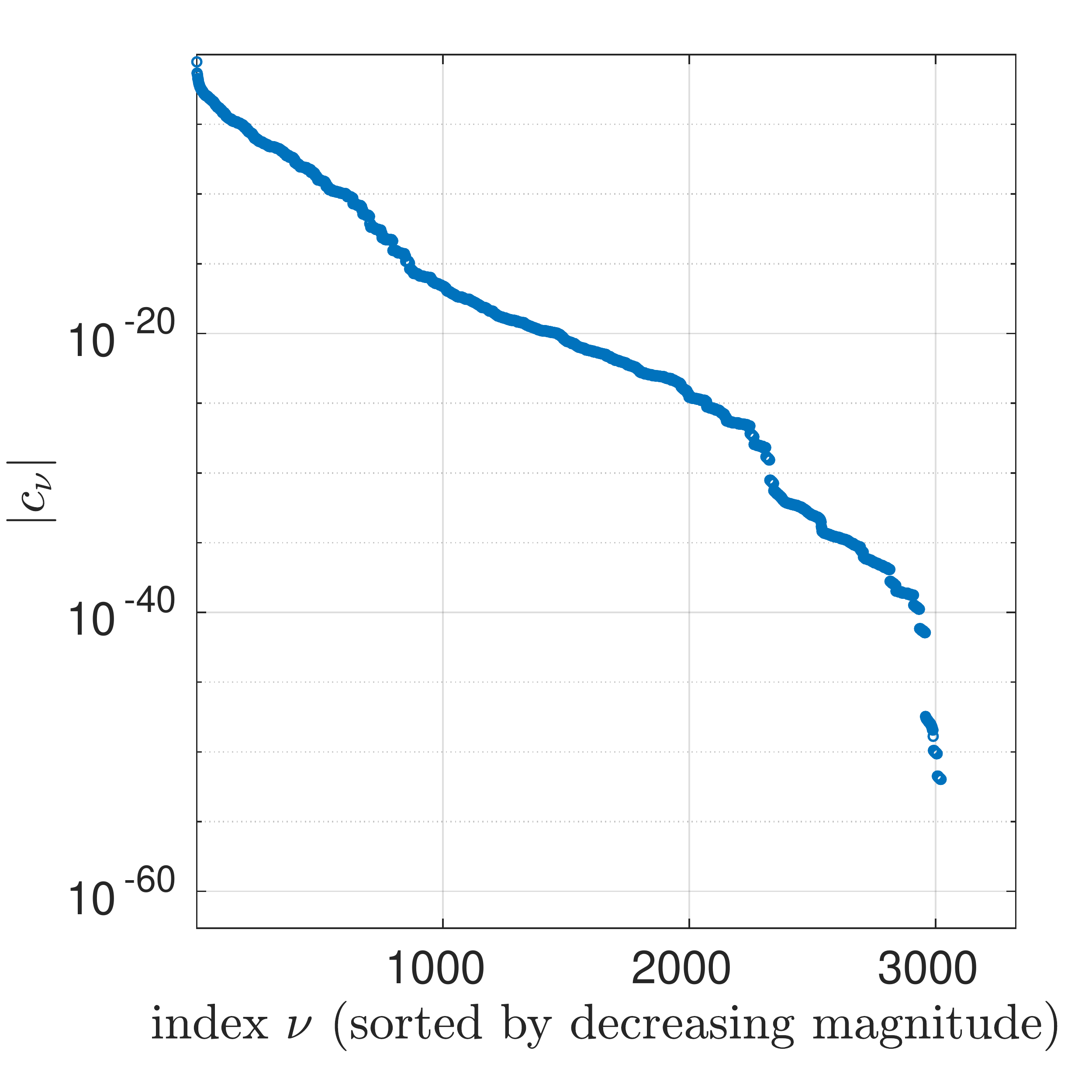}
\includegraphics[width=0.23\paperwidth,clip=true,trim=0mm 0mm 0mm 0mm]{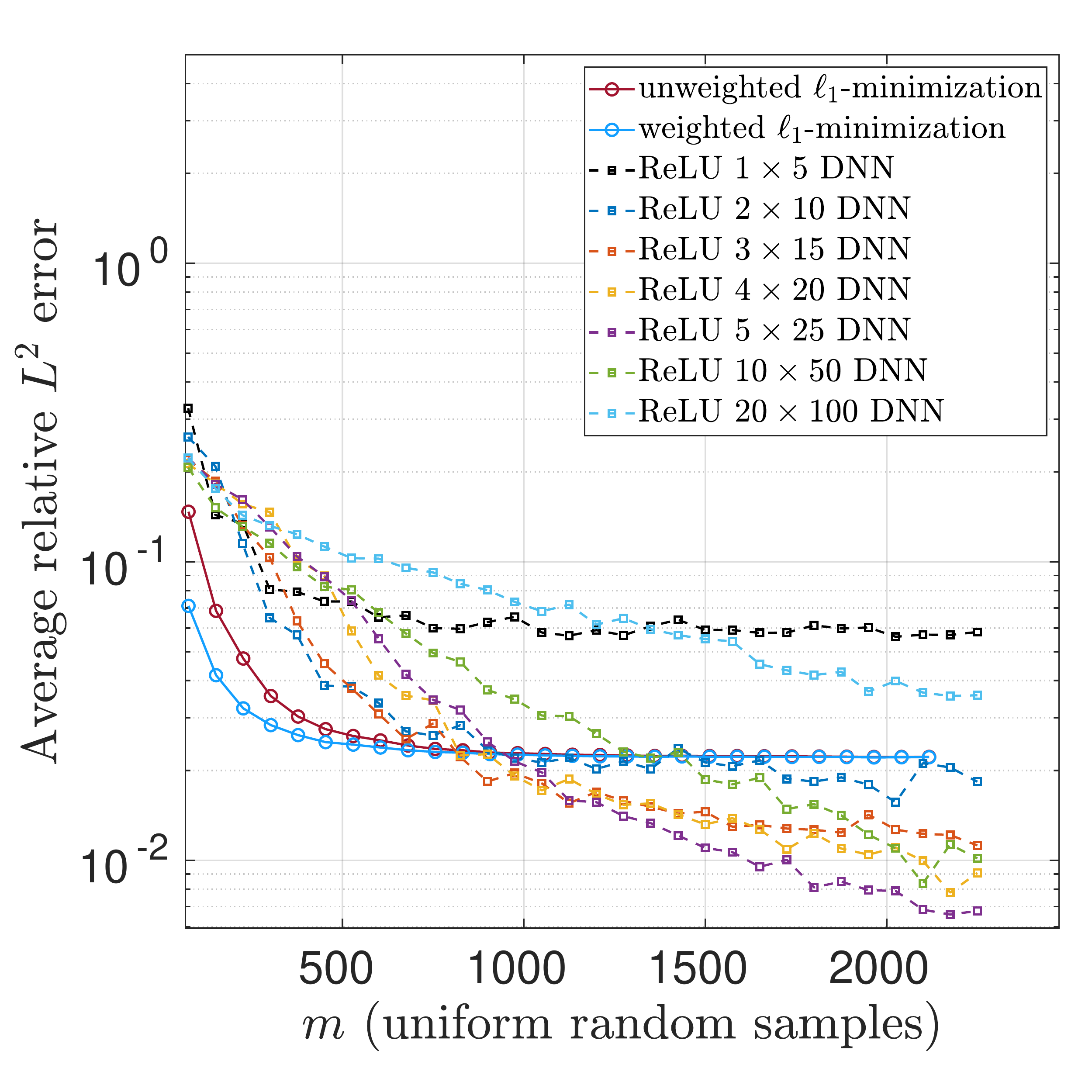}
\end{center}

\vspace{-2mm}
\caption{Legendre coefficients of $f$ from \eqref{eq:slower_decay_rational_func} with $d=8$, sorted {\bf (left)} lexicographically and {\bf (center)} by decreasing magnitude. {\bf (right)} Average relative $L^2$ error w.r.t. number of samples used in training. CS approximations were computed with the Legendre basis of cardinality $n=3,023$.}
\label{fig:slower_decay_rational_func}
\end{figure}

Fig.\ \ref{fig:slower_decay_rational_func_beta_comp} displays the effect of increasing the width of the DNNs as before. 
There, as in the previous example, we observe best performance is achieved by networks that are both wide and deep, e.g.\ the ReLU $5\times 25$ and $10\times 20$ networks. 
We also again observe that for wider networks, architectures with fewer hidden layers perform the best.
Nonetheless, comparing the results between the ReLU $1\times 20$ and $1\times 40$ DNNs and those achieved by the $10\times 20$, $5\times 25$, and $2\times 20$ DNNs, we see that far better performance is achieved with fewer samples by the deeper and narrower DNN architectures.

\begin{figure}[ht]
\begin{center}
\includegraphics[width=0.23\paperwidth,clip=true,trim=0mm 0mm 0mm 0mm]{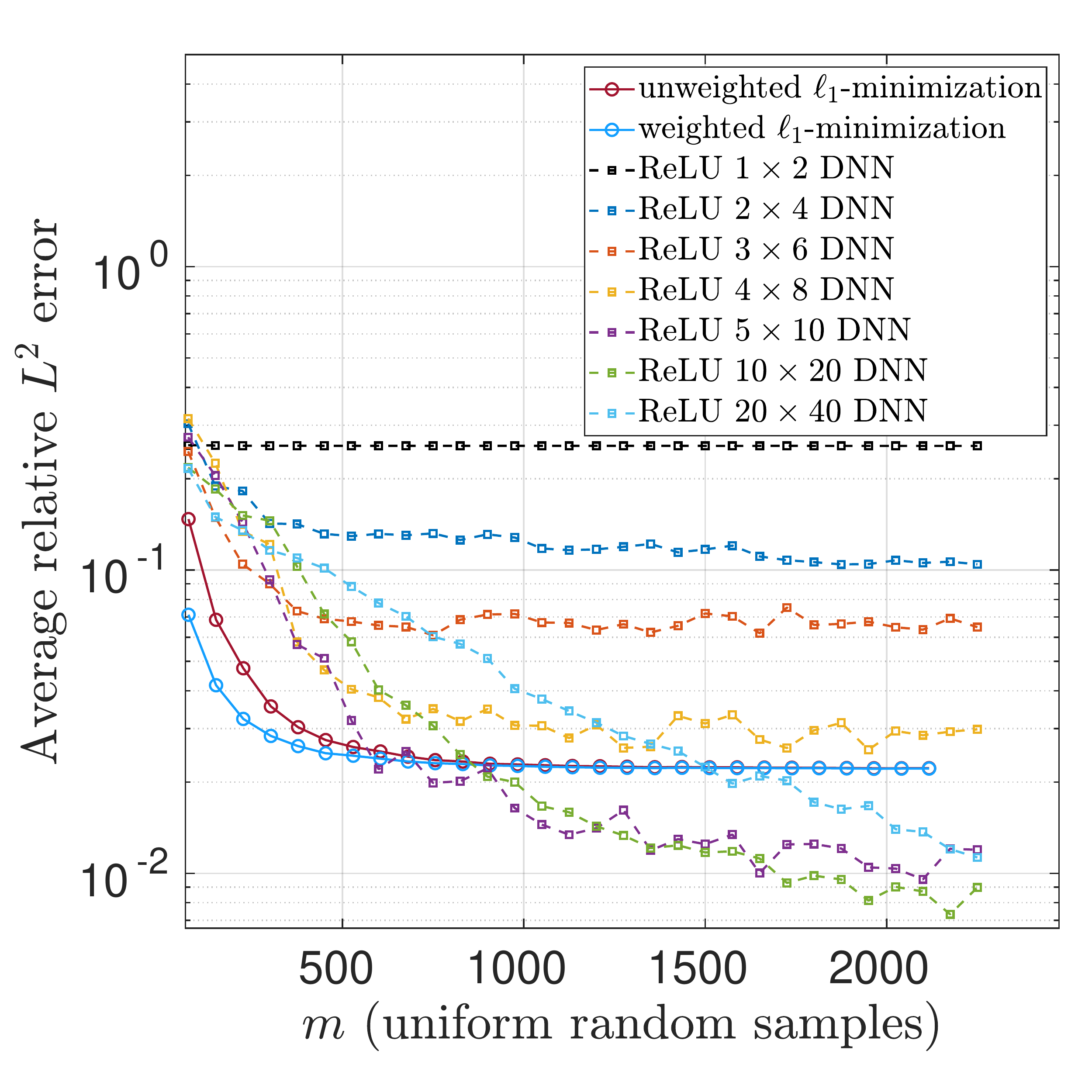}
\includegraphics[width=0.23\paperwidth,clip=true,trim=0mm 0mm 0mm 0mm]{slower_decay_rational_d=8_L2_err_CS_vs_relu_5x_width_DNNs.pdf}
\includegraphics[width=0.23\paperwidth,clip=true,trim=0mm 0mm 0mm 0mm]{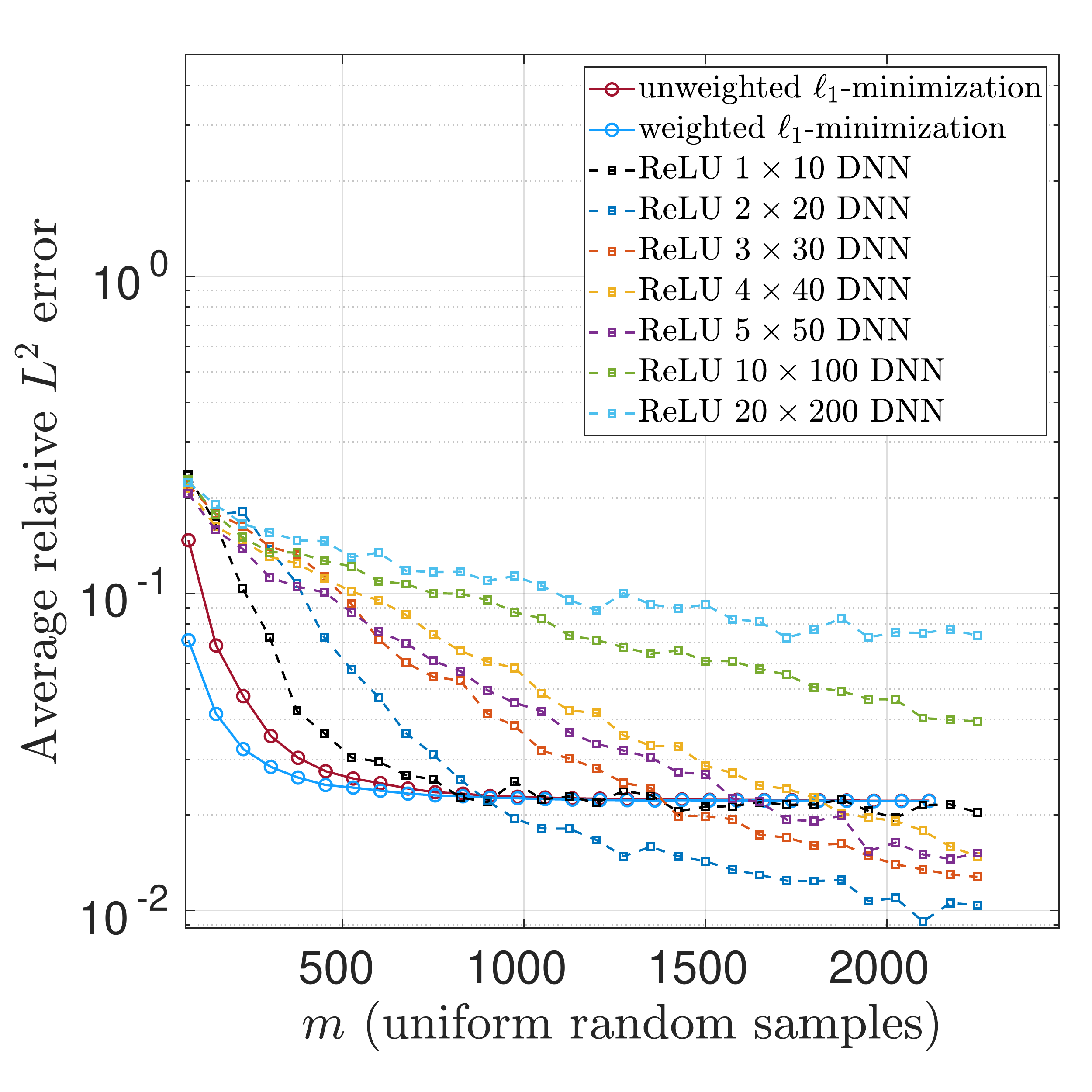}
\includegraphics[width=0.23\paperwidth,clip=true,trim=0mm 0mm 0mm 0mm]{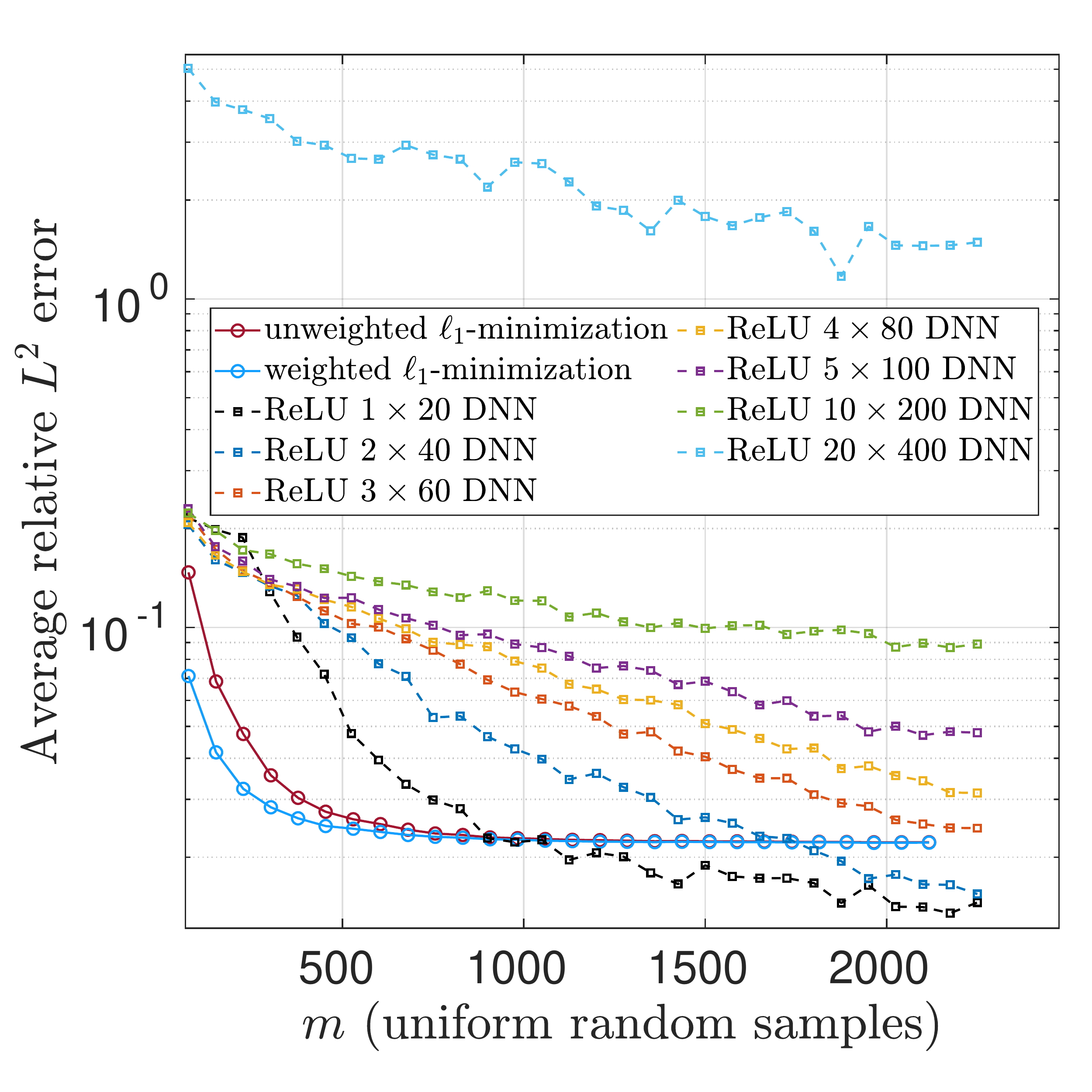}
\includegraphics[width=0.23\paperwidth,clip=true,trim=0mm 0mm 0mm 0mm]{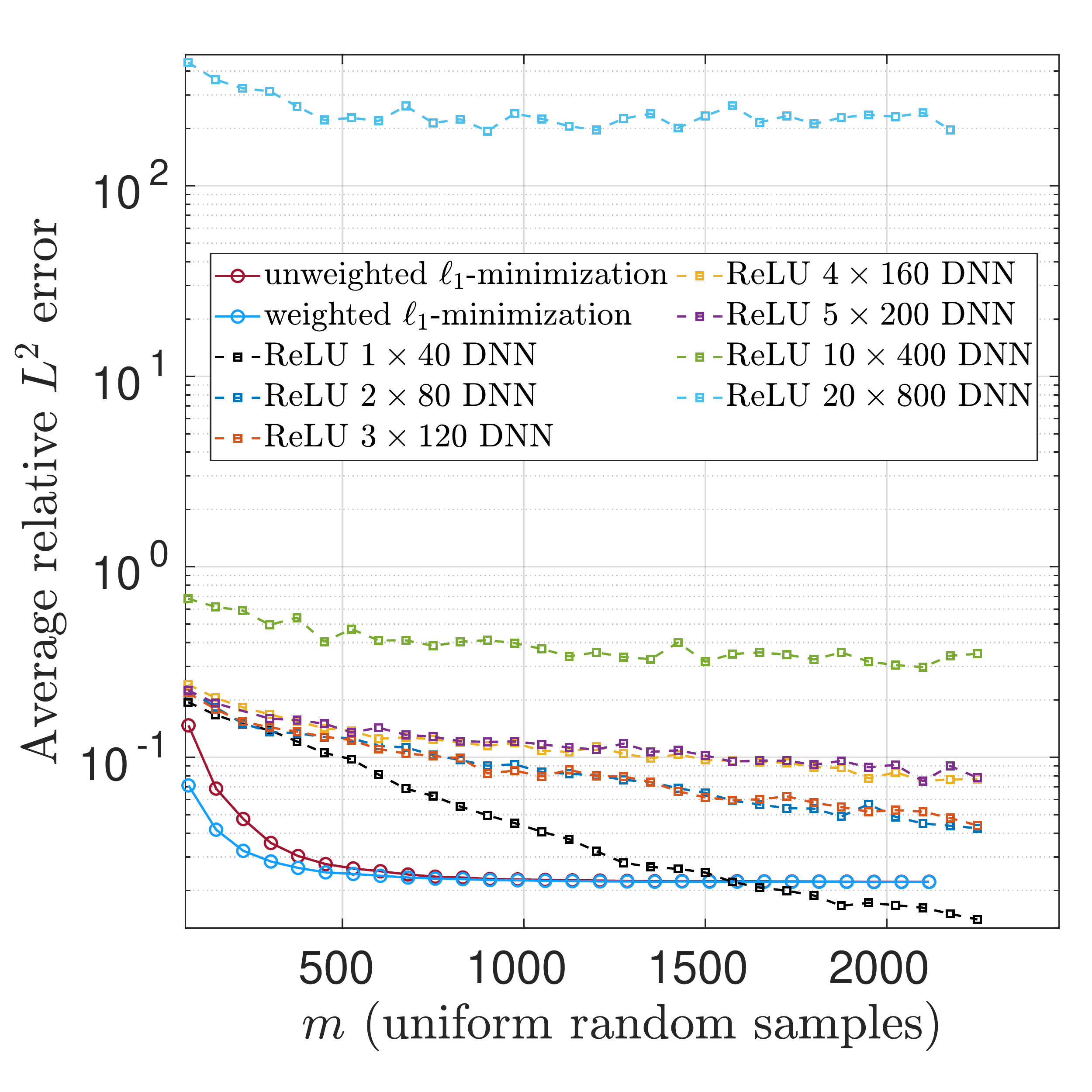}
\raisebox{2.0\height}{
\begin{adjustbox}{max width=0.218\paperwidth}
\begin{tabular}{|l|c|r|} \hline
$\beta$ & best arch. & \# params. \\ \hline
0.5 & $10\times 20$ & $\mathcal{O}(4\times 10^{3})$ \\
0.2 & $5\times 25$ & $\mathcal{O}(3\times 10^{3})$ \\
0.1 & $2\times 20$ & $\mathcal{O}(8\times 10^{2})$ \\
0.05 & $1\times 20$ & $\mathcal{O}(4\times 10^{2})$ \\
0.025 & $1\times 40$ & $\mathcal{O}(2\times 10^{3})$ \\ \hline
\end{tabular}
\end{adjustbox}
}
\end{center}

\vspace{-2mm}
\caption{Comparison of average relative $L^2$ errors w.r.t. number of samples of $f$ from \eqref{eq:slower_decay_rational_func} with $d = 8$ used in training ReLU architectures parameterized with $\beta = L/N$ (hidden layers/nodes per hidden layer) for values {\bf(top-left)} $\beta=0.5$, {\bf(top-middle)} $\beta=0.2$, {\bf(top-right)} $\beta=0.1$, {\bf(bottom-left)} $\beta=0.05$, and {\bf(bottom-middle)} $\beta=0.025$. {\bf(bottom-right)} Table of best-performing architectures for each choice of $\beta$ and number of parameters. CS approximations were computed with the Legendre basis of cardinality $n=3,023$.
}
\label{fig:slower_decay_rational_func_beta_comp}
\end{figure}

Figure \ref{fig:relu_slower_decay_rational_func_weight_comp} compares the average absolute maximum of the weights and biases of the trained DNNs.
There, as in the one-dimensional examples, we observe that the weights and biases of DNNs trained on function data from the less smooth function \eqref{eq:slower_decay_rational_func} are on average larger than those obtained after training on the smoother function \eqref{eq:exp_cos_func}.

\begin{figure}[ht]
\begin{center}
\includegraphics[width=0.23\paperwidth,clip=true,trim=0mm 0mm 0mm 0mm]{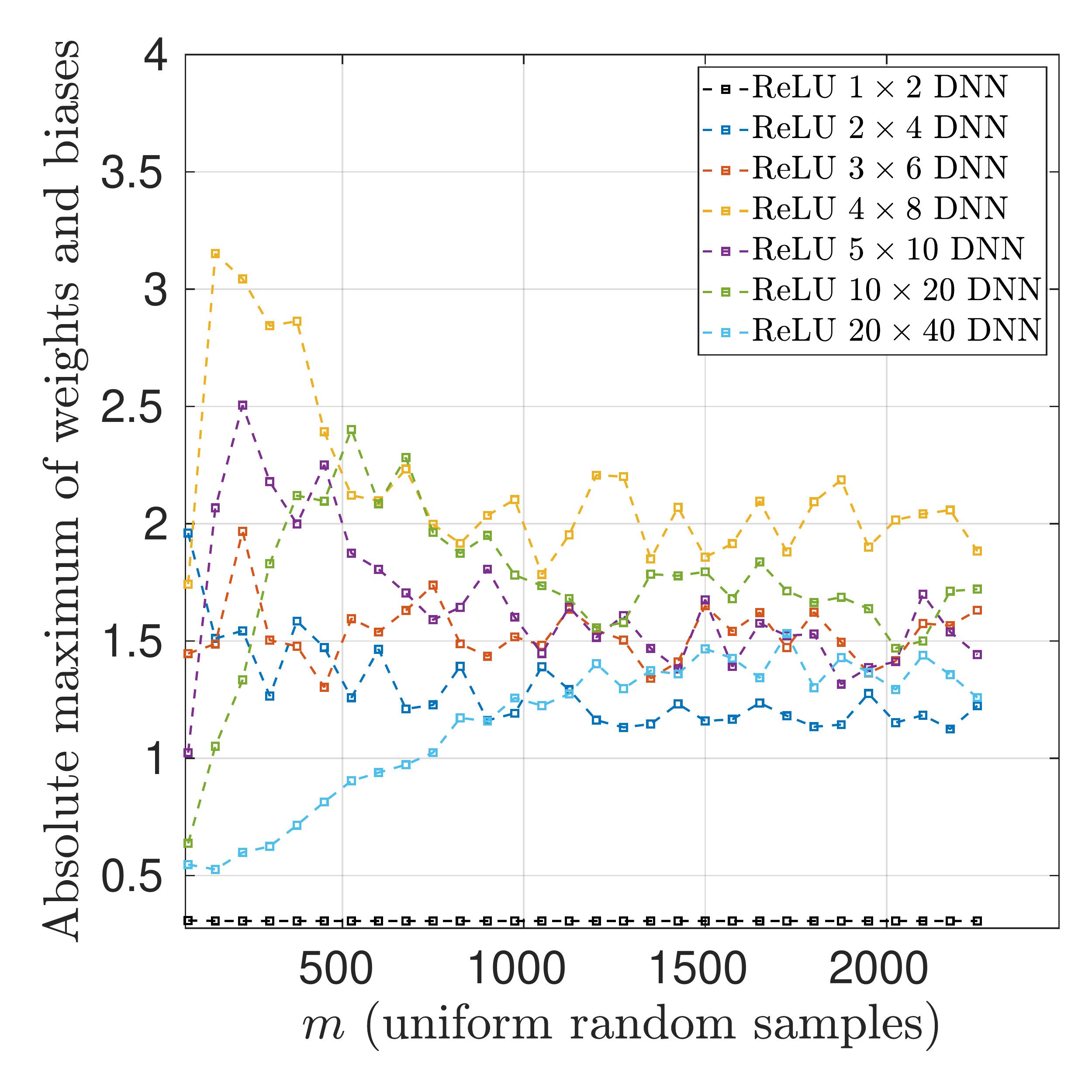}
\includegraphics[width=0.23\paperwidth,clip=true,trim=0mm 0mm 0mm 0mm]{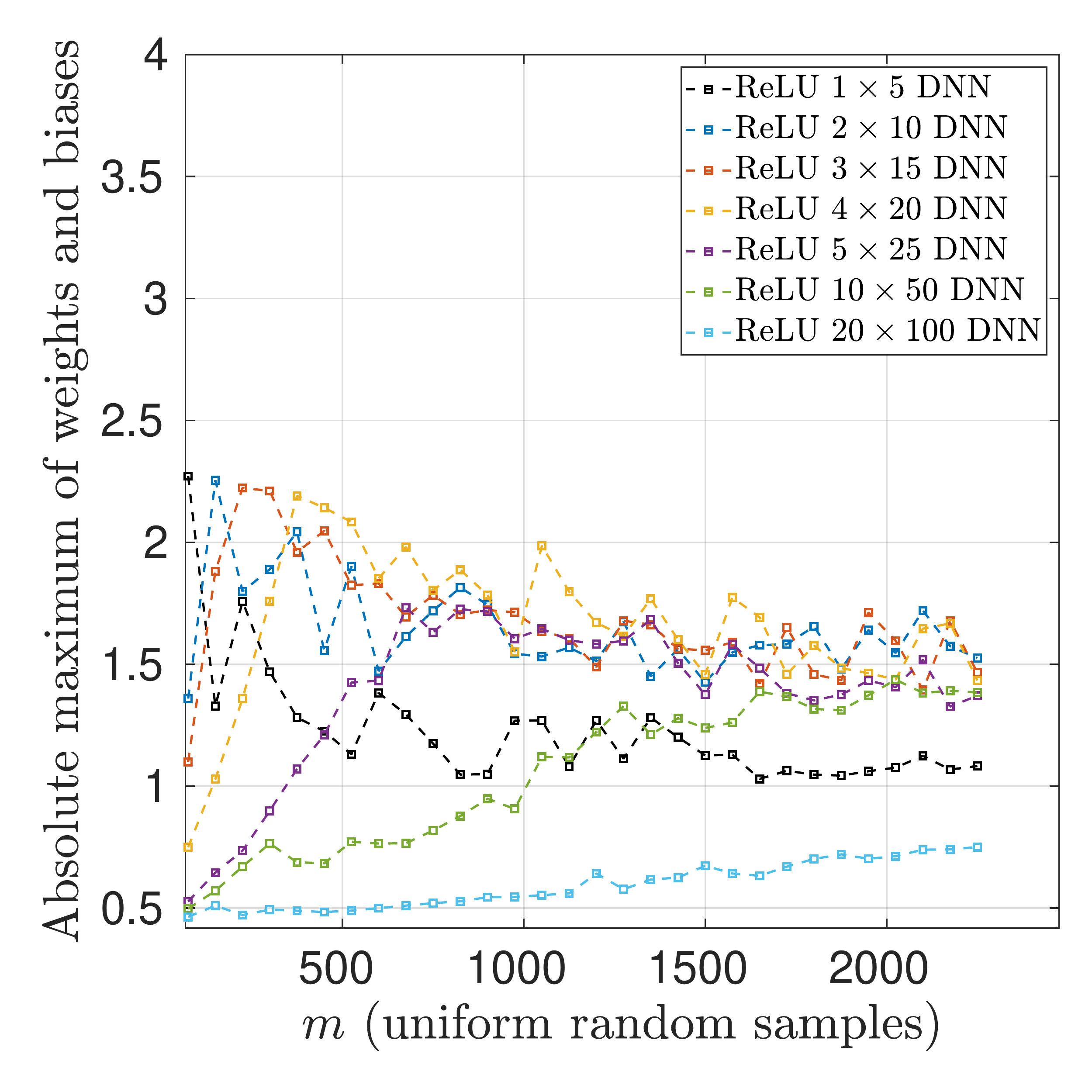}
\includegraphics[width=0.23\paperwidth,clip=true,trim=0mm 0mm 0mm 0mm]{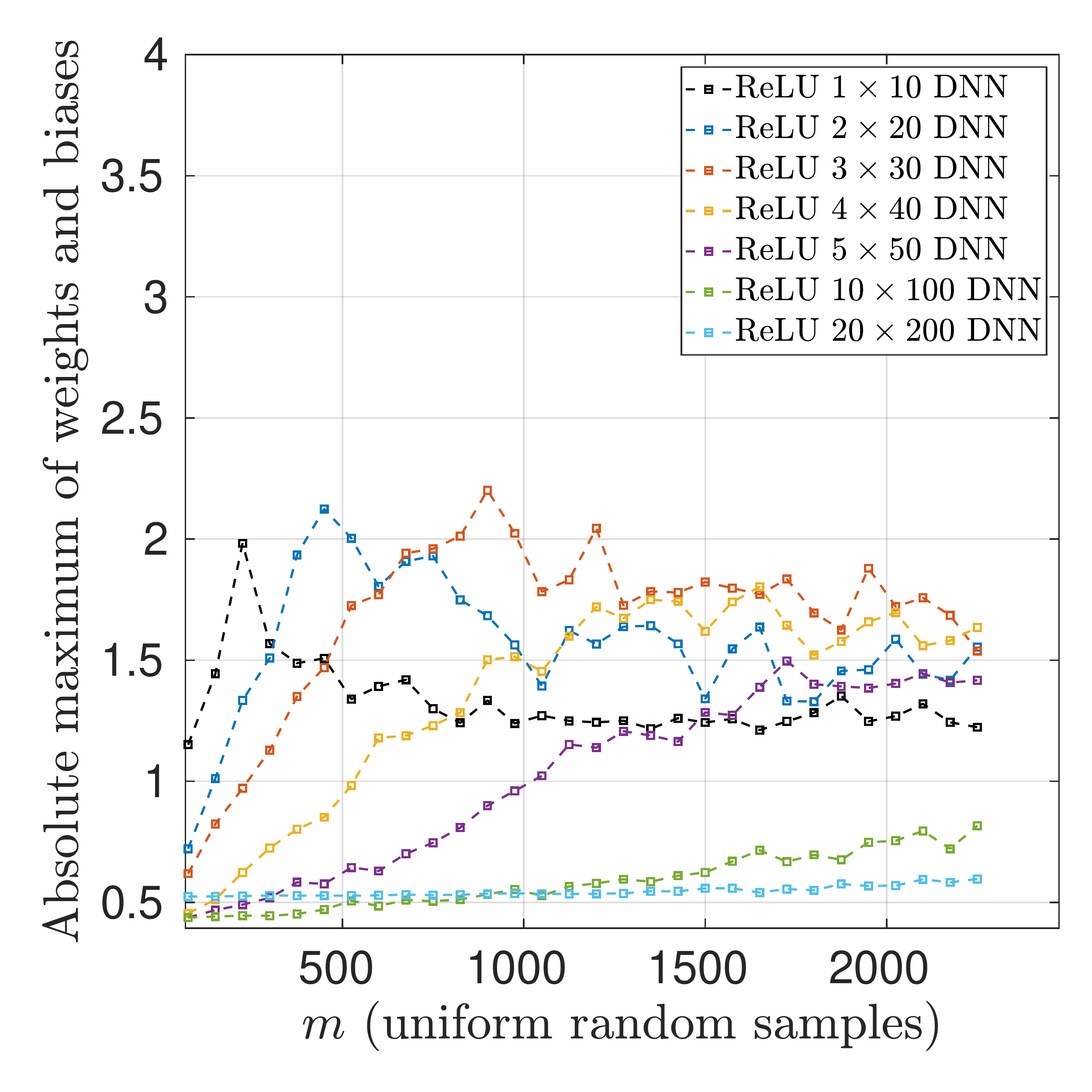}
\includegraphics[width=0.23\paperwidth,clip=true,trim=0mm 0mm 0mm 0mm]{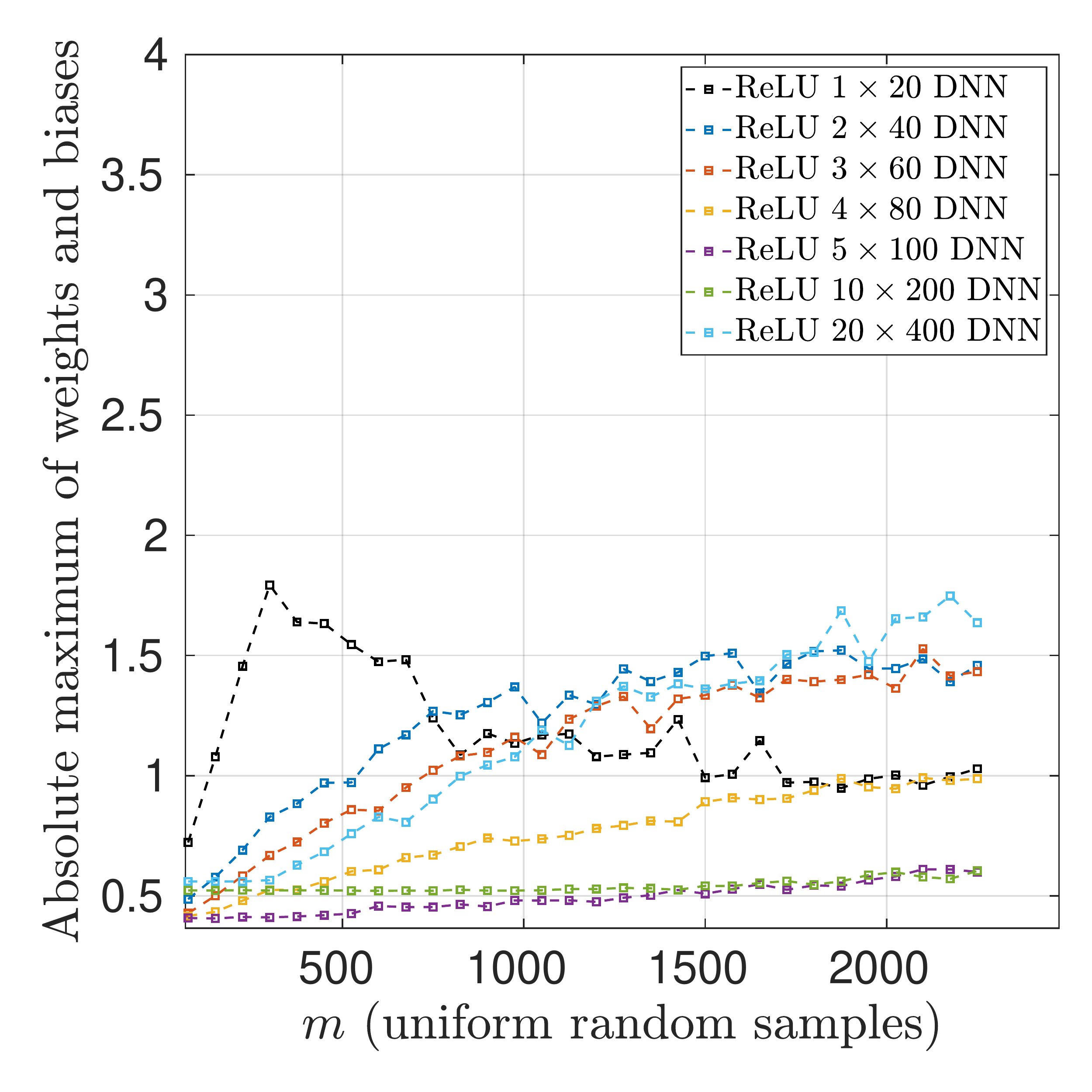}
\includegraphics[width=0.23\paperwidth,clip=true,trim=0mm 0mm 0mm 0mm]{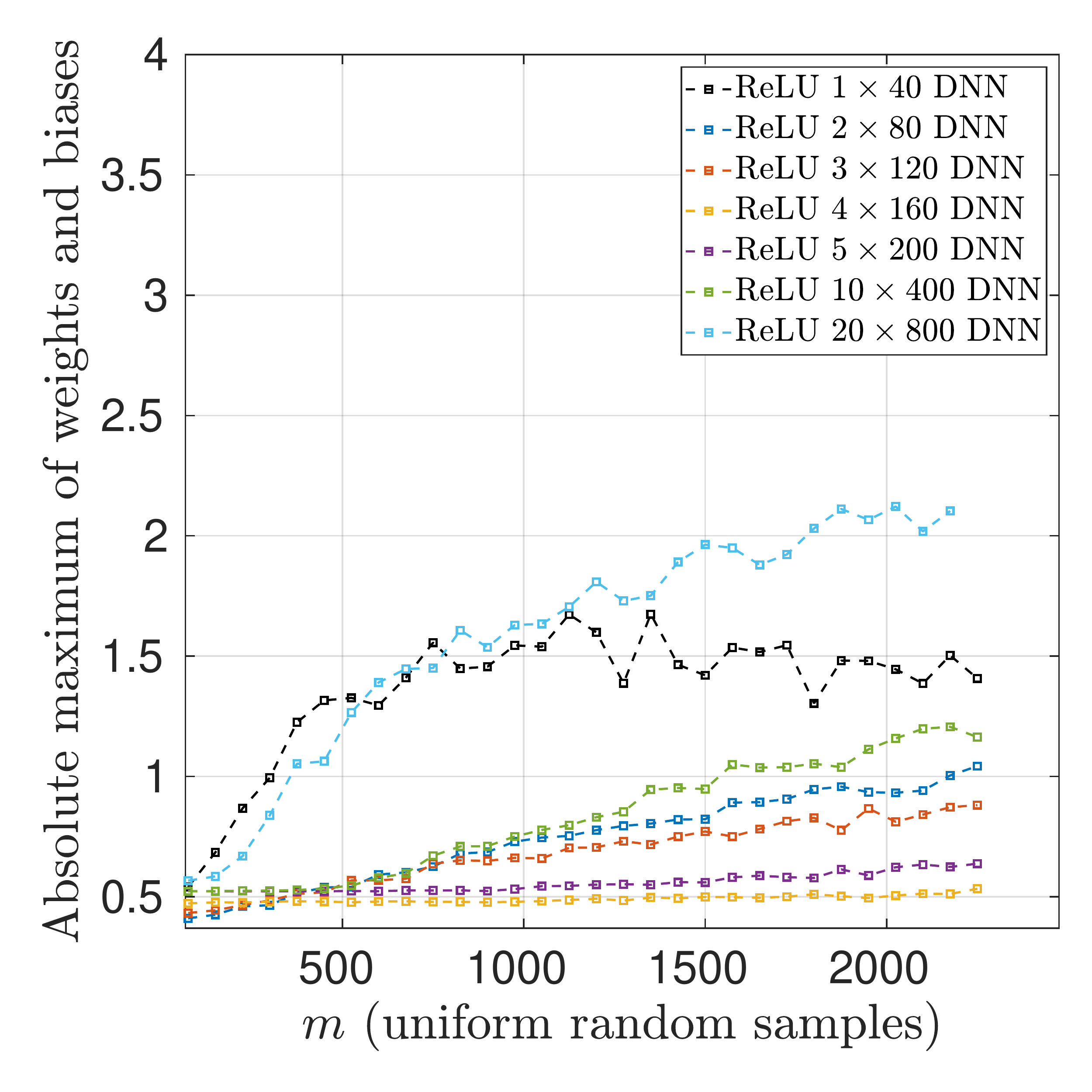}
\end{center}

\vspace{-2mm}
\caption{Comparison of average absolute maximum of weights and biases w.r.t. number of samples of $f$ from \eqref{eq:slower_decay_rational_func} with $d=8$ used in training ReLU architectures parameterized with $\beta = L/N$ (hidden layers/nodes per hidden layer) for values {\bf(top-left)} $\beta=0.5$, {\bf(top-middle)} $\beta=0.2$, {\bf(top-right)} $\beta=0.1$, {\bf(bottom-left)} $\beta=0.05$, and {\bf(bottom-right)} $\beta=0.025$.  }
\label{fig:relu_slower_decay_rational_func_weight_comp}
\end{figure}

Fig.\ \ref{fig:relu_slower_decay_rational_func_time_comp} again displays the average run time of the {\tt Adam} optimizer in training the DNNs as we increase the number of samples. There we observe similar patterns to the timing results obtained on function \eqref{eq:exp_cos_func}, e.g., linear scaling in the runtimes for narrower architectures due to the CPU-GPU transfer bottleneck. However, we also observe longer training time in general in approximating the function \eqref{eq:slower_decay_rational_func} compared to Fig.\ \ref{fig:relu_exp_cos_func_time_comp}, suggesting the difficulty of approximating less smooth functions can impact training time.

%----------------------------------------------------
\subsection{Piecewise continuous functions}
\label{subsec:piecewise_numex}
%----------------------------------------------------

In this section, we present results on approximating piecewise continuous functions. We consider the function
\begin{align}
\label{eq:halfspace_func}
f(x_1,\ldots,x_d) = \mathbbm{1}_{K}(x_1,\ldots,x_d), \qquad \mathrm{with} \qquad K = \{\bz \in \R^d: z_1+\ldots +z_d \ge 0 \}.
\end{align}
where $\mathbbm{1}_{K}(x_1,\ldots,x_d) = 1$ if $(x_1,\ldots,x_d)\in K$ and 0 otherwise.
In one dimension $K=\{x\in \R: x\ge 0\}$, so that $f(x) = \mathbbm{1}_{\{x\ge 0\}}(x)$.  
We note this simple discontinuous function equally splits the data between values in $\{0, 1\}$ in arbitrary dimension $d$, and that the separating hyperplane between the sets $K$ and $K^c$ is not aligned to any particular axis. In $d > 1$ dimensions, the CS approximation fails to converge since the coefficients are not sufficiently summable.
On the other hand, the work \cite{Petersen2018} shows the Heaviside function in $d$ dimensions given by $H(x) = \mathbbm{1}_{[0,\infty)\times \R^{d-1}}(x)$ can be approximated to $L^2$ accuracy $\varepsilon^{1/2}$ by a 2-layer ReLU network with five nonzero weights taking value in $\{\varepsilon^{-1},1,-1\}$.
As the function \eqref{eq:halfspace_func} is a rotation of the $d$-dimensional Heaviside function, the result holds in this case as well.

Fig.\ \ref{fig:halfspace_func} displays the results of approximating this function in 1, 2, and 4 dimensions.
There we observe the lack of convergence of the CS approximations when $d>1$. 
While the DNNs perform better than the CS approximations for this problem, none of the achieved results obtain more than 2 digits of accuracy.
In the right plot of Fig.\ \ref{fig:halfspace_func}, we see the \textit{double descent} behavior that has been observed in other works, see, e.g., \cite[\S 7]{Nakkiran2019}.

\begin{figure}[ht]
\begin{center}
\includegraphics[width=0.23\paperwidth,clip=true,trim=0mm 0mm 0mm 0mm]{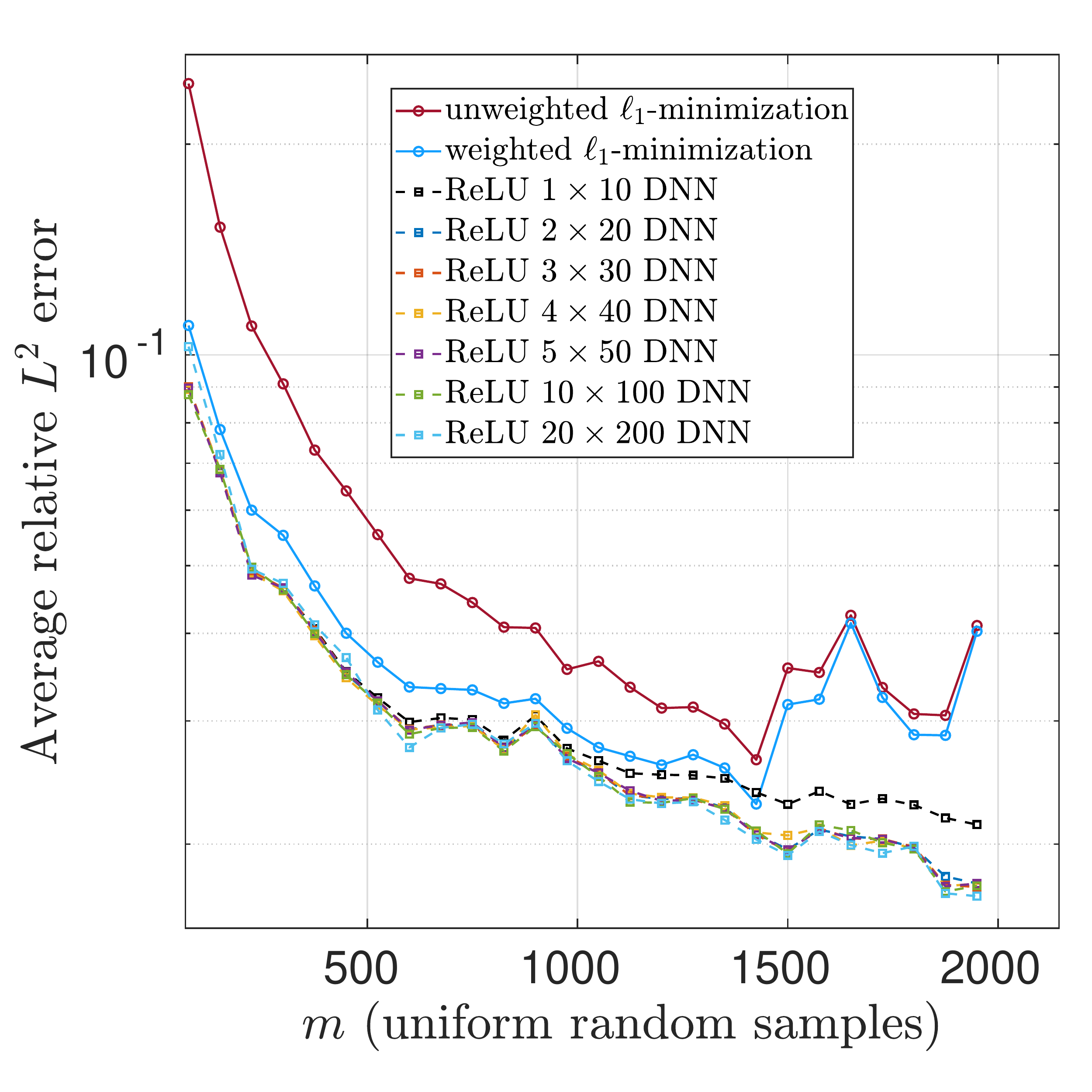}
\includegraphics[width=0.23\paperwidth,clip=true,trim=0mm 0mm 0mm 0mm]{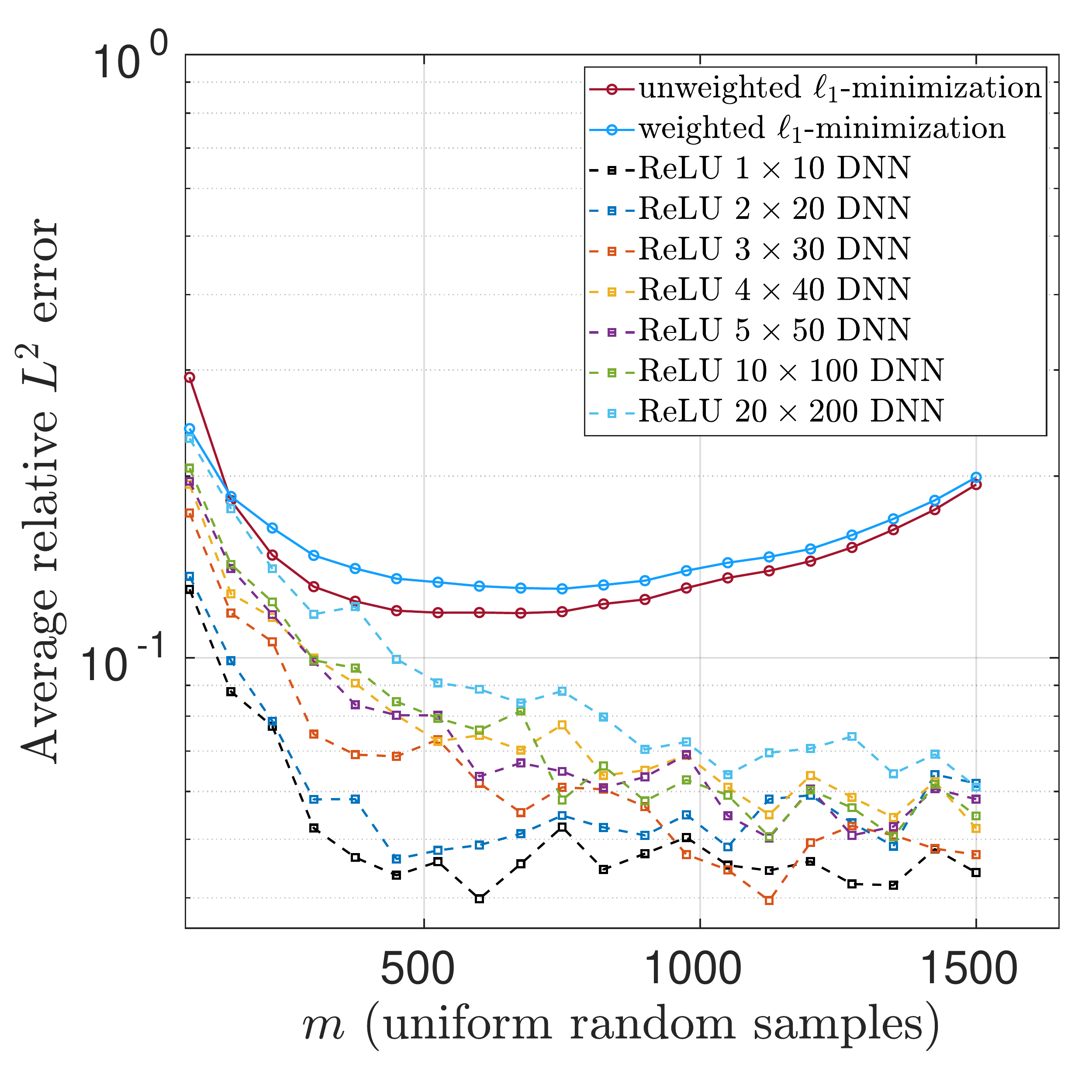}
\includegraphics[width=0.23\paperwidth,clip=true,trim=0mm 0mm 0mm 0mm]{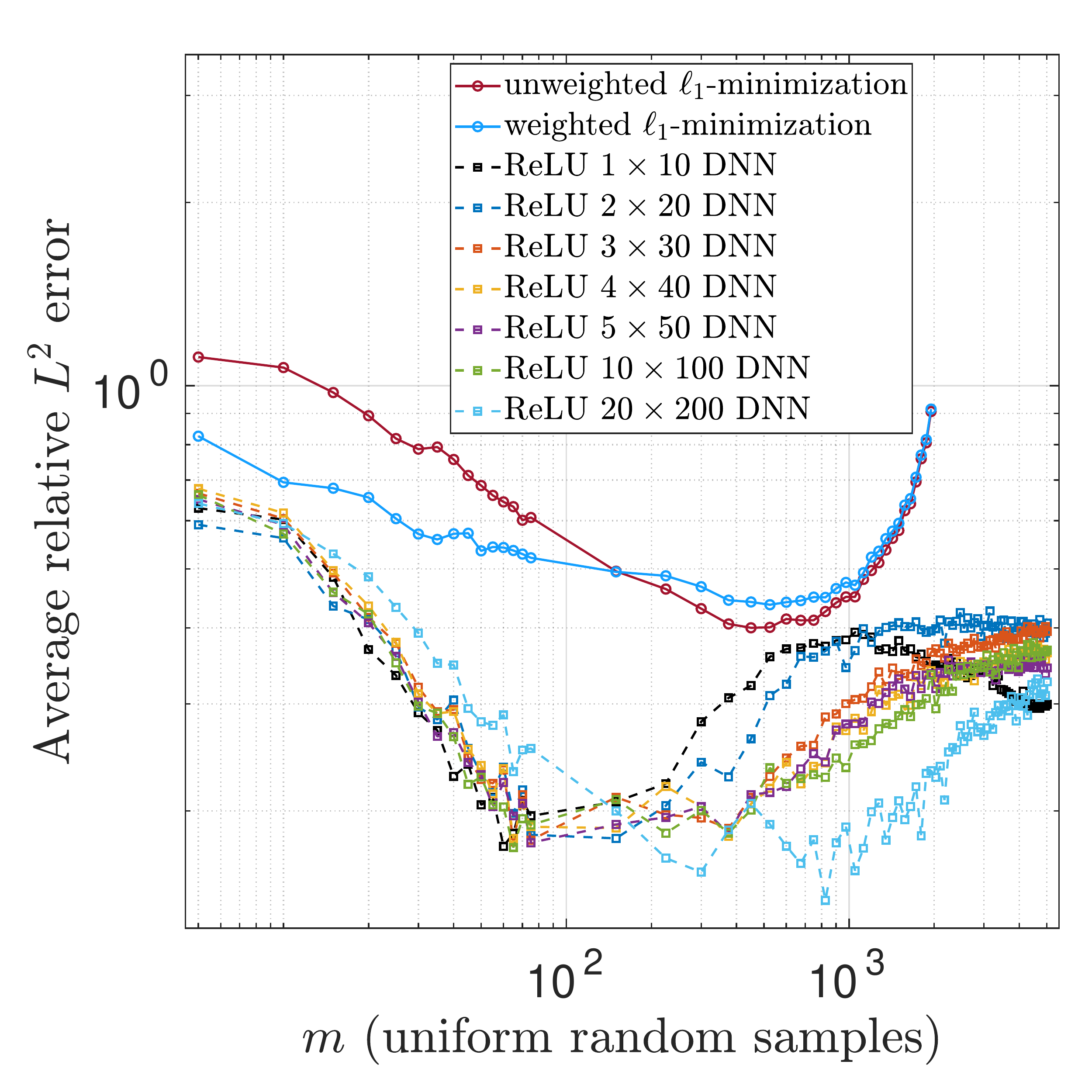}
\end{center}

\vspace{-2mm}
\caption{Average relative $L^2$ error w.r.t. number of samples used in training ReLU DNNs and solving the CS problem with Legendre basis of cardinality $n$ in $d$ dimensions with {\bf(left)} $d=1$ and $n = 3,000$, {\bf(center)} $d=2$ and $n=3,001$, and {\bf(right)} $d=4$ and $n=3,079$.}
\label{fig:halfspace_func}
\end{figure}

Fig.\ \ref{fig:halfspace_func_weight_comp} displays the average absolute maximum of the weights and biases in training the DNNs. 
There we observe that while on average the maximum weights are larger than those found for the smooth functions of the previous section, they remain bounded for the trained DNNs. 
Comparing to the aforementioned existence results, we note the weights never grow large enough in our experiments to obtain such high accuracy approximations in practice. 
Due to the initialization strategy chosen in this work to prevent failure during training, all of our networks start from an initial point corresponding to weights and biases close to 0. 
Combined with the training process which uses an initial learning rate of $10^{-3}$ which decays exponentially with the number of epochs, the algorithms may never reach a minimizer corresponding to higher-accuracy approximations of the halfspace function with larger weights.

\begin{figure}[ht]
\begin{center}
\includegraphics[width=0.23\paperwidth,clip=true,trim=0mm 0mm 0mm 0mm]{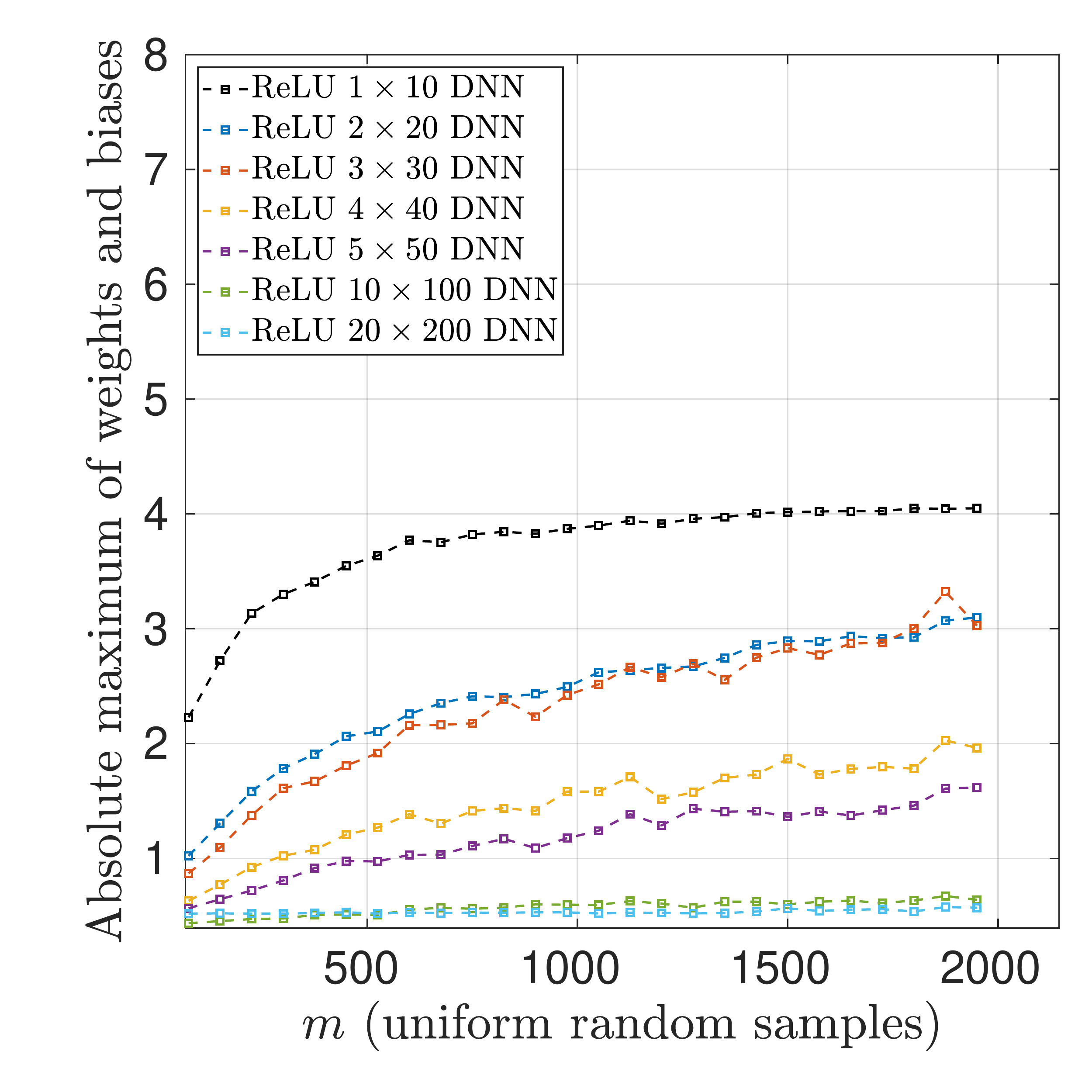}
\includegraphics[width=0.23\paperwidth,clip=true,trim=0mm 0mm 0mm 0mm]{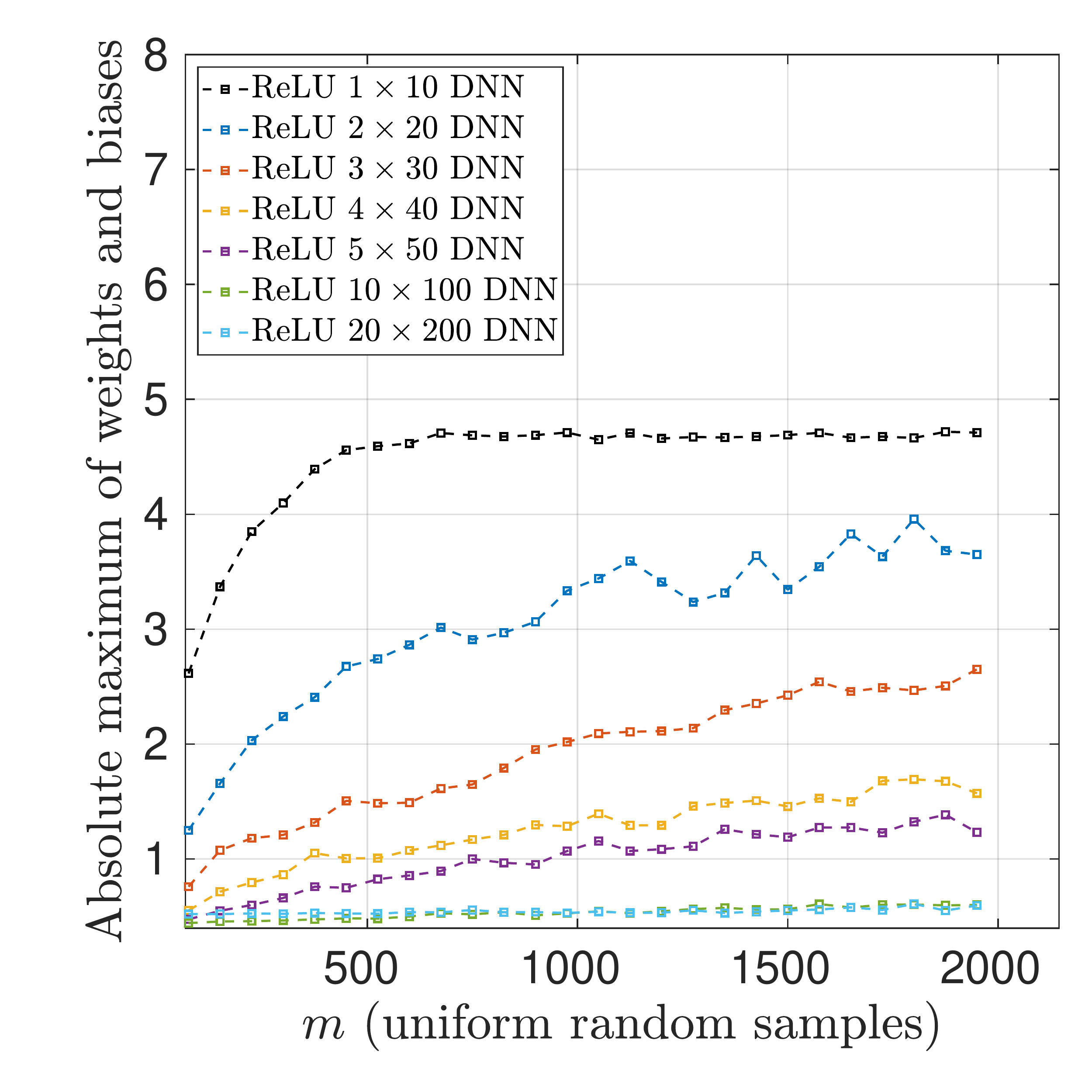}
\includegraphics[width=0.23\paperwidth,clip=true,trim=0mm 0mm 0mm 0mm]{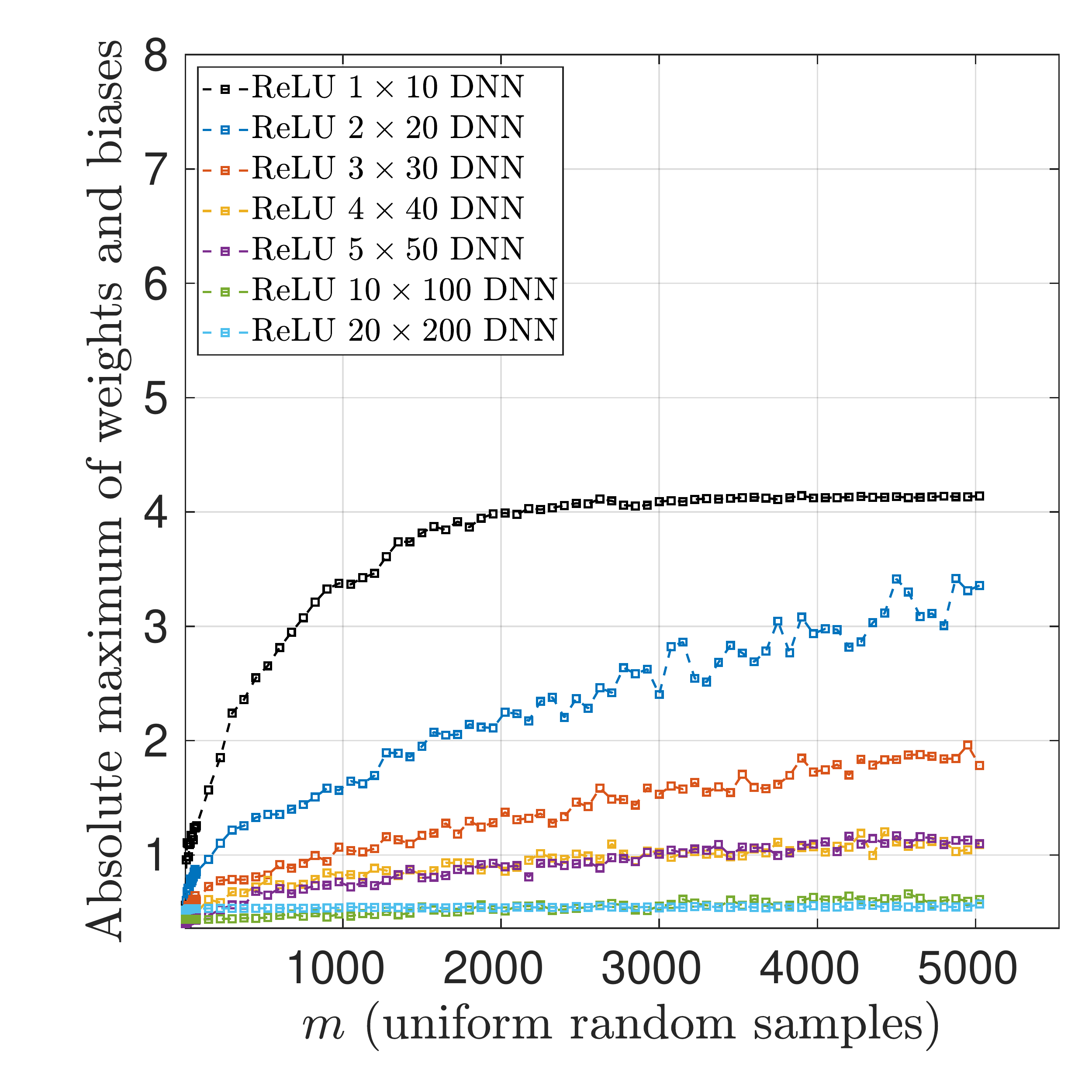}
\end{center}

\vspace{-2mm}
\caption{Comparison of average absolute maximum of weights and biases vs. number of samples of the halfspace function from \eqref{eq:halfspace_func} used in training in {\bf(left)} $d=1$, {\bf(center)} $d=2$, and {\bf(right)} $d=4$ dimensions.}
\label{fig:halfspace_func_weight_comp}
\end{figure}

% !TEX root = ./MLFA.tex

\section{Theoretical insights}\label{sec:theory1}

We conclude this paper with some theoretical insights. Specifically, we first establish convergence rates for polynomial approximation via CS (Theorem \ref{thm:CSexpapp}) and then show a practical existence theorem for DL which demonstrates the existence of an architecture and training procedure that achieves the same rates (Theorem \ref{thm:CSasNN}).
Proofs in this section can be found in the Supplementary Materials.

\subsection{Exponential convergence of polynomial approximations}\label{ss:expconvpoly}

In \S \ref{sec:polyapprox} we introduced the best $s$-term Legendre polynomial approximation.  We first establish exponential rates of convergence of such approximations.
To this end, let $\cE_{\rho} = \left \{ (z + z^{-1})/2 : z \in \bbC,\ |z| \leq \rho \right \}$
be the Bernstein ellipse with parameter $\rho > 1$ and define the Bernstein polyellipse $\cB_{\bm{\rho}} = \bigotimes^{d}_{j=1} \cE_{\rho_j}$,
where $\bm{\rho} = (\rho_1,\ldots,\rho_d) \geq \bm{1}$.  This inequality is understood componentwise, i.e. $\bm{\rho} \geq \bm{1}$ if and only if $\rho_j \geq 1$ for all $j$.  We now make the following assumption:

\begin{assumption}
\label{ass:holo}
For some $\bm{\rho} > \bm{1}$, the function $f : \cU \rightarrow \bbR$ admits a holomorphic extension to an open set $\cO$ containing $\cE_{\bm{\rho}}$.
\end{assumption}

Note that this is a reasonable assumption in practice: it is known that functions that arise as quantities of interest for a range of parametric PDEs admit holomorphic extensions \cite{CDS11,CohenDeVoreApproxPDEs}.
Under Assumption \ref{ass:holo}, it is known that the Legendre polynomial coefficients satisfy
\ben{
\label{LegCoeffBd}
|c_{\bm{\nu}} | \leq \nm{f}_{L^{\infty}(\cE_{\bm{\rho}})} \bm{\rho}^{-\bm{\nu}} \prod^{d}_{j=1} (1+2 \nu_j)^{1/2},
}
where $\nm{u}_{L^{\infty}(\cE_{\bm{\rho}})} = \sup_{z \in \cE_{\bm{\rho}}} |f(z)|$ (see, for instance, the proof of Theorem 3.5 in \cite{Opschoor2019Legendre}). When $d = 1$ this is a classical result in polynomial approximation, and guarantees convergence of the truncated expansion at an exponential rate $\rho^{-s}$. When $d > 1$, it also clarifies why best $s$-term approximation is a suitable strategy; namely, unless the parameter $\bm{\rho}$ is known, it is difficult to make an \textit{a priori} choice of coefficients which lead to a fast rate of exponential convergence.  

On the other hand, the following theorem shows favourable exponential rates of convergence for the best $s$-term approximation.  
It also reveals another important property; namely, that the prescribed rate can be achieved using an index set that is \textit{lower}.  We recall that a set $\Lambda$ of multi-indices is lower if, for any $\bm{\nu} \in \Lambda$ one has $\bm{\mu} \in \Lambda$ whenever $\bm{\mu} \leq \bm{\nu}$ (this inequality is understood componentwise, i.e.\ $\mu_j \leq \nu_j$, $\forall j$). As discussed later, the lower set property is crucial for obtaining approximations using CS that attain the same rates.

For convenience, from now on we consider the $s$-term approximation error in the $L^{\infty}$-norm on $\cU$. This is slightly more convenient for the subsequent analysis, although $L^2$-norm bounds could also be proved with some additional technicalities.

\begin{theorem}
\label{thm:LegExpOSZ}
Let $s \geq 1$ and $f$ satisfy Assumption \ref{ass:holo} for some $\bm{\rho} > \bm{1}$. Then there exists a lower set $\Lambda \subset \N^d_0$ of size at most $s$ for which
\bes{
\| f - p  \|_{L^{\infty}(\cU)} \leq  C \exp \left ( -\gamma s^{1/d} \right ),
}
where $p = \sum_{\bm{\nu} \in \Lambda} c_{\bm{\nu}}  \Psi_{\bm{\nu}}$ and $C > 0$ depends on $d$, $\bm{\rho}$, $\gamma$ and $f$ only, for any $\gamma$ satisfying
\ben{
\label{polyrategamma}
0 < \gamma <  (d+1)^{-1} \left ( d! \prod^{d}_{j=1} \log(\rho_j)  \right )^{1/d}.
}
\end{theorem}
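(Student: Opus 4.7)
The plan is to combine the coefficient estimate \eqref{LegCoeffBd} with a carefully chosen lower index set and a volume-counting argument. Starting from the truncated expansion $p = \sum_{\bm\nu \in \Lambda} c_{\bm\nu} \Psi_{\bm\nu}$, the triangle inequality together with $\|\Psi_{\bm\nu}\|_{L^\infty(\cU)} = \prod_{j=1}^d \sqrt{2\nu_j+1}$ and \eqref{LegCoeffBd} reduces the proof to controlling the tail
\begin{equation*}
\|f - p\|_{L^\infty(\cU)} \leq \|f\|_{L^\infty(\cE_{\bm\rho})} \sum_{\bm\nu \notin \Lambda} \bm\rho^{-\bm\nu} \prod_{j=1}^d (2\nu_j+1).
\end{equation*}
The remainder of the proof is then to choose $\Lambda$ as an anisotropic simplex adapted to the decay of $\bm\rho^{-\bm\nu}$.

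Setting $a_j := \log \rho_j > 0$, I would take $\Lambda(t) := \{\bm\nu \in \N_0^d : \sum_j a_j \nu_j \leq t\}$ for a threshold $t>0$ to be chosen. This set is lower because $\bm\mu \leq \bm\nu$ componentwise implies $\sum_j a_j \mu_j \leq \sum_j a_j \nu_j$. A volume comparison, placing each lattice cube $\bm\nu + [0,1]^d$ inside the dilated simplex $\{\bm x \in \R_{\geq 0}^d : \sum_j a_j x_j \leq t + \|\bm a\|_1\}$, yields
\begin{equation*}
|\Lambda(t)| \leq \frac{(t+\|\bm a\|_1)^d}{d!\prod_j a_j},
\end{equation*}
so that requiring $|\Lambda(t)| \leq s$ forces $t \gtrsim (d!\prod_j a_j)^{1/d} s^{1/d}$. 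For the tail, I would split $\bm\rho^{-\bm\nu} = e^{-\theta \sum_j a_j \nu_j}\bm\rho^{-(1-\theta)\bm\nu}$ for some parameter $\theta \in (0,1)$. On $\{\bm\nu \notin \Lambda(t)\}$ the first factor is bounded by $e^{-\theta t}$, while the residual sum $\sum_{\bm\nu \in \N_0^d} \bm\rho^{-(1-\theta)\bm\nu} \prod_j (2\nu_j+1)$ factors into a convergent one-dimensional product, producing a finite constant $C_\theta$ depending only on $\bm\rho$, $\theta$, and $d$. Combining these estimates yields an error bound of the claimed exponential form in $s^{1/d}$ with rate proportional to $\theta \, (d!\prod_j \log\rho_j)^{1/d}$.

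The delicate point is recovering the precise constant $(d+1)^{-1}$ appearing in \eqref{polyrategamma}; the simplex argument above as just stated produces a rate of $(d!\prod_j \log\rho_j)^{1/d}$, which is \emph{larger} than allowed. I expect the required loss of $(d+1)^{-1}$ to arise from one of two refinements. The first is to absorb the polynomial prefactor $\prod_j (2\nu_j+1)$ into the exponential weight through an inequality of the form $\prod_j(2\nu_j+1) \leq C \prod_j \rho_j^{\nu_j/(d+1)}$ on the tail, which effectively inflates the decay base from $\bm\rho$ to $\bm\rho^{d/(d+1)}$ and hence shrinks the rate by the factor $d/(d+1)$ after a short computation. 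The second is to replace the simplex volume bound by a tensor-product bound $|\Lambda(t)| \leq \prod_j(1 + t/a_j)$, which introduces a factor $(d+1)^{-d}$ in place of $1/d!$. I would pursue the first route since it cleanly decouples counting from coefficient decay. Sending $\theta \to 1$ then produces $\gamma$ arbitrarily close to $(d+1)^{-1}(d!\prod_j \log\rho_j)^{1/d}$, with the constant $C$ absorbing $\|f\|_{L^\infty(\cE_{\bm\rho})}$, the factor $C_\theta$, and the $O(1)$ correction from the cardinality estimate.
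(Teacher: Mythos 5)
Your argument is correct, and in fact it proves something stronger than the theorem asserts, which is the source of your confusion in the final paragraph. The paper's own proof uses the same reduction (bounding $\| f - p \|_{L^{\infty}(\cU)}$ by the weighted $\ell^1$ tail $\sum_{\bm{\nu} \notin \Lambda} u_{\bm{\nu}} |c_{\bm{\nu}}|$) and the same anisotropic simplex $\Lambda = \{ \bm{\nu} : \bm{\rho}^{-\bm{\nu}} \geq \epsilon \}$, but it then invokes a tail estimate from Opschoor--Schwab--Zech of the form $C \exp ( -\beta |\Lambda|^{1/d} )$ with $\beta < ( d! \prod_j \log \rho_j )^{1/d}$, expressed in terms of the \emph{cardinality} $|\Lambda|$ rather than the threshold $t$. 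Since the paper parametrizes the budget by $s = \prod_j ( t/\log\rho_j + 1)$ (a box-type upper bound on $|\Lambda|$), it must then convert back via the two-sided estimate $s (d+1)^{-d} \leq |\Lambda| \leq s$, and it is precisely this lossy cardinality comparison that produces the factor $(d+1)^{-1}$ in \eqref{polyrategamma}. Your route---the $\theta$-splitting of $\bm{\rho}^{-\bm{\nu}}$ giving a tail bound $C_\theta e^{-\theta t}$ directly in terms of $t$, combined with the simplex-volume count $|\Lambda(t)| \leq (t + \|\bm{a}\|_1)^d / (d! \prod_j a_j)$ to choose $t$ of order $(d! \prod_j \log\rho_j)^{1/d} s^{1/d}$---sidesteps that conversion entirely and yields the bound for every $\gamma < (d! \prod_j \log\rho_j)^{1/d}$. (Minor wording point: requiring $|\Lambda(t)| \leq s$ \emph{permits} $t$ up to roughly $(d! \prod_j a_j)^{1/d} s^{1/d}$; it does not force a lower bound on $t$.)

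The "delicate point" you identify is therefore not a gap. The theorem claims the estimate \emph{for any} $\gamma$ in the stated range; establishing it for a strictly larger range of $\gamma$ implies the theorem a fortiori, so no refinement is needed and you should not try to degrade your own rate to match $(d+1)^{-1} ( d! \prod_j \log\rho_j )^{1/d}$. Chasing that constant is chasing an artifact of the paper's bookkeeping. Incidentally, neither of your proposed refinements would reproduce it exactly: absorbing $\prod_j (2\nu_j+1)$ into the exponential weight costs a factor $d/(d+1)$, not $1/(d+1)$, while the tensor-product cardinality bound is essentially what the paper does and is where its $(d+1)^{-d}$ comes from. The only genuine loose ends to tidy are the standard ones: handle small $s$ (where $t_s$ may be nonpositive) by absorbing finitely many cases into $C$, and justify the termwise $L^\infty$ bound by noting that $\sum_{\bm{\nu}} \bm{\rho}^{-\bm{\nu}} \prod_j (2\nu_j+1) < \infty$ so the Legendre expansion converges absolutely and uniformly on $\overline{\cU}$.
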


This result asserts exponential convergence of the best $s$-term approximation at a rate depending on the product of $\log(\rho_j)$, as opposed to $\log(\rho_{\min})$, $\rho_{\min} = \min_j \{ \rho_j \}$, which would arise if some fixed isotropic index set were used.  
Importantly, it also does not require any {\em a priori} knowledge of the parameters $(\rho_j)_{j=1}^d$, and adapts to the underlying anisotropy of the function.
In the next subsection we show this rate can be achieved using CS with a sampling complexity $m$ that scales favourably with $s$.
We refer to CS as a best-in-class scheme since there are, currently, no other methods which provably achieve this property.

\begin{remark}
There are numerous results on the convergence rate of the best $s$-term polynomial approximation of holomorphic functions.  Algebraic rates of convergence can be found in, for instance, \cite{CDS10,CDS11}, for not only Legendre, but also Chebyshev and Taylor expansions.  Notably, these results also extend to the case $d = \infty$, which theoretically permits the approximation of functions of infinitely-many variables.  However, the constants in the error bounds may be large in practice \cite{TWZ17}.
The rates shown above -- which are based on \cite[Sec.\ 3.9]{CohenDeVoreApproxPDEs} and \cite{Opschoor2019Legendre} -- possess the advantage of always being attained in lower sets.  However, they may also exhibit a poor scaling with $d$ in the constant $C$.  For \textit{quasi-optimal} error bounds and rates, see \cite{BNTT14,BTNT12,TWZ17}.  The results shown in \cite{TWZ17} are asymptotically sharp as $s \rightarrow \infty$, hence they provide the optimal rate in the finite-dimensional setting.  
\end{remark}

%----------------------------------------------------
\subsection{Exponential convergence via compressed sensing}
\label{subsec:CS}
%----------------------------------------------------

Theorem \ref{thm:LegExpOSZ} prescribes exponential rates of convergence for the best $s$-term approximation.  We now show that these rates can be achieved via CS.
To do so, following \cite{ABBCorrecting}, we now suppose that the CS approximation $\hat{f} = \sum_{\bm{\nu} \in \Lambda} \hat{c}_{\bm{\nu}} \Psi_{\bm{\nu}}$ is computed by solving the so-called \textit{weighted square-root LASSO} problem
\begin{align}
\label{eq:CS_sr_min}
\textnormal{minimize}_{\bz\in\R^n} \|\bz\|_{1,\bm{u}} + \mu \|\bA \bz - \bm{f} \|_2,
\end{align}
rather than \eqref{eq:CS_BPDN_weighted}. The reasons for doing this are explained in \S \ref{sec:CSSM}.

\begin{theorem}
\label{thm:CSexpapp}
There exists a universal constant $c > 0$ such that the following holds.  Suppose that $0 < \varepsilon < 1$, $m \geq C_1 L_{m,d,\varepsilon}$, where 
\bes{
L_{m,d,\varepsilon} = \log(2 m) \left ( \log^2(2m) \log(2 d) + \log(2 \varepsilon^{-1} \log(2m) ) \right ),
}
$\bm{x}_1,\ldots,\bm{x}_m$ are drawn independently from the uniformly measure on $\cU$,
\ben{
\label{supperCSexp}
1 \leq s \leq \sqrt{\frac{m}{c L_{m,d,\varepsilon}}},
}
and $\Lambda = \Lambda^{\mathrm{HC}}_{s}$ is as in \eqref{HCindex}.
Then the following holds with probability at least $1-\varepsilon$.  Let $f : \cU \rightarrow \bbR$ satisfy Assumption \ref{ass:holo} for some $\bm{\rho} > \bm{1}$ and let $\hat{f} = \sum_{\bm{\nu} \in \Lambda} \hat{c}_{\bm{\nu}} \Psi_{\bm{\nu}}$, where $\bm{\hat{c}} = (\hat{c}_{\bm{\nu}} )_{\bm{\nu} \in \Lambda}$ is any minimizer of \eqref{eq:CS_sr_min} with $\mu = \frac{12 \sqrt{42}}{35} s$ and weights $\bm{u}$ given by \eqref{eq:uweightsLeg}.  Then
\bes{
\nmu{f - \hat{f}}_{L^{\infty}(\cU)} \leq C \cdot \exp(-\gamma s^{1/d} ),
}
for all $\gamma$ satisfying \eqref{polyrategamma}, where $C>0$ depends on $d$, $\bm{\rho}$, $\gamma$ and $f$ only.
\end{theorem}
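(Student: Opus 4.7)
The plan is to reduce Theorem \ref{thm:CSexpapp} to two ingredients: (a) a standard weighted compressed sensing recovery guarantee for bounded orthonormal systems specialized to Legendre expansions, and (b) the best $s$-term polynomial approximation result from Theorem \ref{thm:LegExpOSZ}. The CS guarantee converts sparse recovery quality into a bound on the weighted $\ell^1$ coefficient error in terms of the best lower-set $s$-term approximation, and Theorem \ref{thm:LegExpOSZ} supplies the exponential decay needed to estimate the latter.

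First I would invoke the classical combinatorial fact that every lower set $S\subset\N^d_0$ with $|S|\le s$ satisfies $\prod_{j=1}^d(\nu_j+1)\le s+1$ for every $\bm\nu\in S$, and hence $S\subseteq \Lambda^{\mathrm{HC}}_s=\Lambda$. Applying Theorem \ref{thm:LegExpOSZ} produces a lower set $\Lambda^\star$ with $|\Lambda^\star|\le s$ and an index partition $\Lambda^\star\subseteq\Lambda$ such that $p=\sum_{\bm\nu\in\Lambda^\star}c_{\bm\nu}\Psi_{\bm\nu}$ obeys $\|f-p\|_{L^\infty(\cU)}\le C\exp(-\gamma s^{1/d})$ for any $\gamma$ satisfying \eqref{polyrategamma}. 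In particular, $p$ is an admissible $s$-sparse candidate inside the search space defined by $\Lambda$.

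Second I would apply the weighted square-root LASSO recovery theorem of \cite{ABBCorrecting} for Legendre systems with weights $u_{\bm\nu}=\|\Psi_{\bm\nu}\|_{L^\infty}$. Under the stated sample complexity $m\ge C_1L_{m,d,\varepsilon}$ and the sparsity budget \eqref{supperCSexp}, that theorem guarantees, with probability at least $1-\varepsilon$ in the random draws $\bm x_1,\dots,\bm x_m$, that the weighted robust null space property holds for $\bm A$ at sparsity $s$ (relative to lower sets) and hence that any minimizer $\bm{\hat c}$ of \eqref{eq:CS_sr_min} with $\mu=\tfrac{12\sqrt{42}}{35}s$ satisfies a bound of the form
\bes{
\|\bm{\hat c}-\bm c_\Lambda\|_{1,\bm u}\;\lesssim\; \inf_{\substack{S\subseteq\Lambda\ \text{lower}\\ |S|\le s}}\sum_{\bm\nu\in\Lambda\setminus S}u_{\bm\nu}|c_{\bm\nu}|\;+\;\sqrt{s}\,\|f-f_\Lambda\|_{L^2},
}
where $f_\Lambda=\sum_{\bm\nu\in\Lambda}c_{\bm\nu}\Psi_{\bm\nu}$. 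A triangle inequality then yields the $L^\infty$ bound
\bes{
\|f-\hat f\|_{L^\infty(\cU)}\;\le\;\sum_{\bm\nu\notin\Lambda}u_{\bm\nu}|c_{\bm\nu}|\;+\;\|\bm{\hat c}-\bm c_\Lambda\|_{1,\bm u},
}
since $\|g\|_{L^\infty}\le\sum_{\bm\nu}u_{\bm\nu}|g_{\bm\nu}|$ for any polynomial with coefficients $(g_{\bm\nu})$ in the orthonormal Legendre basis.

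The main obstacle will be the third step: showing that the weighted best lower-set $s$-term error and the tails $\sum_{\bm\nu\notin\Lambda}u_{\bm\nu}|c_{\bm\nu}|$ and $\|f-f_\Lambda\|_{L^2}$ all decay at essentially the rate $\exp(-\gamma s^{1/d})$. Taking $S=\Lambda^\star$ in the infimum, the first summand is bounded by $\sum_{\bm\nu\notin\Lambda^\star}u_{\bm\nu}|c_{\bm\nu}|$; combining $u_{\bm\nu}=\prod_j\sqrt{2\nu_j+1}$ with the holomorphy estimate \eqref{LegCoeffBd} gives
\bes{
u_{\bm\nu}|c_{\bm\nu}|\;\le\;\|f\|_{L^\infty(\cE_{\bm\rho})}\,\bm\rho^{-\bm\nu}\prod_{j=1}^d(1+2\nu_j),
}
so the problem reduces to controlling $\sum_{\bm\nu\notin\Lambda^\star}\bm\rho^{-\bm\nu}\prod_j(1+2\nu_j)$. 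An essentially identical construction to the one underlying Theorem \ref{thm:LegExpOSZ} (choosing $\Lambda^\star$ as the first $s$ indices of a rate-optimal lower enumeration for the weighted sequence) yields an exponential tail bound in $s^{1/d}$, at the cost of shrinking $\gamma$ by an arbitrarily small amount to absorb the polynomial prefactor $\prod_j(1+2\nu_j)$ and the $\sqrt{s}$ factor multiplying the $L^2$ tail. The same argument bounds $\sum_{\bm\nu\notin\Lambda}u_{\bm\nu}|c_{\bm\nu}|$ since every lower $s$-term set is contained in $\Lambda$. Collecting these estimates establishes the claimed exponential rate with a constant $C$ depending on $d$, $\bm\rho$, $\gamma$, and $f$.
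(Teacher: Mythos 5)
Your proposal follows essentially the same route as the paper: verify the weighted (lower) RIP/rNSP for $\bm{A}$ under the stated sample complexity, apply the square-root LASSO recovery bound to control $\|\bm{c}-\hat{\bm{c}}\|_{1,\bm{u}}$ by the best lower-set $s$-term weighted $\ell^1$ error plus the residual, and invoke Theorem \ref{thm:LegExpOSZ} together with the bound $\|g\|_{L^\infty}\le \|\bm{g}\|_{1,\bm{u}}$ to absorb all polynomial prefactors into $\exp(-\gamma s^{1/d})$ by shrinking $\gamma$. The only slip is that the data-fidelity term in the recovery bound is the empirical residual $\|\bm{A}\bm{c}_\Lambda-\bm{f}\|_2$, which the paper bounds deterministically by the $L^\infty$ (hence weighted-$\ell^1$) tail rather than the $L^2$ tail, and it carries a factor $s$ rather than $\sqrt{s}$ --- both harmless since the tail decays exponentially.
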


This theorem asserts that CS achieves the same rates as those guaranteed in Theorem \ref{thm:LegExpOSZ}, with a sample complexity that is (up to the logarithmic factor) quadratic in $s$.  In particular, scaling implied by \eqref{supperCSexp} depends only on logarithmically on the dimension $d$.  Hence, this estimate scales particularly well in higher dimensions.

It is important to note the key role the lower set property plays in this result.  First, the union of all lower sets $S$ of cardinality at most $s$ is $\cup \{ S : \mbox{$S$ lower, $ |S| \leq s$} \} = \Lambda^{\mathrm{HC}}_{s}$
, i.e.\ the hyperbolic cross set of degree $s$.  This is the rationale for choosing $\Lambda$ in this way.  Second, much like how sparse sets are promoted by the $\ell^1$-norm, sparse and lower sets are promoted by the weighted $\ell^1$-norm, with the weights taken to be $\bm{u}$ (these weights penalize high indices). Had one considered the unweighted $\ell^1$-norm in the above result, the sample complexity would have scaled with a higher algebraic power of $s$ \cite{Adcock2016}.

\subsection{Existence of good training for DNN approximation}

We now give our main result:

\begin{theorem}
\label{thm:CSasNN}
Let $0 < \varepsilon < 1$, $c> 0$ and $L_{m,d,\varepsilon}$ be as in Theorem \ref{thm:CSexpapp} and suppose that $m \geq c L_{m,d,\varepsilon}$ and $s$ satisfies \eqref{supperCSexp}.  Let $\bm{x}_1,\ldots,\bm{x}_m$ be drawn independently from the uniform measure on $\cU$.  Then the following holds with probability at least $1-\varepsilon$.  Let $f : \cU \rightarrow \bbR$, $\nm{f}_{L^{\infty}(\cU)} \leq 1$, satisfy Assumption \ref{ass:holo} for some $\bm{\rho} > \bm{1}$.  Then there exists a family of neural networks $\mathcal{N}$
with $n = |\Lambda^{\mathrm{HC}}_s|$ trainable parameters and of size and depth
\bes{
\begin{split}
\mathrm{depth}(\Phi_{\Lambda,\delta}) &\leq c' (1 + d \log(d))(1+\log(s)) (s + \log(n) +\gamma s^{1/d})
\\
\mathrm{size}(\Phi_{\Lambda,\delta}) & \leq c' \left ( d^2 s^2 + (d s + d^2 n )\left ( 1 +  \log(s)+ \log(n) +\gamma s^{1/d} \right ) \right )
\end{split},\qquad \Phi \in \mathcal{N},
}
where $c'>0$ is a universal constant, and a regularization functional $\cJ : \mathcal{N} \rightarrow \bbR_{+}$ such that any minimizer $\hat{\Phi}$ of the regularized loss function
$
\cL(\Phi) : = \sqrt{\frac1m\sum^{m}_{i=1} | \Phi(\bm{y}_i) - f(\bm{y}_i) |^2} + \cJ(\Phi),
$
satisfies
\bes{
\nmu{f - \hat{\Phi}}_{L^{\infty}(\cU)} \leq C \cdot \exp(-\gamma s^{1/d}) ,
}
for all $\gamma$ satisfying \eqref{polyrategamma}, where $C >0$ depends on $d$, $\bm{\rho}$, $\gamma$ and $f$ only. The functional $\mathcal{J}$ is a weighted $\ell^1$-norm penalty of the weights in the output layer.
\end{theorem}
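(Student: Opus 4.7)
The strategy is to construct a DNN architecture whose output layer linearly combines neural emulations of the hyperbolic cross Legendre basis $\{\Psi_{\bm{\nu}}\}_{\bm{\nu} \in \Lambda^{\mathrm{HC}}_s}$, and to choose the regularization $\cJ$ so that minimizing $\cL(\Phi)$ reduces to a perturbation of the weighted square-root LASSO problem \eqref{eq:CS_sr_min}. The conclusion then follows by invoking Theorem \ref{thm:CSexpapp} together with a stability estimate for the perturbation induced by the polynomial emulation error.

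First I would invoke an existing polynomial-emulation result for ReLU networks (from \cite{Opschoor2019Legendre, Schwab2017}) to construct, for each $\bm{\nu} \in \Lambda^{\mathrm{HC}}_s$, a ReLU subnetwork $\widetilde{\Psi}_{\bm{\nu},\delta}$ that approximates $\Psi_{\bm{\nu}}$ to uniform tolerance $\delta$ on $\cU$. These $n = |\Lambda^{\mathrm{HC}}_s|$ subnetworks are parallelized and their outputs fed into a single linear output layer with trainable weights $\bm{z} = (z_{\bm{\nu}})_{\bm{\nu} \in \Lambda}$, producing $\Phi_{\bm{z}}(\bm{x}) = \sum_{\bm{\nu}\in\Lambda} z_{\bm{\nu}}\widetilde{\Psi}_{\bm{\nu},\delta}(\bm{x})$. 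The upstream weights and biases are frozen at their emulation values, so the only trainable parameters are the $n$ output weights. Choosing the emulation tolerance $\delta$ roughly equal to the target rate $\exp(-\gamma s^{1/d})$, the per-index depth and size bounds from \cite{Opschoor2019Legendre} combine with $|\Lambda^{\mathrm{HC}}_s| = n$ and standard lower-set bounds on $|\bm{\nu}|_1$ for $\bm{\nu}\in\Lambda^{\mathrm{HC}}_s$ to yield the overall depth and size bounds stated.

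Next I would set $\cJ(\Phi_{\bm{z}}) = (\mu/\sqrt{m})\|\bm{z}\|_{1,\bm{u}}$ with $\mu$ and the weights $\bm{u}$ exactly as in Theorem \ref{thm:CSexpapp}. Letting $\widetilde{\bm{A}} = (\widetilde{\Psi}_{\bm{\nu},\delta}(\bm{x}_i)/\sqrt{m})_{i,\bm{\nu}}$ and $\bm{f}$ as in \eqref{eq:A_f_def}, the loss becomes $\cL(\Phi_{\bm{z}}) = \|\widetilde{\bm{A}}\bm{z} - \bm{f}\|_2 + (\mu/\sqrt{m})\|\bm{z}\|_{1,\bm{u}}$, which is exactly the weighted square-root LASSO \eqref{eq:CS_sr_min} but with the true measurement matrix $\bm{A}$ replaced by its emulated counterpart $\widetilde{\bm{A}}$. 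The entrywise bound $|\widetilde{\Psi}_{\bm{\nu},\delta}(\bm{x}) - \Psi_{\bm{\nu}}(\bm{x})| \leq \delta$ translates into a bound $\|(\widetilde{\bm{A}} - \bm{A})\bm{z}\|_2 \leq \delta\|\bm{z}\|_{1}\leq \delta \|\bm{z}\|_{1,\bm{u}}$, which can be absorbed as a perturbation in the robust recovery analysis underpinning Theorem \ref{thm:CSexpapp}. Applying this to a minimizer $\hat{\bm{z}}$ of the perturbed problem, with $\bm{z}^\star$ the vector of true Legendre coefficients restricted to $\Lambda^{\mathrm{HC}}_s$, gives a bound of the form $\|f - \hat{\Phi}\|_{L^\infty} \leq \|f - \sum_{\bm{\nu}\in\Lambda} \hat{z}_{\bm{\nu}} \Psi_{\bm{\nu}}\|_{L^\infty} + \delta \|\hat{\bm{z}}\|_{1,\bm{u}}$, the first term handled by Theorem \ref{thm:CSexpapp} and the second by our choice of $\delta$.

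The main obstacle will be verifying that the weighted restricted isometry-type estimates and the robust-null-space property used in the proof of Theorem \ref{thm:CSexpapp} survive when $\bm{A}$ is replaced by $\widetilde{\bm{A}}$; essentially I need a quantitative stability statement that the weighted square-root LASSO error bound degrades only by an additive $O(\delta \|\bm{z}^\star\|_{1,\bm{u}})$ term when the measurement matrix is perturbed by at most $\delta$ in an appropriate weighted operator norm. Once this perturbation analysis is in place, the remainder of the argument is essentially bookkeeping: expressing the per-$\bm{\nu}$ emulation depth/size bounds as a function of $|\bm{\nu}|_1$ and $\log(1/\delta)$, summing over $\bm{\nu}\in\Lambda^{\mathrm{HC}}_s$, and using standard estimates on $\sum_{\bm{\nu}\in\Lambda^{\mathrm{HC}}_s} |\bm{\nu}|_1$ to match the stated bounds.
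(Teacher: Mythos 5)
Your proposal follows essentially the same route as the paper's proof: emulate the hyperbolic-cross Legendre basis with ReLU subnetworks via \cite{Opschoor2019Legendre}, train only the linear output layer, take $\cJ$ to be a weighted $\ell^1$ penalty so that minimizing $\cL$ becomes the weighted square-root LASSO with a perturbed measurement matrix, and control the perturbation through a robustness statement for the lower rNSP (the paper's Lemma \ref{lem:rNSPapproxmat}). The only corrections needed are bookkeeping: since the empirical-risk term already equals $\|\bm{A}'\bm{z}-\bm{f}\|_2$ with the $1/\sqrt{m}$ scaling built into $\bm{A}'$ and $\bm{f}$, the penalty should be $\mu^{-1}\|\bm{z}\|_{1,\bm{u}}$ rather than $(\mu/\sqrt{m})\|\bm{z}\|_{1,\bm{u}}$, and $\delta$ must be taken smaller than $\exp(-\gamma s^{1/d})$ by factors of $\sqrt{n}$ and $s$ (using $\|\hat{\bm{z}}\|_{1,\bm{u}}\leq\mu$, which follows from comparing with $\bm{z}=\bm{0}$ and the assumption $\|f\|_{L^{\infty}(\cU)}\leq 1$).
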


Note that in this result the \textit{size} of an architecture refers to the number of nonzero weights and biases.  The \textit{number of trainable parameters} in an architecture refers to the number of weights and biases that are trainable, as opposed to those that are fixed. 
The proof of this theorem uses a result of \cite{Opschoor2019Legendre} (see Proposition \ref{prop:polyNN}), which 
states that a finite set of Legendre polynomials can be uniformly approximated by a neural network of a given depth and size.  Theorem \ref{thm:CSasNN} is obtained by rewriting the CS recovery using the weighted square-root LASSO as a neural network training problem.  See \S \ref{sec:proofs} for the details.

Theorem \ref{thm:CSasNN} asserts the existence of a DNN architecture, with relatively few training parameters, and a training procedure for which the resulting DNN approximations are guaranteed to perform as well as CS, up to a constant, in terms of sample complexity and convergence rates. Of course, the specific procedure suggested by the proof would not be expected to lead to any superior performance over CS. We make no claim that this approach is either practical or numerically stable (indeed, its proof relies on monomials).
This is analogous to how standard existence theorems in DNN approximation theory (see \S \ref{sec:introduction}), while constructive, do not lead to superior approximations over the classical approximations on which they are based.  However, it does indicate that with sufficiently careful architecture design and training, one may achieve superior performance with DNNs over CS.  
The extent to which this can be done is a largely open problem, requiring further theoretical and empirical investigation.

% !TEX root = ./MLFA.tex

%----------------------------------------------------
\section{Conclusions}
\label{sec:conclusion}
%----------------------------------------------------

In this work we have presented results highlighting the key issues of accuracy, stability, sample complexity and computational efficiency of practical function approximation with DNNs. Our theoretical contribution on the existence of a DL procedure which performs as well as CS suggests that DL can, in theory, enjoy the same accuracy and sample complexity properties as CS. However, our numerical results comparing standard DNN architectures and common methods of training with CS techniques suggest that current methods in DL are generally unable to achieve these theoretical convergence rates on smooth function approximation problems. On the other hand, while DNNs perform relatively badly on highly smooth functions, their performance on more challenging problems, such as less smooth functions or functions with jump discontinuities is rather more promising. Certainly the fact that the same DNN architectures can be used on quite different problems sets them apart from traditional methods in scientific computing, e.g.\ polynomial-based methods, which are usually tied to a specific class of function (e.g.\ smooth functions). Hence there is ample scope and need for future work along the lines of investigation initiated in this paper. This includes both empirical investigations into architecture and cost function design, as well as algorithms for training, and further novel theoretical insights into \textit{practical existence theory}; that is, the existence of not only effective DNN architectures, but also training procedure and sampling strategies which realize them efficiently. The hope is that, with these further efforts, DNNs may develop into effective tools for scientific computing that can consistently outperform current
best-in-class approaches across a range of challenging problems.

%----------------------------------------------------
\section{Acknowledgements}
\label{sec:acknowledgements}
%----------------------------------------------------

The authors thank Clayton Webster for important discussions motivating our study of practical DNN approximation. We thank Simone Brugiapaglia for useful discussions related to various aspects of this work and Sebastian Moraga for his assistance with several technical steps in the proofs. We also acknowledge and thank the anonymous referees for proofreading and providing valuable comments on the manuscript. Finally, Boris Hanin and Philipp Petersen are gratefully acknowledged for their helpful and inspiring comments on the approximation theory of DNNs and practical issues related to initialization and trainability.

\appendix

% !TEX root = ./MLFA_supplement.tex

%----------------------------------------------------
\section{Testing setup -- further information} 
\label{sec:testingsetupsupp}
%----------------------------------------------------

In this section, we give further details of the testing and training setup for the DNNs and numerical experiments with CS. Section \ref{sec:singledoubleSM} describes the hardware used in training and testing our DNN models in single and double precision. 
Section \ref{sec:AdamSM} describes further aspects of training our DNNs with the {\tt Adam} solver. Section \ref{sec:AdamLearnRateSM} presents results on training DNNs with {\tt Adam} under a variety of learning rate schedules.
Section \ref{sec:batchesSM} discusses the importance of batch size on the convergence of the {\tt Adam} optimizer.
Section \ref{sec:CSSM} describes the selection of truncation parameters in our CS experiments.

\subsection{Single versus double precision}\label{sec:singledoubleSM}

Double precision calculations using GPUs a generally more computationally demanding and hence we conducted a limited number of such studies. For example, the NVIDIA Tesla P100 GPUs operate at 4.7 TFLOPS ({\bf t}rillion {\bf f}loating {\bf p}oint {\bf op}erations per {\bf s}econd) in double precision vs. 9.3 TFLOPS in single precision, implying a 1:2 ratio for single vs. double precision computation time.
On the other hand, common off-the-shelf consumer GPUs such as the NVIDIA GeForce GTX 1080 Ti operate at 0.355 TFLOPS in double precision vs. 11.5 TFLOPS in single precision, implying a 1:32 ratio for single vs. double precision computation time. 

\subsection{The \texttt{Adam} optimizer}\label{sec:AdamSM}
We use the Adam optimizer \cite{kingma2014adam} as follows. We use the default values $\beta_1 = 0.9$, $\beta_2 = 0.999$ and $\delta=10^{-7}$, and set the initial learning rate $\tau_{\mathrm{init}} = 10^{-3}$. Then at epoch $k\in \N$ we set the learning rate
\begin{align}
\label{eq:lrn_rate_update}
\tau_{k} = \tau_{\mathrm{init}} \times b^{k/K_{\textrm{uf}}},
\end{align}
where $K_{\textnormal{uf}}=10^3$ is the update frequency and $b := \left(\tau_{\mathrm{final}}/\tau_{\mathrm{init}}\right)^{K_{\mathrm{uf}}/K_{\mathrm{final}}}$ is the base, so that at the final epoch $k = K_{\mathrm{final}}$, $\tau_k = \tau_{\mathrm{final}}$ the desired final learning rate.
We allow the learning rate $\tau_k$ to vary continuously between $\tau_{\mathrm{init}}$ and $\tau_{\mathrm{final}}$, using the constants above only as scaling factors.
In single precision, the DNNs are trained for $K_{\mathrm{final}} = $ 50,000 epochs or to a tolerance of $\varepsilon_{\mathrm{tol}} = 5\times 10^{-7}$, while in double precision DNNs are trained for $K_{\mathrm{final}} = $ 200,000 epochs or to a tolerance of $\varepsilon_{\mathrm{tol}} = 5\times 10^{-16}$.
We set the final stepsize $\tau_{\mathrm{final}} = \varepsilon_{\mathrm{tol}}$, so that these choices ensure that the base $b$ is approximately 0.85, implying the learning rate is reduced by 15\% every $K_{\mathrm{uf}}$ iterations.

\subsection{Learning rate schedules for \texttt{Adam}}\label{sec:AdamLearnRateSM}

In Fig.\ \ref{fig:lrn_rate_schedule_comp} we compare learning rate schedules for the {\tt Adam} optimizer on a discontinuous 2-dimensional function. The schedules compared are the exponentially decaying learning rate from \eqref{eq:lrn_rate_update}, a constant learning rate of $10^{-3}$, and a linearly decaying learning rate, i.e., at each iteration setting $\tau_k = (1 - k/K_{\mathrm{final}})10^{-3} + k\tau_{\mathrm{final}}/K_{\mathrm{final}} $, over a range of DNN architectures.  We observe that shallower networks tend to perform marginally better under the exponentially-decaying learning rate schedule, while the effect is insignificant for deeper networks.
In testing a range of values of the base parameter $b$, we found values smaller than $b=0.85$ result in a learning rate that can decrease too quickly to achieve the desired tolerance with the error stagnating, while larger values of $b$ can result in a learning rate not decreasing fast enough to quickly train larger models.

\begin{figure}[ht]
\begin{center}
\includegraphics[width=0.17\paperwidth,clip=true,trim=0mm 0mm 0mm 0mm]{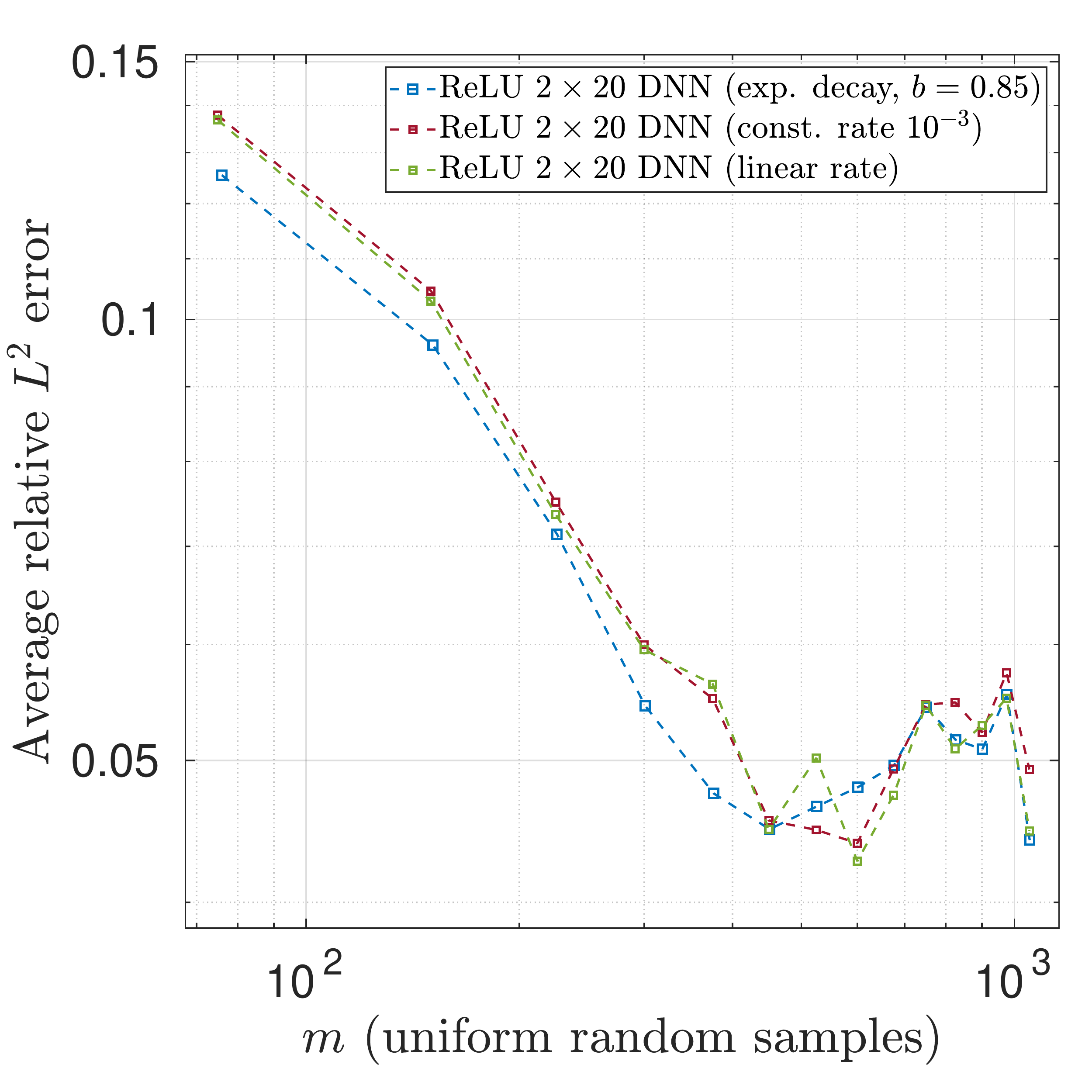}
\includegraphics[width=0.17\paperwidth,clip=true,trim=0mm 0mm 0mm 0mm]{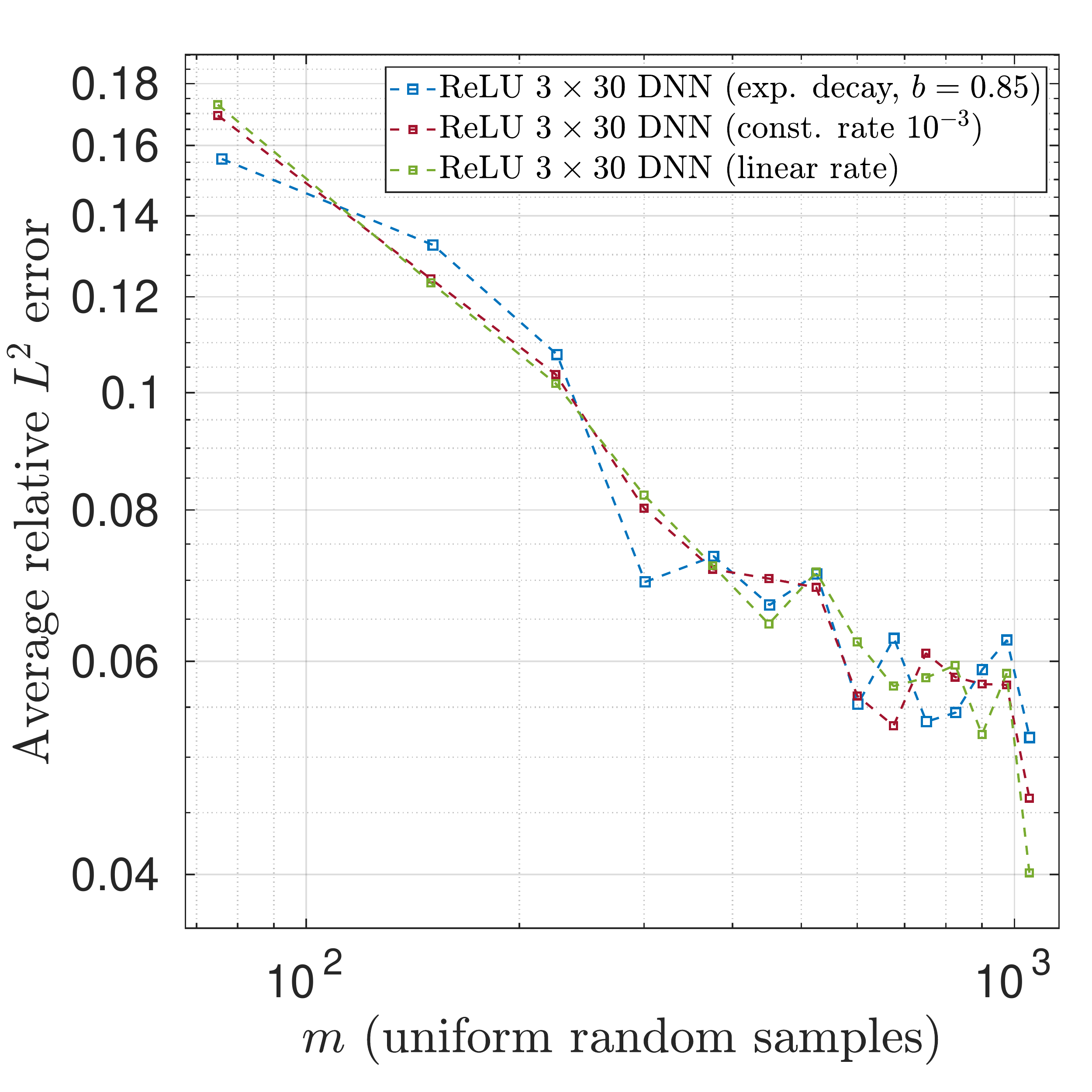}
\includegraphics[width=0.17\paperwidth,clip=true,trim=0mm 0mm 0mm 0mm]{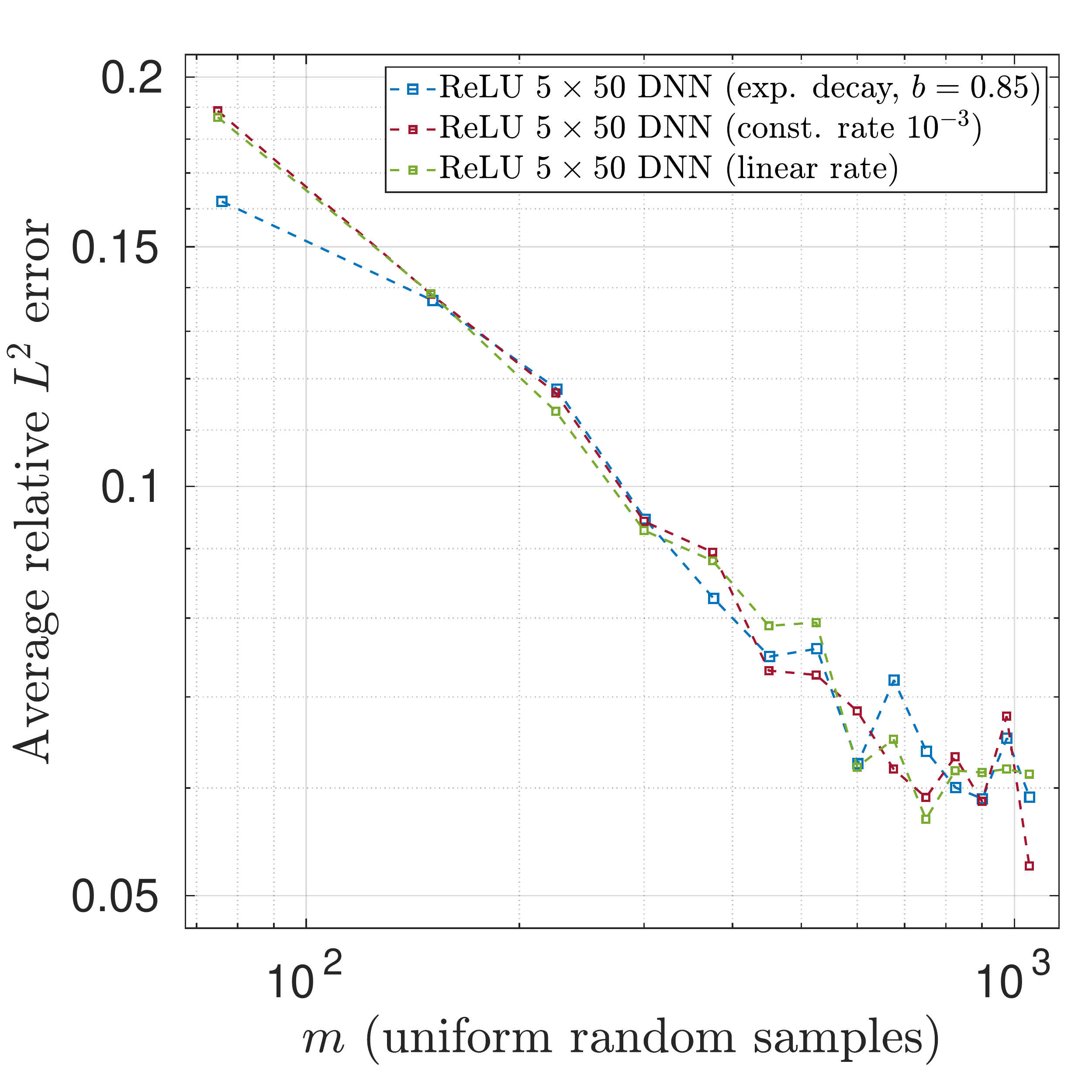}
\includegraphics[width=0.17\paperwidth,clip=true,trim=0mm 0mm 0mm 0mm]{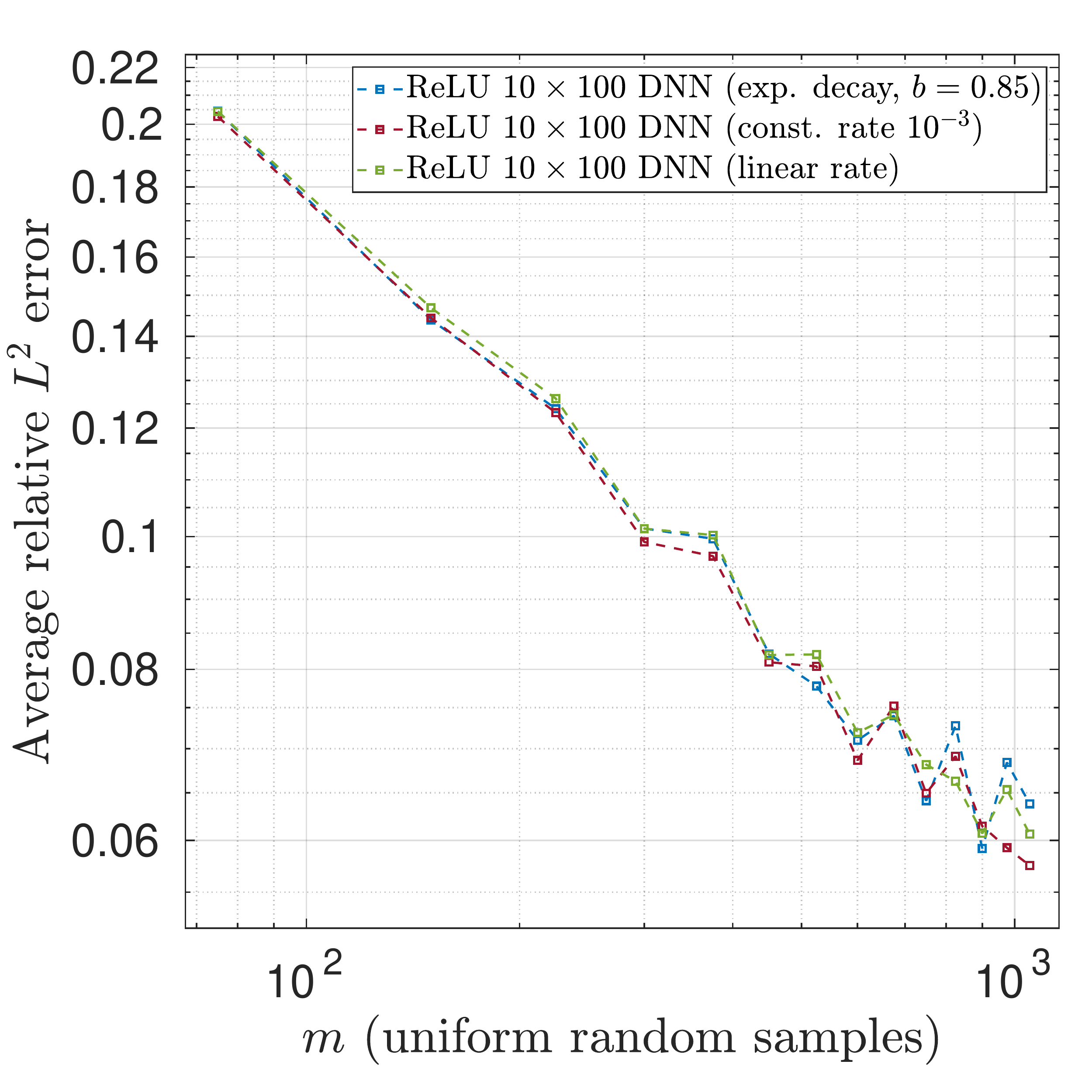}
\end{center}

\vspace{-2mm}
\caption{Comparison of learning rate schedules for the {\tt Adam} optimizer for a variety of ReLU DNN architectures on function \eqref{eq:halfspace_func} with $d = 2$.}
\label{fig:lrn_rate_schedule_comp}
\end{figure}

\subsection{Batch size comparison}\label{sec:batchesSM}

In many standard computer vision tasks, networks are trained on batches of images consisting of a small subset of the overall training set.
Due to the lower dimension of the problems and smaller data set sizes considered herein, we find empirically that setting the batch size too small can result in the transfer time between the CPU and GPU contributing to a large portion of overall run time.
Fig.\ \ref{fig:batch_size_comp} displays the effect of training four different architectures with batch sizes ranging from full-batch to quarter-batch on the generalization performance of the DNNs on a smooth 8-dimensional function. There we observe that decreasing the batch size leads to a marginal improvement in the overall error. Nonetheless for such moderate dimensional problems we find the increase in training time associated with smaller batches is not proportional to the improvement in the overall error.
We leave a more detailed study of the affects of batch size on generalization performance of trained ReLU DNNs in higher-dimensional problems to a future work.

\begin{figure}[ht]
\begin{center}
\includegraphics[width=0.17\paperwidth,clip=true,trim=0mm 0mm 0mm 0mm]{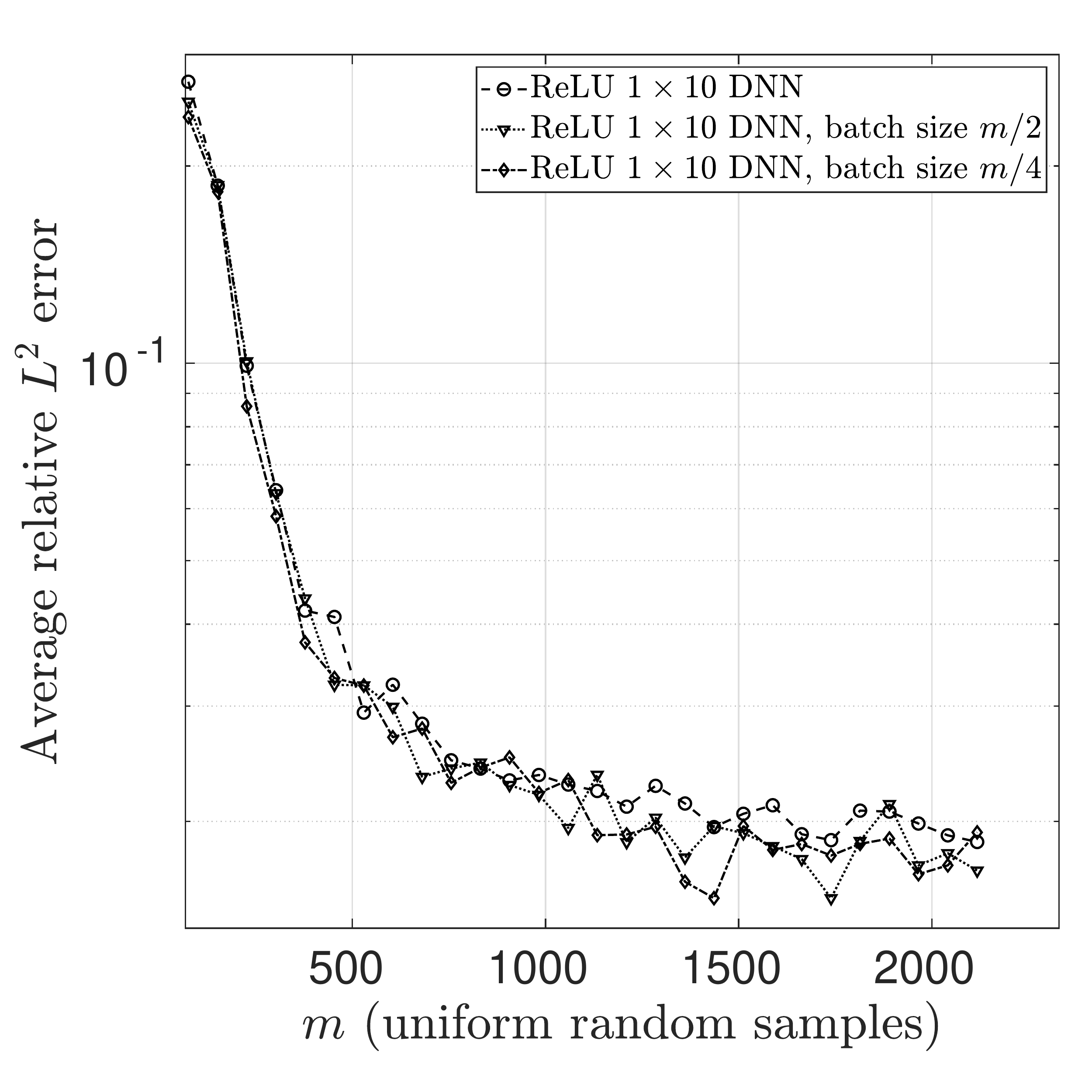}
\includegraphics[width=0.17\paperwidth,clip=true,trim=0mm 0mm 0mm 0mm]{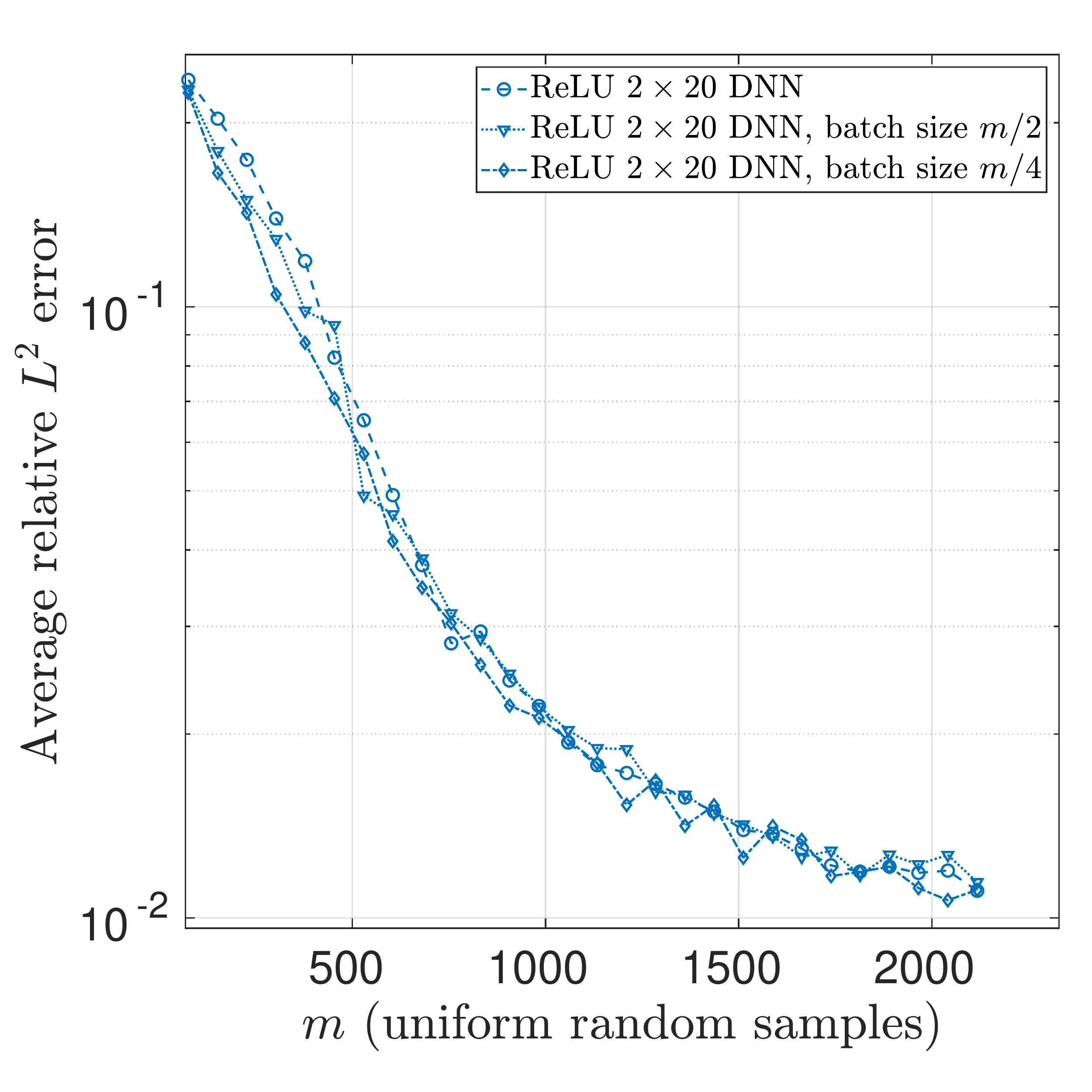}
\includegraphics[width=0.17\paperwidth,clip=true,trim=0mm 0mm 0mm 0mm]{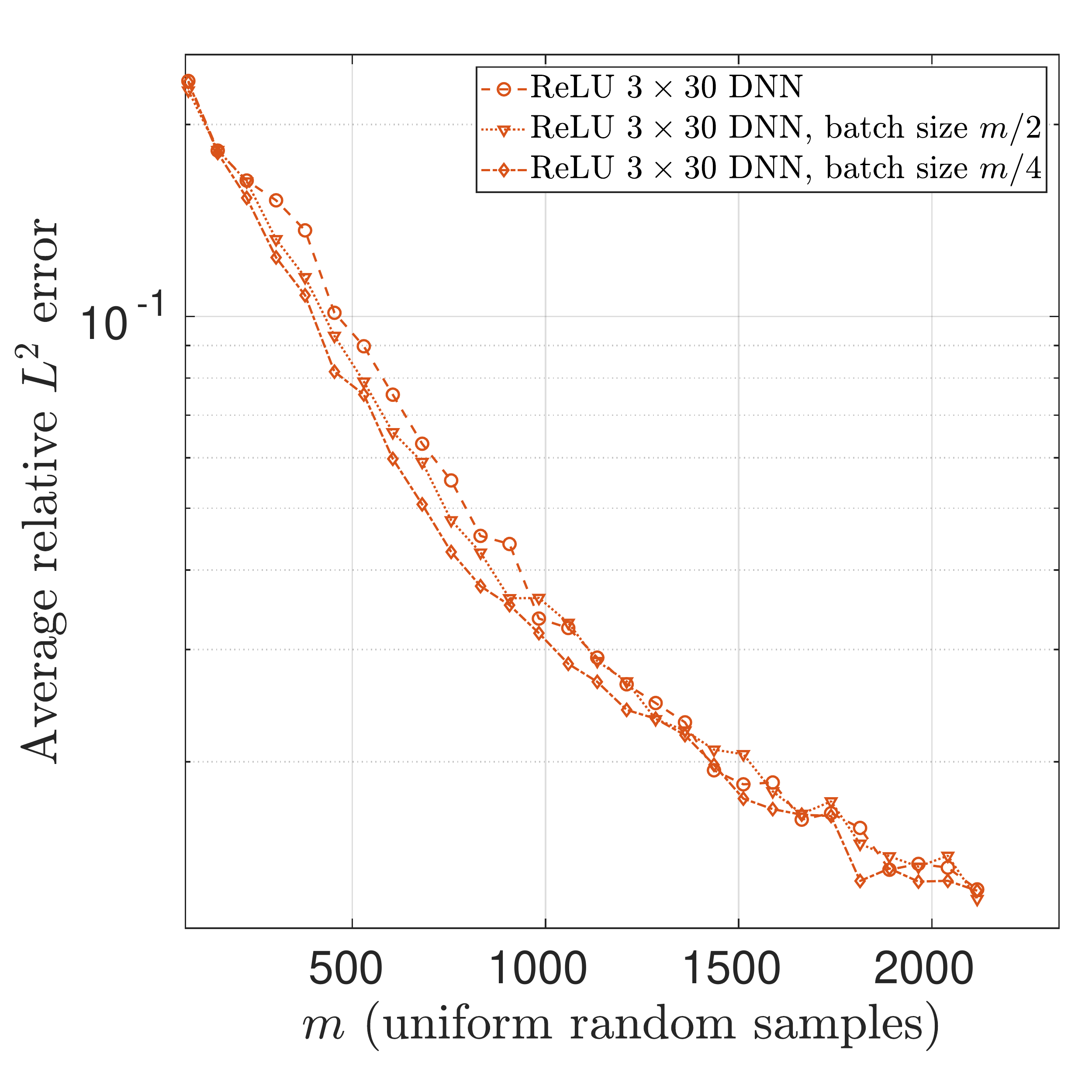}
\includegraphics[width=0.17\paperwidth,clip=true,trim=0mm 0mm 0mm 0mm]{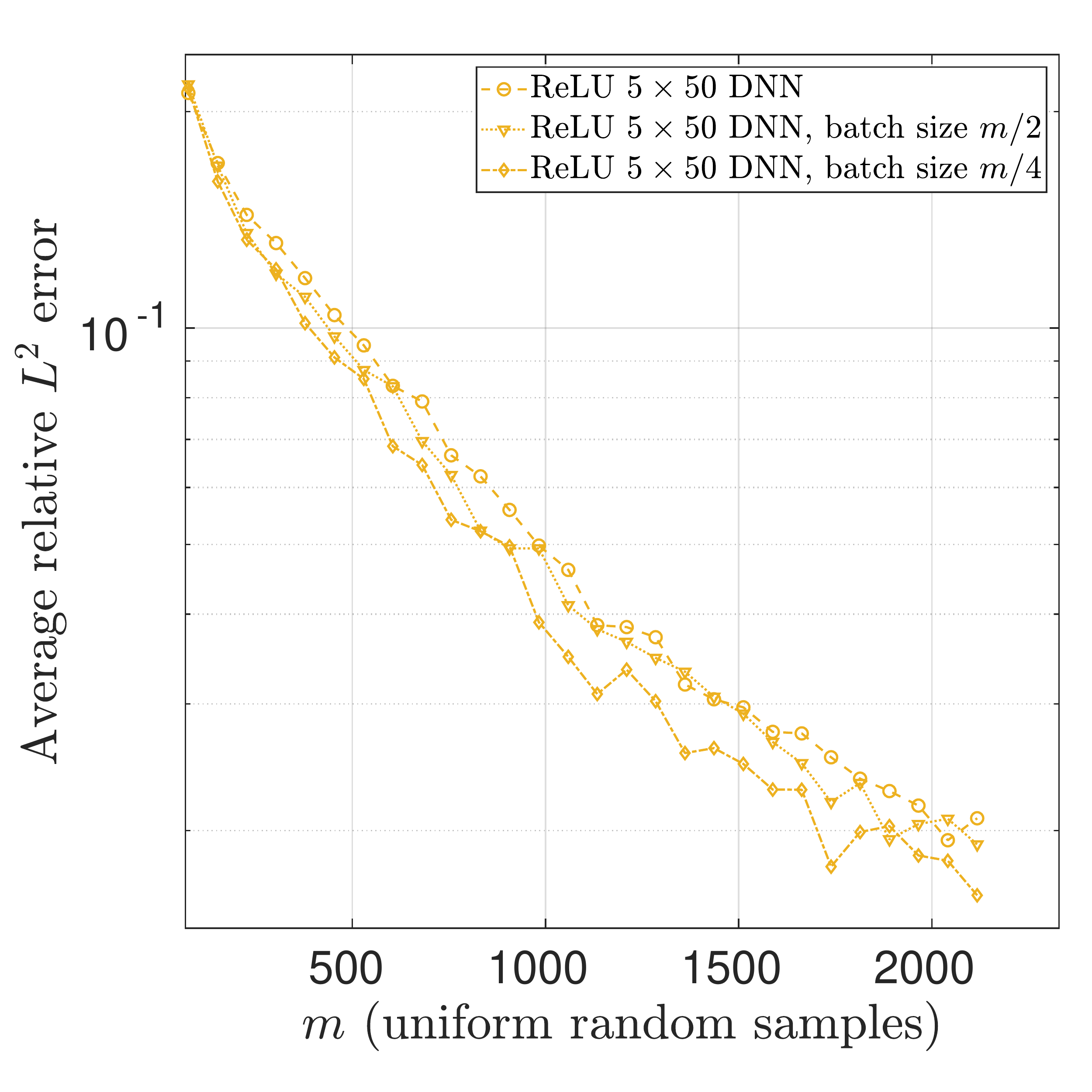}
\end{center}

\vspace{-2mm}
\caption{Comparison of batch sizes for the {\tt Adam} optimizer on a smooth function approximation problem in $d=8$ dimensions. For each of the plots, we fix an architecture and then train to 50,000 epochs in single precision or a tolerance of $5\times 10^{-7}$ when the batch size is set to {\bf(circles)} full batch {\bf(triangles)} half batch {\bf(diamonds)} quarter batch on function \eqref{eq:slower_decay_rational_func} with $d = 8$.}
\label{fig:batch_size_comp}
\end{figure}

%----------------------------------------------------
\subsection{Stability of training with respect to noise} 
\label{subsec:noise_experiments}
%----------------------------------------------------

For many problems in computational science we are provided measurements that are corrupted by some amount of noise. Methods such as compressed sensing come with theoretical recovery guarantees that assert stable and robust recovery of sparse signal vectors from such measurements. 
Therefore, an important question for the application of DL techniques to problems in science, engineering, and medicine is how robust these methods are under realistic noise scenarios. 

In Fig.\ \ref{fig:noise_experiment}, we include a selection of our experiments where the measurements given to both CS and the best-performing DNNs are corrupted by additive Gaussian noise with various noise levels, i.e., given $f(x_i) + n_i$ where $n_i \sim N(0, \sigma^2)$ for various values $\sigma \in \{10^{-1}, 10^{-2}, 10^{-3}\}$. There we observe that both solving the CS weighted and unweighted $\ell_1$-minimization problems and training the DNNs are stable to noise, i.e., the $L^2$ error is proportional to the noiseless approximation error plus the noise standard deviation. 

\begin{figure}[ht]
\begin{center}
\includegraphics[width=0.22\paperwidth,clip=true,trim=0mm 0mm 0mm 0mm]{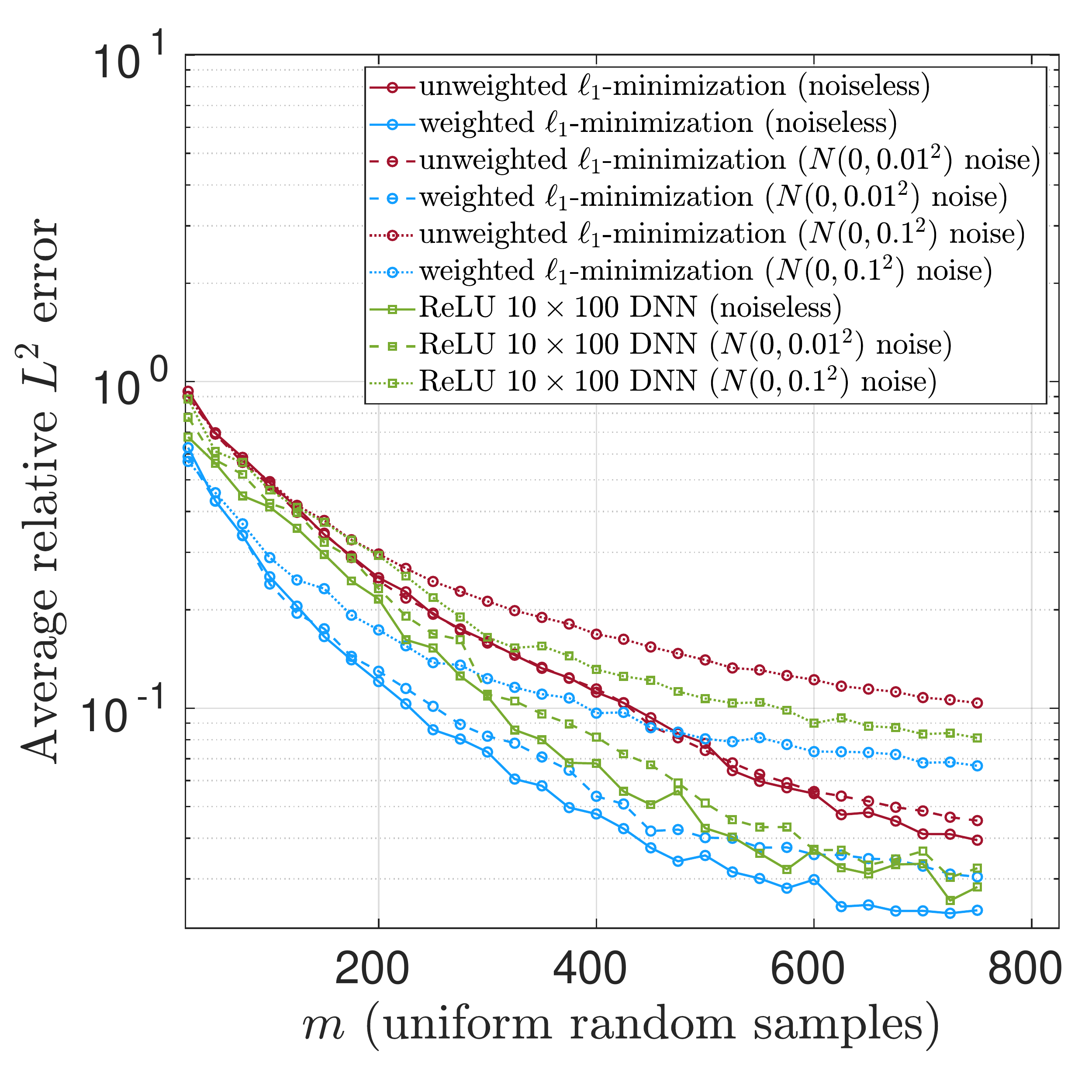}
\includegraphics[width=0.22\paperwidth,clip=true,trim=0mm 0mm 0mm 0mm]{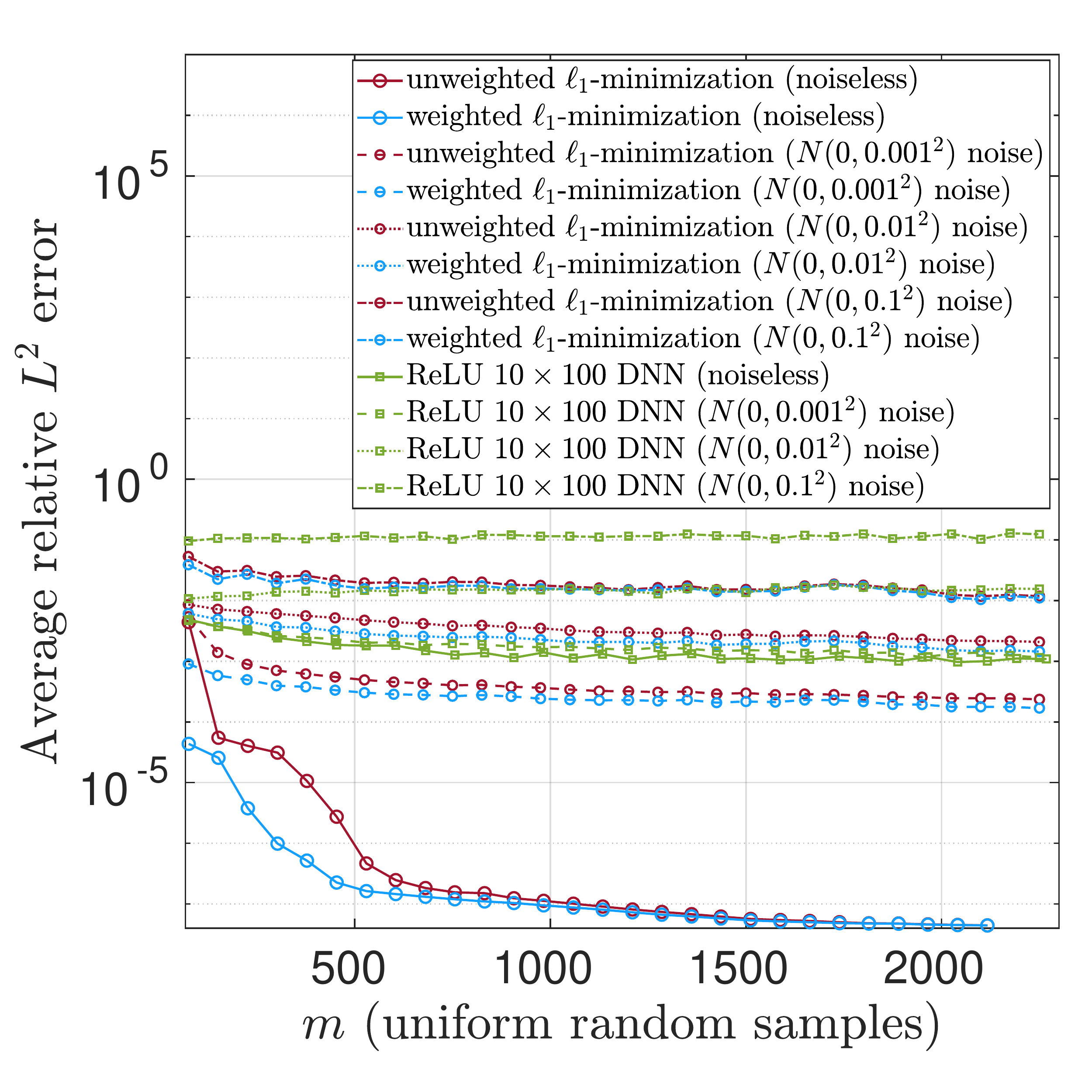}
\includegraphics[width=0.22\paperwidth,clip=true,trim=0mm 0mm 0mm 0mm]{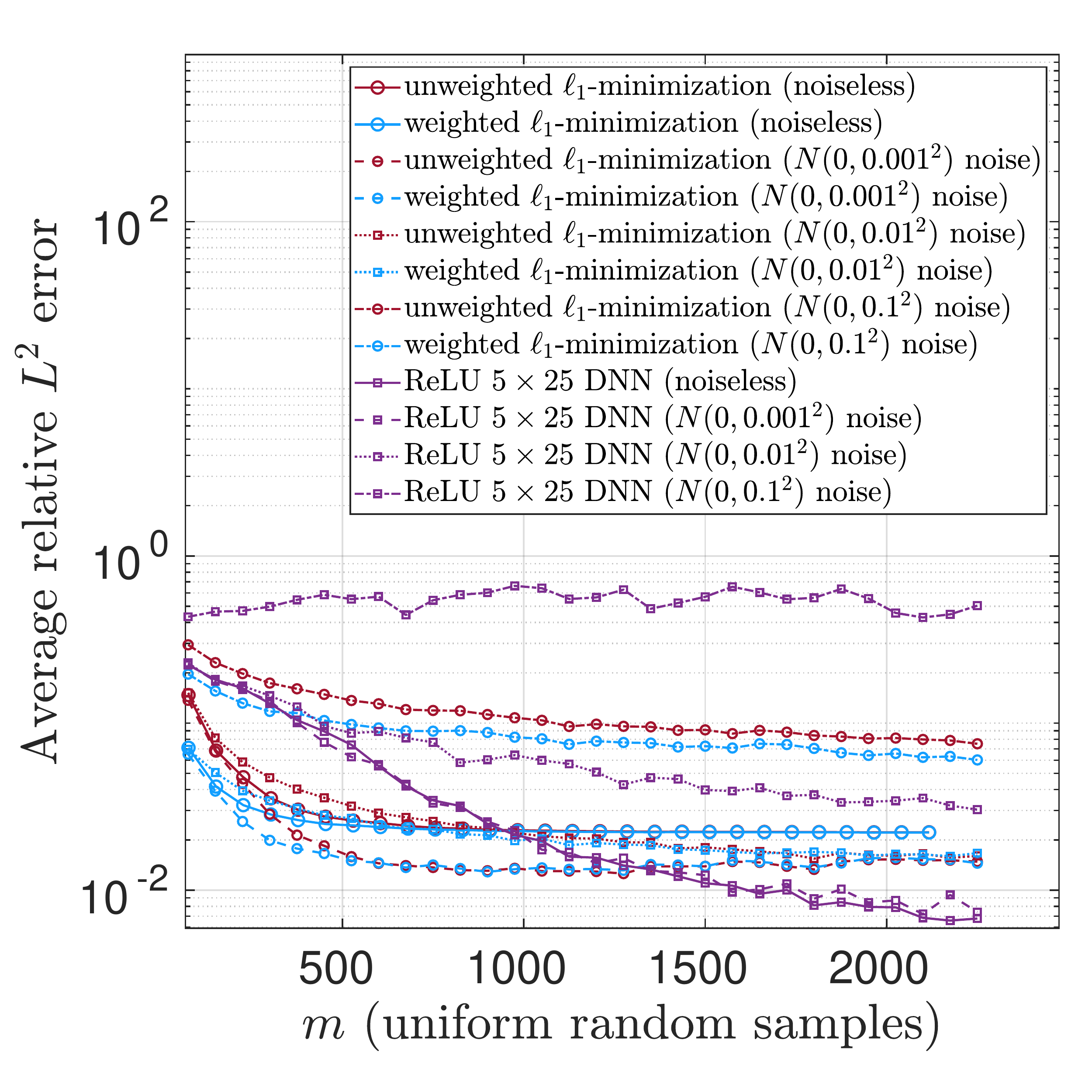}
\end{center}
\vspace{-2mm}
\caption{Average relative $L^2$ errors of ReLU $10\times 100$ DNNs in approximating {\bf(left)} \eqref{1dsmoothfnK} with $K=10$, and {\bf(middle)} \eqref{eq:exp_cos_func} and {\bf(right)} \eqref{eq:slower_decay_rational_func} in $d=8$ dimensions with and without $N(0,\sigma^2)$ noise for various $\sigma$.}
\label{fig:noise_experiment}
\end{figure}

%----------------------------------------------------
\section{Additional numerical results} 
\label{sec:add_numerical}
%----------------------------------------------------

In this section, we include additional observations on some of our experiments. Figure \ref{fig:relu_exp_cos_func_time_comp} displays the average run times for training ReLU DNNs of various sizes on function \eqref{eq:exp_cos_func} with $d=8$. Comparing errors from Fig.\ \ref{fig:relu_exp_cos_func_beta_comp} with these timing results, we note that the best performing architectures in accuracy often require less time to train than some of their shallower or narrower counterparts. We also note that some of the larger architectures for which we observe divergence also required longer training times, suggesting a careful choice of architecture is key for efficient representation. Figure \ref{fig:relu_slower_decay_rational_func_time_comp} plots the average run times for training DNNs on data from function \eqref{eq:slower_decay_rational_func}. Comparing these results with the timings on the previous function, we observe increased training times for this less smooth function.

\begin{figure}[ht]
\begin{center}
\includegraphics[width=0.23\paperwidth,clip=true,trim=0mm 0mm 0mm 0mm]{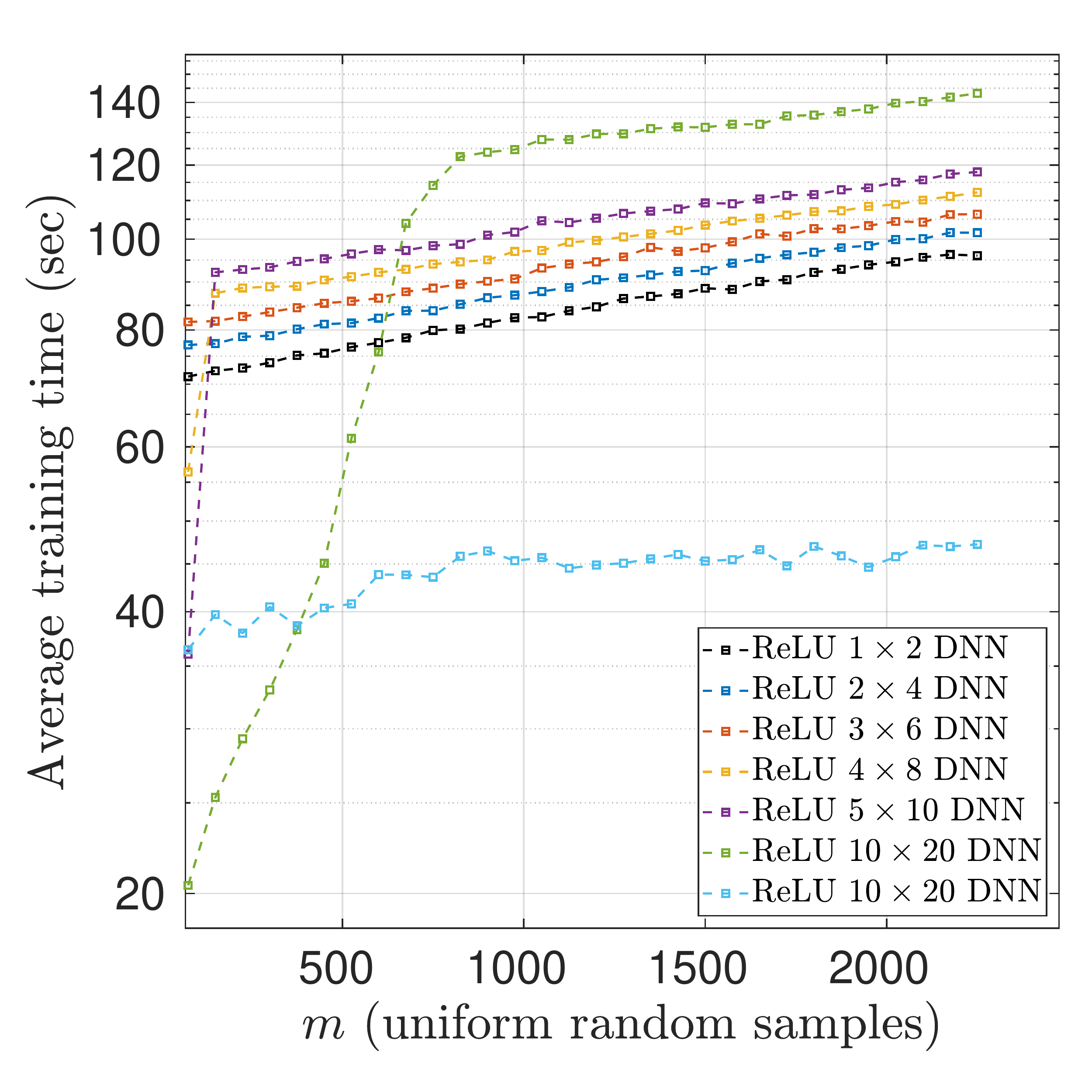}
\includegraphics[width=0.23\paperwidth,clip=true,trim=0mm 0mm 0mm 0mm]{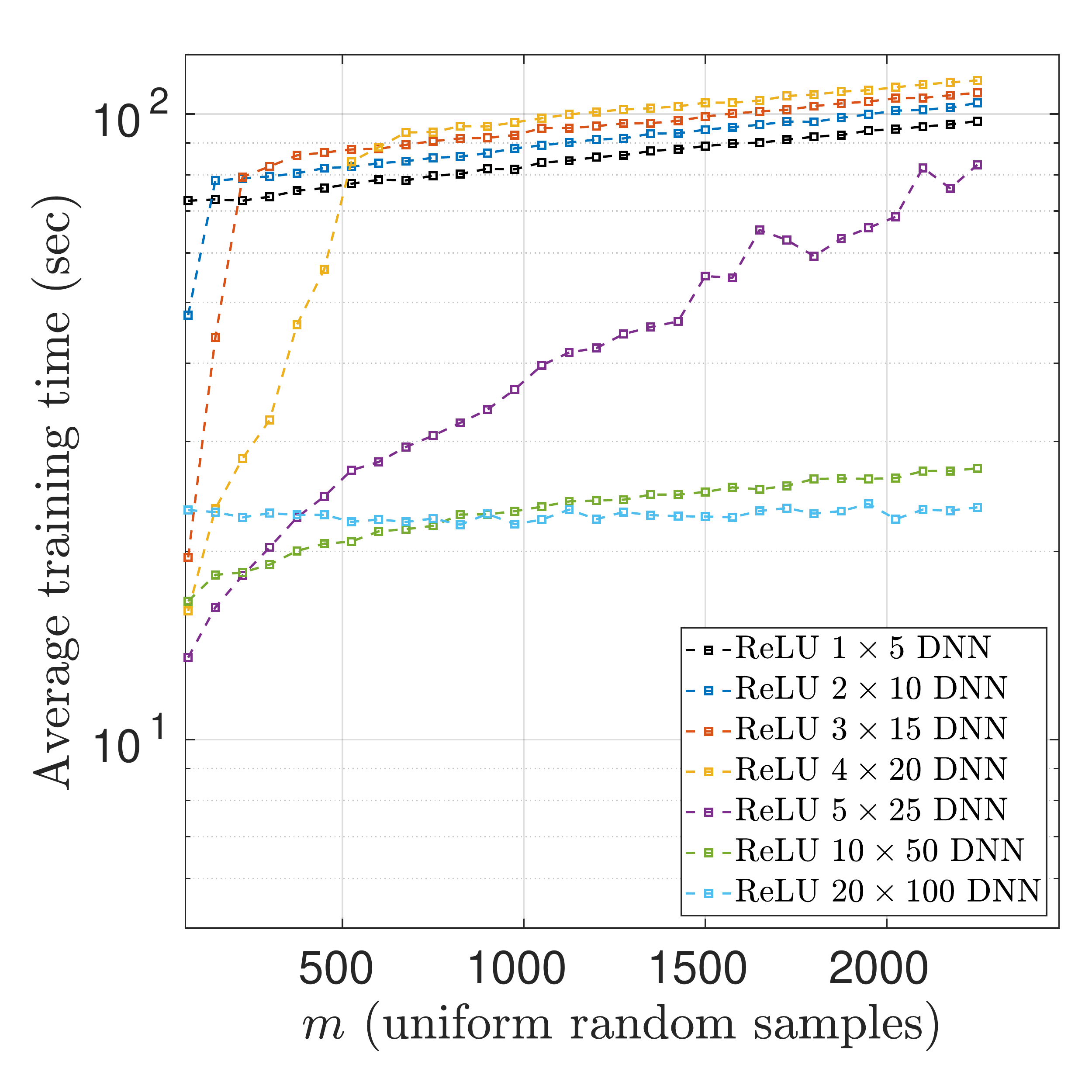}
\includegraphics[width=0.23\paperwidth,clip=true,trim=0mm 0mm 0mm 0mm]{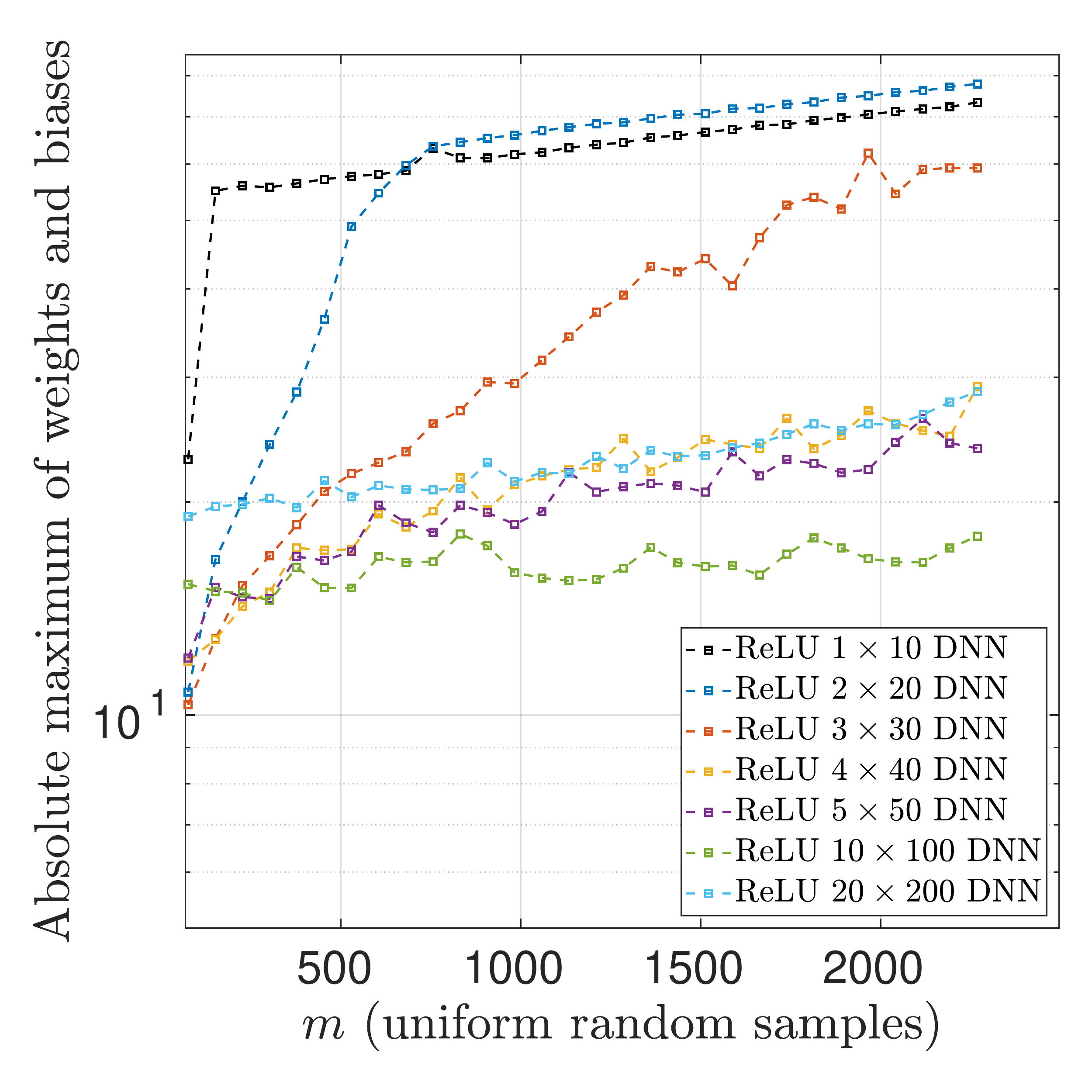}
\includegraphics[width=0.23\paperwidth,clip=true,trim=0mm 0mm 0mm 0mm]{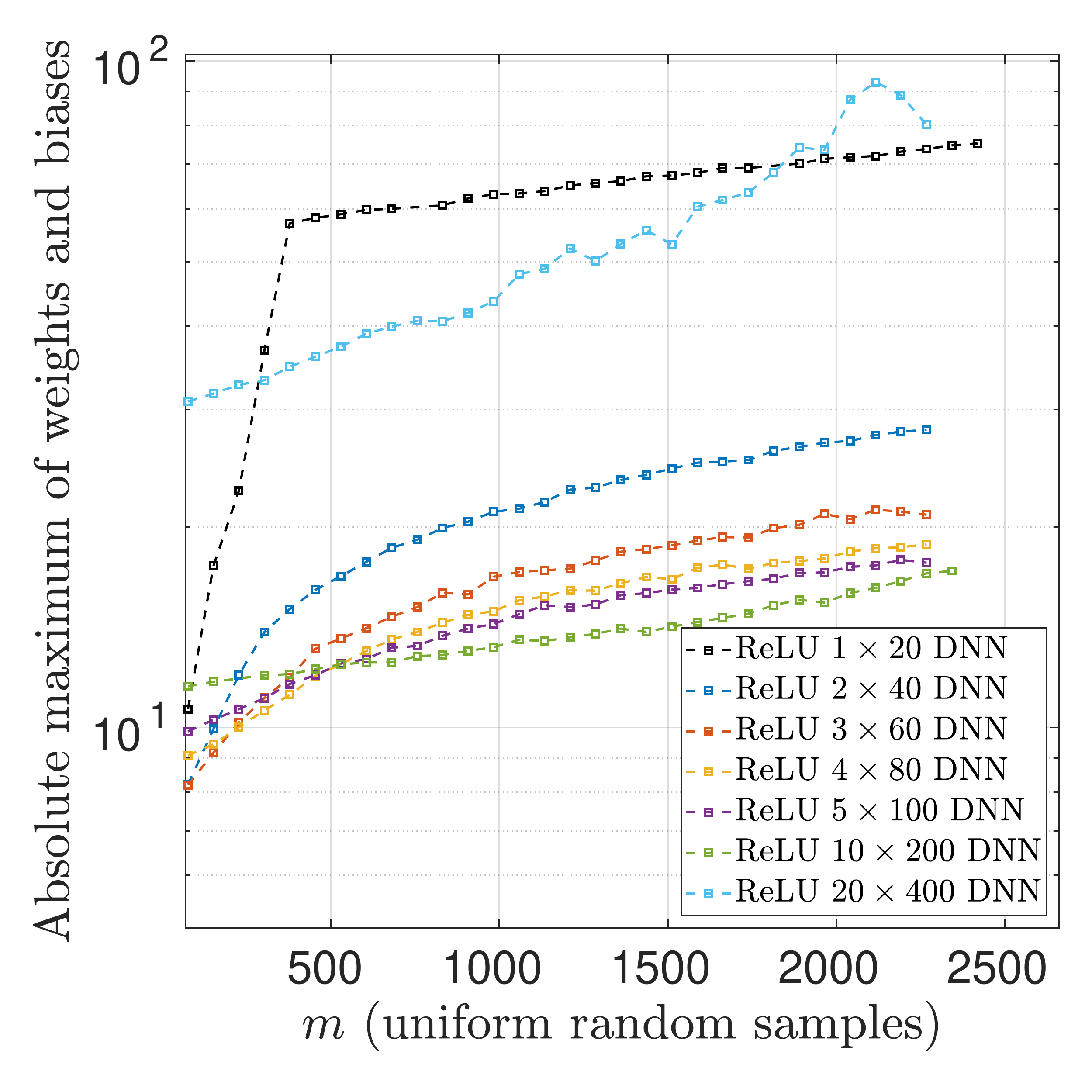}
\includegraphics[width=0.23\paperwidth,clip=true,trim=0mm 0mm 0mm 0mm]{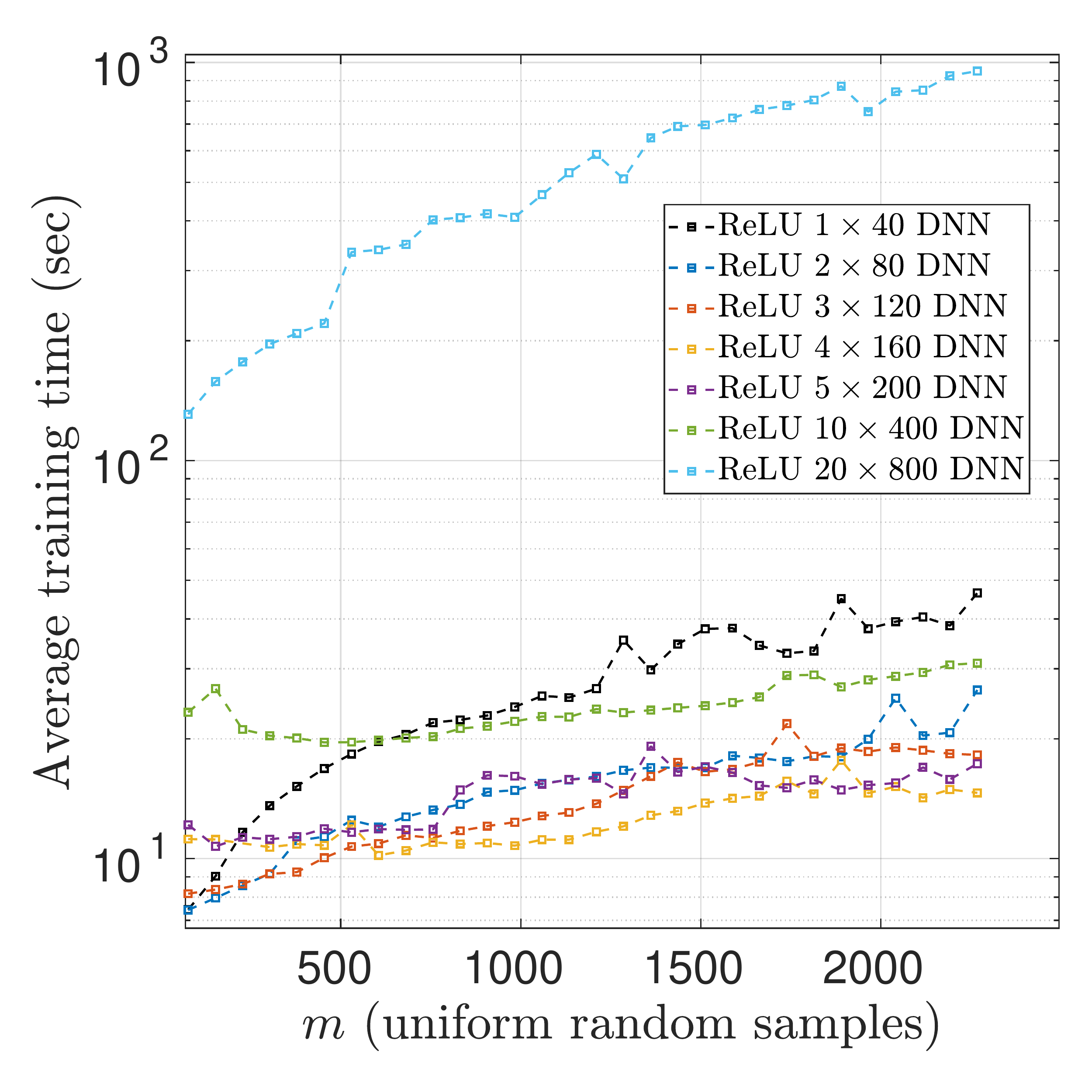}
\end{center}

\vspace{-2mm}
\caption{Comparison of training time vs. number of samples of function \eqref{eq:exp_cos_func} with $d=8$ used in training for ReLU architectures parameterized with $\beta = L/N$ (hidden layers/nodes per hidden layer) for values {\bf(top-left)} $\beta=0.5$, {\bf(top-middle)} $\beta=0.2$, {\bf(top-right)} $\beta=0.1$, {\bf(bottom-left)} $\beta=0.05$, and {\bf(bottom-right)} $\beta=0.025$.}
\label{fig:relu_exp_cos_func_time_comp}
\end{figure}

\begin{figure}[ht]
\begin{center}
\includegraphics[width=0.23\paperwidth,clip=true,trim=0mm 0mm 0mm 0mm]{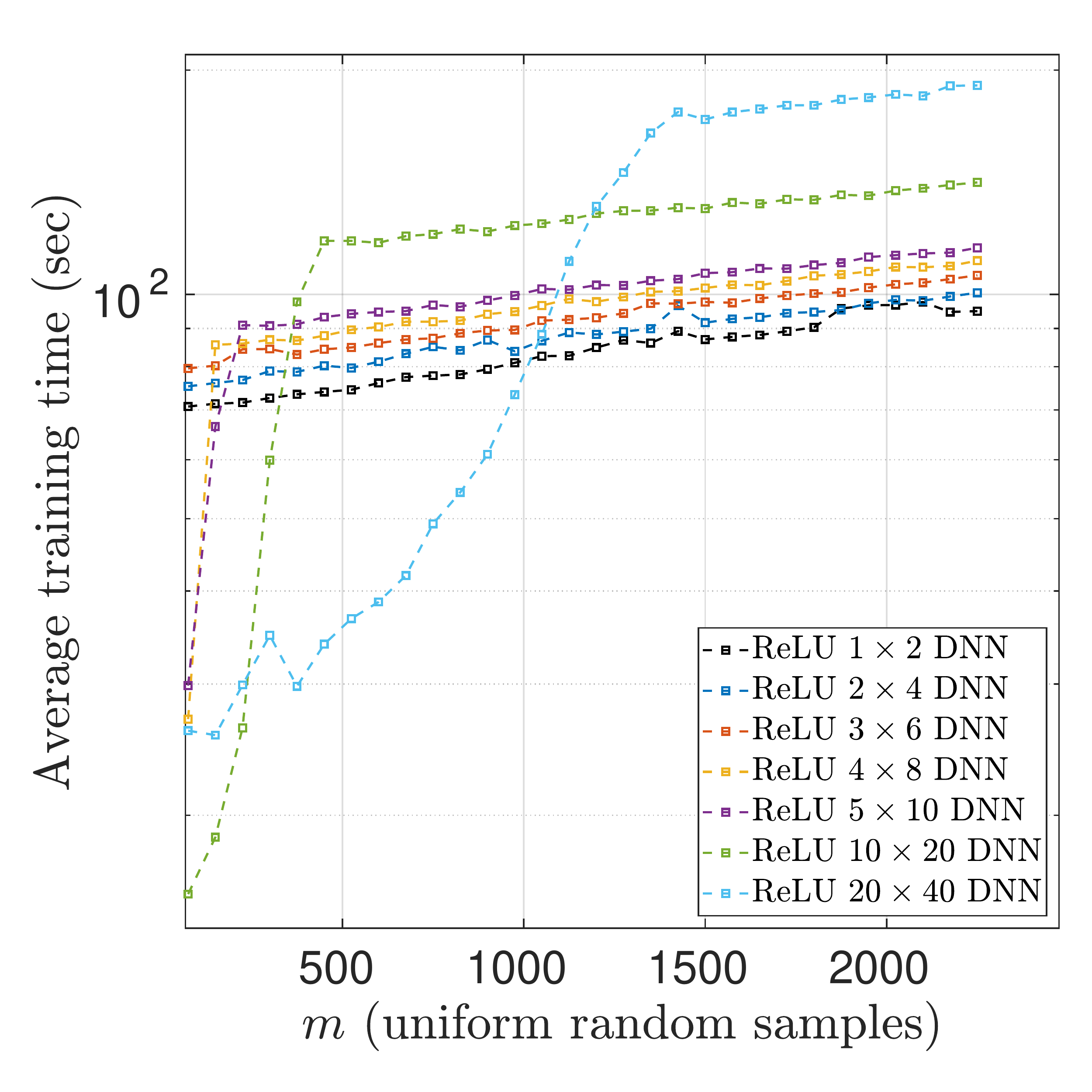}
\includegraphics[width=0.23\paperwidth,clip=true,trim=0mm 0mm 0mm 0mm]{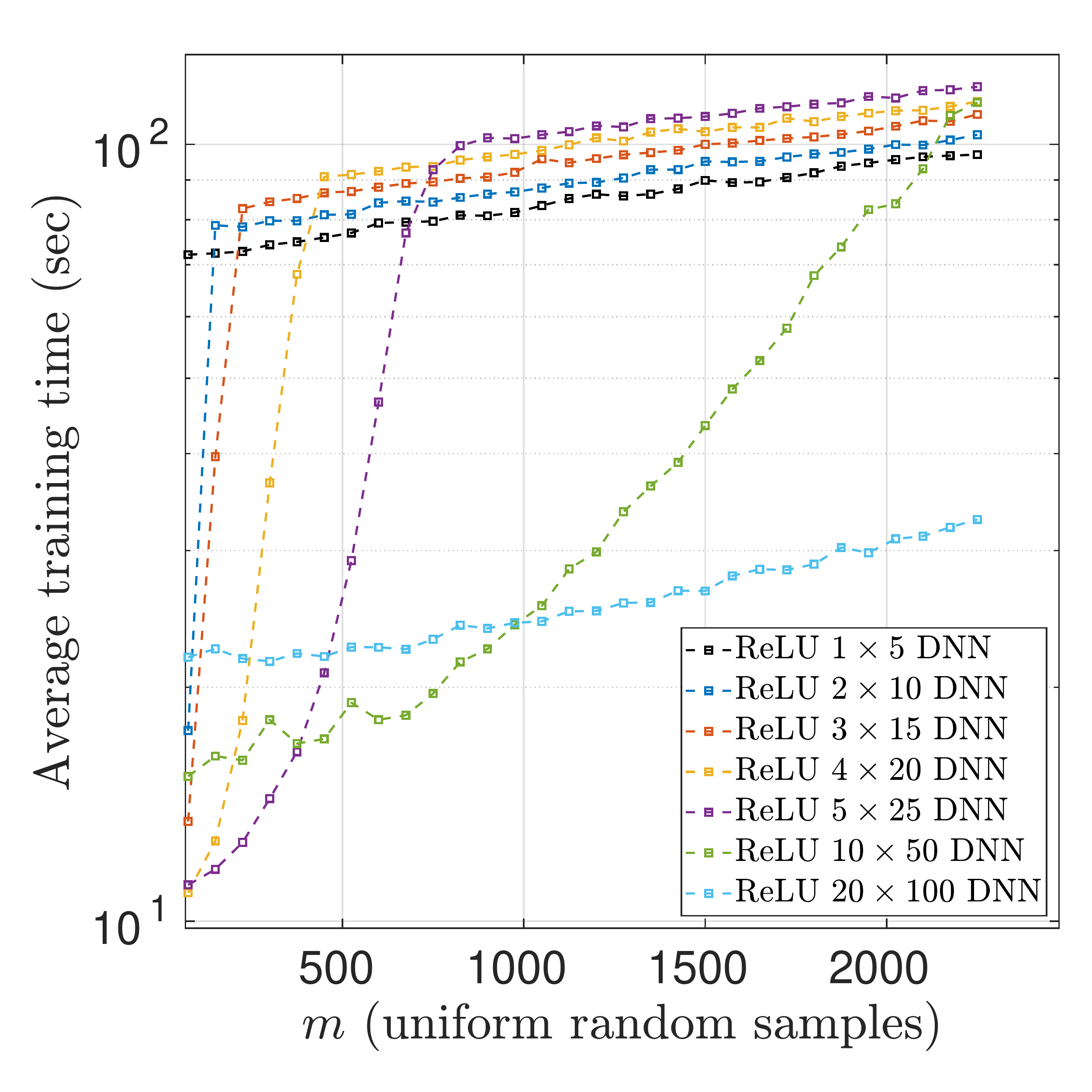}
\includegraphics[width=0.23\paperwidth,clip=true,trim=0mm 0mm 0mm 0mm]{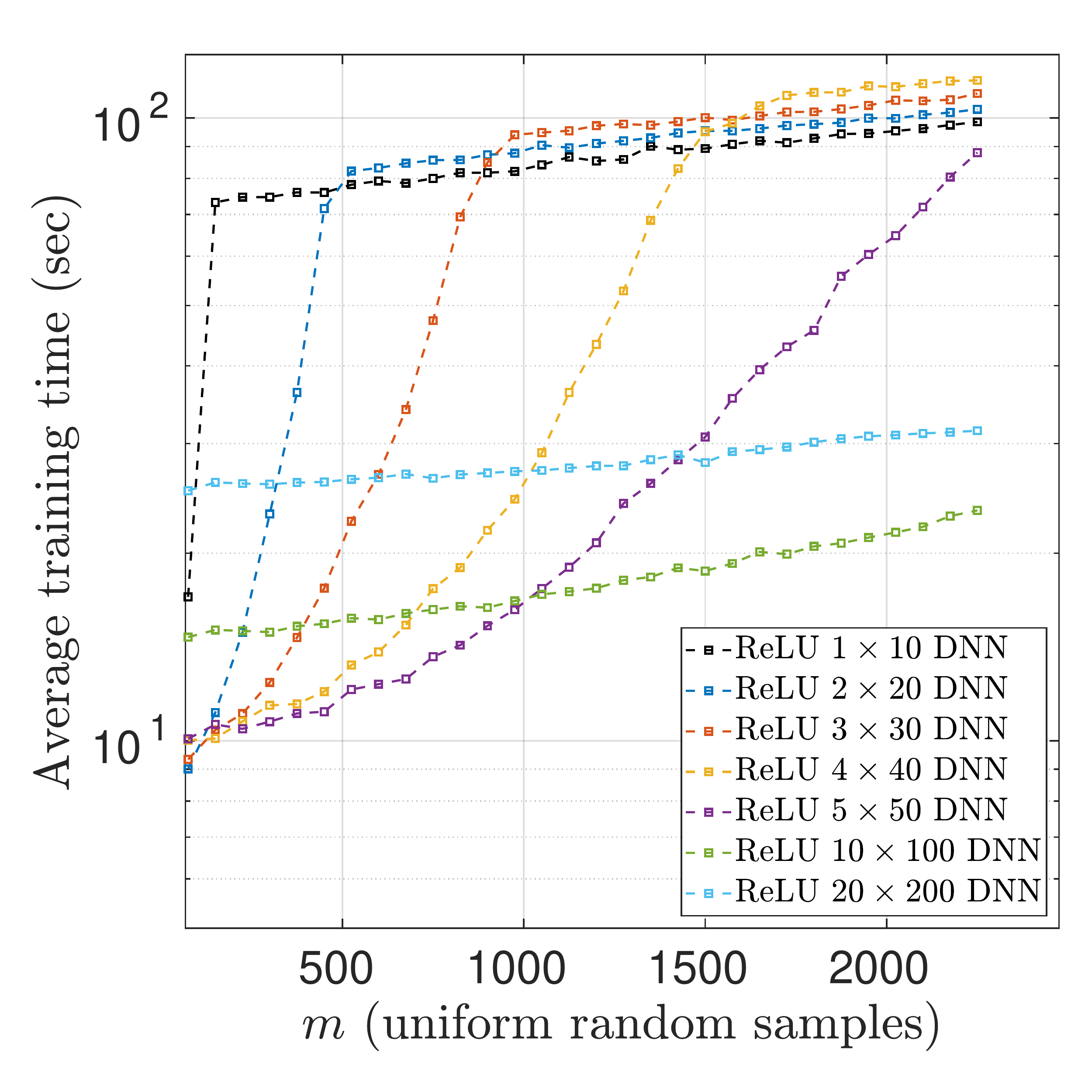}
\includegraphics[width=0.23\paperwidth,clip=true,trim=0mm 0mm 0mm 0mm]{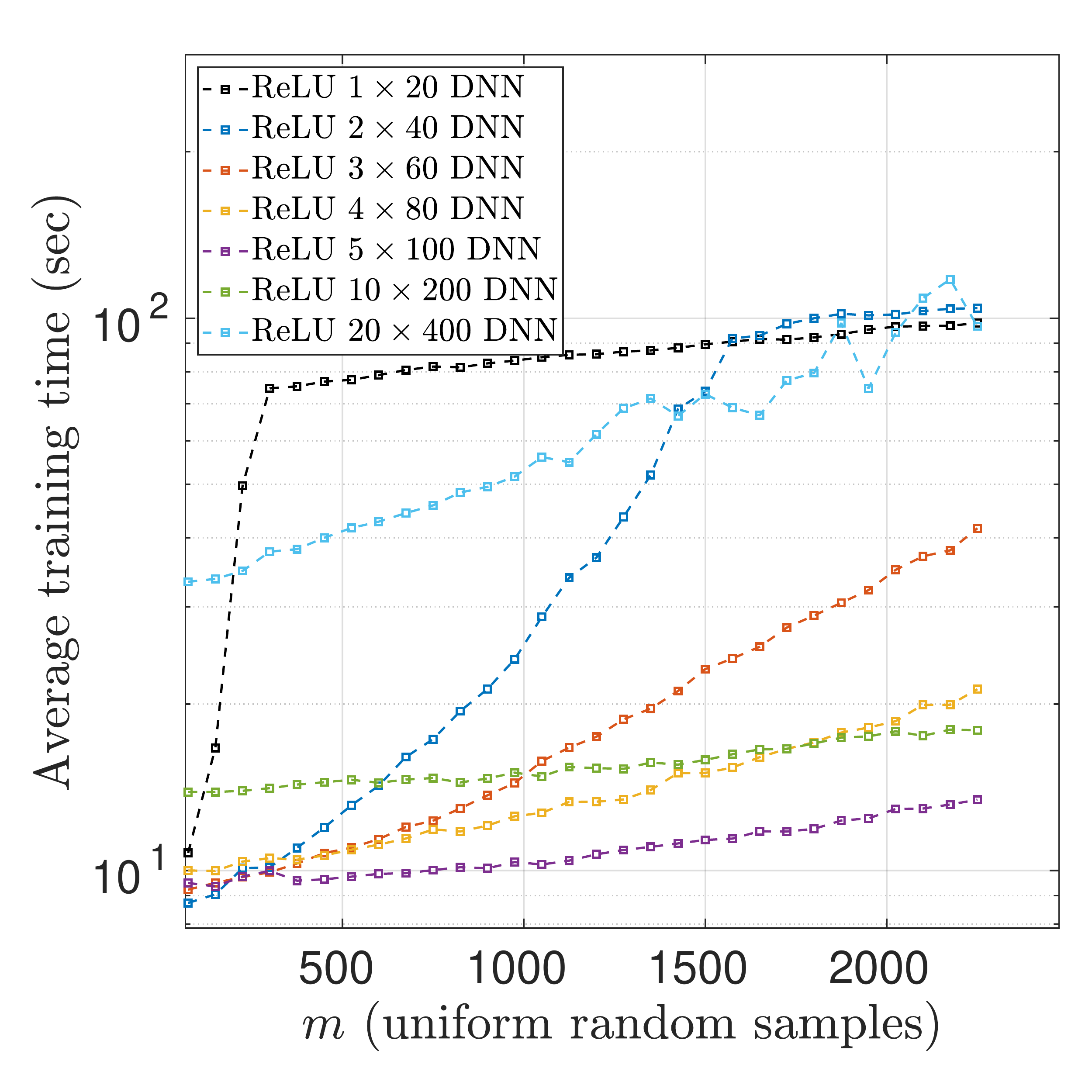}
\includegraphics[width=0.23\paperwidth,clip=true,trim=0mm 0mm 0mm 0mm]{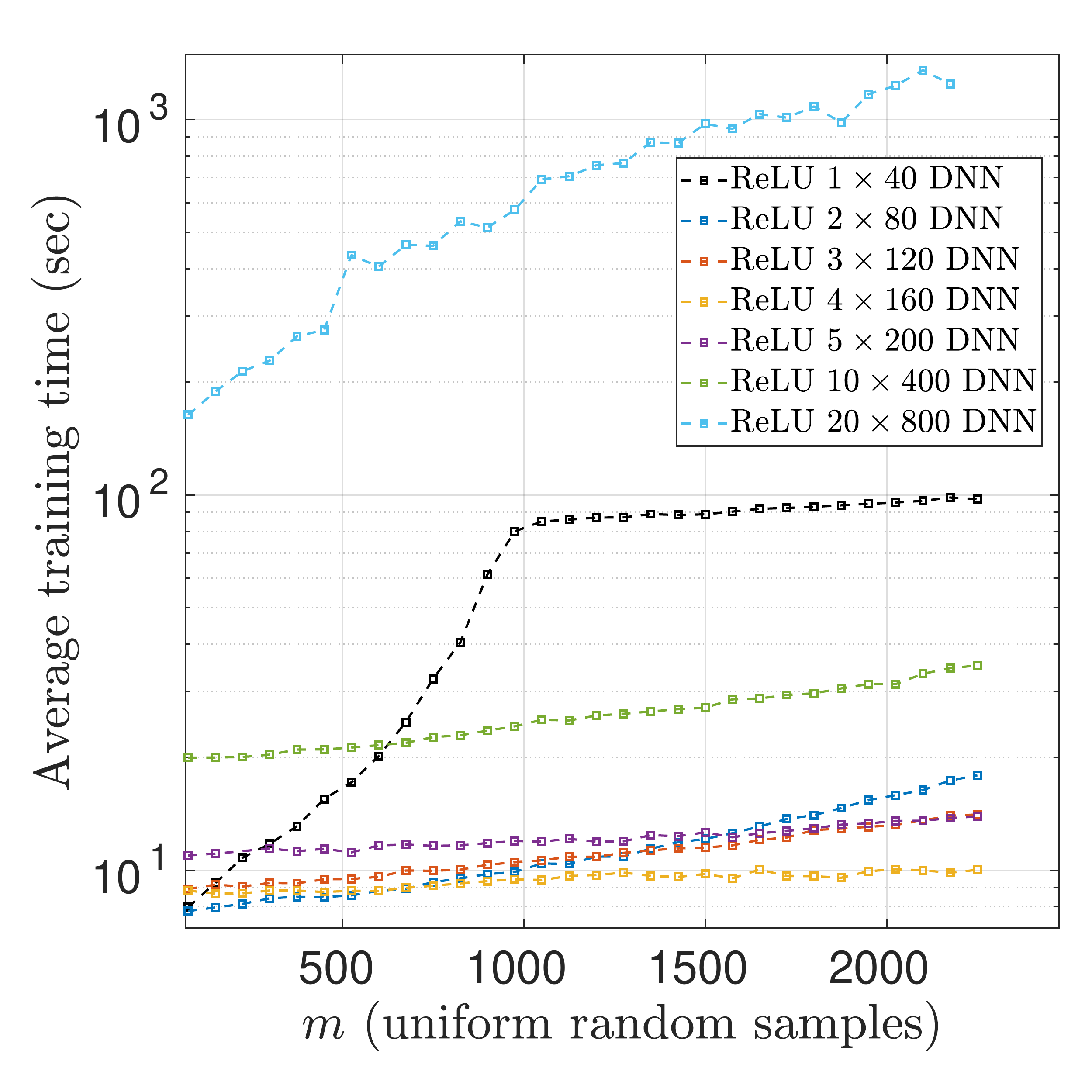}
\end{center}

\vspace{-2mm}
\caption{Comparison of training time vs. number of samples of function \eqref{eq:slower_decay_rational_func} with $d=8$ used in training for ReLU architectures parameterized with $\beta = L/N$ (hidden layers/nodes per hidden layer) for values {\bf(top-left)} $\beta=0.5$, {\bf(top-middle)} $\beta=0.2$, {\bf(top-right)} $\beta=0.1$, {\bf(bottom-left)} $\beta=0.05$, and {\bf(bottom-right)} $\beta=0.025$.}
\label{fig:relu_slower_decay_rational_func_time_comp}
\end{figure}

\subsection{Truncation parameters in compressed sensing}\label{sec:CSSM}

In our numerical experiments, we solve the weighted \eqref{eq:CS_BPDN_weighted} or unweighted \eqref{eq:CS_BPDN} quadratically-constrained basis problems with truncation parameter chosen as in \eqref{eta_opt_choice}.

It is well-known that the accuracy of the quadratically-constrained basis pursuit is affected by the choice of $\eta$, with it declining if $\eta$ is too large or too small \cite{BASBCSmodel}. In \eqref{eta_opt_choice}, we chose $\eta$ as small as possible such that the exact coefficients $\bm{c}_{\Lambda}$ are feasible for \eqref{eq:CS_BPDN}, which is in some sense an optimal choice. See \cite{ABBCorrecting} for further information.  In practice, when the coefficients $c_{\bm{\nu}}$ cannot be feasibly computed, an alternative is to use cross validation, see, e.g., \cite{DoostanOwhadiSparse}.

In our theoretical analysis in \S \ref{sec:theory1} we consider the weighted square-root LASSO problem \eqref{eq:CS_sr_min}.
As discussed above, the choice of $\eta$ in \eqref{eta_opt_choice} depends on the unknown expansion coefficients $\bm{c}_{\Lambda}$, and more precisely, the expansion tail since $\bm{f} - \bm{A} \bm{c}_{\Lambda} = \bm{e}$, where $\bm{e} =  \frac{1}{\sqrt{m}} \left ( \sum_{\bm{\nu} \notin \Lambda} c_{\bm{\nu}} \Psi_{\bm{\nu}}(\bm{x}_i) \right )^{m}_{i=1}$. Hence, previous theoretical error estimates for polynomial approximations via CS often involve unrealistic assumptions on the size of this term \cite{Adcock2016,ChkifaDexterTranWebster18}.
Much the same is true of the unconstrained LASSO \eqref{eq:CS_uncon_min}, or its weighted variant. This problem was studied in \cite{BASBCSmodel}, and in \cite{ABBCorrecting} the weighted square-root LASSO \eqref{eq:CS_sr_min} was proposed as a solution. While far less well known than the LASSO, the square-root LASSO has the beneficial property that the optimal choice of its parameter $\mu$ is independent of the noise term (i.e.\ the expansion tail), and depends only on the parameter $s$ (this can be seen in Theorem \ref{thm:CSexpapp}). Hence, as shown in Theorem \ref{thm:CSexpapp}, it allows for explicit convergence rate estimates for CS, without unrealistic assumptions being imposed on the expansion tail.

On the other hand, in our experiments we continue to use (weighted) quadratically-constrained basis pursuit because it is the standard approach in the literature.

% !TEX root = ./MLFA_supplement.tex

%----------------------------------------------------
\section{Proofs} 
\label{sec:proofs}
%----------------------------------------------------

In this section, we present the proofs of Theorems \ref{thm:LegExpOSZ}, \ref{thm:CSexpapp} and \ref{thm:CSasNN}.

\subsection{Proof of Theorem \ref{thm:LegExpOSZ}}

The proof is based on techniques from \cite[Sec.\ 3]{Opschoor2019Legendre}.

\begin{proof}[Proof of Theorem \ref{thm:LegExpOSZ}]
It is clear from \eqref{LegCoeffBd} that $\bm{c} \in \ell^1_{\bm{u}}(\N^d_0)$. Notice the following straightforward inequality
\bes{
\nm{f - \sum_{\bm{\nu} \in \Lambda} c_{\bm{\nu}} \Psi_{\bm{\nu}} }_{L^{\infty}} \leq \sum_{\bm{\nu} \notin \Lambda} u_{\bm{\nu}} | c_{\bm{\nu}} | .
}
Hence it suffices to consider the right-hand side.
Without loss of generality $s \geq 2$.  Choose $0 < \epsilon < 1$ such that
\bes{
s = \prod^{d}_{j=1} \left ( \frac{\log(\epsilon^{-1})}{\log(\rho_j)} + 1 \right ),
}
and define the lower set $\Lambda = \left \{ \bm{\nu} : \bm{\rho}^{-\bm{\nu}} \geq \epsilon \right \}$.  In the proof of Theorem 3.5 in \cite{Opschoor2019Legendre} it is shown that
\ben{
\label{OSZbd}
\sum_{\bm{\nu} \notin \Lambda} u_{\bm{\nu}} | c_{\bm{\nu}} | \leq C \exp \left ( -\beta |\Lambda |^{1/d} \right ),
}
for any $\beta$ satisfying
\bes{
0 < \beta < \left ( d! \prod^{d}_{j=1} \log(\rho_j) \right )^{1/d},
}
where $C > 0$ depends on $d$, $\bm{\rho}$, $\beta$ and $f$ only.
We now derive upper and lower bounds for $|\Lambda|$ in terms of $s$.  First, observe that
\bes{
\Lambda =  \left \{ \bm{\nu} \in \N^d_0 : \sum^{d}_{j=1} \nu_j \log(\rho_j) \leq \log(\epsilon^{-1}) \right \},
}
and therefore
\bes{
\prod^{d}_{j=1} \left ( 1 + \left \lfloor \frac{\log(\epsilon^{-1})}{d \log(\rho_j) } \right \rfloor \right ) \leq | \Lambda| \leq \prod^{d}_{j=1} \left ( 1 + \left \lfloor \frac{\log(\epsilon^{-1})}{\log(\rho_j) } \right \rfloor \right ).
}
Hence
\bes{
s \geq \prod^{d}_{j=1} \left ( \left \lfloor \frac{\log(\epsilon^{-1})}{\log(\rho_j)} \right \rfloor + 1 \right ) \geq |\Lambda |,
}
and
\bes{
s \leq \prod^{d}_{j=1} \left ( d \left \lfloor \frac{\log(1/\epsilon)}{d \log(\rho_j) } \right \rfloor + d + 1 \right ) = \prod^{d}_{j=1} \left ( \left \lfloor \frac{\log(1/\epsilon)}{d \log(\rho_j) } \right \rfloor  + 1\right )  \prod^{d}_{j=1} \left ( d + \frac{1}{\left \lfloor \frac{\log(1/\epsilon)}{d \log(\rho_j) } \right \rfloor  + 1 } \right ),
}
which gives $s  \leq |\Lambda| (d+1)^d$.  Therefore
\bes{
s (d+1)^{-d} \leq | \Lambda | \leq  s.
}
Returning to \eqref{OSZbd}, we deduce that
\bes{
\sigma_{s}(\bm{c})_{1,\bm{u}} \leq C \exp \left ( -\beta |\Lambda |^{1/d} \right ) \leq C \exp \left ( - \beta s^{1/d} / (d+1) \right ) = C \exp \left ( -\gamma s^{1/d} \right ).
}
This completes the proof.
\end{proof}

\subsection{Compressed sensing for lower set recovery}

The proofs of Theorem \ref{thm:CSexpapp} and \ref{thm:CSasNN} require some elements of compressed sensing theory, which we now introduce. 
We note that many of the constructions developed below apply more generally (for instance, to other measures $\varrho$ and other orthonormal systems).  However, for simplicity, we focus only on the case of Legendre polynomials.  What follows is based primarily on \cite{ABBCorrecting,Adcock2016,ChkifaDexterTranWebster18,RW15}.

Since our focus is on lower set recovery, the setup differs to the standard compressed sensing framework (see, e.g., \cite{FouRau13}) for the recovery of arbitrary $s$-sparse vectors.  Let $s \geq 1$ and recall that the union of all lower sets of cardinality at most $s$ is the hyperbolic cross index set $\Lambda = \Lambda^{\mathrm{HC}}_s$, defined by \eqref{HCindex}.  Write $n = |\Lambda^{\mathrm{HC}}_s|$.  Throughout this section, we consider vectors in $\bbC^n$ indexed over $\Lambda$.

Define \emph{intrinsic lower sparsity} of order $s$ by
\begin{equation}
\label{eq:defK(s)}
K(s) := \max\left\{|S|_{\bm{u}} : S \subseteq \Lambda,\ |S|\leq s,\ S \text{ lower}\right\},
\end{equation}
where 
\begin{equation}
\label{eq:def|S|_u}
|S|_{\bm{u}}:=\sum_{\bm{\nu} \in S} u_{\bm{\nu}}^2
\end{equation} 
is the \emph{weighted cardinality} of a subset $S$ with respect to the weights $\bm{u}$ \cite{RW15}.  Note that $K(s)$ is bounded, and satisfies
\ben{
\label{Ksbound}
s^2 / 4 \leq K(s) \leq s^2,
}
for weights $\bm{u}$ as in \eqref{eq:uweightsLeg}. See, for example, \cite[Lem.\ 2.2]{ABBCorrecting}.  Given this, we define the best $s$-term and lower approximation error as
\bes{
\sigma_{s,L}(\bm{c})_{1,\bm{u}} = \inf \left \{ \nm{\bm{c} - \bm{c}_{S} }_{1,\bm{u}} : S \subset \N^d_0,\ |S|_{\bm{u}} \leq K(s) \right \}.
}
Note that here and henceforth, we use $\bm{c}_{S}$ to denote either the vector $\bm{c}_{S} \in \bbC^n$ with $\bm{\nu}$th entry equal to $c_{\bm{\nu}}$ if $\bm{\nu} \in S$ and zero otherwise, or the vector $\bm{c}_{S} = (c_{\bm{\nu}})_{\bm{\nu} \in \Lambda} \in \bbC^{|\Lambda|}$.  The precise meaning will be clear from the context.

We now require the following (see \cite{ChkifaDexterTranWebster18} or \cite[Defn.\ 5.3]{ABBCorrecting}):
\begin{definition}[Lower robust null space property]
\label{def:lowerNSP}
Given $0 < \rho < 1$ and $\tau > 0$, a matrix $\bm{A} \in \mathbb{C}^{m \times n}$ is said to have the \emph{lower robust null space property (lower rNSP)} of order $s$ if
$$
\|\bm{z}_{S}\|_2 \leq \frac{\rho}{\sqrt{K(s)}} \|\bm{z}_{S^c}\|_{1,\bm{u}} + \tau \|\bm{A} \bm{z}\|_2, \quad \forall \bm{z} \in \mathbb{C}^n,
$$
for any $S \subseteq \Lambda$ such that $|S|_{\bm{u}} \leq K(s)$, where $K(s)$ is defined as in \eqref{eq:defK(s)}.
\end{definition}

The lower rNSP is sufficient to provide a recovery guarantee for the weighted square-root LASSO decoder \eqref{eq:CS_sr_min}.  In fact, although we shall not do it, this property also provides recovery guarantees for the decoders \eqref{eq:CS_BPDN} and \eqref{eq:CS_uncon_min}; see \cite{ABBCorrecting}.

\begin{theorem}
\label{thm:CSerrunderrNSP}
Suppose that $\bm{A} \in \bbC^{m \times n}$ satisfies the lower rNSP of order $s$ with constants $0 < \rho < 1$ and $\tau > 0$. Let $\bm{c} \in \mathbb{C}^n$ and $\bm{y} = \bm{A} \bm{c} + \bm{e} \in \mathbb{C}^m$ for some $\bm{e} \in \bbC^m$ and consider the the weighted square-root LASSO problem \eqref{eq:CS_sr_min} with parameter
\begin{equation*}
\mu \geq \frac{2 \tau}{1+\rho} \sqrt{K(s)}.
\end{equation*}
Then
\begin{equation*}
\begin{split}
\nmu{\bm{c} - \bm{\hat{c}}}_{1,\bm{u}} &\leq  2  \frac{1+\rho}{1-\rho} \sigma_{s,L}(\bm{c})_{1,\bm{u}} + \left ( \frac{1+\rho}{1-\rho} \mu + \frac{2 \tau \sqrt{K(s)}}{1-\rho} \right )   \nm{\bm{e}}_{2}.
\end{split}
\end{equation*}
\end{theorem}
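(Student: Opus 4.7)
My plan is a standard null space property argument, adapted to the weighted/lower-set setting and to the square-root LASSO objective, in the spirit of \cite{ABBCorrecting,ChkifaDexterTranWebster18}. Set $\bm{z} = \hat{\bm{c}} - \bm{c}$ and let $S \subseteq \Lambda$ with $|S|_{\bm{u}} \leq K(s)$ achieve $\nm{\bm{c}_{S^c}}_{1,\bm{u}} = \sigma_{s,L}(\bm{c})_{1,\bm{u}} =: \sigma$. The two ingredients I will combine are the optimality of $\hat{\bm{c}}$ in \eqref{eq:CS_sr_min} and the lower rNSP of Definition~\ref{def:lowerNSP}.

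First, since $\bm{A}\hat{\bm{c}} - \bm{y} = \bm{A}\bm{z} - \bm{e}$, comparing the objective in \eqref{eq:CS_sr_min} at $\hat{\bm{c}}$ and $\bm{c}$, decomposing $\hat{\bm{c}} = \bm{c} + \bm{z}$ on $S$ and $S^c$, and applying the (reverse) triangle inequality for $\nm{\cdot}_{1,\bm{u}}$, yields the cone-type inequality
\[
\nm{\bm{z}_{S^c}}_{1,\bm{u}} - \nm{\bm{z}_S}_{1,\bm{u}} + \mu\nm{\bm{A}\bm{z} - \bm{e}}_2 \leq 2\sigma + \mu\nm{\bm{e}}_2.
\]
Second, since $|S|_{\bm{u}} \leq K(s)$, weighted Cauchy--Schwarz gives $\nm{\bm{z}_S}_{1,\bm{u}} \leq \sqrt{K(s)}\,\nm{\bm{z}_S}_2$, so combining Definition~\ref{def:lowerNSP} with $\nm{\bm{A}\bm{z}}_2 \leq \nm{\bm{A}\bm{z} - \bm{e}}_2 + \nm{\bm{e}}_2$ yields
\[
\nm{\bm{z}_S}_{1,\bm{u}} \leq \rho\nm{\bm{z}_{S^c}}_{1,\bm{u}} + \tau\sqrt{K(s)}\nm{\bm{A}\bm{z} - \bm{e}}_2 + \tau\sqrt{K(s)}\nm{\bm{e}}_2.
\]

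Substituting the second inequality into the first to eliminate $\nm{\bm{z}_S}_{1,\bm{u}}$ produces
\[
(\mu - \tau\sqrt{K(s)})\nm{\bm{A}\bm{z} - \bm{e}}_2 + (1-\rho)\nm{\bm{z}_{S^c}}_{1,\bm{u}} \leq 2\sigma + (\mu + \tau\sqrt{K(s)})\nm{\bm{e}}_2.
\]
The hypothesis $\mu \geq 2\tau\sqrt{K(s)}/(1+\rho)$ implies $\mu - \tau\sqrt{K(s)} \geq (1-\rho)\mu/2 \geq 0$, so both terms on the left are nonnegative. Next I write $\nm{\bm{z}}_{1,\bm{u}} = \nm{\bm{z}_S}_{1,\bm{u}} + \nm{\bm{z}_{S^c}}_{1,\bm{u}}$ and apply the rNSP inequality once more to bound this by $(1+\rho)\nm{\bm{z}_{S^c}}_{1,\bm{u}} + \tau\sqrt{K(s)}\nm{\bm{A}\bm{z} - \bm{e}}_2 + \tau\sqrt{K(s)}\nm{\bm{e}}_2$. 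Multiplying the previous display by $(1+\rho)/(1-\rho)$ and substituting for the $(1+\rho)\nm{\bm{z}_{S^c}}_{1,\bm{u}}$ term yields, after collecting, the bound in the statement plus a residual coefficient of $(2\tau\sqrt{K(s)} - (1+\rho)\mu)/(1-\rho)$ on $\nm{\bm{A}\bm{z} - \bm{e}}_2$, which is nonpositive by the same hypothesis on $\mu$ and can therefore be discarded.

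The main subtlety, and the reason I work with $\nm{\bm{A}\bm{z} - \bm{e}}_2$ (the square-root LASSO residual) rather than $\nm{\bm{A}\bm{z}}_2$ throughout, is that this route produces the sharper noise coefficient $\frac{1+\rho}{1-\rho}\mu + \frac{2\tau\sqrt{K(s)}}{1-\rho}$ stated in the theorem, whereas first estimating $\nm{\bm{A}\bm{z}}_2$ and only then invoking the cone and rNSP inequalities naturally gives the strictly weaker constant $\frac{2(1+\rho)\mu}{1-\rho}$. Preserving the residual in its natural form costs one extra $\tau\sqrt{K(s)}\nm{\bm{e}}_2$ term (from the rNSP step) that is absorbed into the final constant; the cancellation that discards the $\nm{\bm{A}\bm{z} - \bm{e}}_2$ contribution at the end is precisely what exploits the full strength of the hypothesis on $\mu$.
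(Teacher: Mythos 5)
Your proof is correct and follows essentially the same route as the paper's: both combine the lower rNSP with the minimality of $\bm{\hat{c}}$ in \eqref{eq:CS_sr_min}, bound the data-fidelity terms via the triangle inequality, and use the hypothesis on $\mu$ to discard the residual $\nmu{\bm{A}\bm{\hat{c}} - \bm{f}}_2$ contribution, arriving at the same constants. The only difference is that the paper imports the key estimate $\nmu{\bm{c}-\bm{\hat{c}}}_{1,\bm{u}} \leq \frac{1+\rho}{1-\rho}\bigl(2\sigma_{s,L}(\bm{c})_{1,\bm{u}} + \nmu{\bm{\hat{c}}}_{1,\bm{u}} - \nmu{\bm{c}}_{1,\bm{u}}\bigr) + \frac{2\tau\sqrt{K(s)}}{1-\rho}\nmu{\bm{A}(\bm{\hat{c}}-\bm{c})}_2$ as a black box from an external reference, whereas you rederive its content from Definition~\ref{def:lowerNSP} via the cone inequality and weighted Cauchy--Schwarz.
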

\begin{proof}
By \cite[Thm.\ 5.6]{ABBCorrecting}, we have
\bes{
\nmu{\bm{c} - \bm{\hat{c}}}_{1,\bm{u}} \leq \frac{1+\rho}{1-\rho} \left ( 2 \sigma_{s,L}(\bm{c})_{1,\bm{u}}  + \nmu{\bm{\hat{c}}}_{1,\bm{u}} - \nmu{\bm{c}}_{1,\bm{u}} \right ) + \frac{2 \tau \sqrt{K(s)}}{1-\rho} \nmu{\bm{A} ( \bm{\hat{c}} - \bm{c} ) }_{2}.
}
Since $\bm{\hat{c}}$ is a minimizer, we obtain
\eas{
\nmu{\bm{c} - \bm{\hat{c}}}_{1,\bm{u}} \leq & 2  \frac{1+\rho}{1-\rho}\sigma_{s,L}(\bm{c})_{1,\bm{u}} + \frac{1+\rho}{1-\rho} \mu \left (\nmu{\bm{A} \bm{\hat{c}} - \bm{f}  }_{2}  -  \nmu{\bm{A} \bm{\hat{c}} - \bm{f}  }_{2} \right )
\\
& + \frac{2 \tau \sqrt{K(s)}}{1-\rho} \left (  \nmu{\bm{A} \bm{\hat{c}} - \bm{f}  }_{2}  + \nmu{\bm{A} \bm{\hat{c}} - \bm{f}  }_{2} \right ),
}
and by assumption on $\mu$ we deduce that
\bes{
\nmu{\bm{c} - \bm{\hat{c}}}_{1,\bm{u}} \leq   2  \frac{1+\rho}{1-\rho}\sigma_{s,L}(\bm{c})_{1,\bm{u}} + \left ( \frac{1+\rho}{1-\rho} \mu + \frac{2 \tau \sqrt{K(s)}}{1-\rho} \right )  \nmu{\bm{A} \bm{\hat{c}} - \bm{f}} _{2} ,
}
as requred.
\end{proof}

We note in passing one can also provide recovery guarantees in the $2$-norm.  See, for instance, \cite{ABBCorrecting}.  In practice, it is difficult to work directly with the lower rNSP.  Hence we consider the following (see \cite{ChkifaDexterTranWebster18} or \cite[Defn.\ 5.3]{ABBCorrecting}):

\begin{definition}[Lower restricted isometry property] 
\label{def:lowerRIP}
A matrix $\bm{A} \in \mathbb{C}^{m \times n}$ is said to have the \emph{lower restricted isometry property} of order $s$ if there exists a constant $0 < \delta < 1$ such that 
$$
(1-\delta) \|\bm{z}\|_2^2 
\leq \|\bm{A}\bm{z}\|_2^2 
\leq (1+\delta) \|\bm{z}\|_2^2, \quad 
\forall \bm{z} \in \mathbb{C}^n, \; |\supp(\bm{z})|_{\bm{u}} \leq K(s),
$$
where $\supp(\bm{z}) := \{ \bm{\nu} \in \Lambda: z_{\bm{\nu}} \neq 0\}$ and $|\supp(\bm{z})|_{\bm{u}}$ is its weighted cardinality defined as in \eqref{eq:def|S|_u}. The smallest constant such that this property holds is called the $s^{\text{th}}$ lower restricted isometry constant of $\bm{A}$ and it is denoted as $\delta_{s,L}$. 
\end{definition}

The following result, see \cite{ChkifaDexterTranWebster18} or \cite[Lem.\ 5.4]{ABBCorrecting}, asserts that the lower restricted isometry property is a sufficient condition for the lower rNSP:

\begin{lemma}
\label{lem:lRIPimplieslrNSP}
Let $s \geq 2$ and suppose $\bm{A} \in \bbC^{m \times n}$ satisfies the lower restricted isometry property (with $K(s)$ as in \eqref{eq:defK(s)} for weights \eqref{eq:uweightsLeg}) of order $2 s$ with constant $\delta_{2s,L} < 1/5$.  Then $\bm{A}$ has the lower rNSP of order $s$ with constants $\rho = \frac{4 \delta}{1-\delta}$ and $\tau = \frac{\sqrt{1+\delta}}{1-\delta}$.
\end{lemma}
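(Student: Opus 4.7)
The plan is to adapt the classical proof that the restricted isometry property implies the robust null-space property (of the Cand\`es / Foucart--Rauhut type) to the weighted, lower-set setting. Write $\delta = \delta_{2s,L}$. I would fix arbitrary $\bm{z}\in\bbC^n$ and $S\subseteq\Lambda$ with $|S|_{\bm{u}}\leq K(s)$, and aim to establish
\bes{
\|\bm{z}_S\|_2 \leq \frac{4\delta}{1-\delta}\cdot\frac{1}{\sqrt{K(s)}}\|\bm{z}_{S^c}\|_{1,\bm{u}} + \frac{\sqrt{1+\delta}}{1-\delta}\|\bm{A}\bm{z}\|_2.
}

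The first step is a greedy partition of the complement: order the indices of $S^c$ in decreasing order of $|z_{\bm{\nu}}|/u_{\bm{\nu}}$ (the weighted analog of the decreasing rearrangement used in standard CS proofs) and collect them into disjoint blocks $T_1, T_2,\ldots$, each of weighted cardinality bounded by $K(s)$, so that the weighted magnitudes on $T_j$ dominate those on $T_{j+1}$. A key bookkeeping step is verifying that $|S\cup T_j|_{\bm{u}}\leq K(2s)$, so that the lower RIP of order $2s$ applies to every such union; this uses the super-additive growth of $K(\cdot)$ recorded in \eqref{Ksbound} together with a mild refinement of the block boundaries (splitting a block if it would push the union above $K(2s)$).

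Next I would apply the RIP lower bound $(1-\delta)\|\bm{z}_{S\cup T_1}\|_2^2 \leq \|\bm{A}\bm{z}_{S\cup T_1}\|_2^2$ and expand the right-hand side as
\bes{
\|\bm{A}\bm{z}_{S\cup T_1}\|_2^2 = \langle\bm{A}\bm{z}_{S\cup T_1},\bm{A}\bm{z}\rangle - \sum_{j\geq 2}\langle\bm{A}\bm{z}_{S\cup T_1},\bm{A}\bm{z}_{T_j}\rangle.
}
The inner product with $\bm{A}\bm{z}$ is bounded by $\sqrt{1+\delta}\,\|\bm{z}_{S\cup T_1}\|_2\|\bm{A}\bm{z}\|_2$ via the upper RIP estimate. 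Each cross-term is controlled by the weighted analog of the standard RIP orthogonality inequality $|\langle\bm{A}\bm{u},\bm{A}\bm{v}\rangle|\leq \delta\|\bm{u}\|_2\|\bm{v}\|_2$ for disjointly-supported vectors whose combined weighted support sits inside $K(2s)$; this follows from polarization applied to the RIP bounds on $\bm{u}\pm\bm{v}$ together with the partitioning step above.

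The main obstacle is the tail control $\sum_{j\geq 2}\|\bm{z}_{T_j}\|_2 \leq \|\bm{z}_{S^c}\|_{1,\bm{u}}/\sqrt{K(s)}$. In the unweighted theory this is immediate from monotonicity of the decreasing rearrangement, but here the weights inside the $\ell^2$-norm and the weighted cardinality constraint $|T_j|_{\bm{u}}\leq K(s)$ interact nontrivially. The key observation, present in \cite{ChkifaDexterTranWebster18,ABBCorrecting}, is that with the ordering by $|z_{\bm{\nu}}|/u_{\bm{\nu}}$, each weighted entry $|z_{\bm{\nu}}|u_{\bm{\nu}}$ on $T_j$ is dominated by the weighted average $\|\bm{z}_{T_{j-1}}\|_{1,\bm{u}}/K(s)$, which after multiplying by $u_{\bm{\nu}}^2$ and summing over $\bm{\nu}\in T_j$ (and using $|T_j|_{\bm{u}}\leq K(s)$) yields $\|\bm{z}_{T_j}\|_2 \leq \|\bm{z}_{T_{j-1}}\|_{1,\bm{u}}/\sqrt{K(s)}$. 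Telescoping over $j\geq 2$ gives the desired bound. Finally, assembling the pieces, dividing through by $\|\bm{z}_{S\cup T_1}\|_2$, using $\|\bm{z}_S\|_2 \leq \|\bm{z}_{S\cup T_1}\|_2$, and collecting constants yields $\rho = 4\delta/(1-\delta)$ and $\tau = \sqrt{1+\delta}/(1-\delta)$, with the requirement $\rho<1$ being exactly $\delta<1/5$.
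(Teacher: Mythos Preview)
The paper does not supply its own proof of this lemma; it is stated with a citation to \cite{ChkifaDexterTranWebster18} and \cite[Lem.~5.4]{ABBCorrecting}, and your outline is exactly the argument found in those references (weighted greedy partition of $S^c$, RIP near-orthogonality on disjoint blocks of combined weighted cardinality at most $K(2s)$, and the weighted Stechkin-type tail estimate). One small slip worth fixing when you write it out: in the tail step the quantity controlled by the weighted average over $T_{j-1}$ is $|z_{\bm\nu}|/u_{\bm\nu}$, not $|z_{\bm\nu}|u_{\bm\nu}$, and converting that average into $\|\bm z_{T_{j-1}}\|_{1,\bm u}/K(s)$ requires a \emph{lower} bound on $|T_{j-1}|_{\bm u}$, which is exactly where the careful block construction in those references (and ultimately the factor $4$ in $\rho$) enters.
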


Finally, we also need a result asserting the lower restricted isometry property for the measurement matrix \eqref{eq:A_f_def}.  The following result was first proved in \cite{ChkifaDexterTranWebster18}.  See, for example, \cite[Thm.\ 5.5]{ABBCorrecting}:
\begin{theorem}
\label{thm:AlowerRIP}
 Let $0 < \delta,\varepsilon < 1$ and suppose that
\bes{
m \geq C \cdot K(s) \cdot L(s,\delta,\varepsilon),
}
where $K(s)$ is as in \eqref{eq:defK(s)}, $C > 0$ is a universal constant,
\bes{
L(s,\delta,\varepsilon)
= \frac{1}{\delta^2}\ln\left(\frac{K(s)}{\delta^2}\right)
\max\left\{\frac{1}{\delta^4} \ln\left(\frac{K(s)}{\delta^2}\ln\left(\frac{K(s)}{\delta^2}\right)\right) \ln(n),
\frac{1}{\delta}\ln\left(\frac{1}{\delta\varepsilon} \ln\left(\frac{K(s)}{\delta^2}\right)\right)\right\}.
}
and $n = | \Lambda^{\mathrm{HC}}_{s} |$.  Let $\bm{x}_1,\ldots,\bm{x}_m$ be drawn independently according to the uniform measure on $\cU$ and consider the matrix $\bm{A}$ defined in \eqref{eq:A_f_def}, where $\{ \Psi_{\bm{\nu}} \}_{\bm{\nu} \in \N^d_0}$ is the orthonormal tensor Legendre polynomial basis.  Then, with probability at least $1 - \varepsilon$, $\bm{A}$ satisfies the lower RIP of order $s$ with constant $\delta_{s,L} \leq \delta$.
\end{theorem}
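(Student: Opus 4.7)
The plan is to realise the weighted square-root LASSO estimator of Theorem \ref{thm:CSexpapp} as (a small perturbation of) the trained output of a carefully designed network. The network will consist of a fixed, non-trainable sub-network that emulates the Legendre polynomials on the hyperbolic cross $\Lambda^{\mathrm{HC}}_s$, followed by a single linear output layer whose $n = |\Lambda^{\mathrm{HC}}_s|$ weights are the only trainable parameters. With the regularizer taken to be the weighted $\ell^1$-norm of these output weights, the regularized empirical loss will coincide, up to an $O(\delta)$ perturbation, with the square-root LASSO objective \eqref{eq:CS_sr_min}, and the recovery guarantee of Theorem \ref{thm:CSexpapp} will transfer.

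\textbf{Construction.} I will first invoke Proposition \ref{prop:polyNN} (the Opschoor--Petersen--Schwab ReLU emulation of Legendre polynomials) at some tolerance $\delta > 0$ to be chosen later, producing a sub-network $\Psi^{\mathrm{NN}}_{\Lambda,\delta} : \bbR^d \to \bbR^n$ with prescribed, non-trainable weights and biases and such that $\max_{\bm\nu \in \Lambda^{\mathrm{HC}}_s} \nm{\Psi_{\bm\nu} - (\Psi^{\mathrm{NN}}_{\Lambda,\delta})_{\bm\nu}}_{L^{\infty}(\cU)} \leq \delta$, with depth linear in $\log(\delta^{-1}) + \log n + \log s$ and size polynomial in $d,s$ and linear in $\log(\delta^{-1})$. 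A single linear output layer with trainable weights $\bm{z} \in \bbR^n$ and zero bias then yields the family
\[
\mathcal{N} = \left\{ \Phi_{\bm{z}} : \bm{x} \mapsto \sum_{\bm\nu \in \Lambda^{\mathrm{HC}}_s} z_{\bm\nu} \bigl( \Psi^{\mathrm{NN}}_{\Lambda,\delta}(\bm{x}) \bigr)_{\bm\nu} : \bm{z} \in \bbR^n \right\}
\]
with exactly $n$ trainable parameters. The regularizer is set to $\cJ(\Phi_{\bm z}) = \mu \nm{\bm z}_{1,\bm u}$ with $\mu = 12\sqrt{42}\,s/35$ and weights $\bm u$ as in \eqref{eq:uweightsLeg}, matching the parameters of Theorem \ref{thm:CSexpapp}.

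\textbf{Transferring the CS guarantee.} With the above setup, the loss reads $\cL(\Phi_{\bm z}) = \nm{\tilde{\bm A} \bm z - \bm f}_2 + \mu \nm{\bm z}_{1,\bm u}$, where $\tilde{A}_{i,\bm\nu} = (\Psi^{\mathrm{NN}}_{\Lambda,\delta})_{\bm\nu}(\bm x_i)/\sqrt{m}$ and $\bm f$ is as in \eqref{eq:A_f_def}. This is a perturbation of \eqref{eq:CS_sr_min} by $\bm E := \tilde{\bm A} - \bm A$ with $|E_{i,\bm\nu}| \leq \delta/\sqrt m$. Testing $\cL$ at $\bm z = \bm 0$ and using $\nm{f}_{L^{\infty}} \leq 1$ gives an a priori bound on $\nm{\hat{\bm z}}_{1,\bm u}$ (and hence on $\nm{\hat{\bm z}}_2$), and comparing $\cL(\hat{\bm z}) \leq \cL(\bm z^\star)$ for the exact square-root LASSO minimizer $\bm z^\star$ shows that $\hat{\bm z}$ is an $O(\delta \sqrt n)$-near-minimizer of \eqref{eq:CS_sr_min}. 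A near-minimizer variant of Theorem \ref{thm:CSerrunderrNSP}, combined with the lower rNSP supplied by Theorem \ref{thm:AlowerRIP} and Lemma \ref{lem:lRIPimplieslrNSP} (whose hypotheses are met under $m \geq c L_{m,d,\varepsilon}$ and \eqref{supperCSexp}), then yields a weighted $\ell^1$-bound on $\bm c_\Lambda - \hat{\bm z}$. The triangle inequality $\nm{f - \hat\Phi}_{L^\infty(\cU)} \leq \nm{f - p_{\hat{\bm z}}}_{L^\infty(\cU)} + \delta \nm{\hat{\bm z}}_{1,\bm u}$, with $p_{\hat{\bm z}} := \sum_{\bm\nu \in \Lambda} \hat z_{\bm\nu} \Psi_{\bm\nu}$, and the same chain of inequalities used to prove Theorem \ref{thm:CSexpapp} (best-$s$-term error from Theorem \ref{thm:LegExpOSZ} for the tail, $u_{\bm\nu} = \nm{\Psi_{\bm\nu}}_{L^\infty}$ for the bulk) then produce the stated exponential bound, provided $\delta$ is chosen of order $\exp(-\gamma s^{1/d})$ up to polynomial factors in $s,n$. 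Because $\log(\delta^{-1}) \asymp \gamma s^{1/d} + \log s + \log n$, the depth and size bounds from Proposition \ref{prop:polyNN} collapse to exactly the forms stated in the theorem.

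\textbf{Main obstacle.} The principal technical difficulty is the perturbation analysis in the previous step: Theorem \ref{thm:CSerrunderrNSP} is stated for exact minimizers of the exact square-root LASSO, whereas $\hat{\bm z}$ only minimizes a $\delta$-perturbed functional. The cleanest route is to derive a quantitative near-minimizer version of Theorem \ref{thm:CSerrunderrNSP} directly, so that the existing lower-rNSP arguments can be reused verbatim. Control of $\nm{\hat{\bm z}}_2$ via the objective value at $\bm z = \bm 0$ plays a central role, and balancing constants so that the final choice of $\delta$ both absorbs the emulation error into the exponential rate and keeps $\log(\delta^{-1})$ within the stated depth/size budget is the most delicate bookkeeping in the argument. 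Everything else reduces to the CS machinery already developed in \S\S \ref{ss:expconvpoly}--\ref{subsec:CS}.
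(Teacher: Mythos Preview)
Your proposal addresses the wrong theorem. The statement you were asked to prove is Theorem~\ref{thm:AlowerRIP}, which asserts that the random Legendre measurement matrix $\bm{A}$ satisfies the lower restricted isometry property with high probability under the stated sample-complexity condition. Your write-up instead sketches a proof of Theorem~\ref{thm:CSasNN} (the practical existence theorem for DNN approximation): you construct a network family emulating Legendre polynomials via Proposition~\ref{prop:polyNN}, identify training with a perturbed square-root LASSO, and then transfer the CS recovery guarantee. None of this bears on establishing the lower RIP for $\bm{A}$; indeed, your argument \emph{uses} Theorem~\ref{thm:AlowerRIP} as an input rather than proving it.

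For the record, the paper does not supply its own proof of Theorem~\ref{thm:AlowerRIP}; it is quoted from \cite{ChkifaDexterTranWebster18} (see also \cite[Thm.~5.5]{ABBCorrecting}). A proof would proceed by the standard route for RIP of bounded orthonormal systems restricted to structured (here, lower) support sets: one bounds the supremum of $\bigl|\nm{\bm{A}\bm{z}}_2^2 - \nm{\bm{z}}_2^2\bigr|$ over unit vectors supported on sets of weighted cardinality at most $K(s)$ via chaining/covering arguments, exploiting that $\nm{\Psi_{\bm\nu}}_{L^\infty} = u_{\bm\nu}$ so that the relevant coherence is controlled by $K(s)$ rather than $n$. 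Your proposal contains none of these ingredients.
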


\subsection{Proof of Theorem \ref{thm:CSexpapp}}

We now give the proof of Theorem \ref{thm:CSexpapp}.  First, we recall the following inequality for the cardinality of the hyperbolic cross index set:
\ben{
\label{HCsize}
n = | \Lambda^{\mathrm{HC}}_{s} | \leq \min \left\{ 2 s^3 4^d , e^2 s^{2+\log_2(d)}, \frac{s(\log(s)+d\log(2))^{d-1}}{(d-1)!}  \right\}
}
The first and third bounds are due to Theorems 3.7 and 3.5 of \cite{chernov2016new} respectively. The second bound is due to \cite[Thm.\ 4.9]{kuhn2015approximation}

\begin{proof}[Proof of Theorem \ref{thm:CSexpapp}]
We claim that, with probability at least $1-\varepsilon$, the matrix $\bm{A}$ has the lower restricted isometry property of order $2 s$ with constant $\delta \leq \delta_{2s,L} = 1/6$.  Suppose this claim is true. Then Lemma \ref{lem:lRIPimplieslrNSP} gives that it satisfies the lower rNSP with constants $\rho = 4/5$ and $\tau = \sqrt{42}/5$. Also, by this and \eqref{Ksbound},
\bes{
\frac{2 \tau}{1+\rho} \sqrt{K(s)} \leq \frac{12 \sqrt{42}}{35} s = \mu.
}
Hence Theorem \ref{thm:CSerrunderrNSP} gives
\bes{
\nm{f - \tilde{f}}_{L^{\infty}(\cU)} \leq \nm{\bm{c} - \bm{\hat{c}}}_{1,\bm{u}} \leq  C \left ( \sigma_{s,L}(\bm{c})_{1,\bm{u}} + s \nmu{\bm{e}}_2 \right ).
}
Recall by definition that
\bes{
\nm{\bm{e}}_2 = \sqrt{\frac1m \sum^{m}_{i=1} \left| f(x_i) - \sum_{\bm{\nu} \in \Lambda} c_{\bm{\nu}} \Psi_{\bm{\nu}}(x_i) \right|^2 } \leq \nm{f - \sum_{\bm{\nu} \in \Lambda} c_{\bm{\nu}} \Psi_{\bm{\nu}}}_{L^{\infty}(\cU)} \leq \nm{\bm{c} - \bm{c}_{\Lambda}}_{1,\bm{u}} \leq \sigma_{s,L}(\bm{c})_{1,\bm{u}}.
}
Hence, by Theorem \ref{thm:LegExpOSZ},
\bes{
\nm{f - \tilde{f}}_{L^{\infty}(\cU)} \leq C (1+s)   \exp \left ( -\gamma s^{1/d} \right ).
}
Since this holds for all $\gamma$ satisfying  \eqref{polyrategamma}, and since the exponential term dominates as $s \rightarrow \infty$, we deduce (after possible change of $C$) that 
\bes{
\nm{f - \tilde{f}}_{L^{\infty}(\cU)} \leq C  \exp \left ( -\gamma s^{1/d} \right ).
}
To complete the proof, it remains to prove the claim.  Let $L(2s,\delta_{2s,L},\varepsilon)$ be the log factor in Theorem \ref{thm:AlowerRIP}.  Then, since $\delta_{2s,L} = 1/6$ and $K(s) \leq s^2$ by \eqref{Ksbound}, we have
\eas{
L(2s,\delta_{2s,L},\varepsilon) \leq c \log(2 s) \max \left \{ \log \left ( 2 s \log(2 s) \right ) \log(n) , \log(2 \varepsilon^{-1} \log(2 s) ) \right \}.
}
Note that $\log(2s \log(2s)) \leq \log(4 s^2) = 2 \log(2s)$.
Using this and the estimate \eqref{HCsize} for $n = |\Lambda^{\mathrm{HC}}_{s}|$ we obtain
\eas{
L(2s,\delta_{2s,L},\varepsilon) \leq c \log(2s) \left ( \log^2(2s) \log(2d) + \log(2 \varepsilon^{-1} \log(2 s)) \right ).
}
Observe that $L_{m,d,\varepsilon} \geq c'$ for some universal constant $c' > 0$ and therefore $s \leq c'' m$ for some universal constant $c'' > 0$.  It follows that
\bes{
L(2s,\delta_{2s,L},\varepsilon) \leq c  \left ( \log^2(2m) \log(2d) + \log(2 \varepsilon^{-1} \log(2 m)) \right ) = c L_{m,d,\varepsilon},
}
for possibly different constant $c > 0$.  Hence
\bes{
m \geq c \cdot s^2 \cdot L_{m,d,\varepsilon} \geq c \cdot s^2 \cdot L(2s,\delta_{2s,L},\varepsilon).
}
The claim now follows immediately from Theorem \ref{thm:AlowerRIP}.
\end{proof}

\subsection{Proof of Theorem \ref{thm:CSasNN}}

We make use of the following result, which can be found in \cite[Prop.\ 2.13]{Opschoor2019Legendre}:

\begin{proposition}
\label{prop:polyNN}
For every finite subset $\Lambda \subset \N^d_0$ and every $ 0 < \delta < 1$ there exists a ReLU neural network $\Phi_{\Lambda,\delta} : \bbR^d \rightarrow \bbR^{|\Lambda|}$ such that, if $\Phi_{\Lambda,\delta} = (\Phi_{\bm{\nu},\delta})_{\bm{\nu} \in \Lambda}$, then
\bes{
\nm{\Psi_{\bm{\nu}} - \Phi_{\bm{\nu},\delta} }_{L^{\infty}(\cU)} \leq \delta.
}
The depth and size of this network satisfy
\eas{
\mathrm{depth}(\Phi_{\Lambda,\delta}) &\leq c \left ( 1 + d \log(d) \right ) \left ( 1 + \log(m(\Lambda)) \right ) \left ( m(\Lambda) + \log(\delta^{-1}) \right )
\\
\mathrm{size}(\Phi_{\Lambda,\delta}) & \leq c \left ( d^2 m(\Lambda)^2 + d m(\Lambda) \log(\delta^{-1}) + d^2 | \Lambda | \left (1 + \log(m(\Lambda)) + \log(\delta^{-1}) \right ) \right ),
}
where $m(\Lambda) = \max_{\bm{\nu} \in \Lambda} \nm{\bm{\nu}}_1$ and $c>0$ is a universal constant.
\end{proposition}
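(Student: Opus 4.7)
The plan is to construct $\Phi_{\Lambda,\delta}$ as a composition of three ReLU subnetworks that emulate (i) bilinear multiplication, (ii) all univariate Legendre polynomials up to degree $m(\Lambda)$ in parallel, and (iii) the $d$-fold tensor products indexed by $\Lambda$.

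First, I would invoke the standard Yarotsky multiplication gadget: there is a ReLU network $\mu_\eta : [-1,1]^2 \to \mathbb{R}$ satisfying $|\mu_\eta(x,y) - xy| \le \eta$ with both depth and size $O(\log(\eta^{-1}))$. This follows from the identity $xy = \tfrac14\bigl((x+y)^2 - (x-y)^2\bigr)$ combined with Yarotsky's piecewise-linear sawtooth construction of $x \mapsto x^2$ on $[-1,1]$, and it serves as the atomic nonlinear operation throughout.

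Second, for each coordinate $x_j$ I would build a subnetwork $U_j$ that simultaneously outputs approximations $\tilde\psi_0(x_j), \ldots, \tilde\psi_{m(\Lambda)}(x_j)$ of the univariate Legendre polynomials. The natural route is to unroll Bonnet's three-term recurrence $(k+1)P_{k+1}(x) = (2k+1) x P_k(x) - k P_{k-1}(x)$, using one multiplication gadget (for $x \cdot \tilde\psi_k(x)$) and one affine combination per step. Identity ReLU channels carry the intermediate $\tilde\psi_k$ values forward so that all $m(\Lambda)+1$ outputs appear at the top of $U_j$; running the $d$ coordinate subnetworks in parallel contributes the $d^2 m(\Lambda)^2$ term (the buffering cost of routing partial sums across layers) and the $d\, m(\Lambda) \log(\delta^{-1})$ term (the multiplications themselves) in the stated size bound.

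Third, for each $\bm{\nu} \in \Lambda$ I would assemble $\tilde\Psi_{\bm\nu}(\bm x) = \prod_{j=1}^d \tilde\psi_{\nu_j}(x_j)$ by a balanced binary multiplication tree of depth $\lceil \log_2 d \rceil$ with $d-1$ multiplication gadgets. Instantiated at tolerance $\eta' \sim \delta / (d (2 m(\Lambda)+1)^{d/2})$ so that the uniform bound $|\psi_k|_{L^\infty} \le \sqrt{2k+1}$ absorbs the product amplification, the tree adds $O\bigl(d^2 |\Lambda|(1 + \log m(\Lambda) + \log(\delta^{-1}))\bigr)$ to the size, and compounds with the univariate subnetworks to yield the stated $(1 + d\log d)(1 + \log m(\Lambda))(m(\Lambda) + \log(\delta^{-1}))$ depth bound once one accounts for parallel input distribution and tree layering.

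For the error analysis I would use the telescoping identity
\bes{
\tilde\Psi_{\bm\nu} - \Psi_{\bm\nu} = \sum_{j=1}^d \Big(\prod_{i<j} \tilde\psi_{\nu_i}(x_i)\Big) \bigl(\tilde\psi_{\nu_j}(x_j) - \psi_{\nu_j}(x_j)\bigr) \Big(\prod_{i>j} \psi_{\nu_i}(x_i)\Big) + \mathrm{(tree\ multiplication\ errors)},
}
bound each factor via $|\psi_k| \le \sqrt{2k+1}$, and substitute the chosen $\eta, \eta'$ to obtain $\|\tilde\Psi_{\bm\nu} - \Psi_{\bm\nu}\|_{L^\infty(\cU)} \le \delta$. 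The main obstacle is controlling error amplification through Bonnet's recurrence: a naive bound would let errors compound geometrically over $m(\Lambda)$ steps, but the polynomially growing recurrence coefficients together with the $\sqrt{2k+1}$ uniform Legendre bound keep the amplification merely polynomial in $m(\Lambda)$. This polynomial control is precisely what permits $\log(\delta^{-1})$ to enter the final depth and size estimates additively rather than multiplicatively with $m(\Lambda)$, matching the structure of the stated bounds.
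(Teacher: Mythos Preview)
The paper does not prove this proposition; it is quoted verbatim from \cite[Prop.~2.13]{Opschoor2019Legendre} and used as a black box in the proof of Theorem~\ref{thm:CSasNN}. So there is no in-paper argument to compare your proposal against.

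That said, your sketch is a reasonable outline of how such a result is established: Yarotsky's $\log(\eta^{-1})$ multiplication gadget, a recurrence-based univariate Legendre block, and balanced product trees for the tensor structure are exactly the standard ingredients. One point of possible divergence from the cited reference: the paper remarks (in the discussion following Theorem~\ref{thm:CSasNN}) that the construction ``relies on monomials,'' which suggests that \cite{Opschoor2019Legendre} may build the univariate Legendre polynomials by first emulating $x^k$ and then taking linear combinations, rather than by unrolling Bonnet's recurrence directly. The monomial route sidesteps the error-amplification issue you flag as the main obstacle, since the error in each $x^k$ is controlled independently and the change of basis is a fixed linear map; your recurrence route is arguably more elegant but requires the polynomial-growth argument you allude to in order to avoid a multiplicative $m(\Lambda)$ factor in front of $\log(\delta^{-1})$. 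Either way, the final depth and size bounds you quote are consistent with what one expects from these building blocks.
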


The general idea of the proof is to use Proposition \ref{prop:polyNN} to approximately express matrix-vector multiplication $\bm{A} \bm{z}$ as a neural network $\Phi$ evaluated at the sample points $\bm{x}_i$ and then use the compressed sensing results to establish an error bound.  Since this process commits an error, we first require the following result, which shows that the lower rNSP is robust to small matrix perturbations:

\begin{lemma}
\label{lem:rNSPapproxmat}
Suppose that $\bm{A} \in \bbC^{m \times n}$ satisfies the lower rNSP of order $s$ with constants $0 < \rho < 1$ and $\tau > 0$ and let $\bm{A'}\in \bbC^{m \times n}$ satisfy
\begin{equation*}
\nmu{\bm{A} - \bm{A'}}_2 \leq \sigma,\qquad \sigma < \frac{1-\rho}{\tau(\sqrt{K(s)} + 1)}.
\end{equation*}
Then $\bm{A'}$ satisfies the lower rNSP of order $s$ with constants $0 < \rho' < 1$ and $\tau$, where 
\begin{equation*}
\rho' \leq \frac{\rho + \tau \sigma \sqrt{K(s)} }{1-\tau \sigma}.
\end{equation*}
\end{lemma}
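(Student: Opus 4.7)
The plan is a direct perturbation argument: take the lower rNSP inequality for $\bm{A}$, swap $\bm{A}$ for $\bm{A'}$ using the triangle inequality and the operator norm bound, then absorb the resulting $\|\bm{z}_S\|_2$ error term on the left-hand side. Since $\bm{A}$ already has the lower rNSP, for any $\bm{z} \in \bbC^n$ and any $S \subseteq \Lambda$ with $|S|_{\bm{u}} \leq K(s)$ we have
\bes{
\|\bm{z}_S\|_2 \leq \frac{\rho}{\sqrt{K(s)}} \|\bm{z}_{S^c}\|_{1,\bm{u}} + \tau \|\bm{A}\bm{z}\|_2.
}
I would write $\bm{A}\bm{z} = \bm{A'}\bm{z} + (\bm{A} - \bm{A'})\bm{z}$ and invoke the hypothesis $\nmu{\bm{A} - \bm{A'}}_2 \leq \sigma$ to obtain $\|\bm{A}\bm{z}\|_2 \leq \|\bm{A'}\bm{z}\|_2 + \sigma \|\bm{z}\|_2$.

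The next step is to bound $\|\bm{z}\|_2$ in terms of the two quantities already appearing in the rNSP. I would use the triangle inequality $\|\bm{z}\|_2 \leq \|\bm{z}_S\|_2 + \|\bm{z}_{S^c}\|_2$ together with $\|\bm{z}_{S^c}\|_2 \leq \|\bm{z}_{S^c}\|_1 \leq \|\bm{z}_{S^c}\|_{1,\bm{u}}$, where the last inequality uses that the Legendre weights \eqref{eq:uweightsLeg} satisfy $u_{\bm{\nu}} \geq 1$ for every $\bm{\nu}$. Inserting these two estimates back into the rNSP inequality gives
\bes{
\|\bm{z}_S\|_2 \leq \left(\frac{\rho}{\sqrt{K(s)}} + \tau\sigma\right)\|\bm{z}_{S^c}\|_{1,\bm{u}} + \tau\sigma\|\bm{z}_S\|_2 + \tau \|\bm{A'}\bm{z}\|_2.
}

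Finally, I would move the $\tau\sigma\|\bm{z}_S\|_2$ term to the left and divide through by $1 - \tau\sigma$, which is strictly positive since the standing hypothesis $\sigma < (1-\rho)/(\tau(\sqrt{K(s)}+1))$ in particular forces $\tau\sigma < 1$. Factoring $\sqrt{K(s)}$ out of the coefficient of $\|\bm{z}_{S^c}\|_{1,\bm{u}}$ produces the advertised $\rho'$, and rewriting the hypothesis on $\sigma$ as $\tau\sigma(\sqrt{K(s)}+1) < 1-\rho$ yields exactly $\rho + \tau\sigma\sqrt{K(s)} < 1 - \tau\sigma$, i.e.\ $\rho' < 1$, as required.

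There is no real obstacle here; the only care needed is in (i) choosing to estimate $\|\bm{z}_{S^c}\|_2$ by the weighted $1$-norm rather than the unweighted $2$-norm, so that the rearranged inequality remains expressed in the quantities appearing in Definition \ref{def:lowerNSP}, and (ii) checking that the sharp threshold on $\sigma$ is precisely what keeps $\rho'<1$. Modulo a mild bookkeeping issue with the constant in front of $\|\bm{A'}\bm{z}\|_2$ (which strictly speaking is $\tau/(1-\tau\sigma)$ rather than $\tau$, but can be absorbed into a slightly larger constant or handled by a trivial rescaling), the argument produces the claimed bound in a few lines.
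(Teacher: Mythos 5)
Your argument is correct and is essentially identical to the paper's proof: the same triangle-inequality perturbation $\|\bm{A}\bm{z}\|_2 \le \|\bm{A'}\bm{z}\|_2 + \sigma\|\bm{z}\|_2$, the same split $\|\bm{z}\|_2 \le \|\bm{z}_S\|_2 + \|\bm{z}_{S^c}\|_2$ with $\|\bm{z}_{S^c}\|_2 \le \|\bm{z}_{S^c}\|_{1,\bm{u}}$, and the same absorption and division by $1-\tau\sigma$. The bookkeeping point you flag about the coefficient of $\|\bm{A'}\bm{z}\|_2$ being $\tau/(1-\tau\sigma)$ rather than $\tau$ is real and is present in the paper's own statement as well; it is harmless for the downstream use of the lemma.
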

\begin{proof}
Let $\bm{z} \in \mathbb{C}^n$ and $S$ satisfy $|S|_{\bm{u}} \leq K(s)$.  Then
\eas{
\|\bm{z}_{S}\|_2 & \leq \frac{\rho}{\sqrt{K(s)}} \|\bm{z}_{S^c}\|_{1,\bm{u}} + \tau \|\bm{A} \bm{z}\|_2
\\
& \leq \frac{\rho}{\sqrt{K(s)}} \|\bm{z}_{S^c}\|_{1,\bm{u}} + \tau \nmu{\bm{A'} \bm{z}}_{2} + \tau \sigma \nmu{\bm{z}}_{2}
\\
& \leq \frac{\rho}{\sqrt{K(s)}} \|\bm{z}_{S^c}\|_{1,\bm{u}}  + \tau \nmu{\bm{A'} \bm{z}}_{2} + \tau \sigma \nmu{\bm{z}_S}_{2}+ \tau \sigma \nmu{\bm{z}_{S^c}}_{2}.
}
Hence
\bes{
(1-\tau \sigma ) \|\bm{z}_{S}\|_2 \leq \left ( \frac{\rho}{\sqrt{K(s)}}  + \tau \sigma \right ) \|\bm{z}_{S^c}\|_{1,\bm{u}} + \tau \nmu{\bm{A'} \bm{z}}_{2}.
}
The result now follows immediately.
\end{proof}

\begin{proof}[Proof of Theorem \ref{thm:CSasNN}]
Let $\Lambda = \Lambda^{\mathrm{HC}}_{s}$ and $\Phi_{\Lambda,\delta}$ be as in Proposition \ref{prop:polyNN}. For the moment, we do not specify the choice of $\delta \in (0,1)$. This will be done at the end of the proof.
Now define the family of neural networks
\bes{
\mathcal{N} = \left \{ \Phi : \bm{x} \mapsto \bm{z}^{\top} \Phi_{\Lambda,\delta}(\bm{x}),\ \bm{z} \in \bbR^n \right \}.
}
Notice that this family has $n$ trainable parameters, $\mathrm{depth}(\Phi) = \mathrm{depth}(\Phi_{\Lambda,\delta}) + 1$ and $\mathrm{size}(\Phi) = \mathrm{size}(\Phi_{\Lambda,\delta})  + n$ for $\Phi \in \mathcal{N}$.  For $\Phi \in \mathcal{N}$, let $\cJ(\Phi) = \mu^{-1} \nm{\bm{z}}_{1,\bm{u}}$, where $\mu = \frac{4\sqrt{42}}{19} s$. Then observe that 
\bes{
\Phi(\bm{x}) = \sum_{\bm{\nu} \in \Lambda} z_{\bm{\nu}} \Phi_{\bm{\nu},\delta}(\bm{x}).
}
and therefore
\bes{
\cL(\Phi) = \nm{\bm{A'} \bm{z} - \bm{f}}_2 + \mu^{-1} \nm{\bm{z}}_{1,\bm{u}},\qquad \bm{A'} = \frac{1}{\sqrt{m}} \left ( \Phi_{\bm{\nu},\delta}(\bm{x}_i) \right )_{\substack{1 \leq i \leq m \\ \bm{\nu} \in \Lambda}}.  
}
Hence $\hat{\Phi} =  \bm{\hat{c}}^{\top} \Phi_{\Lambda,\delta}$ is a minimizer of $\cL$ over $\mathcal{N}$ if and only if $\bm{\hat{c}}$ is a minimizer of
\ben{\label{LASSO_Ap}
\mathrm{minimize}_{\bm{z} \in \bbR^n} \nm{\bm{z}}_{1,\bm{u}} +  \mu \nm{\bm{A'} \bm{z} - \bm{f}}_2 .
}
We seek to use Lemma \ref{lem:rNSPapproxmat}.  Observe that 
\eas{
\nm{(\bm{A} - \bm{A'}) \bm{z} }^2_2 &= \frac1m \sum^{m}_{i=1} \left | \sum_{\bm{\nu} \in \Lambda} z_{\bm{\nu}} \left ( \Psi_{\bm{\nu}}(\bm{x}_i) -  \Phi_{\bm{\nu},\delta}(\bm{x}_i) \right ) \right |^2 
\\
&\leq  \sum_{\bm{\nu} \in \Lambda} \nm{\Psi_{\bm{\nu}} - \Phi_{\bm{\nu},\delta} }^2_{L^{\infty}(\cU)} \nm{\bm{z}}^2_2 \leq n \delta^2 \nm{\bm{z}}^2_2.
}
Hence,  we require $\delta$ to satisfy
\bes{
\nm{\bm{A} - \bm{A'}}_2 \leq \sqrt{n} \delta \leq \sigma : = \frac{5/\sqrt{42}}{9 + 10 s }.
}
As in the proof of Theorem \ref{thm:CSexpapp}, the conditions on $m$ assert that $\bm{A}$ has the lower rNSP of order $s$ with constants $\rho = 4/5$ and $\tau = \sqrt{42}/5$.  Therefore, by Lemma \ref{lem:rNSPapproxmat}, $\bm{A'}$ has the lower rNSP of order $s$ with constants $\rho'$ and $\tau = \sqrt{42}/5$ with  probability at least $1 -\epsilon$, where
\bes{
\rho' \leq \frac{\rho + \tau \sigma s}{1-\tau \sigma}  \leq 9/10.
}
Now we derive an error bound for $f - \hat{\Phi}$. To do so, we write $\hat{\Phi} =  \bm{\hat{c}}^{\top} \Phi_{\Lambda,\delta}$, where $\bm{\hat{c}}$ is the corresponding minimizer  of \eqref{LASSO_Ap}, and set 
\bes{
f_{\Psi} = \sum_{\bm{\nu} \in \Lambda} \hat{c}_{\bm{\nu}} \Psi_{\bm{\nu}}.
}
Then following similar arguments as in proof of Theorem \ref{thm:CSexpapp}, we obtain
\eas{
\nmu{f - \hat{\Phi}}_{L^{\infty}(\cU)} & \leq  \nmu{f - f_{\Psi}}_{L^{\infty}(\cU)}  + \nmu{f_{\Psi} - \hat{\Phi}}_{L^{\infty}(\cU)} \\
 &\leq  \nmu{\bm{{c}} -\bm{\hat{c}}}_{1,\bm{u}}  +  \sum_{\bm{\nu} \in \Lambda} \nm{\Psi_{\bm{\nu}} - \Phi_{\bm{\nu},\delta} }_{L^{\infty}(\cU)} \nm{\bm{\hat{c}}}_1 \\
  &\leq   C_2 \left ( \sigma_{s,L}(\bm{c})_{1,\bm{u}} + s \nmu{\bm{e'}}_2 \right )  +  \delta \nm{\bm{\hat{c}}}_1,
}
where $\bm{e'}$ is such that $\bm{f} = \bm{A'}\bm{c}+ \bm{e'}$. Moreover, we have 
\bes{
\nmu{\bm{e'}}_2  = \nmu{(\bm{A}-\bm{A'})\bm{c}+ \bm{e}}_2 \leq  \sqrt{n} \delta \nmu{\bm{c}}_2 +    \sigma_{s,L}(\bm{c})_{1,\bm{u}},
}
and now using the fact that $\bm{\hat{c}}$ is a minimizer of \eqref{LASSO_Ap}, we deduce that
\bes{
\nm{\bm{\hat{c}}}_1 \leq \nm{\bm{\hat{c}}}_{1,\bm{u}}+ \mu \nm{\bm{A'} \bm{\hat{c}} -\bm{f}}_{2} \leq \mu \nm{\bm{f}}_2 \leq \mu \nm{f}_{L^{\infty}(\cU)} \leq \mu.
}
Here, in the last step, we use that fact that $\nm{f}_{L^{\infty}(\cU)} \leq 1$ by assumption. Combining these estimates, noticing that $\nm{\bm{c}}_2 \leq \nm{f}_{L^2(\cU)} \leq \nm{f}_{L^{\infty}(\cU)} \leq 1$ and using the same bound for $ \sigma_{s,L}(\bm{c})_{1,\bm{u}}$ as in Theorem \ref{thm:CSexpapp}, we deduce that
\bes{
\nmu{f - \hat{\Phi}}_{L^{\infty}(\cU)} \leq  C  \exp \left ( -\gamma s^{1/d} \right ) +  \delta \sqrt{n}  + \delta \mu .
}
Recall that  $\mu = \frac{4\sqrt{42}}{19} s$. We now set
\bes{
\delta = \min \left \{ \frac{1}{\sqrt{n}} \exp \left ( -\gamma s^{1/d} \right ) , \frac{1}{s} \exp \left ( -\gamma s^{1/d} \right ) ,  \frac{5/\sqrt{42}}{\sqrt{n}( 9 + 10 s) }  \right \},
}
to deduce that
\bes{
\nmu{f - \hat{\Phi}}_{L^{\infty}(\cU)} \leq   C  \exp \left ( -\gamma s^{1/d} \right ),  
}
after possible change of $C$.

It remains to establish the size and depth bounds for the network. First, from the definition of $\delta$, notice that
\bes{
\delta  \gtrsim   \frac{\exp \left ( -\gamma s^{1/d} \right )}{s \sqrt{n}},
}
from which, we easily obtain
\bes{
\log(\delta^{-1})  \lesssim   \log(s)+ \log(n) +\gamma s^{1/d}.
}
Now, notice that $\nm{\bm{\nu}}_{1} \leq \prod^{d}_{j=1} (\nu_j+1) - 1 \leq s,$ for all $\bm{\nu} \in \Lambda$, and therefore $m(\Lambda) \leq s$. Proposition \ref{prop:polyNN} therefore gives
\eas{
\mathrm{depth}(\Phi_{\Lambda,\delta}) &\leq c' (1 + d \log(d))(1+\log(s)) (s +  \log(s)+ \log(n) +\gamma s^{1/d})
\\
\mathrm{size}(\Phi_{\Lambda,\delta}) & \leq c' \left ( d^2 s^2 + d s ( \log(s)+ \log(n) +\gamma s^{1/d}) + d^2 n \left ( 1 +  \log(s)+ \log(n) +\gamma s^{1/d} \right ) \right ),
}
for some universal constant $c' > 0$.
Since $s \geq 1$, we deduce that
\eas{
\mathrm{depth}(\Phi_{\Lambda,\delta}) &\leq c' (1 + d \log(d))(1+\log(s)) (s + \log(n) +\gamma s^{1/d})
\\
\mathrm{size}(\Phi_{\Lambda,\delta}) & \leq c' \left ( d^2 s^2 + (d s + d^2 n )\left ( 1 +  \log(s)+ \log(n) +\gamma s^{1/d} \right ) \right ).
}
which completes the  proof.

\end{proof}

\bibliographystyle{siamplain}
\bibliography{MLFArefs}

\begin{thebibliography}{10}

\bibitem{AdcockBen2018ICSa}
{\sc B.~Adcock}, {\em Infinite-dimensional compressed sensing and function
  interpolation}, Found. Comput. Math., 18 (2018), pp.~661--701.

\bibitem{ABBCorrecting}
{\sc B.~Adcock, A.~Bao, and S.~Brugiapaglia}, {\em Correcting for unknown
  errors in sparse high-dimensional function approximation}, Numer. Math., 142
  (2019), pp.~667--711.

\bibitem{Adcock2016}
{\sc B.~Adcock, S.~Brugiapaglia, and C.~G. Webster}, {\em Compressed sensing
  approaches for polynomial approximation of high-dimensional functions}, in
  Compressed Sensing and Its Applications, Birkh\"auser, 2017.

\bibitem{Antun2019}
{\sc V.~Antun, F.~Renna, C.~Poon, B.~Adcock, and A.~C. Hansen}, {\em {On
  instabilities of deep learning in image reconstruction -- Does AI come at a
  cost?}}, arXiv:1902.05300,  (2019).

\bibitem{ArridgeEtAlACTA}
{\sc S.~Arridge, P.~Maass, O.~\"Oktem, and C.-B. Sch\"onlieb}, {\em Solving
  inverse problems using data-driven models}, Acta Numer., 28 (2019),
  pp.~1--174.

\bibitem{Bach2017}
{\sc F.~Bach}, {\em {Breaking the Curse of Dimensionality with Convex Neural
  Networks}}, J. Mach. Learn. Res., 18 (2017), pp.~1--53.

\bibitem{BNTT14}
{\sc J.~B{\"a}ck, F.~Nobile, L.~Tamellini, and R.~Tempone}, {\em {Convergence
  of quasi-optimal stochastic Galerkin methods for a class of PDEs with random
  coefficients}}, Comput. Math. Appl., 67 (2014), pp.~732--751.

\bibitem{BTNT12}
{\sc J.~B{\"a}ck, R.~Tempone, F.~Nobile, and L.~Tamellni}, {\em {On the optimal
  polynomial approximation of stochastic PDEs by Galerkin and collocation
  methods}}, Math. Models and Methods Appl. Sci., 22 (2012).

\bibitem{sciMLDOE}
{\sc N.~Baker, F.~Alexander, T.~Bremer, A.~Hagberg, Y.~Y.~Kevrekidis, H.~Najm,
  M.~Parashar, A.~Patra, J.~Sethian, S.~Wild, and K.~Willcox}, {\em Workshop
  report on basic research needs for scientific machine learning: Core
  technologies for artificial intelligence}, U.S. Department of Energy Advanced
  Scientific Computing Research,  (2019).

\bibitem{Beck2019}
{\sc C.~Beck, A.~Jentzen, and B.~Kuckuck}, {\em {Full error analysis for the
  training of deep neural networks}},  (2019).

\bibitem{Berner2018}
{\sc J.~Berner, P.~Grohs, and A.~Jentzen}, {\em {Analysis of the generalization
  error: Empirical risk minimization over deep artificial neural networks
  overcomes the curse of dimensionality in the numerical approximation of
  Black-Scholes partial differential equations}}, arXiv:1809.03062,  (2018).

\bibitem{BASBCSmodel}
{\sc S.~Brugiapaglia and B.~Adcock}, {\em Robustness to unknown error in sparse
  regularization}, IEEE Trans. Inform. Theory, 64 (2018), pp.~6638--6661.

\bibitem{CarrasquillaJuan2016Mlpo}
{\sc J.~Carrasquilla and R.~Melko}, {\em Machine learning phases of matter},
  Nature Physics, 13 (2017), pp.~431--434.

\bibitem{chernov2016new}
{\sc A.~Chernov and D.~D\~{u}ng}, {\em New explicit-in-dimension estimates for
  the cardinality of high-dimensional hyperbolic crosses and approximation of
  functions having mixed smoothness}, J. Complexity, 32 (2016), pp.~92--121.

\bibitem{ChkifaDexterTranWebster18}
{\sc A.~Chkifa, N.~Dexter, H.~Tran, and C.~G. Webster}, {\em {Polynomial
  approximation via compressed sensing of high-dimensional functions on lower
  sets}}, Math. Comp., 87 (2018), pp.~1415--1450.

\bibitem{CDS10}
{\sc A.~Cohen, R.~DeVore, and C.~Schwab}, {\em {Convergence rates of best
  N-term Galerkin approximations for a class of elliptic sPDEs}}, Found.
  Comput. Math., 10 (2010), pp.~615--646.

\bibitem{CDS11}
{\sc A.~Cohen, R.~DeVore, and C.~Schwab}, {\em {Analytic regularity and
  polynomial approximation of parametric and stochastic elliptic PDEs}}, Anal.
  Appl., 9 (2011), pp.~11--47.

\bibitem{CohenDeVoreApproxPDEs}
{\sc A.~Cohen and R.~A. DeVore}, {\em Approximation of high-dimensional
  parametric {PDE}s}, Acta Numer., 24 (2015), pp.~1--159.

\bibitem{Cybenko1989}
{\sc G.~Cybenko}, {\em {Approximation by Superpositions of a Sigmoidal
  Function}}, Math. Control Signals Systems, 2 (1989), pp.~303--314.

\bibitem{pmlr-v107-cyr20a}
{\sc E.~C. Cyr, M.~A. Gulian, R.~G. Patel, M.~Perego, and N.~A. Trask}, {\em
  Robust training and initialization of deep neural networks: {A}n adaptive
  basis viewpoint}, vol.~107 of Proceedings of Machine Learning Research,
  Princeton University, Princeton, NJ, USA, 20--24 Jul 2020, PMLR,
  pp.~512--536, \url{http://proceedings.mlr.press/v107/cyr20a.html}.

\bibitem{Dahl2012}
{\sc G.~E. Dahl, D.~Yu, L.~Deng, and A.~Acero}, {\em {Context-Dependent
  Pre-Trained Deep Neural Networks for Large-Vocabulary Speech Recognition}},
  IEEE Trans. Audio, Speech, Language Process., 20 (2012), pp.~30--42.

\bibitem{Daws2019b}
{\sc J.~Daws and C.~Webster}, {\em {Analysis of Deep Neural Networks with
  Quasi-optimal polynomial approximation rates}}, 107 (2019), pp.~1--13.

\bibitem{Daws2019a}
{\sc J.~Daws and C.~G. Webster}, {\em {A Polynomial-Based Approach for
  Architectural Design and Learning with Deep Neural Networks}},  (2019).

\bibitem{DeFauw2018}
{\sc J.~De~Fauw, J.~R. Ledsam, B.~Romera-Paredes, S.~Nikolov, N.~Tomasev,
  S.~Blackwell, H.~Askham, X.~Glorot, B.~O'Donoghue, D.~Visentin, G.~van~den
  Driessche, B.~Lakshminarayanan, C.~Meyer, F.~Mackinder, S.~Bouton, K.~Ayoub,
  R.~Chopra, D.~King, A.~Karthikesalingam, C.~O. Hughes, R.~Raine, J.~Hughes,
  D.~A. Sim, C.~Egan, A.~Tufail, H.~Montgomery, D.~Hassabis, G.~Rees, T.~Back,
  P.~T. Khaw, M.~Suleyman, J.~Cornebise, P.~A. Keane, and O.~Ronneberger}, {\em
  Clinically applicable deep learning for diagnosis and referral in retinal
  disease}, Nature Medicine, 24 (2018), pp.~1342--1350.

\bibitem{imagenet_cvpr09}
{\sc J.~Deng, W.~Dong, R.~Socher, L.-J. Li, K.~Li, and L.~Fei-Fei}, {\em
  {ImageNet: A Large-Scale Hierarchical Image Database}}, in 2009 IEEE
  Conference on Computer Vision and Pattern Recognition, 2009, pp.~248--255.

\bibitem{Dereventsov2019b}
{\sc A.~Dereventsov, A.~Petrosyan, and C.~Webster}, {\em {Greedy Shallow
  Networks: A New Approach for Constructing and Training Neural Networks}},
  (2019).

\bibitem{Dereventsov2019a}
{\sc A.~Dereventsov, A.~Petrosyan, and C.~Webster}, {\em {Neural network
  integral representations with the ReLU activation function}},  (2019),
  pp.~1--15.

\bibitem{DeVoreNLACTA}
{\sc R.~A. DeVore}, {\em Nonlinear approximation}, Acta Numer., 7 (1998),
  pp.~51--150.

\bibitem{dexter2018mixed}
{\sc N.~Dexter, H.~Tran, and C.~Webster}, {\em {A mixed $\ell_1$ regularization
  approach for sparse simultaneous approximation of parameterized PDEs}}, ESAIM
  Math. Model. Numer. Anal., 53 (2019), pp.~2025--2045.

\bibitem{DoostanOwhadiSparse}
{\sc A.~Doostan and H.~Owhadi}, {\em A non-adapted sparse approximation of
  {PDEs} with stochastic inputs}, J. Comput. Phys., 230 (2011), pp.~3015--3034.

\bibitem{EWeinan2017DLNM}
{\sc W.~E, J.~Han, and A.~Jentzen}, {\em Deep learning-based numerical methods
  for high-dimensional parabolic partial differential equations and backward
  stochastic differential equations}, Commun. Math. Stat., 5 (2017),
  pp.~349--380.

\bibitem{e2019barron}
{\sc W.~E, C.~Ma, and L.~Wu}, {\em Barron spaces and the compositional function
  spaces for neural network models}, 2019,
  \url{https://arxiv.org/abs/1906.08039}.

\bibitem{e2018exponential}
{\sc W.~E and Q.~Wang}, {\em Exponential convergence of the deep neural network
  approximation for analytic functions}, 2018,
  \url{https://arxiv.org/abs/1807.00297}.

\bibitem{FarabetClement2012SPwM}
{\sc C.~Farabet, C.~Couprie, L.~Najman, and Y.~LeCun}, {\em Scene parsing with
  multiscale feature learning, purity trees, and optimal covers},  (2012).

\bibitem{FawziAlhussein2017TRoD}
{\sc A.~Fawzi, S.-M. Moosavi-Dezfooli, and P.~Frossard}, {\em The robustness of
  deep networks: A geometrical perspective}, IEEE Signal Process. Mag., 34
  (2017), pp.~50--62.

\bibitem{Fokina2019}
{\sc D.~Fokina and I.~Oseledets}, {\em {Growing axons: greedy learning of
  neural networks with application to function approximation}},  (2019),
  pp.~1--17.

\bibitem{FouRau13}
{\sc S.~Foucart and H.~Rauhut}, {\em A Mathematical Introduction to Compressive
  Sensing}, Applied and Numerical Harmonic Analysis, Birkh{\"a}user, 2013.

\bibitem{geist2020numerical}
{\sc M.~Geist, P.~Petersen, M.~Raslan, R.~Schneider, and G.~Kutyniok}, {\em
  Numerical solution of the parametric diffusion equation by deep neural
  networks}, 2020, \url{https://arxiv.org/abs/2004.12131}.

\bibitem{Gerstner1998}
{\sc T.~Gerstner and M.~Griebel}, {\em {Numerical integration using sparse
  grids}}, Numer. Algorithms, 18 (1998), pp.~209--232.

\bibitem{Glorot2010}
{\sc X.~Glorot and Y.~Bengio}, {\em {Understanding the difficulty of training
  deep feedforward neural networks}}, Journal of Machine Learning Research, 9
  (2010), pp.~249--256.

\bibitem{DeepLearningBook}
{\sc I.~Goodfellow, Y.~Bengio, and A.~Courville}, {\em Deep Learning}, The MIT
  Press, 2016.

\bibitem{GottschlingTroublesome}
{\sc N.~M. Gottschling, V.~Antun, B.~Adcock, and A.~C. Hansen}, {\em The
  troublesome kernel: why deep learning for inverse problems is typically
  unstable}, arXiv:2001.01258,  (2020).

\bibitem{Grohs2018}
{\sc P.~Grohs, F.~Hornung, A.~Jentzen, and P.~{Von Wurstemberger}}, {\em {A
  proof that artificial neural networks overcome the curse of dimensionality in
  the numerical approximation of Black-Scholes partial differential
  equations}}, arXiv:1809.02362,  (2018).

\bibitem{Grohs2016}
{\sc P.~Grohs, T.~Wiatowski, and H.~Bolcskei}, {\em {Deep convolutional neural
  networks on cartoon functions}}, in {IEEE} International Symposium on
  Information Theory, {ISIT} 2016, Barcelona, Spain, July 10-15, 2016, 2016,
  pp.~1163--1167.

\bibitem{Guhring2019}
{\sc I.~G{\"{u}}hring, G.~Kutyniok, and P.~Petersen}, {\em {Error bounds for
  approximations with deep ReLU neural networks in {$W^{s,p}$} norms}},
  arXiv:1902.07896,  (2019).

\bibitem{Gunzburger2014}
{\sc M.~D. Gunzburger, C.~G. Webster, and G.~Zhang}, {\em {Stochastic finite
  element methods for partial differential equations with random input data}},
  Acta Numer., 23 (2014), pp.~521--650.

\bibitem{Hanin2018}
{\sc B.~Hanin}, {\em {Which neural net architectures give rise to exploding and
  vanishing gradients?}}, Advances in Neural Information Processing Systems,
  (2018), pp.~582--591.

\bibitem{HaninRolnick2018}
{\sc B.~Hanin and D.~Rolnick}, {\em How to start training: The effect of
  initialization and architecture}, in Advances in Neural Information
  Processing Systems, 2018, pp.~571--581.

\bibitem{He2015}
{\sc K.~He, X.~Zhang, S.~Ren, and J.~Sun}, {\em {Delving deep into rectifiers:
  Surpassing human-level performance on imagenet classification}}, Proceedings
  of the IEEE International Conference on Computer Vision, 2015 International
  Conference on Computer Vision, ICCV 2015 (2015), pp.~1026--1034.

\bibitem{Hinton2012}
{\sc G.~{Hinton}, L.~{Deng}, D.~{Yu}, G.~E. {Dahl}, A.~{Mohamed}, N.~{Jaitly},
  A.~{Senior}, V.~{Vanhoucke}, P.~{Nguyen}, T.~N. {Sainath}, and
  B.~{Kingsbury}}, {\em Deep neural networks for acoustic modeling in speech
  recognition: The shared views of four research groups}, IEEE Signal Process.
  Mag., 29 (2012), pp.~82--97.

\bibitem{HornikKurt1989Mfna}
{\sc K.~Hornik, M.~Stinchcombe, and H.~White}, {\em {Multilayer feedforward
  networks are universal approximators (Book review)}}, Neural Networks, 2
  (1989), pp.~359--366.

\bibitem{kingma2014adam}
{\sc D.~P. Kingma and J.~Ba}, {\em Adam: A method for stochastic optimization},
  arXiv:1412.6980,  (2014).

\bibitem{Krizhevsky2012}
{\sc A.~Krizhevsky, I.~Sutskever, and G.~E. Hinton}, {\em {ImageNet
  Classification with Deep Convolutional Neural Networks}}, in Advances in
  Neural Information Processing Systems, 2012, pp.~1097--1105.

\bibitem{kuhn2015approximation}
{\sc T.~K\"{u}hn, W.~Sickel, and T.~Ullrich}, {\em Approximation of mixed order
  {S}obolev functions on the $d$-torus: asymptotics, preasymptotics, and
  $d$-dependence}, Constr. Approx., 42 (2015), pp.~353--398.

\bibitem{kutyniok2019theoretical}
{\sc G.~Kutyniok, P.~Petersen, M.~Raslan, and R.~Schneider}, {\em A theoretical
  analysis of deep neural networks and parametric pdes}, 2019,
  \url{https://arxiv.org/abs/1904.00377}.

\bibitem{Li2019}
{\sc B.~Li, S.~Tang, and H.~Yu}, {\em Better approximations of high dimensional
  smooth functions by deep neural networks with rectified power units},
  1903.05858,  (2019).

\bibitem{Liang2017}
{\sc S.~Liang and R.~Srikant}, {\em Why deep neural networks for function
  approximation?}, arXiv:1610.04161,  (2016).

\bibitem{Loshchilov2019}
{\sc I.~Loshchilov and F.~Hutter}, {\em {Decoupled weight decay
  regularization}}, in 7th International Conference on Learning
  Representations, ICLR 2019, 2019.

\bibitem{LOYER2016109}
{\sc J.-L. Loyer, E.~Henriques, M.~Fontul, and S.~Wiseall}, {\em Comparison of
  machine learning methods applied to the estimation of manufacturing cost of
  jet engine components}, Int. J. Prod. Econ., 178 (2016), pp.~109--119.

\bibitem{Lu2020}
{\sc J.~Lu, Z.~Shen, H.~Yang, and S.~Zhang}, {\em {Deep Network Approximation
  for Smooth Functions}},  (2020), pp.~1--33.

\bibitem{Lu2017}
{\sc Y.~Lu, A.~Zhong, Q.~Li, and B.~Dong}, {\em Beyond finite layer neural
  networks: bridging deep architectures and numerical differential equations},
  arXiv:1710.10121,  (2017).

\bibitem{Montanelli2017}
{\sc H.~Montanelli, H.~Yang, and Q.~Du}, {\em {Deep ReLU networks overcome the
  curse of dimensionality for bandlimited functions}}, 1903.00735,  (2019).

\bibitem{MDMohsen2016Uap}
{\sc S.-M. Moosavi-Dezfooli, A.~Fawzi, O.~Fawzi, and P.~Frossard}, {\em
  Universal adversarial perturbations},  (2016).

\bibitem{MDMohsen2015Dasa}
{\sc S.-M. Moosavi-Dezfooli, A.~Fawzi, and P.~Frossard}, {\em Deepfool: a
  simple and accurate method to fool deep neural networks},  (2015).

\bibitem{Nakkiran2019}
{\sc P.~Nakkiran, G.~Kaplun, Y.~Bansal, T.~Yang, B.~Barak, and I.~Sutskever},
  {\em {Deep Double Descent: Where Bigger Models and More Data Hurt}},
  arXiv:1912.02292,  (2019).

\bibitem{NTW08}
{\sc F.~Nobile, R.~Tempone, and C.~Webster}, {\em A sparse grid stochastic
  collocation method for elliptic partial differential equations with random
  input data}, SIAM J. Numer. Anal., 46 (2008), pp.~2309--2345.

\bibitem{Opschoor2019}
{\sc J.~A.~A. Opschoor, P.~C. Petersen, and C.~Schwab}, {\em Deep {ReLU}
  networks and high-order finite element methods}, tech. report, ETH:
  Z{\"{u}}rich, Z{\"{u}}rich, 2019.

\bibitem{Opschoor2019Legendre}
{\sc J.~A.~A. Opschoor, C.~Schwab, and J.~Zech}, {\em Exponential {ReLU DNN}
  expression of holomorphic maps in high dimension}, tech. report, ETH:
  Z{\"{u}}rich, Z{\"{u}}rich, 2019.

\bibitem{Petersen2018}
{\sc P.~Petersen and F.~Voigtlaender}, {\em {Optimal approximation of piecewise
  smooth functions using deep ReLU neural networks}}, Neural Netw., 108 (2018),
  pp.~296--330.

\bibitem{RW15}
{\sc H.~Rauhut and R.~Ward}, {\em Interpolation via weighted
  $\ell_1$-minimization}, Appl. Comput. Harm. Anal., 40 (2016), pp.~321--351.

\bibitem{Rudye1602614}
{\sc S.~H. Rudy, S.~L. Brunton, J.~L. Proctor, and J.~N. Kutz}, {\em
  Data-driven discovery of partial differential equations}, Sci. Adv., 3
  (2017).

\bibitem{Schwab2017}
{\sc C.~Schwab and J.~Zech}, {\em {Deep learning in high dimension: Neural
  network expression rates for generalized polynomial chaos expansions in UQ}},
  Anal. Appl., 17 (2019), pp.~19--55.

\bibitem{Shen2019}
{\sc Z.~Shen, H.~Yang, and S.~Zhang}, {\em Deep network approximation
  characterized by number of neurons}, arXiv:1906.05497,  (2019).

\bibitem{SilverD2017Mtgo}
{\sc D.~Silver, J.~Schrittwieser, K.~Simonyan, I.~Antonoglou, A.~Huang,
  A.~Guez, T.~Hubert, L.~Baker, M.~Lai, A.~Bolton, Y.~Chen, T.~Lillicrap,
  F.~Hui, L.~Sifre, G.~Van Den~Driessche, T.~Graepel, and D.~Hassabis}, {\em
  Mastering the game of {Go} without human knowledge}, Nature, 550 (2017),
  pp.~354--359.

\bibitem{Simonyan2015}
{\sc K.~Simonyan and A.~Zisserman}, {\em Very deep convolutional networks for
  large-scale image recognition}, in International Conference on Learning
  Representations (ICLR), 2015.

\bibitem{SommerChristoph2013Mlic}
{\sc C.~Sommer and D.~W. Gerlich}, {\em Machine learning in cell biology -
  teaching computers to recognize phenotypes}, Journal of cell science, 126
  (2013).

\bibitem{stoyanov2015tasmanian}
{\sc M.~Stoyanov}, {\em User manual: Tasmanian sparse grids}, Tech. Report
  ORNL/TM-2015/596, Oak Ridge National Laboratory, One Bethel Valley Road, Oak
  Ridge, TN, 2015.

\bibitem{stoyanov2018adaptive}
{\sc M.~Stoyanov}, {\em Adaptive sparse grid construction in a context of local
  anisotropy and multiple hierarchical parents}, in Sparse Grids and
  Applications-Miami 2016, Springer, 2018, pp.~175--199.

\bibitem{stoyanov2016dynamically}
{\sc M.~K. Stoyanov and C.~G. Webster}, {\em A dynamically adaptive sparse
  grids method for quasi-optimal interpolation of multidimensional functions},
  Computers Math. Applic., 71 (2016), pp.~2449--2465.

\bibitem{strubell2019energy}
{\sc E.~Strubell, A.~Ganesh, and A.~McCallum}, {\em Energy and policy
  considerations for deep learning in {NLP}}, arXiv:1906.02243,  (2019).

\bibitem{SzegedyChristian2013Ipon}
{\sc C.~Szegedy, W.~Zaremba, I.~Sutskever, J.~Bruna, D.~Erhan, I.~Goodfellow,
  and R.~Fergus}, {\em Intriguing properties of neural networks},  (2013).

\bibitem{TAFFESE20171}
{\sc W.~Z. Taffese and E.~Sistonen}, {\em Machine learning for durability and
  service-life assessment of reinforced concrete structures: Recent advances
  and future directions}, Automation in Construction, 77 (2017), pp.~1 -- 14.

\bibitem{TarcaAdiL2007MLaI}
{\sc A.~L. Tarca, V.~J. Carey, X.-w. Chen, R.~Romero, and S.~Dr{\u a}ghici},
  {\em Machine learning and its applications to biology}, PLoS Computational
  Biology, 3 (2007).

\bibitem{TWZ17}
{\sc H.~Tran, C.~G. Webster, and G.~Zhang}, {\em {Analysis of quasi-optimal
  polynomial approximations for parameterized PDEs with deterministic and
  stochastic coefficients}}, Numer. Math.,  (2017), pp.~1--43.

\bibitem{Unser2019}
{\sc M.~Unser}, {\em {A representer theorem for deep neural networks}}, J.
  Mach. Learn. Res., 20 (2019), pp.~1--28.

\bibitem{spgl1:2007}
{\sc E.~van~den Berg and M.~P. Friedlander}, {\em {SPGL1}: A solver for
  large-scale sparse reconstruction}, June 2007.
\newblock http://www.cs.ubc.ca/labs/scl/spgl1.

\bibitem{BergFriedlander:2008}
{\sc E.~van~den Berg and M.~P. Friedlander}, {\em Probing the pareto frontier
  for basis pursuit solutions}, {SIAM} J. Sci. Comput., 31 (2008),
  pp.~890--912.

\bibitem{Weinan2018}
{\sc E.~Weinan and B.~Yu}, {\em {The Deep Ritz Method: A Deep Learning-Based
  Numerical Algorithm for Solving Variational Problems}}, Commun. Math. Stat.,
  6 (2018), pp.~1--14.

\bibitem{Wu2016}
{\sc C.~Wu, P.~Karanasou, M.~J. Gales, and K.~C. Sim}, {\em {Stimulated deep
  neural network for speech recognition}}, in Interspeech 2016, 2016,
  pp.~400--404.

\bibitem{Yarotsky2017}
{\sc D.~Yarotsky}, {\em {Error bounds for approximations with deep ReLU
  networks}}, Neural Netw., 94 (2017), pp.~103--114.

\bibitem{Yarotsky2018}
{\sc D.~Yarotsky}, {\em {Optimal approximation of continuous functions by very
  deep ReLU networks}}, arXiv:1802.03620,  (2018).

\bibitem{Zhang2019}
{\sc G.~Zhang, J.~Zhang, and J.~Hinkle}, {\em Learning nonlinear level sets for
  dimensionality reduction in function approximation}, in Advances in Neural
  Information Processing Systems 32, Curran Associates, Inc., 2019,
  pp.~13199--13208.

\bibitem{ZielinskiBartosz2017Dlat}
{\sc B.~Zieli{\'n}ski, A.~Plichta, K.~Misztal, P.~Spurek,
  M.~Brzychczy-W{\l}och, and D.~Ocho{\'n}ska}, {\em Deep learning approach to
  bacterial colony classification}, PLOS ONE, 12 (2017), p.~e0184554.

\end{thebibliography}

\newpage

\end{document}